\newtheorem{theorem}{Theorem}
\newtheorem{corollary}[theorem]{Corollary}
\newtheorem{proposition}[theorem]{Proposition}
\theoremstyle{remark}
\newtheorem{remark}{Remark}
\theoremstyle{example}
\newtheorem{appxthm}{Theorem}[section]
\newtheorem{appxlem}[appxthm]{Lemma}
\newtheorem{appxpro}[appxthm]{Proposition}
\newtheorem{appxcor}[appxthm]{Corollary}
\theoremstyle{definition}
\theoremstyle{remark}
\newcommand{\Ck}{\mathfrak{C}}
\newcommand{\Ch}{\widehat{\mathfrak{C}}}
\newcommand{\R}{\mathbb{R}}
\newcommand{\X}{\mathcal{X}}
\newcommand{\E}{\mathbb{E}}
\newcommand{\Y}{\mathcal{Y}}
\newcommand{\Hk}{\mathcal{H}}
\newcommand{\Pb}{\mathbb{P}}
\newcommand{\bb}{\mathbb}
\newcommand{\eu}{\EuScript}
\newcommand{\Cal}{\mathcal}
\newcommand{\A}{\mathfrak{A}}
\newcommand{\sig}{\Sigma}
\newcommand{\sigh}{\widehat{\Sigma}}
\newcommand{\lambdah}{\widehat{\lambda}}
\newcommand{\phih}{\widehat{\phi}}
\newcommand{\mh}{\widehat{m}}
\newcommand{\id}{\mathfrak{I}}
\newcommand{\ide}{\mathfrak{I}}
\newcommand{\lp}{L^2(\mathbb{P})}
\newcommand{\oh}{\otimes_{\mathcal{H}}}
\newcommand{\ohm}{\otimes_{\mathcal{H}_m}}
\newcommand{\ol}{\otimes_{L^2(\mathbb{P})}}
\newcommand{\var}{\text{Var}}
\newcommand{\intx}{\int_{\mathcal{X}}}
\newcommand{\intt}{\int_{\Theta}}
\newcommand{\op}{\mathcal{L}^\infty}
\newcommand{\HS}{\mathcal{L}^2}
\newcommand{\Tr}{\mathcal{L}^1}
\newcommand{\tS}{\mathfrak{A}}
\newcommand{\Mp}{m_{\mathbb{P}}}
\newcommand{\mph}{\widehat{m}_{\mathbb{P}}}
\newcommand{\mpm}{m_{\mathbb{P},m}}
\newcommand{\mpmh}{\widehat{m}_{\mathbb{P},m}}
\newcommand{\OPH}{{\mathcal{L}^\infty(\mathcal{H})}}
\newcommand{\OH}{{\mathcal{L}^\infty(H)}}
\newcommand{\HSh}{\mathcal{L}^2(\mathcal{H})}
\newcommand{\HSH}{\mathcal{L}^2(\mathcal{H})}
\newcommand{\HSM}{{\mathcal{L}^2(\mathcal{H}_m)}}
\newcommand{\HSl}{\mathcal{L}^2(L^2(\mathbb{P}))}
\newcommand{\HSL}{{\mathcal{L}^2(L^2(\mathbb{P}))}}
\newcommand{\TCH}{{\mathcal{L}^1(\mathcal{H})}}
\newcommand{\OPL}{{\mathcal{L}^\infty(L^2(\mathbb{P}))}}
\newcommand{\OPHm}{{\mathcal{L}^\infty(\mathcal{H}_m)}}
\newcommand{\Rm}{\mathbb{R}^m}
\newcommand{\QEDA}{\hfill\ensuremath{\blacksquare}}
\newcommand{\vp}{\varphi}
\newcommand{\inner}[2]{\left\langle #1,#2\right\rangle}
\newcommand{\norm}[1]{\left\lVert#1\right\rVert}
\newcommand{\kbar}{\overline{k}(\cdot,X)}
\newcommand{\Pg}{P_{>}}
\newcommand{\PgC}{P_{>,\mathcal{C}}}
\newcommand{\Pl}{P_{\le}}
\newcommand{\PlB}{P_{\le,\mathcal{B}}}
\newcommand{\PlBc}{P_{\le,\mathcal{B}^c}}
\newcommand{\Ag}{A_{>}}
\newcommand{\AgC}{A_{>,\mathcal{C}}}
\newcommand{\AgCc}{A_{>,\mathcal{C}^c}}
\newcommand{\Al}{A_{\le}}
\newcommand{\AlB}{A_{\le,\mathcal{B}}}
\newcommand{\AlBc}{A_{\le,\mathcal{B}^c}}
\newcommand{\hsh}{{\mathcal{L}^2(H)}}
\newcommand*\circled[1]{\tikz[baseline=(char.base)]{
            \node[shape=circle,draw,inner sep=2pt] (char) {#1};}}
\title{Approximate Kernel PCA Using Random Features: Computational vs.~Statistical Trade-off}
\author{Bharath Sriperumbudur}%\thanks{bks18@psu.edu}} 
\author{Nicholas Sterge}%\thanks{nzs5368@psu.edu}}
\affil{Department of Statistics,  
Pennsylvania State University\\
University Park, PA 16802, USA.\\
\texttt{bks18@psu.edu, nsterge@us.flowtraders.com}}
\date{}
\begin{document}

\maketitle

% \begin{document}
% 
% \title{Minimax Estimation of Kernel Mean Embeddings}
% 
% \author{\name Ilya Tolstikhin \email ilya@tuebingen.mpg.de \\
%        \addr Department of Empirical Inference\\
%        Max Planck Institute for Intelligent Systems\\
%        Spemanstra{\ss}e 38, T{\"u}bingen 72076, Germany
%        \AND
%        \name Bharath K.\,Sriperumbudur \email bks18@psu.edu \\
%        \addr Department of Statistics\\
%        Pennsylvania State University\\
%        University Park, PA 16802, USA
%        \AND
%        \name Krikamol Muandet \email krikamol@tuebingen.mpg.de \\
%        \addr Department of Empirical Inference\\
%        Max Planck Institute for Intelligent Systems\\
%        Spemanstra{\ss}e 38, T{\"u}bingen 72076, Germany
%        }
%        
% \editor{???}
% 
% \maketitle

\begin{abstract}%   <- trailing '%' for backward compatibility of .sty file
%\textcolor{red}{Bla bla bla}
Kernel methods are powerful learning methodologies that allow to perform non-linear data analysis.
%provide a simple way to construct nonlinear algorithms from linear ones. 
Despite their popularity, they suffer from 
poor scalability in big data scenarios. Various approximation methods, including random feature approximation, have been proposed to alleviate the problem. However, the statistical
consistency of most of these approximate kernel methods is not well understood except for kernel ridge regression wherein it has been shown that the random feature approximation is not only
computationally efficient but also statistically consistent with a minimax optimal rate of convergence. In this paper, we investigate the efficacy of random feature approximation in the
context of kernel principal component analysis (KPCA) by studying the trade-off between computational and statistical behaviors of approximate KPCA. We show that the approximate KPCA is both computationally and statistically efficient compared to KPCA in terms of the error associated with reconstructing a kernel function based on its projection onto the corresponding eigenspaces. The analysis hinges on Bernstein-type inequalities for the operator and Hilbert-Schmidt norms of a self-adjoint Hilbert-Schmidt operator-valued U-statistics, which are of independent interest. 
% 
% Depending on the eigenvalue decay behavior of the covariance operator, we show that only $n^{2/3}$ features (polynomial decay) or $\sqrt{n}$ features (exponential decay) are needed to match the statistical performance of KPCA. We also investigate their statistical behaviors in terms of the convergence of corresponding eigenspaces wherein we show that only $\sqrt{n}$ features are required to match the performance of KPCA and if fewer than $\sqrt{n}$ features are used, then approximate KPCA has a worse statistical behavior than that of KPCA.

\end{abstract}
% \begin{keywords}
%   Bochner integral, empirical estimator, kernel mean embeddings, minimax lower bounds, nonparametric function estimation, reproducing kernel Hilbert space
% \end{keywords}
\textbf{MSC 2010 subject classification:} Primary: 62H25; Secondary: 62G05.\\
\textbf{Keywords and phrases:} Principal component analysis, kernel PCA, random feature approximation, reproducing kernel Hilbert space, covariance operator, U-statistics, Bernstein's inequality
\setlength{\parskip}{4pt}

\section{Introduction}\label{Sec:Introduction}
Let $X$ be a random variable distributed according to a probability measure $\Pb$ defined on a measurable space $\X$. Principal component analysis (PCA) \citep{Jollife-86} deals with finding a direction $\bm{a}\in\X\,(= \R^d)$
with $\Vert \bm{a}\Vert_2=1$ such that $\text{Var} [\bm{a}^\top X]$ is maximized. More generally, it provides a low-dimensional representation that retains as much variance as possible of $X$ and is used as a popular statistical methodology for dimensionality reduction and feature extraction.
%Principal component analysis (PCA) \citep{Jollife-86} is a popular statistical methodology for dimensionality reduction and feature extraction,
%wherein a low-dimensional representation that retains as much variance as possible of the original data is obtained.
In fact, the low-dimensional representation is the orthogonal projection of $X$ onto the $\ell$-eigenspace, i.e., the span of eigenvectors associated with top 
$\ell$ eigenvalues of the covariance matrix $\bb{E}XX^\top-\bb{E}X\bb{E}X^\top$ where $\ell<d$, resulting in a $\ell$-dimensional representation. A non-linear generalization of PCA (called kernel PCA) was proposed by \citet{Scholkopf-98} which solves $\sup\{\text{Var}[f(X)]:\Vert f\Vert_\Hk=1\}$, where $\Hk$ is a reproducing kernel Hilbert space (RKHS) \citep{Aronszajn-50}, with reproducing kernel $k:\X\times\X\rightarrow\R$ (see 
Section~\ref{Sec:notation}
for definition). Similar to linear PCA, the solution turns out to be the eigenfunction corresponding to the top eigenvalue of the covariance operator, 
$$\Sigma=\bb{E}\Phi(X)\oh\Phi(X)-\bb{E}\Phi(X)\oh\bb{E}\Phi(X),$$ where $\Phi(x):=k(\cdot,x)$ is called the \emph{feature map}. More generally, kernel PCA provides a Euclidean representation for $X$ by projecting $\Phi(X)$
onto the $\ell$-eigenspace of $\Sigma$. Clearly, if $\Phi(x)=x,\,x\in\X=\R^d$ (which corresponds to the linear kernel, $k(x,y)=\langle x,y\rangle_2,\,x,y\in\R^d$), then kernel PCA reduces to
linear PCA. On the other hand, depending on the choice of $k$, higher order moments of $X$ are considered through the second order moment of $\Phi(X)$ to compute the $\ell$-dimensional
representation for $X$, resulting in a non-linear interpretation of dimensionality reduction. 
%It can be shown that if $\X=\Hk=\R^d$, then kernel PCA (KPCA) reduces to linear PCA. 
% This non-linear form of PCA is shown \citep{Scholkopf-98} to better exploit 
% the spatial structure of 
% the data and also gives rise to non-linear interpretation of dimensionality reduction of the original data.
Due to this, KPCA is popular in applications such as image denoising \citep{Mika-99, Jade-03, Teixeira-08,Phophalia-17}, image/systems modeling \citep{Kim-05,Li-15}, %shape analysis \citep{Rathi-06}, 
novelty/fault detection \citep{Hoffmann-07, Samuel-16,deMoura-17}, feature extraction \citep{Chang-15}, and computer vision \citep{Lampert-09, Peter-19}. We refer the reader to Section~\ref{Sec:notation} for notation and Section~\ref{Sec:kpca} for preliminaries on KPCA and its variants.
% resulting in kernel PCA (KPCA)
% the above idea to reproducing kernel Hilbert space (RKHS) \citep{Aronszajn-50}, resulting in kernel PCA (KPCA), which is a nonlinear form of PCA that better exploits the spatial structure of 
% the data and also gives rise to nonlinear interpretation of dimensionality reduction of the original data.
% Due to this, KPCA is popular in applications such as image denoising \citep{Mika-99}, novelty detection \citep{Hoffmann-07} and 
% computer vision (see \citealp{Lampert-09} and references therein), etc.

Given $X_1,\ldots,X_n\stackrel{i.i.d.}{\sim}\Pb$ with $\Pb$ being unknown, empirical version of linear PCA computes the eigenvectors of the empirical covariance matrix onto which $(X_i)^n_{i=1}$ are 
projected to obtain a low-dimensional representation. Similarly, the empirical version of kernel PCA (we refer to it as EKPCA) involves finding the eigenfunctions of the empirical covariance operator 
$$\sigh=\frac{1}{2n(n-1)}\sum^n_{i\ne j}\left(\Phi(X_i)-\Phi(X_j)\right)\oh \left(\Phi(X_i)-\Phi(X_j)\right).$$ 
While this requires solving a possibly infinite dimensional 
eigenvalue problem, it can be shown that these eigenfunctions can be computed by only solving a finite dimensional eigenvalue problem (see Proposition~\ref{pro:eigsystem}). In particular,
it involves finding the eigenvectors of the Gram matrix, $[k(X_i,X_j)]^n_{i,j=1}$ which has a computational requirement of $O(n^2\ell)$ where $\ell$ is the number of eigenvectors of interest. %This is the main drawback of KPCA in big data scenarios.
%Despite the popularity of KPCA, one of its main drawbacks is the computational requirement of $O(n^3)$---to compute the eigenvectors of the Gram matrix---, where $n$
%is the number of samples in the input space. 
In addition to KPCA, more generally, most of the kernel algorithms (see \citealp{Scholkopf-02}) have a space complexity requirement of $O(n^2)$ and time complexity 
requirement of $O(n^3)$ as in some sense,
all of them involve an eigen decomposition of the Gram matrix. However, in big data 
scenarios where $n$ is large, 
the kernel methods including KPCA suffer from large space and time complexities.
% These space and computational complexity requirements arise as
% kernel methods solve learning problems in the dual, owing to the computational advantages associated with 
% solving a finite dimensional optimization problem in contrast to an infinite dimensional problem in the primal \citep{Scholkopf-02}. 

An elegant approach to address this computational issue is to approximate the feature map $\Phi$ by a finite-dimensional map $\Phi_m$, i.e., $\Phi_m(x)\in\Rm$ so that $\sigh$
is approximated as $$\sigh_m=\frac{1}{2n(n-1)}\sum^n_{i\ne j}\left(\Phi_m(X_i)-\Phi_m(X_j)\right)\left(\Phi_m(X_i)-\Phi_m(X_j)\right)^\top.$$ Clearly, this is
equivalent to performing linear PCA on the mapped data $(\Phi_m(X_i))^n_{i=1}$, which involves finding the eigensystem of $\sigh_m$.
Since this has a computational complexity 
of $O(m^2\ell+m^2n)$, the computational burden is reduced from $O(n^2\ell)$ if $m<\sqrt{n\ell}$. However, since this computational gain may be achieved at the cost of statistical performance, the goal of this paper 
is to investigate the trade-off between computational and statistical efficiency of approximate empirical KPCA (we refer to it as RF-EKPCA) using a random finite dimensional approximation of $\Phi(X)$.
%Before we discuss the problem and the related work, 

In the following,
we briefly introduce the idea of random feature approximation introduced by \citet{Rahimi-08a}, which involves computing a finite dimensional feature map that approximates the kernel function. Suppose say $k$ is a continuous translation invariant kernel on $\R^d$, i.e., $k(x,y)=\psi(x-y),\,x,y\in\R^d$ 
where $\psi$ is a continuous positive definite function on $\R^d$. Bochner's theorem \citep[Theorem 6.6]{Wendland-05} states that $\psi$ is the Fourier transform of a finite non-negative
Borel measure $\Lambda$ on $\R^d$, i.e.,
\begin{equation}
k(x,y)=\int_{\R^d}e^{-\sqrt{-1}\langle x-y,\omega\rangle_2}\,d\Lambda(\omega)\stackrel{(\star)}{=}\int_{\R^d} \cos(\langle x-y,\omega\rangle_2)\,d\Lambda(\omega),\label{Eq:Bochner1} 
\end{equation}
where $\langle \cdot,\cdot\rangle_2$ denotes the usual Euclidean inner product and $(\star)$ follows from the fact that $\psi$ is real-valued
and symmetric. Since $\Lambda(\R^d)=\psi(0)$, we can write \eqref{Eq:Bochner1} as
$k(x,y)=\psi(0)\int_{\R^d} \cos(\langle x-y,\omega\rangle_2)\,d\frac{\Lambda}{\psi(0)}(\omega)$
where $\frac{\Lambda}{\psi(0)}$ is a probability measure on $\R^d$. Therefore, without loss of generality, throughout the paper we assume that $\Lambda$ is a probability measure.
\citet{Rahimi-08a} proposed a random approximation to $k$ by replacing the integral with Monte Carlo sums constructed from $(\omega_i)^m_{i=1}\stackrel{i.i.d.}{\sim}\Lambda$, i.e.,
%\begin{equation}
$$ k_m(x,y)=\psi_m(x-y)=\frac{1}{m}\sum^m_{i=1}\cos(\langle x-y,\omega_i\rangle_2)\stackrel{(\dagger)}{=}\langle \Phi_m(x),\Phi_m(y)\rangle_2,$$%\label{Eq:Fourierfeat}
%\end{equation}
where $\Phi_m=\frac{1}{\sqrt{m}}(\cos\langle \cdot,\omega_1\rangle_2,\ldots,\cos\langle \cdot,\omega_m\rangle_2,\sin\langle \cdot,\omega_1\rangle_2,\ldots,\sin\langle \cdot,\omega_m\rangle_2)^\top$ 
and $(\dagger)$ holds based on the trigonometric identity: $\cos(a-b)=\cos a \cos b+\sin a \sin b$. This kind of random approximation to $k$ can be constructed for a more general class of kernels
of the form $$k(x,y)=\int_{\Theta} \vp(x,\theta) \vp(y,\theta)\,d\Lambda(\theta)$$
by using $$k_m(x,y)=\frac{1}{m}\sum^m_{i=1}\vp(x,\theta_i)\vp(y,\theta_i)=\langle \Phi_m(x),\Phi_m(y)\rangle_2,$$
where $\Phi_m=\frac{1}{\sqrt{m}}
(\vp(\cdot,\theta_1),\ldots,\vp(\cdot,\theta_m))^\top,$ $\vp(x,\cdot)\in L^2(\Theta,\Lambda)$ for all $x\in\Cal{X}$, $(\theta_i)^m_{i=1}\stackrel{i.i.d.}{\sim}\Lambda$,
with $\Cal{X}$ and $\Theta$ being measurable spaces. 
% The advantage of this approximation is that applying linear regression or linear PCA on $\Phi_m(\cdot)$ yields an approximate kernel
% regression or approximate kernel PCA which have respective computational complexities of $O(mn^2)$ or $O(m^3)$ and therefore provides a computational gain if $m<n$.
Based on this approximation, the question of interest is whether RF-EKPCA consistent and how should $m$ depend on $n$ for RF-EKPCA to have similar statistical behavior to that of EKPCA, while still maintaining the computational edge. The goal of this paper is to address these questions.
\subsection{Contributions}\label{subsec:contribution}
The main contributions of the paper are as follows:\vspace{2mm}\\
%$(i)$ First, 
\emph{(i)} In Section~\ref{Sec:results}, we compare the performance of RF-EKPCA with EKPCA in terms of the reconstruction error of the associated $\ell$-eigenspace, i.e., 
the error involved in reconstructing $\Phi(X)$ based on its projections onto the corresponding $\ell$-eigenspace. Since the $\ell$-eigenspace associated with RF-EKPCA is a subspace 
of $\Rm$ in contrast to $\Hk$ as is the case with EKPCA, the notion of projecting $\Phi(X)\in\Hk$ onto a subspace of $\Rm$ is vacuous. To alleviate the problem, we define inclusion and approximation operators that embed both $\Hk$ and $\Rm$ as subspaces in $\lp$. This, however, results in two different notions of reconstruction error: First reconstructing in 
$\Cal{H}$ and $\bb{R}^m$ and then embedding the reconstructed functions in $L^2(\bb{P})$, which we refer to as \textit{Reconstruct and Embed} (R-E), in contrast to first embedding the functions into $L^2(\bb{P})$ and then reconstructing in $L^2(\bb{P})$, which we refer to as \textit{Embed and Reconstruct} (E-R). In Propositions~\ref{pro:solution}, \ref{pro:interpret}, and \ref{pro:interpret2}, we provide a new reformulation of KPCA, EKPCA, and RF-EKPCA as minimizers of appropriate E-R and R-E reconstruction errors. This reformulation provides a generalization error type interpretation which can be used to investigate the statistical behavior of EKPCA and RF-EKPCA. Since PCA is a special case of KPCA, this reformulation also provides a novel interpretation for classical PCA as a minimizer of covariance matrix weighted reconstruction error. \vspace{1mm}\\
\emph{(ii)} In Sections~\ref{subsec:ep} and \ref{subsec:pe}, 
% both of which are characterized by the tail sum of squared eigenvalues of $\Sigma$. One notion deals with first reconstructing in $\Cal{H}$ and $\bb{R}^m$ for EKPCA and RF-EKPCA, respectively and then embedding the reconstructed functions in $L^2(\bb{P})$, which we refer to as \textit{Reconstruct and Embed} (R-E)---see Section~\ref{subsec:ep}, while the other notion involves first embedding the functions into $L^2(\bb{P})$ and then reconstructing in $L^2(\bb{P})$, which we refer to as \textit{Embed and Reconstruct} (E-R)---see Section~\ref{subsec:pe}. 
%For both these notions of reconstruction error, 
we show that RF-EKPCA has better computational complexity with no loss in statistical performance than EKPCA as long as $m$, which grows monotonically with $\ell$, is large enough with $\ell$ not being too large (Theorems~\ref{thm:rff main thm metric 1} and \ref{thm:rff main thm metric 2}). In other words, the number of eigen functions, $\ell$ used in the reconstruction cannot grow too fast with the sample size $n$ while requiring enough number of random features $m$ so that the approximation error does not dominate the estimation error.
%\vspace{1mm}\\
%\emph{(iii)} 
By specializing Theorems~\ref{thm:rff main thm metric 1} and \ref{thm:rff main thm metric 2} to the cases of polynomial and exponential decay rates of the eigenvalues of $\Sigma$, in Corollaries~\ref{rff poly decay corollary}, \ref{rff exp decay corollary} 
and \ref{rff poly decay corollary metric 2}, \ref{rff exp decay corollary metric 2}, 
respectively, we show 
%the embedding and reconstruction operations to be non-commutative, i.e., 
R-E and E-R reconstruction errors to have \emph{different} statistical behaviors. However, under each of these reconstruction errors, as mentioned above, RF-EKPCA matches the statistical performance of EKPCA at better computational complexity.\vspace{1mm}\\
\emph{(iii)} In Section~\ref{subsecsec:othernorm}, we investigate a generalization of R-E (similar generalization also holds for E-R) 
%that results in reconstruction errors that are related to the Schatten norms of the tail eigenvalues of $\Sigma$. This generalization is 
based on certain weighted $L^2(\mathbb{P})$-norms that are weighted by $(\id\id^*)^{-s/2},\,\,s\le 1$ with $\id$ being the inclusion operator, wherein the choice of $s=1$ yields a reconstruction error that matches with the reconstruction error for KPCA in the $\Cal{H}$-norm and $s=0$ matches with the R-E reconstruction error considered in Section~\ref{subsec:ep}. In Proposition~\ref{pro:schatten}, we again provide a new reformulation of KPCA, EKPCA, and RF-EKPCA as minimizers of the generalized R-E reconstruction errors, using which 
%Even for this generalization, 
we establish a similar result as aforementioned that RF-EKPCA has same statistical complexity and better computational complexity than EKPCA as long as $m$ is sufficiently large with $\ell$ being sufficiently small with respect to the growth of $n$ (see Theorem~\ref{thm:schatten} and Remark~\ref{rem:3}). 

All these results hinge on Bernstein-type inequalities for the operator and Hilbert-Schmidt norms of a self-adjoint Hilbert-Schmidt operator-valued U-statistics, which are of independent interest (see Theorem~\ref{thm:bernstein U-stat}). %The proofs of all the results of this paper are provided in the supplementary material~\citep{Sriperumbudur-supp-22}. %Any referencing in this paper with an alphabetical index refers to the supplement.
\subsection{Related work}\label{subsec:work}
To the best of our knowledge, not much investigation has been carried out on the 
statistical analysis of RF-EKPCA. \citet{Lopez-Paz-14} studied the quality of approximation of the Gram matrix by 
the approximate Gram matrix (using random Fourier features) in operator norm and 
showed a convergence rate of $n(\sqrt{(\log n)/m}+(\log n)/m)$. This 
approximation bound is too loose as we require $m$ to grow faster than $n$ to 
achieve convergence to zero, which defeats the purpose of random feature 
approximation. More recently, based on \citep{Blanchard-07} and an earlier version of this work \citep{Sriperumbudur-17}, using inclusion and approximation operators, \citet{Ullah-18} compared the $\ell$-eigenspaces (with $\ell$ fixed) of EKPCA and RF-EKPCA by comparing certain inner product of the uncentered covariance operator with the difference between the projection operators associated with $\ell$-eigenspaces of KPCA and EKPCA (\emph{resp.} RF-EKPCA), after embedding them all as Hilbert-Schmidt operators on $L^2(\bb{P})$. Through upper bounds on these  differences of inner products, they argued that $m=\sqrt{n}$ random features are sufficient for RF-EKPCA to have similar statistical behavior to that of EKPCA, thereby guaranteeing better computational complexity at no statistical loss. However, the work lacks on two fronts: (i) The comparison is made using only upper bounds on the performance criterion (i.e., difference of inner products) and no matching lower bounds are provided to establish their sharpness, which means the sufficiency of $\sqrt{n}$ random features is inconclusive, and (ii) the criterion used for comparison has no clear interpretation. In contrast, in this work, we use performance criteria which have a clear interpretation and establish matching upper and lower bounds on their statistical behavior.

On the other hand, statistical behavior of EKPCA is well 
understood. \citet{Shawe-Taylor-05} studied the statistical consistency of EKPCA 
in terms of the reconstruction error of the estimated $\ell$-eigenspace and 
obtained a convergence rate of $n^{-1/2}$. By taking into account the decay rate of the eigenvalues of the covariance operator, 
improved rates are obtained by \citet{Blanchard-07} and \citet{Rudi-13}.
%Using localized arguments from 
%M-estimation (e.g., see \citealp{Bartlett-05,Koltchinskii-06}), 
%\citet{Blanchard-07} improved these rates by taking into account the decay rate 
%of the eigenvalues of the covariance operator. 
However, unlike in this paper 
where the reconstruction error is defined in terms of convergence in $\lp$, 
these works consider convergence in $\Hk$. The question of convergence of 
$\ell$-eigenspaces associated with EKPCA was considered by \citet{Zwald-05} as 
convergence of orthogonal projection operators on $\Hk$ in Hilbert-Schmidt norm 
and obtained a convergence rate of $n^{-1/2}$.
% Similar to above, this result is different from the results derived in this 
% paper as the latter deal with projection operators on $\lp$ instead of on $\Hk$. 
%as in the former.

In the discussion so far, we only considered random feature approximation to $\Phi$. At a broader level, to address the computational issues, various other approximation methods have been 
proposed and investigated in the kernel methods literature. Some of the popular approximation strategies include the incomplete Cholesky 
factorization \citep{Fine-01, Bach-05a}, Nystr\"{o}m method (e.g., see \citealp{Williams-01, Drineas-05}), sketching \citep{Yang-17}, sparse greedy approximation 
\citep{Smola-00}, etc. 
% These methods can be grouped into two categories, wherein some employ an approximation to the dual problem while the other approximate the primal problem. 
% Incomplete Cholesky factorization and sketching fall in the former category while the Nystr\"{o}m and random feature approximation belong to the latter category of approximating the 
% primal problem. 
While it has been widely accepted that these approximate methods including random feature approximation provide significant computational advantages and has been empirically shown to provide learning algorithms or solutions
that do not suffer from significant deterioration in performance compared to those without approximation \citep{Rahimi-08a,Kumar-09,Yang-12,Yang-17}, until recently, the statistical consistency of 
these approximate methods is not well understood. 
% Most of the theoretical studies have dealt with the quality of kernel approximation (e.g., see \citealp{Zhang-08,Drineas-05,Rahimi-08a,Sriperumbudur-16}),
% which then have been used to study the statistical convergence of these approximate methods (e.g., \citealp{Jin-13,Cortes-10}). 
In fact, over the last few years, the statistical behavior of these approximation schemes have been investigated only in the context of kernel ridge regression, wherein 
it has been shown \citep{Bach-13, Alaoui-15, Rudi-15, Yang-17, Rudi-17} that Nystr\"{o}m, 
random feature and sketching based approximate kernel ridge regression are 
consistent and achieve minimax rates of convergence as achieved by the 
exact methods 
but using fewer features than the sample size.
%However, recently, sharper analysis on the statistical consistency 
%of these approximate methods, particularly involving kernel ridge regression has 
%been carried out \citep{Bach-13, Alaoui-15, Rudi-15, Yang-17, Rudi-17}, wherein 
%it has been shown that Nystr\"{o}m, 
%random feature and sketching based approximate kernel ridge regression are 
%consistent and they also achieve minimax rates of convergence as achieved by the 
%exact methods 
%but using fewer features than the sample size. 
This means, these approximate 
kernel ridge regression algorithms are not only computationally efficient 
compared to their exact counterpart
but also statistically efficient, i.e., achieve the best possible convergence 
rate. On the other hand, the theoretical behavior of approximate kernel 
algorithms other than approximate kernel ridge regression is not well 
understood. 
This paper provides a theoretical understanding on the question of computational 
vs. statistical trade-off in random feature based approximate kernel 
PCA.%\vspace{1.25mm}

\section{Definitions \& Notation}\label{Sec:notation}

Define $\Vert
\bm{a}\Vert_2:=\sqrt{\sum^d_{i=1}a^2_i}$ and $\langle \bm{a},\bm{b}\rangle_2:=\sum^d_{i=1}a_ib_i$, where $\bm{a}:=(a_1,\ldots,a_d)\in\bb{R}^d$ and $\bm{b}:=(b_1,\ldots,b_d)\in\bb{R}^d$. 
$\bm{a}\otimes_2 \bm{b}:=\bm{a}\bm{b}^\top$ denotes the tensor product of $\bm{a}$ and $\bm{b}$. $\bm{I}_n$ denotes an $n\times n$ identity matrix. We define $\bm{1}_n:=(1,\stackrel{n}{\ldots},1)^\top$ and $\bm{H}_n:=\bm{I}_n-\frac{1}{n}\bm{1}_n\otimes_2\bm{1}_n$. $\delta_{ij}$ denotes the
Kronecker delta. $a\wedge b:=\min(a,b)$ and $a\vee b:=\max(a,b)$. $[n]:=\{1,\ldots,n\}$ for $n\in\bb{N}$. For constants $a$ and $b$, $a\lesssim b$ (\emph{resp.} $a\gtrsim b$) denotes that there exists a positive constant $c$ (\emph{resp.} $c'$) such that $a\le cb$ (\emph{resp.} $a\ge c'b$). For a random variable $A$ with law $P$ and a constant $b$, $A\lesssim_P b$ denotes that for any $\delta>0$, there exists a positive constant $c_\delta<\infty$ such that $P(A\le c_\delta b)\ge \delta$.

For a topological
space $\Cal{X}$, 
\begin{comment}
$C(\Cal{X})$ (\emph{resp.} $C_b(\Cal{X})$) denotes the space of
all continuous (\emph{resp.} bounded continuous) functions on $\Cal{X}$. For $f\in
C_b(\Cal{X})$, $\Vert f\Vert_\infty:=\sup_{x\in\Cal{X}}|f(x)|$
denotes the supremum norm of $f$. For a
locally compact Hausdorff space $\Cal{X}$, $f\in C(\Cal{X})$ is said to
\emph{vanish at infinity} if for every $\epsilon > 0$ the set $\{x :
|f(x)|\ge\epsilon\}$ is compact. The class of all continuous $f$ on $\Cal{X}$
which vanish at infinity is denoted as $C_0(\Cal{X})$. 
\end{comment}
$M^b_+(\Cal{X})$ denotes the set of all finite non-negative
Borel measures on $\Cal{X}$. For $\mu\in M^b_+(\Cal{X})$, $L^r(\Cal{X},\mu)$
denotes the Banach space of $r$-power ($r\ge
1$) $\mu$-integrable functions. For $f\in L^r(\Cal{X},\mu)$, $\Vert
f\Vert_{L^r(\mu)}:=\left(\int_\Cal{X}|f|^r\,d\mu\right)^{1/r}$ denotes
the $L^r$-norm of $f$ for $1\le r<\infty$. $\mu^n:=\mu\times\stackrel{n}{\ldots}\times\mu$ is the $n$-fold product measure. 
$\Cal{H}$ denotes a reproducing kernel Hilbert space with a reproducing kernel $k:\Cal{X}\times\Cal{X}\rightarrow\bb{R}$. 
% For a
% locally compact Hausdorff space $\Cal{X}$, $f\in C(\Cal{X})$ is said to
% \emph{vanish at infinity} if for every $\epsilon > 0$ the set $\{x :
% |f(x)|\ge\epsilon\}$ is compact. The class of all continuous $f$ on $\Cal{X}$
% which vanish at infinity is denoted as $C_0(\Cal{X})$.

%Let $H_1$ and $H_2$ be abstract Hilbert spaces. 
% For a bounded linear operator $S:H_1\rightarrow H_2$, its
% \emph{operator norm} of $S$ is defined as $\Vert S\Vert_{\mathcal{L}^\infty(H_1,H_2)}:=\sup\{\Vert
% Sx\Vert_{H_2}:x\in B_{H_1}\}$. 
%$\Cal{L}(H_1,H_2)$ denotes the space of bounded
%linear operators from $H_1$ to $H_2$. For $S\in \Cal{L}(H_1,H_2)$, $S^*$ denotes the \emph{adjoint} of $S$.
%$S\in\Cal{L}(H_1,H_2)$ is said to be
%\emph{compact} if $\overline{SB_{H_1}}$ is a compact subset in $H_2$, where $B_{H_1}$ is a unit ball in $H_1$. The
%\emph{adjoint operator} $S^*:H_2\rightarrow H_1$ of $S\in \Cal{L}(H_1,H_2)$ is
%defined by $\langle x,S^*y\rangle_{H_1}=\langle Sx,y \rangle_{H_2},\,x\in
%H_1,\,y\in H_2$. 
%For an abstract 
$S\in\Cal{L}(H)$
%:=\Cal{L}(H,H)$ 
is called \emph{self-adjoint} if
$S^*=S$ and is called \emph{positive} if $\langle Sx,x\rangle_H\ge 0$ for all
$x\in H$, where $\Cal{L}(H)$ is the space of bounded linear operators on a Hilbert space $H$ and $S^*$ denotes the adjoint of $S$. $\alpha\in\bb{R}$ is called an \emph{eigenvalue} of $S\in\Cal{L}(H)$
if there exists an $x\ne 0$ such that $Sx=\alpha x$ and such an $x$ is called
the \emph{eigenvector}/\emph{eigenfunction} of $S$ and $\alpha$. An eigenvalue is said to be \emph{simple} if it has multiplicity one. $\Vert S\Vert_{\Cal{L}^r(H)}$ denotes the trace, Hilbert-Schmidt and operator norms of a self-adjoint operator $S\in\Cal{L}(H)$ when $r=1,\,2$ and $\infty$, respectively. 
% For $S\in \Cal{L}(H_1,H_2)$, $\norm{S}_{\Tr(H_1,H_2)}$, $\Vert S\Vert_{\HS(H_1,H_2)}$ and $\Vert S\Vert_{\Cal{L}^\infty(H_1,H_2)}$ denote the trace, Hilbert-Schmidt and operator norms of $S$. If $S$ is self-adjoint on $H_1$, then $\Vert S\Vert_{\Tr(H_1,H_1)}$, $\Vert S\Vert_{\HS(H_1,H_1)}$ and $\Vert S\Vert_{\Cal{L}^\infty(H_1,H_1)}$ are denoted as $\Vert S\Vert_{\Tr(H_1)}$,
% $\Vert S\Vert_{\HS(H_1)}$ and $\Vert S\Vert_{\Cal{L}^\infty(H_1)}$ respectively. 
For $x,y\in
H$, $x\otimes_{H} y$ is an element of the tensor product space
$H\otimes H$ which can also be seen as an operator from $H$ to $H$ as
$(x\otimes_{H} y)z=x\langle y,z\rangle_{H}$ for any $z\in H$.
% 
% For compact, positive, self-adjoint
% $S\in \Cal{L}(H)$, $S^r: H\rightarrow H$, $r\ge 0$ is called a \emph{fractional
% power} of $S$ and $S^{1/2}$ is the \emph{square root} of $S$, which we write as
% $\sqrt{S}:=S^{1/2}$. 
% An operator $S\in \Cal{L}(H_1,H_2)$ is
% \emph{Hilbert-Schmidt} if $\Vert
% S\Vert_{\HS(H_1,H_2)}:=(\sum_{j\in J}\Vert Se_j\Vert^2_{H_2})^{1/2}<\infty$
% where $(e_j)_{j\in J}$ is an arbitrary orthonormal basis of separable
% Hilbert space $H_1$. $S\in\Cal{L}(H_1,H_2)$ is said to be of \emph{trace class}
% if $\Vert S\Vert_{\Tr(H_1,H_2)}:=\sum_{j\in J}\langle (S^* S)^{1/2}e_j,e_j\rangle_{H_1}<\infty$. 
% If $S$ is self-adjoint on $H_1$, then $\Vert S\Vert_{\Tr(H_1,H_1)}$, $\Vert S\Vert_{\HS(H_1,H_1)}$ and $\Vert S\Vert_{\Cal{L}^\infty(H_1,H_1)}$ are denoted as $\Vert S\Vert_{\Tr(H_1)}$,
% $\Vert S\Vert_{\HS(H_1)}$ and $\Vert S\Vert_{\Cal{L}^\infty(H_1)}$ respectively. For $x,y\in
% H_1$, $x\otimes_{H_1} y$ is an element of the tensor product space
% $H_1\otimes H_1$ which can also be seen as an operator from $H_1$ to $H_1$ as
% $(x\otimes_{H_1} y)z=x\langle y,z\rangle_{H_1}$ for any $z\in H_1$. 
%$\Cal{R}(S)$
%denotes the \emph{range space (or image)} of $S$ and $\Cal{N}(S)$ denotes the \emph{null space (or kernel)} of $S$.

%\input{kme-estimation}
\section{Variants of Kernel PCA: Population, Empirical and Approximate}
\label{Sec:kpca}

In this section, we review kernel PCA \citep{Scholkopf-98} in population and empirical settings and introduce approximate kernel PCA based on random features. This section not only provides preliminaries on kernel PCA but also fixes some notation that will be used throughout the paper. To start with, we assume the following for the rest of the paper:%\vspace{-2mm}
%and then present the strong consistency of KPCA, i.e., the convergence of eigenspaces of the empirical covariance operator 
%to that of the population covariance operator. As mentioned in Section~\ref{Sec:Introduction}, while such a result is already known \citep{Zwald-05}, 
%in Theorem~\ref{Thm:kpca-main}, we present the result in $L^2(\Cal{X},\bb{P})$. This will be later useful to compare the eigenspace convergence behavior of KPCA to that of 
%RF-KPCA. 
%Throughout the paper, we assume the following:\vspace{-2mm}
\begin{itemize}
 \item[($A_1$)] $(\Cal{X},\Cal{B})$ is a second countable (i.e., completely separable) space endowed with Borel $\sigma$-algebra $\Cal{B}$. $(\Cal{H},k)$ is an RKHS of real-valued functions on $\Cal{X}$ with a bounded continuous strictly positive definite kernel $k$ satisfying $\sup_{x\in\Cal{X}}k(x,x)=:\kappa<\infty$.
\end{itemize}
The second countability of $\Cal{X}$ and $\Cal{B}$ being countably generated ensure that for any $\sigma$-finite measure $\mu$ defined on $\Cal{B}$, $L^r(\Cal{X},\mu)$ is separable for any $r\in [1,\infty)$ \citep[Proposition 3.4.5]{Cohn-13}. The second countability of $\Cal{X}$ and continuity of $k$ ensures $\Cal{H}$ is separable \citep[Lemma 4.33]{Steinwart-08}. The separability of $\Cal{H}$ and $k$ being bounded continuous ensures that $k(\cdot,x):\Cal{X}\rightarrow\Cal{H}$ is Bochner-measurable for all $x\in\Cal{X}$ \citep[Theorem 8 on p.5]{Dinculeanu-00}. The separability of $L^r(\Cal{X},\mu)$ and Bochner-measurability of $k(\cdot,x)$ will be crucial in our analysis.
 
\subsection{PCA in Reproducing Kernel Hilbert Space}\label{subsec:pca}

As mentioned in Section~\ref{Sec:Introduction}, 
% Classical PCA \citep{Jollife-86} involves finding a direction $\bm{a}\in\bb{R}^d$ such that $\var(\langle \bm{a},X\rangle_2)$ is maximized where $X$ is a r.v.~with law $\bb{P}$ defined on
% $\bb{R}^d$. By defining $\sig$ as
% the covariance matrix of $X$, the problem reduces to finding $\bm{a}$ that solves $\max\{\langle \bm{a},\Sigma \bm{a}\rangle_2 :\Vert \bm{a}\Vert_2=1\}$, which is nothing but the eigenvector of $\Sigma$ associated
% with the largest eigenvalue of $\Sigma$. 
kernel PCA extends the idea of PCA in $\bb{R}^d$ to an 
%this idea in an 
RKHS by finding a function $f\in\Cal{H}$ such that $\var[f(X)]$ is maximized, i.e.,
\begin{equation*}\sup\left\{\var[f(X)]:\Vert f\Vert_\Cal{H}=1\right\}=\sup\left\{\bb{E}\left[f(X)-\bb{E}\left[f(X)\right]\right]^2:\Vert f\Vert_\Cal{H}=1\right\}.\nonumber%\label{Eq:pca}
\end{equation*}
Since $f\in\Cal{H}$, using the reproducing property $f(X)=\langle f,k(\cdot,X)\rangle_\Cal{H}$, we have $\var[f(X)]=\bb{E}\left[\langle f,k(\cdot,X)\rangle_\Cal{H}-\langle f,m_\bb{P}\rangle_\Cal{H}\right]^2$
where $m_\bb{P}:=\int_\Cal{X} k(\cdot,x)\,d\bb{P}(x)\in\Cal{H}$ is the \emph{mean element} of $\bb{P}$, defined as: for all $f\in \Cal{H}$, $\langle f,m_\bb{P}\rangle_\Cal{H}=\bb{E}\left[f(X)\right]$. The boundedness of $k$ guarantees that $\Mp$ is well-defined as it ensures 
%=\bb{E}\left[\langle f,k(\cdot,X)\rangle_\Cal{H}\right]=\left\langle f,\int_\Cal{X} k(\cdot,x)\,d\bb{P}(x)\right\rangle_\Cal{H}$, where the 
% \begin{equation}
% \langle f,m_\bb{P}\rangle_\Cal{H}=\bb{E}\left[f(X)\right]=\bb{E}\left[\langle f,k(\cdot,X)\rangle_\Cal{H}\right]=\left\langle f,\int_\Cal{X} k(\cdot,x)\,d\bb{P}(x)\right\rangle_\Cal{H}.\label{Eq:Bochner}
% \end{equation}
%The 
%last equality 
%in \eqref{Eq:Bochner} 
% holds based on the Riesz representation theorem (e.g., see \citealp{Reed-80}) and the fact that $\int_\Cal{X} \sqrt{k(x,x)}\,d\bb{P}(x)<\infty$, which ensures that 
$k(\cdot,X)$ is $\bb{P}$-integrable in the Bochner sense (see \citealp*[Definition 1 and Theorem 2]{Diestel-77}). Therefore,
\begin{eqnarray*}
%&{}{}&
\var[f(X)]=\bb{E}\left[\langle f,k(\cdot,X)-m_\bb{P}\rangle^2_\Cal{H}\right]
%\nonumber\\
%&{}={}&\bb{E}\left[\langle f, \left((k(\cdot,X)-m_\bb{P})\oh (k(\cdot,X)-m_\bb{P})\right)f\rangle_\Cal{H}\right]\nonumber\\
% &{}\stackrel{(\star)}{=}{}&\left\langle f,\left(\int_\Cal{X} (k(\cdot,x)-m_\bb{P})\oh (k(\cdot,x)-m_\bb{P})\,d\bb{P}(x)\right)f\right\rangle_\Cal{H}
\stackrel{(\star)}{=}{} \langle f,\Sigma f\rangle_\Cal{H},\nonumber
%\vspace{-1mm}
\end{eqnarray*}
where $(\star)$ follows from the Riesz representation theorem and the boundedness of $k$, which combinedly guarantee the Bochner $\bb{P}$-integrability of 
%and the fact that 
$k(\cdot,X)\oh k(\cdot,X)$. Here
%is $\bb{P}$-integrable in the Bochner sense, with 
%since
%$\int k(x,x)\,d\bb{P}(x)<\infty$, with 
\begin{equation}\label{Eq:cov}
\Sigma:=\int_\Cal{X}(k(\cdot,x)-m_\bb{P})\oh (k(\cdot,x)-m_\bb{P})\,d\bb{P}(x)
%=\int_\Cal{X} k(\cdot,x) \oh k(\cdot,x)\,d\bb{P}(x)-m_\bb{P}\oh m_\bb{P}
\end{equation}
is the covariance operator on $\Cal{H}$ whose action on $f\in\Cal{H}$ is defined as
$\Sigma f=\int_\Cal{X} k(\cdot,x)f(x)\,d\bb{P}(x)-m_\bb{P}\int_\Cal{X}f(x)\,d\bb{P}(x).$
Therefore, the kernel PCA problem exactly resembles classical PCA where the goal is to find $f\in\Cal{H}$ that solves $\sup\left\{\langle f,\Sigma f\rangle_\Cal{H}:\Vert f\Vert_\Cal{H}=1\right\}$ 
%\begin{equation}\sup\left\{\langle f,\Sigma f\rangle_\Cal{H}:\Vert f\Vert_\Cal{H}=1\right\},\label{Eq:pca-eig1}\end{equation}
with $\Sigma$ being defined as in \eqref{Eq:cov}. Since $k$ is bounded, it can be shown (see Proposition~\ref{pro:id}\emph{(iii)}) 
that $\Sigma$ is a trace-class operator and therefore Hilbert-Schmidt and compact.
Also it is obvious that $\Sigma$ is self-adjoint and positive and therefore by spectral theorem \citep[Theorems VI.16, VI.17]{Reed-80}, $\Sigma$ can be written as 
\begin{equation}\Sigma=\sum_{i\in I} \lambda_i \phi_i \oh \phi_i,\label{Eq:cov-eig}\end{equation}
where $(\lambda_i)_{i\in I}\subset\bb{R}^+$ are the eigenvalues and $(\phi_i)_{i\in I}$ are the orthonormal system of eigenfunctions of $\Sigma$ that span $\overline{\text{Ran}(\Sigma)}$ with the index set $I$ being either 
countable in which case $\lambda_i\rightarrow 0$ as $i\rightarrow\infty$
or finite. It is therefore obvious that the solution to KPCA %\eqref{Eq:pca-eig1} 
is an eigenfunction of $\Sigma$ corresponding to the largest eigenvalue. %Here $\Cal{R}(\Sigma)$ denotes the range space of $\Sigma$.
% Note that $\text{Ker}(\Sigma)=\{f\in\Cal{H}: f(X)\,\text{is constant}\,\Pb\text{-almost surely}\}$ is the null space of $\Sigma$ since $\Sigma f=0$ if and only if $\var[f(X)]=0$, i.e., $f$ is constant $\Pb$-almost surely. 
% Therefore if $\text{supp}(\Pb)=\X$ and $\Cal{H}$ does not contain constant functions (which is the case if $k(\cdot,x)\in C_0(\Cal{X})$ for all $x\in\Cal{X}$)\footnote{For a
% locally compact Hausdorff space $\Cal{X}$, $f\in C(\Cal{X})$ is said to
% \emph{vanish at infinity} if for every $\epsilon > 0$ the set $\{x :
% |f(x)|\ge\epsilon\}$ is compact. The class of all continuous $f$ on $\Cal{X}$
% which vanish at infinity is denoted as $C_0(\Cal{X})$.}, then $\text{Ker}(\Sigma)=\{0\}$ and $\Sigma$ is invertible.
% %Assuming $k(\cdot,x)\in C_0(\Cal{X})$ for all $x\in\Cal{X}$ ensures that constant functions are not included in $\Cal{H}$. 

Throughout the paper, we assume that %\vspace{-2mm}
\begin{itemize}
\item[($A_2$)] The eigenvalues $(\lambda_i)_{i\in I}$ of $\Sigma$ in \eqref{Eq:cov} are simple, positive and without any loss of generality, they satisfy a decreasing rearrangement, i.e., $\lambda_1>\lambda_2>\cdots$.
%\vspace{-2mm}
\end{itemize}
($A_2$) ensures that $(\phi_i)_{i\in I}$ form an orthonormal basis and the eigenspace corresponding
to $\lambda_i$ for any $i\in I$ is one-dimensional. This means, the orthogonal projection operator %(see Proposition~\ref{pro:projection}\emph{(i)}) 
onto $\text{span}\{(\phi_i)^\ell_{i=1}\}$ is given by $P_\ell(\Sigma) =\sum^\ell_{i=1}\phi_i\oh \phi_i.$

\subsection{Empirical Kernel PCA}\label{subsec:empirical}
In practice, $\bb{P}$ is unknown and the knowledge of $\bb{P}$ is available only through random samples $(X_i)^n_{i=1}$ drawn i.i.d.~from it. The goal of empirical kernel PCA (EKPCA) is therefore to find
$f\in\Cal{H}$ such that 
$$\widehat{\var}[f(X)]:=\frac{1}{2n(n-1)}\sum^n_{i\ne j}\left(f(X_i)-f(X_j)\right)^2
%\frac{1}{n}\sum^n_{i=1}f^2(X_i)-\left(\frac{1}{n}\sum^n_{i=1}f(X_i)\right)^2
,$$ 
i.e., the empirical variance, is maximized. Note that this is an estimate of 
%In the above, we considered an estimate of 
\begin{eqnarray}
\var[f(X)]&{}={}&\bb{E}[f^2(X)]-\bb{E}^2[f(X)]
% \end{eqnarray}
% \begin{eqnarray}
=\int f^2(x)\,d\bb{P}(x)
%\left\langle f,\left(\int_\Cal{X} k(\cdot,x)\oh k(\cdot,x)\,d\bb{P}(x)\right)f\right\rangle_\Cal{H}
-\langle f,m_\bb{P}\rangle^2_\Cal{H}\nonumber
\end{eqnarray}
based on the $U$-statistic representation, although in the literature (e.g., \citealt{Scholkopf-98}), assuming $m_\bb{P}=0$, a $V$-statistic form, i.e., $\frac{1}{n}\sum^n_{i=1}f^2(X_i)$ is used. However, it is important to note that the assumption of $m_\bb{P}=0$ is not satisfied by many kernels including the Gaussian kernel, and if this assumption is relaxed, the corresponding $V$-statistic form is not unbiased. Since unbiasedness turns to be crucial in our analysis, we choose the above $U$-statistic form though from the point of view of methodology alone, the $V$-statistic can be equally used. 

Using the reproducing property,
it is easy to show that $\widehat{\var}[f(X)]=\langle f,\sigh f\rangle_\Cal{H}$ where $\sigh:\Cal{H}\rightarrow\Cal{H}$, 
\begin{equation}\sigh:=\frac{1}{2n(n-1)}\sum^n_{i\ne j}\left(k(\cdot,X_i)-k(\cdot,X_j)\right)\oh \left(k(\cdot,X_i)-k(\cdot,X_j)\right)\label{Eq:emp-cov}\end{equation}
is an unbiased estimator ($U$-statistic) of $\Sigma$, referred to 
as the empirical covariance operator. 
% since $\Sigma$ can be alternately written as $$\Sigma=\frac{1}{2}\int_{\Cal{X}\times\Cal{X}} \left(k(\cdot,x)-k(\cdot,y)\right)\oh \left(k(\cdot,x)-k(\cdot,y)\right)\,d\bb{P}(x)\,d\bb{P}(y).$$ 
%and therefore $\sigh$ is a $U$-statistic estimator of $\Sigma$.
% and $\widehat{m}:=\frac{1}{n}\sum^n_{i=1}k(\cdot,X_i)$
% is an empirical estimator of the mean element, $m_\bb{P}$. 
Since $\sigh$ is a self-adjoint operator on (a possibly infinite dimensional) $\Cal{H}$ with rank at most $n-1$ (therefore, compact), 
%it has rank of at most $n-1$ and therefore is compact. 
it follows from the spectral theorem \citep[Theorems VI.16, VI.17]{Reed-80} that
\begin{equation}\sigh=\sum^{n-1}_{i=1}\widehat{\lambda}_i\widehat{\phi}_i\oh \widehat{\phi}_i,\label{Eq:emp-sig}\end{equation}
where $(\widehat{\lambda}_i)^{n-1}_{i=1}$ and $(\widehat{\phi}_i)^{n-1}_{i=1}$ are the eigenvalues and eigenfunctions of $\sigh$. In fact, since $k$ is strictly positive definite, it can be shown that $\text{rank}(\sigh)=n-1$ $\bb{P}$-a.s., and therefore, similar to ($A_2$), we assume the following:
%about $(\widehat{\lambda}_i)^{n-1}_{i=1}$.%\vspace{-2mm}
\begin{itemize}
\item[($A_3$)] The eigenvalues $(\widehat{\lambda}_i)^{n-1}_{i=1}$ of $\sigh$ in \eqref{Eq:emp-cov} are simple $\bb{P}$-a.s.~and without loss of generality, 
%the eigenvalues $(\widehat{\lambda}_i)^{n-1}_{i=1}$ of $\sigh$ in \eqref{Eq:emp-cov} 
%are simple, $\text{rank}(\sigh)=n-1$ and without any loss of generality, 
they satisfy a decreasing rearrangement, i.e., $\widehat{\lambda}_1>\widehat{\lambda}_2>\cdots$ $\bb{P}$-a.s.
%\vspace{-2mm}
\end{itemize}
We would like to mention that the simplicity of the eigenvalues of $\sigh$ is not really required for the results of this paper to hold. However, this assumption simplifies the notation and proofs, and therefore for the sake of simplicity and clarity, we resort to the above assumption.

Based on ($A_3$), a low-dimensional Euclidean representation of $X_i\in\X$ can be obtained as 
\begin{equation}
\left(\langle k(\cdot,X_i),\phih_1\rangle_{\Hk},\ldots,\langle k(\cdot,X_i),\phih_\ell\rangle_{\Hk}\right)^\top=\left(\phih_1(X_i),\ldots,\phih_\ell(X_i)\right)^\top
%\in\bb{R}^\ell
,\label{Eq:dim-red}
\end{equation}
where $\ell<n-1$ and $i\in[n]$. Clearly, the choice of $k(\cdot,x)=\langle\cdot,x\rangle_2$ for $x\in\bb{R}^d$ in \eqref{Eq:dim-red} reduces to the usual low-dimensional representation using linear PCA. Under ($A_3$), we denote the orthogonal projection operator
onto $\text{span}\{(\phih_i)^\ell_{i=1}\}$ as $P_\ell(\sigh)$, which is given by
$P_\ell(\sigh)=\sum^\ell_{i=1}\widehat{\phi}_i\oh \widehat{\phi}_i.$

Note that the Euclidean representation in \eqref{Eq:dim-red} requires the knowledge of $(\phih_i)^{n-1}_{i=1}$, which are not obvious to compute even though $\sigh$ has finite rank, as they are solution to a possibly infinite dimensional eigen problem. The following result (proved in Section~\ref{subsec:pro-eigsystem}) 
shows that the eigensystem $(\widehat{\lambda}_i,\widehat{\phi}_i)^{n-1}_{i=1}$ of $\sigh$ can be obtained by finding the eigensystem of a $n\times n$ matrix. 
% solving a finite dimensional eigen problem of a matrix of size $n\times n$. 
This means the computation of $(\lambdah_i,\phih_i)^{\ell}_{i=1}$ for $\ell\le n$ has a space complexity of $O(n^2)$ and a time complexity of $O(n^2\ell)$--e.g., by partial SVD methods such as Krylov subspace method (see \citealp[Sections 3.3.2 \& 3.3.3]{Halko-11}).
%--by e.g., power method and deflation.
\begin{proposition}\label{pro:eigsystem}
Let $(\lambdah_i,\phih_i)_i$ be the eigensystem of $\sigh$ in \eqref{Eq:emp-sig}. Define $\bm{K}=[k(X_i,X_j)]_{i,j\in[n]}$ and $\bm{H}_n:=\bm{I}_n-\frac{1}{n}\bm{1}_n\otimes_2\bm{1}_n$. Then 
$(\lambdah_i,\widehat{\bm{\alpha}}_i)_i$ are the eigenvalues and eigenvectors of $\frac{1}{n-1}\bm{KH}_n$ with 
%$$\phih_i=\frac{1}{\lambdah_i}S_n^*\mathbf{H}_n\boldsymbol\alpha_i,$$
$$\phih_i=\frac{1}{\lambdah_i\sqrt{n}}\sum^n_{j=1} \gamma_{i,j}k(\cdot,X_j),$$
% where $(\lambdah_i,\boldsymbol\alpha_i)_i$ are the eigenvalues and eigenvectors of $\mathbf{K}\mathbf{H}_n$ respectively and $S_n^*:\R^n\rightarrow\Hk,\,\boldsymbol{\alpha}\mapsto\sum_{i=1}^n\alpha_ik(\cdot,X_i)$ with $\boldsymbol{\alpha}=(\alpha_1,\ldots,\alpha_n)^\top$. 
where $\bm{\gamma}_i:=(\gamma_{i,1},\ldots,\gamma_{i,n})^\top=\frac{n}{n-1}\bm{H}_n\widehat{\bm{\alpha}}_i$ with $\widehat{\bm{\alpha}}_i\notin \Cal{N}(\bm{H}_n)$.
%and $(\widehat{\bm{\alpha}}_i)_i$ being the eigenvectors of $\frac{1}{n-1}\bm{KH}_n$.
\end{proposition}
Using representer theorem \citep{Kimeldorf-71}, \citet{Scholkopf-98} have shown a similar result for EKPCA but with uncentered covariance operator (i.e., $\Sigma$ with $\Mp=0$) when $\bm{K}$ is invertible. Since $\Mp=0$ is not a valid assumption for many kernels, Proposition~\ref{pro:eigsystem} handles the U-statistic version of the centered covariance operator without using representer theorem and without requiring $\bm{K}$ to be invertible.
\subsection{Approximate Kernel PCA using Random Features}\label{subsec:rff}
In this section, we present approximate kernel PCA using random features, which we call as RF-KPCA.
%To this end, as discussed in Section~\ref{Sec:Introduction}, we assume that $k$ is of the form
% \begin{equation}
% k(x,y)=\intx \vp(x,\theta) \vp(y,\theta)\,d\Lambda(\theta)=\langle \vp(x,\cdot),\vp(y,\cdot)\rangle_{L^2(\Lambda)},\label{Eq:kernel-int} 
% \end{equation}
% where $\vp(x,\cdot)\in L^2(\Cal{X},\Lambda)$ for all $x\in\Cal{X}$ and $\Lambda\in M^b_+(\Cal{X})$. 
Throughout this section, we assume the following:%\vspace{-2mm}
\begin{itemize}
 \item[($A_4$)] $\Cal{H}$ is an RKHS with reproducing kernel $k$ of the form
 \begin{equation*}
k(x,y)=\intt \vp(x,\theta) \vp(y,\theta)\,d\Lambda(\theta)=\langle \vp(x,\cdot),\vp(y,\cdot)\rangle_{L^2(\Lambda)},\nonumber%\label{Eq:kernel-int} 
\end{equation*}
where $\vp:\Cal{X}\times\Theta\rightarrow\bb{R}$ is continuous, $\sup_{\theta\in\Theta,x\in\Cal{X}}|\vp(x,\theta)|\le\sqrt{\kappa}$ and $\Lambda$ is a probability measure on a second countable space $(\Theta,\Cal{A})$ endowed with Borel $\sigma$-algebra $\Cal{A}$.%\textcolor{red}{(any condition on $\phi$ to be measurable?)}
%\vspace{-2mm}
\end{itemize}
The assumption of $\Lambda$ being a probability measure on $\Theta$ is not restrictive as any $\Lambda\in M^b_+(\Theta)$ can be normalized to a probability measure. However, the 
uniform boundedness of $\vp$ over $\Cal{X}\times\Theta$ is somewhat restrictive as it is sufficient to assume $\vp(x,\cdot)\in L^2(\Cal{X},\Lambda),\,\forall\,x\in\Cal{X}$ for $k$ to be
well-defined. But the uniform boundedness of $\vp$ ensures that $k$ is bounded, as assumed in $(A_1)$. By sampling $(\theta_i)^m_{i=1}\stackrel{i.i.d.}{\sim}\Lambda$,
an approximation to $k$ can be constructed as
\begin{equation*}
 k_m(x,y)=\frac{1}{m}\sum^m_{i=1}\vp(x,\theta_i)\vp(y,\theta_i)=:\sum^m_{i=1}\vp_i(x)\vp_i(y)=\langle \Phi_m(x),\Phi_m(y)\rangle_{2},\nonumber%\label{Eq:k-approx}
\end{equation*}
where $\vp_i:=\frac{1}{\sqrt{m}}\vp(\cdot,\theta_i)$ and $\Phi_m(x):=(\vp_1(x),\ldots,\vp_m(x))^\top\in\bb{R}^m$ is the random feature map. It is easy to verify that $k_m$ is the reproducing kernel of the
RKHS \begin{equation*}\Cal{H}_m=\left\{f:f=\sum^m_{i=1}\beta_i\vp_i,\,(\beta_i)^m_{i=1}\subset\bb{R}\right\}\nonumber\end{equation*}
w.r.t.~$\langle \cdot,\cdot\rangle_{\Cal{H}_m}$ defined as $\langle f,g\rangle_{\Cal{H}_m}:=\sum^m_{i=1}\alpha_i\beta_i$ where $g=\sum^m_{i=1}\alpha_i\vp_i$. 
Therefore $\Cal{H}_m$ is isometrically isomorphic to $\bb{R}^m$. We refer the reader to \cite[Appendix E]{Rudi-17} for examples of $\varphi$ that yield some widely used reproducing kernels.

%\subsection{Approximate Kernel PCA: Population and Empirical Variants}\label{subsec:approx-kpca}
Having obtained a random feature map, the idea of RF-KPCA is to perform linear PCA on $\Phi_m(X)$ where $X\sim\bb{P}$, i.e., RF-KPCA involves
finding a direction $\bm{\beta}\in\bb{R}^m$ such that the variance of $\langle\bm{\beta},\Phi_m(X)\rangle_2$ is maximized:
\begin{equation}
\sup\left\{\var[\langle \bm{\beta},\Phi_m(X)\rangle_2]:\Vert\bm{\beta}\Vert_2=1\right\}=\sup\left\{\langle\bm{\beta}, \Omega_m\bm{\beta}\rangle_2:\Vert\bm{\beta}\Vert_2=1\right\},\label{Eq:finite-dim-Rm}
\end{equation}
where $\Omega_m:=\text{Cov}[\Phi_m(X)]=\bb{E}[(\Phi_m(X)-\bb{E}[\Phi_m(X)])\otimes_2 (\Phi_m(X)-\bb{E}[\Phi_m(X)])]$
% \begin{eqnarray}
% \Omega_m:=\text{Cov}[\Phi_m(X)]&{}={}&\bb{E}[(\Phi_m(X)-\bb{E}[\Phi_m(X)])\otimes_2 (\Phi_m(X)-\bb{E}[\Phi_m(X)])]\nonumber
% % \nonumber\\ &{}={}& \bb{E}[\Phi_m(X)\otimes_2 \Phi_m(X)]-\bb{E}[\Phi_m(X)]\otimes_2\bb{E}[\Phi_m(X)]
% %\label{Eq:cov-approx}
% \end{eqnarray} 
is a self-adjoint positive definite matrix. In fact, it is easy to verify that performing linear PCA on $\Phi_m(X)$ is same as performing KPCA in $\Cal{H}_m$ since
\begin{equation}\sup\left\{\var[f(X)]:\Vert f\Vert_{\Cal{H}_m}=1\right\}=\sup\left\{\var[\langle\bm{\beta},\Phi_m(X)\rangle_2]:\Vert\bm{\beta}\Vert_2=1\right\},
\label{Eq:Rm-Hm}
\end{equation}
which follows from $\Cal{H}_m$ being isometrically isomorphic to $\bb{R}^m$ and $f\in\Cal{H}_m$ has the form $f(x)=\langle \bm{\beta},\Phi_m(x)\rangle_2$. Note that 
\begin{equation}
\sup\left\{\var[f(X)]:\Vert f\Vert_{\Cal{H}_m}=1\right\}=\sup\left\{\langle f,\Sigma_m f\rangle_{\Cal{H}_m}:\Vert f\Vert_{\Cal{H}_m}=1\right\},\label{Eq:Hm-equal}
\end{equation}
where \begin{eqnarray}
\Sigma_m&{}={}&\int_{\Cal{X}}k_m(\cdot,x)\ohm k_m(\cdot,x)\,d\bb{P}(x)-\mpm\ohm\mpm,\nonumber
%&{}{}&\qquad-\left(\int k_m(\cdot,x)\,d\bb{P}(x)\right)\ohm\left(\int k_m(\cdot,x)\,d\bb{P}(x)\right).\nonumber
\end{eqnarray}
and $\mpm:=\int k_m(\cdot,x)\,d\bb{P}(x)$. It therefore follows from \eqref{Eq:finite-dim-Rm}--\eqref{Eq:Hm-equal} that the eigenvalues of $\Sigma_m$ and $\Omega_m$ coincide and the eigenfunctions, $(\phi_{m,i})^m_{i=1}$ of $\Sigma_m$ and eigenvectors, $(\bm{\beta}_{m,i})^m_{i=1}$ of $\Omega_m$ are related as $\phi_{m,i}(x)=\langle \bm{\beta}_{m,i},\Phi_m(x)\rangle_2$.

The empirical counterpart of 
RF-KPCA (we call it as RF-EKPCA) is obtained by solving 
$$\sup_{\Vert\bm{\beta}\Vert_2=1}\widehat{\text{Var}}[\langle\bm{\beta},\Phi_m(X)\rangle_2]=\sup_{\Vert\bm{\beta}\Vert_2=1}\langle\bm{\beta},\widehat{\Omega}_m\bm{\beta}\rangle_2=\sup_{\Vert f\Vert_{\Cal{H}_m}=1}\langle f,\sigh_m f\rangle_{\Cal{H}_m},$$
where 
\begin{eqnarray}
\sigh_m=\frac{1}{2n(n-1)}\sum^n_{i\ne j}\left(k_m(\cdot,X_i)-k_m(\cdot,X_j)\right)\ohm \left(k_m(\cdot,X_i)-k_m(\cdot,X_j)\right)\nonumber
%\label{Eq:emp-cov-approx}
\end{eqnarray}
% \begin{eqnarray}
% \sigh_m=\frac{1}{2n(n-1)}\sum^n_{i\ne j}\Phi_m(X_i)\otimes_2 \Phi_m(X_i)-\left(\frac{1}{n}\sum^n_{i=1}\Phi_m(X_i)\right)\otimes_2\left(\frac{1}{n}\sum^n_{i=1}\Phi_m(X_i)\right)\label{Eq:emp-cov-approx}
% \end{eqnarray} 
is a self-adjoint positive definite operator on $\Cal{H}_m$ that is equivalent (in the above mentioned sense) to $\widehat{\Omega}_m$, which is a $U$-statistic estimator of $\Omega_m$. 
%It is obvious that the solutions to the above mentioned optimization problems are the eigenvectors of $\Sigma_m$ and $\sigh_m$. 
Since $\Sigma_m$ and $\sigh_m$ are trace-class (see Proposition~\ref{pro:approx}\emph{(iii)}) 
and self-adjoint, spectral theorem \citep[Theorems VI.16, VI.17]{Reed-80} yields that
\begin{equation}\Sigma_m=\sum^m_{i=1}\lambda_{m,i}\phi_{m,i}\ohm \phi_{m,i}\quad\text{and}\quad \sigh_m=\sum^m_{i=1}\lambdah_{m,i}\phih_{m,i}\ohm \phih_{m,i},\nonumber
%\label{Eq:emp-approx-sig}
\end{equation}
where $(\lambda_{m,i})^m_{i=1}\subset\bb{R}^+$ (\emph{resp.} $(\lambdah_{m,i})^m_{i=1}\subset\bb{R}^+$) and $(\phi_{m,i})^m_{i=1}$ (\emph{resp.} $(\phih_{m,i})^m_{i=1}$) are the 
eigenvalues and eigenvectors of $\Sigma_m$ (\emph{resp.} $\sigh_m$).
We will assume that
\begin{itemize}
% \item[($A_5$)] $k_m$ and $\Sigma_m$ are $\Lambda$-a.s.~strictly positive definite kernel and operator respectively. The eigenvalues $(\lambda_{m,i})^m_{i=1}$ (\emph{resp.} $(\lambdah_{m,i})^m_{i=1}$) of $\Sigma_m$ (\emph{resp.} $\sigh_m$) are $\Lambda$-a.s.~(\emph{resp.}~$\Lambda\times \bb{P}$) simple, 
% %$\text{rank}(\Sigma_m)=m$, (\emph{resp.} $\text{rank}(\sigh_m)=m$) 
% and without any loss of generality, they satisfy a decreasing rearrangement, i.e., $\lambda_{m,1}>\lambda_{m,2}>\cdots$ 
% (\emph{resp}. $\lambdah_{m,1}>\lambdah_{m,2}>\cdots$) $\Lambda$-a.s. (\emph{resp.} $\Lambda\times \bb{P}$-a.s.).
\item[($A_5$)] The eigenvalues $(\lambda_{m,i})^m_{i=1}$ (\emph{resp.} $(\lambdah_{m,i})^m_{i=1}$) of $\Sigma_m$ (\emph{resp.} $\sigh_m$) are simple, positive
%$\text{rank}(\Sigma_m)=m$, (\emph{resp.} $\text{rank}(\sigh_m)=m$) 
and without any loss of generality, they satisfy a decreasing rearrangement, i.e., $\lambda_{m,1}>\lambda_{m,2}>\cdots$ 
(\emph{resp}. $\lambdah_{m,1}>\lambdah_{m,2}>\cdots$) $\Lambda$-a.s. (\emph{resp.} $\Lambda\times \bb{P}$-a.s.).
%\vspace{-2mm}
\end{itemize}
Based on $(A_5)$, a low-dimensional representation of $X_i\in\X$ can be obtained as
$$(\phih_{m,1}(X_i),\ldots,\phih_{m,\ell}(X_i))^\top\in\bb{R}^\ell,$$
%$$\left(\langle\Phi_m(X_i),\phih_{m,1}\rangle_2,\ldots,\langle\Phi_m(X_i),\phih_{m,\ell}\rangle_2\right)^\top\in\bb{R}^\ell$$
$\ell\le m$, $i\in[n]$. The orthogonal projection operators onto the $\ell$-eigenspaces of $\Sigma_m$ and $\sigh_m$ are given by $P_\ell(\Sigma_m)=\sum_{i=1}^\ell\phi_{m,i}\ohm\phi_{m,i}$ and $P_\ell(\sigh_m)=\sum_{i=1}^\ell\phih_{m,i}\ohm\phih_{m,i}$, respectively. 
% \begin{equation}\label{eq:rf-kpca projector}
%     P_\ell(\Sigma_m)=\sum_{i=1}^\ell\phi_{m,i}\ohm\phi_{m,i}\quad\text{and}\quad P_\ell(\sigh_m)=\sum_{i=1}^\ell\phih_{m,i}\ohm\phih_{m,i}.
% \end{equation} 
Since $(\lambdah_{m,i},\phih_{m,i})^\ell_{i=1}$ for $\ell\le m$ is a subset of the eigensystem of $\sigh_m$ (which is equivalent to the $m\times m$ matrix $\widehat{\Omega}_m$), the associated time complexity of finding this set scales as $O(m^2\ell+m^2n)$, where $O(m^2n)$ is the complexity of computing $\widehat{\Omega}_m$. This implies that RF-EKPCA is computationally cheaper than EKPCA if $m<\sqrt{n\ell}$ for $\ell\le n$, i.e., $m=o(\sqrt{n\ell})$ as $n,\ell\rightarrow\infty$. 

\section{Computational vs.~Statistical Trade-off}
%: Main Results}
\label{Sec:results}
% \textcolor{red}{Look at ingo's paper and introduce assumptions so that the ONS become a basis. like continuity, no atomic measure etc.}
%As discussed above, the advantage of RF-EKPCA is that it 
%involves finding the eigen system of an $m\times m$ matrix instead of an $n\times n$ matrix, thereby leading to computational savings if $m<n$. 
The main goal of this paper is to 
investigate whether the above mentioned computational saving achieved by RF-EKPCA is obtained at the cost of statistical ``efficiency" or not. 
% To pursue this investigation, we consider two different objectives to compare the statistical performance of RF-EKPCA to that of EKPCA. 
To this end, we investigate this question by using the reconstruction error as a measure of statistical performance.
% in Section~\ref{subsec:error}, we investigate the statistical performance of EKPCA and RF-EKPCA in terms of the reconstruction error. 
To elaborate, in linear PCA, the quality of reconstruction after projecting a random variable $X\in\R^d$ onto the span of top $\ell$ eigenvectors of $\Sigma$ is captured by the reconstruction error, given by
\begin{equation*}\bb{E}_{X\sim \Pb}\left\Vert (X-\mu)-\sum^\ell_{i=1}\langle (X-\mu),\phi_i\rangle_2\phi_i\right\Vert^2_2,\nonumber %\label{Eq:rec-pca}
\end{equation*}
where $(\phi_i)_i$ are the eigenvectors of $\Sigma=\bb{E}[XX^\top]-\mu\mu^\top$ with $\mu:=\bb{E}[X]$. Since $(\phi_i)_i$ form an orthonormal basis in $\R^d$, the above consideration makes sense and clearly, the choice of $\ell=d$ yields zero error. Since KPCA generalizes linear PCA---the choice of $k(x,y)=\langle x,y\rangle_2$ reduces kernel PCA to linear PCA---, it is natural to consider the reconstruction error in KPCA and EKPCA to be
\begin{equation}
\bb{E}_{X\sim\Pb}\left\Vert \overline{k}(\cdot,X)-\sum^\ell_{i=1}\langle \xi(X),\zeta_i\rangle_\Hk\zeta_i\right\Vert^2_\Hk\label{Eq:rec-error-kpca}
\end{equation}
with $\xi(X)=\overline{k}(\cdot,X)$, $\zeta_i=\phi_i$ and $\xi(X)=\widetilde{k}(\cdot,X)$, $\zeta_i=\phih_i$ respectively, 
where $(\phi_i)_i$ and $(\phih_i)_i$ are the orthonormal eigenfunctions of $\Sigma$ and $\sigh$ given in \eqref{Eq:cov-eig} and \eqref{Eq:emp-sig}, corresponding to the eigenvalues $(\lambda_i)_i$ and $(\lambdah_i)_i$ satisfying $(A_2)$ and $(A_3)$ respectively.
% \begin{equation}\bb{E}_{X\sim\Pb}\left\Vert \overline{k}(\cdot,X)-\sum^\ell_{i=1}\langle \overline{k}(\cdot,X),\phi_i\rangle_\Hk\phi_i\right\Vert^2_\Hk\,\,\,\,\text{and}\,\,\,\, \bb{E}_{X\sim\Pb}\left\Vert \overline{k}(\cdot,X)-\sum^\ell_{i=1}\langle \widetilde{k}(\cdot,X),\phih_i\rangle_\Hk\phih_i\right\Vert^2_\Hk \label{Eq:rec-error-kpca}\end{equation}
% respectively, where $(\phi_i)_i$ and $(\phih_i)_i$ are the orthonormal eigenfunctions of $\Sigma$ and $\sigh$ given in \eqref{Eq:cov-eig} and \eqref{Eq:emp-sig}, corresponding to the eigenvalues $(\lambda_i)_i$ and $(\lambdah_i)_i$ satisfying $(A_2)$ and $(A_3)$ respectively. 
Here for any $x\in\Cal{X}$, $$\overline{k}(\cdot,x)=k(\cdot,x)-\intx k(\cdot,x)\,d\Pb(x)\quad\text{and}\quad\widetilde{k}(\cdot,x)=k(\cdot,x)-\frac{1}{n}\sum^n_{i=1} k(\cdot,X_i).$$ Since empirical mean is used in empirical linear PCA to find principal components, we used $\widetilde{k}$ in \eqref{Eq:rec-error-kpca} to measure the performance of EKPCA. However, similar performance measure as in \eqref{Eq:rec-error-kpca} is not possible for RF-KPCA and RF-EKPCA as the orthonormal eigenvectors of $\Sigma_m$ and $\sigh_m$ belong to $\Cal{H}_m$ (isometrically isomorphic to $\R^m$) while $\overline{k}(\cdot,X)$ and $\widetilde{k}(\cdot,X)$ belong to $\Hk$, which means the notion of respectively projecting $\overline{k}(\cdot,X)$ and $\widetilde{k}(\cdot,X)$ onto $(\phi_{m,i})_i$ and $(\phih_{m,i})_i$ is vacuous. However, since both $\Hk$ and $\Cal{H}_m$ are subspaces of $L^2(\Pb)$, it is natural to consider the reconstruction error in $L^2(\Pb)$-norm so that the behaviors of EKPCA and RF-EKPCA can be compared.
% 
% In order to make the comparison between EKPCA and RF-EKPCA possible in terms of reconstruction error, we define certain operators below so that all the objects of interest are embedded into a common space, which we choose to be $L^2(\Pb)$. 

To this end, define 
an inclusion operator (up to a constant)
\begin{equation*}\id:\Cal{H}\rightarrow \lp,\quad f\mapsto f-f_\bb{P},\nonumber
%\label{Eq:id}
\end{equation*}
where $f_\bb{P}:=\intx f(x)\,d\bb{P}(x)$. It can be shown (see Proposition~\ref{pro:id}) 
that $\id^*:\lp\rightarrow \Cal{H},\,\, f\mapsto \intx k(\cdot,x)f(x)\,d\bb{P}(x)-m_\bb{P}f_\bb{P}$ and $\Sigma=\id^*\id.$
% $$\id^*:\lp\rightarrow \Cal{H},\quad f\mapsto \intx k(\cdot,x)f(x)\,d\bb{P}(x)-m_\bb{P}f_\bb{P}\quad\text{and}\quad \Sigma=\id^*\id.$$
% Usually, in kernel ridge regression (e.g., see \citealp{Caponnetto-07,Smale-07}) the inclusion operator is defined as $\id:\Cal{H}\rightarrow\lp$, $f\mapsto f$ with adjoint $\id^*f=\intx k(\cdot,x)f(x)\,d\bb{P}(x)$
% yielding an uncentered covariance operator $\Sigma=\id^*\id=\intx k(\cdot,x)\oh k(\cdot,x)\,d\bb{P}(x)$. Since we work with centered covariance operator as defined in \eqref{Eq:cov}, we defined $\id$ appropriately
% as in \eqref{Eq:id}. 
%While the following result holds (up to some constants) even if $\id$ is defined in the usual way (i.e., $\id f=f$), the definition in \eqref{Eq:id} is critical in the analysis of the consistency of RF-KPCA, which we will elaborate in Section~\ref{Sec:rff}.
Similarly, we define an approximation operator \begin{equation*}\frak{A}:\Cal{H}_m\rightarrow \lp,\quad f=\sum^m_{i=1}\beta_i\varphi_i\mapsto \sum^m_{i=1}\beta_i(\vp_i-\vp_{i,\bb{P}})=f-f_\bb{P},\nonumber
%\label{Eq:approx}
\end{equation*}
where $\vp_{i,\bb{P}}:=\intx \vp_i(x)\,d\bb{P}(x).$ It can be shown (see Proposition~\ref{pro:approx}) that
$\frak{A}^*:\lp\rightarrow\Cal{H}_m,\,\,f\mapsto\sum_{i=1}^m(\inner{f}{\vp_i}_{\lp}-f_\Pb\vp_{i,\Pb})\vp_i$ and $\Sigma_m=\frak{A}^*\frak{A}.$
% $$\frak{A}^*:\lp\rightarrow\Cal{H}_m, \quad f\mapsto\sum_{i=1}^m\left(\inner{f}{\vp_i}_{\lp}-f_\Pb\vp_{i,\Pb}\right)\vp_i\quad\text{and}\quad \Sigma_m=\frak{A}^*\frak{A}.$$
Based on these operators, we first consider alternate notions of reconstruction error in $\lp$ for KPCA, EKPCA, RF-KPCA and RF-EKPCA %as follows
in Section~\ref{subsec:notions} 
and then present results comparing the statistical behavior of EKPCA and RF-EKPCA in Sections~\ref{subsec:ep} and \ref{subsec:pe}. 
% This redefinition can be done in two ways: (i) The projections (principal components) are first computed and then embedded in $\lp$ and (ii) the functions are first embedded in $\lp$ and then projected to find the principal components. In the following, we will first elucidate these ideas for KPCA and then define and analyze these different reconstruction errors for EKPCA, RF-KPCA and RF-EKPCA in Sections~\ref{subsec:ep} and \ref{subsec:pe}.
% 
% The second objective we consider to compare the behavior of EKPCA and RF-EKPCA is the convergence of eigenspaces of $\sigh$ and $\sigh_m$ to that of $\Sigma$ in terms of the convergence of corresponding projection operators. While the convergence of eigenspace of $\sigh$ to that of $\Sigma$ is well-posed as both these eigenspaces are subspaces of $\Hk$, the corresponding convergence of the eigenspace of $\sigh_m$ to that of $\Sigma$ is ill-posed. This is because the eigenspace of $\sigh_m$ is a subspace in $\Cal{H}_m$ while that of $\Sigma$ is a subspace in $\Hk$. However, using the inclusion and approximation operators defined before in \eqref{Eq:id} and \eqref{Eq:approx}, we embed all these eigenspaces as subspaces in $L^2(\Pb)$ and carry out the convergence study in Section~\ref{subsec:eigspace}.
%\subsection{Reconstruction Error: Convergence Analysis}\label{subsec:error}
% Based on the discussion above, in this section, we redefine the reconstruction error associated with KPCA, EKPCA, RF-KPCA and RF-EKPCA as follows and then compare the convergence behavior of this error for EKPCA and RF-EKPCA. 
\subsection{Alternate Notions of Reconstruction Error}\label{subsec:notions}
Let $(\psi_i)_i\subset\Cal{H}$ and $(\mu_i)_i\subset\lp$ be arbitrary collections of functions. We now define two notions of reconstruction error in $\lp$ as follows: \textit{Reconstruct and Embed (R-E)}
\begin{eqnarray}
\eu{R}(\psi_1,\ldots,\psi_\ell)&{}={}&\bb{E}_{X\sim\Pb}\norm{\id \overline{k}(\cdot,X)-\id\left(\sum_{i=1}^{\ell}\inner{\overline{k}(\cdot,X)}{\psi_i}_{\Hk}\psi_i\right)}^2_{L^2(\Pb)}\nonumber\\
&{}={}&\bb{E}_{X\sim\Pb}\norm{\id \overline{k}(\cdot,X)-\id\left(\sum_{i=1}^{\ell}\psi_i\otimes_\Cal{H}\psi_i\right)\overline{k}(\cdot,X)}^2_{L^2(\Pb)},\nonumber
\end{eqnarray}
where $\overline{k}(\cdot,X)$ is first \emph{projected}\footnote{In general, $P_\psi:=\sum^\ell_{i=1}\psi_i\otimes_\Cal{H}\psi_i$ is not a projection operator since $P^2_\psi\ne P_\psi$.} and reconstructed along $(\psi_i)_{i\in[\ell]}$ in $\Cal{H}$, and then embedded into $\lp$ through $\id$; and \textit{Embed and Reconstruct (E-R)}
$$\eu{S}(\mu_1,\ldots,\mu_\ell)=\bb{E}_{X\sim\Pb}\norm{\id \overline{k}(\cdot,X)-\sum_{i=1}^{\ell}\inner{\id \overline{k}(\cdot,X)}{\mu_i}_{\lp}\mu_i}^2_{L^2(\Pb)},$$
where $\overline{k}(\cdot,X)$ is first embedded into $\lp$ through $\id$ and then projected and reconstructed along $\left(\mu_i\right)_{i\in[\ell]}$ in $\lp$.  
The following result (proved in Section~\ref{subsec:solution}) 
shows that the minimizers of $\eu{R}$ and $\eu{S}$ over any $(\psi_i)_{i\in[\ell]}$ in $\Cal{H}$ and $(\mu_i)_{i\in[\ell]}$ in $\lp$ are precisely the PCA solutions in $\Cal{H}$ and $\lp$, respectively.
% 
% These notions of reconstruction error are appropriate to consider as the following result shows the KPCA solution to be the minimizer of the E-R reconstruction criterion.
\begin{proposition}\label{pro:solution}
Suppose $(A_1)$ and $(A_2)$ hold. 
% Define 
% \begin{eqnarray}
% %\eu{R}(\psi_1,\ldots,\psi_\ell)&{}={}&\bb{E}_{X\sim\Pb}\norm{\id \overline{k}(\cdot,X)-\id\left(\sum_{i=1}^{\ell}\inner{\overline{k}(\cdot,X)}{\psi_i}_{\Hk}\psi_i\right)}^2_{L^2(\Pb)},\nonumber\\
% %\eu{S}(\mu_1,\ldots,\mu_\ell)&{}={}&\bb{E}_{X\sim\Pb}\norm{\id \overline{k}(\cdot,X)-\sum_{i=1}^{\ell}\inner{\id \overline{k}(\cdot,X)}{\mu_i}_{\lp}\mu_i}^2_{L^2(\Pb)},\nonumber\\
% \Cal{A}&{}={}&\left\{(\psi_i)^\ell_{i=1}\subset\Cal{H}:\langle \psi_i,\psi_j\rangle_\Cal{H}=\delta_{ij},\,\forall\,i,j\in[\ell]\right\}\quad \text{and}\nonumber\\
% \Cal{B}&{}={}&\left\{(\mu_i)^\ell_{i=1}\subset\lp:\langle \mu_i,\mu_j\rangle_{\lp}=\delta_{ij},\,\forall\,i,j\in[\ell]\right\}.\nonumber
% \end{eqnarray}
% and $$B=\left\{(\mu_i)^\ell_{i=1}\subset\lp:\langle \mu_i,\mu_j\rangle_{\lp}=\delta_{ij},\,\forall\,i,j\in[\ell]\right\}.$$
Then the following hold:\vspace{2mm}\\
(i) $(\phi_1,\ldots,\phi_\ell)=\arg\inf\{\eu{R}(\psi_1,\ldots,\psi_\ell):(\psi_i)^\ell_{i=1}\subset \Cal{H}\}$;\vspace{1mm}\\
%(ii) $R_{\Sigma,\ell}=\eu{R}(\phi_1,\ldots,\phi_\ell)$;\vspace{1mm}\\
(ii) $\left(\frac{\id\phi_1}{\sqrt{\lambda_1}},\ldots,\frac{\id\phi_\ell}{\sqrt{\lambda_\ell}}\right)=\arg\inf\{\eu{S}(\mu_1,\ldots,\mu_\ell):(\mu_i)^\ell_{i=1}\subset \lp\}$;\vspace{1mm}\\
%(iv) $S_{\Sigma,\ell}=\eu{S}\left(\frac{\id\phi_1}{\sqrt{\lambda_i}},\ldots,\frac{\id\phi_\ell}{\sqrt{\lambda_\ell}}\right)$;\vspace{1mm}\\
(iii) $R_{\Sigma,\ell}:=\eu{R}(\phi_1,\ldots,\phi_\ell)=\eu{S}\left(\frac{\id\phi_1}{\sqrt{\lambda_i}},\ldots,\frac{\id\phi_\ell}{\sqrt{\lambda_\ell}}\right):=S_{\Sigma,\ell}$;
\vspace{1mm}\\
(iv) $R_{\Sigma,\ell}\le \Vert \Sigma\Vert_{\OPH}\bb{E}_{X\sim\bb{P}}\left\Vert \overline{k}(\cdot,X)-\sum_{i=1}^{\ell}\inner{\overline{k}(\cdot,X)}{\phi_i}_{\Hk}\phi_i\right\Vert^2_\Cal{H}$.
\end{proposition}
\begin{remark}\label{Rem:solution}
(i) Proposition~\ref{pro:solution}(i) shows that the minimizer of the R-E reconstruction error is precisely the KPCA solution which is obtained by minimizing $\bb{E}\Vert\overline{k}(\cdot,X)-\sum^\ell_{i=1}\inner{\overline{k}(\cdot,X)}{\psi_i}_\Cal{H}\psi_i\Vert^2_\Cal{H}$ over $(\psi_i)_{i\in[\ell]}\subset\Cal{H}$, i.e., R-E provides an alternate interpretation for KPCA.\vspace{1.5mm}\\
% KPCA solution 
% in $\Cal{H}$---which is obtained by minimizing \eqref{Eq:rec-error-kpca} with $\xi(X)=\overline{k}(\cdot,X)$ over $(\zeta_i)^\ell_{i=1}\subset \Cal{H}$---is also the minimizer of the R-E criterion (i.e., $\eu{R}$).\vspace{1.5mm}\\
(ii) Since minimizing $\eu{S}$ is equivalent to performing PCA in $\lp$, it follows that minimizers of $\eu{S}$ are the eigenfunctions of $\id\id^*$, which are precisely $(\frac{\id\phi_i}{\sqrt{\lambda_i}})_{i\in[\ell]}$ with $(\lambda_i,\phi_i)_{i\in[\ell]}$ being the eigenpairs of $\Sigma=\id^*\id$ (see the proof for details).\vspace{1.5mm}\\
%(iii) Proposition~\ref{pro:solution}(iii) highlights the fact that the embedding and reconstruction operations are commutative for KPCA \vspace{1.5mm}\\
% However, we show in Theorems~\ref{thm:rff main thm metric 1} and \ref{thm:rff main thm metric 2} that it need not be the case for EKPCA and RF-EKPCA.\vspace{1.5mm}\\
(iii) Proposition~\ref{pro:solution}(iv) implies that $R_{\Sigma,\ell}$ is a weaker measure of reconstruction error than the one defined in \eqref{Eq:rec-error-kpca}, with the latter matching with the reconstruction error of linear PCA when $k(x,y)=\langle x,y\rangle_2$. More precisely, in Theorems~\ref{thm:rff main thm metric 1} and \ref{thm:rff main thm metric 2}, we will show that $R_{\Sigma,\ell}=S_{\Sigma,\ell}=\sum_{i>\ell}\lambda^2_i$ while %using the same proof technique, 
the reconstruction error in \eqref{Eq:rec-error-kpca} 
%can be shown to be 
behaves as $\sum_{i>\ell}\lambda_i$, which clearly establishes $R_{\Sigma,\ell}$ to be weaker than the one in \eqref{Eq:rec-error-kpca}. Later, in Section~\ref{subsecsec:othernorm}, we will generalize $\eu{R}$ such that KPCA's reconstruction error also behaves as $\sum_{i>\ell}\lambda_i$.\QEDA
\end{remark}
Similar to KPCA, in the following result (proved in Section~\ref{subsec:interpret}), 
we present alternate interpretations for EKPCA, RF-KPCA and RF-EKPCA as minimization of appropriate R-E reconstruction errors. 
% Particularly, the result shows that minimizing appropriate empirical versions of $\eu{R}$ yield EKPCA, RF-KPCA and RF-EKPCA solutions.
\begin{proposition}\label{pro:interpret}
Suppose $(A_1)-(A_5)$ hold. Define $$\eu{T}_{A,H}(P):=\norm{A^{1/2}(I-P)A^{1/2}}^2_{\Cal{L}^2(H)}$$ where $A:H\rightarrow H$ is a positive self-adjoint Hilbert-Schmidt operator on $H$ with $H$ being a separable Hilbert space. Then the following hold.\vspace{2mm}\\
(i) $\eu{R}(\psi_1,\ldots,\psi_\ell)=\eu{T}_{\Sigma,\Cal{H}}(P_\psi)$ where $P_\psi:=\sum^\ell_{i=1}\psi_i\oh\psi_i$;\vspace{1.5mm}\\
(ii) (KPCA) $(\phi_1,\ldots,\phi_\ell)=\arg\inf\{\eu{T}_{\Sigma,\Cal{H}}(P_\psi):(\psi_i)_{i\in[\ell]}\subset\Cal{H}\}$;\vspace{1.5mm}\\
(iii) (EKPCA) $(\phih_{1},\ldots,\phih_{\ell})=\arg\inf\{\eu{T}_{\sigh,\Cal{H}}(P_\psi):(\psi_i)_{i\in[\ell]}\subset\Cal{H}\}$;\vspace{1.5mm}\\
(iv) (RF-KPCA) $(\phi_{m,1},\ldots,\phi_{m,\ell})=\arg\inf\{\eu{T}_{\sig_m,\Cal{H}_m}(P_\tau):(\tau_i)_{i\in[\ell]}\subset\Cal{H}_m\}$ where $P_\tau=\sum^\ell_{i=1}\tau_i\ohm\tau_i$;\vspace{1.5mm}\\
(v) (RF-EKPCA) $(\phih_{m,1},\ldots,\phih_{m,\ell})=\arg\inf\{\eu{T}_{\sigh_m,\Cal{H}_m}(P_\tau):(\tau_i)_{i\in[\ell]}\subset\Cal{H}_m\}$.
\end{proposition}
\begin{remark}\label{Rem:solution1} (i) Proposition~\ref{pro:interpret}(i) provides an alternate expression for the R-E error in terms of the Hilbert-Schmidt norm of a certain self-adjoint operator on $\Cal{H}$ and Proposition~\ref{pro:interpret}(ii) is obvious from Proposition~\ref{pro:solution}(i). Since this alternate expression depends on $\Sigma$, by replacing $\Sigma$ with $\sigh$ (resp. $\Sigma_m$, $\sigh_m$), alternate formulation for EKPCA (resp. RF-KPCA, RF-EKPCA) can be provided as in Proposition~\ref{pro:interpret}(iii)-(v).\vspace{1.5mm}\\
(ii) Based on Proposition~\ref{pro:interpret}(iii), EKPCA can be interpreted as the minimizer of the following empirical R-E reconstruction error, 
\begin{eqnarray}
\widehat{\eu{R}}(\psi_1,\ldots,\psi_\ell)&{}={}&
\frac{1}{n}\sum^n_{j=1}\norm{\widehat{\id}\widetilde{k}(\cdot,X_j)-\widehat{\id}\left(\sum^\ell_{i=1}\inner{\widetilde{k}(\cdot,X_j)}{\psi_i}_\Cal{H}\psi_i\right)}^2_{L^2(\bb{P}_n)}\nonumber\\
&{}={}&\eu{T}_{\sigh,\Cal{H}}(P_\psi),\nonumber%\label{Eq:alternate-inter}
\end{eqnarray}
where $\widehat{\id}$ and $L^2(\bb{P}_n)$ are defined in Proposition~\ref{pro:interpret2}(iii). Similar interpretation can be provided for RF-EKPCA (resp. RF-KPCA) by considering approximate empirical (resp. population) R-E reconstruction error which is defined by replacing $\widehat{\id}$ by $\widehat{\A}$ (resp. $\A$) and $\widetilde{k}$ by $\widetilde{k}_m$ (resp. $\overline{k}_m$), respectively.\vspace{1.5mm}\\
(iii) The minimal value of $\eu{T}_{A,H}$ for $(A,H)=(\sigh,\Cal{H})$, $(\Sigma_m,\Cal{H}_m)$ and $(\sigh_m,\Cal{H}_m)$ can be shown to be $\sum_{i>\ell}\lambdah^2_i$, $\sum_{i>\ell}\lambda^2_{m,i}$, and $\sum_{i>\ell}\lambdah^2_{m,i}$, respectively, all of which are closely related to $\sum_{i>\ell}\lambda^2_i$ which is the minimum value of $\eu{T}_{\Sigma,\Cal{H}}$.\QEDA
\end{remark}
The following result (proved in Section~\ref{subsec:interpret2}) 
is similar to Proposition~\ref{pro:interpret} and shows the relation between the EKPCA (\emph{resp.} RF-KPCA, RF-EKPCA) solution and the minimizer of appropriate empirical versions of the E-R reconstruction error.
\begin{proposition}\label{pro:interpret2}
Suppose $(A_1)-(A_5)$ hold. Define $$\eu{V}_{A,H}(P):=\norm{(I-P)^2A}^2_{\Cal{L}^2(H)}$$ where $A:H\rightarrow H$ is a positive self-adjoint Hilbert-Schmidt operator on $H$ with $H$ being a separable Hilbert space. Then the following hold.\vspace{2mm}\\
(i) $\eu{S}(\mu_1,\ldots,\mu_\ell)=\eu{V}_{\id\id^*,\lp}(P_\mu)$ where $P_\mu:=\sum^\ell_{i=1}\mu_i\ol\mu_i$;\vspace{1.5mm}\\
(ii) (KPCA) $\left(\frac{\id\phi_1}{\sqrt{\lambda_1}},\ldots,\frac{\id \phi_\ell}{\sqrt{\lambda_\ell}}\right)=\arg\inf\{\eu{V}_{\id\id^*,\lp}(P_\mu):(\mu_i)_{i\in[\ell]}\subset\lp\}$;\vspace{1.5mm}\\
(iii) (EKPCA) 
\begin{eqnarray}
\left(\widehat{\id}\phih_{1}/\sqrt{\lambdah_1},\ldots,\widehat{\id}\phih_{\ell}/\sqrt{\lambdah_\ell}\right)=\arg\inf\left\{
\eu{V}_{\widehat{\id}\widehat{\id}^*,L^2(\bb{P}_n)}\left(Q_{\mu,n}\right):(\mu_i)_{i\in[\ell]}\subset L^2(\bb{P}_n)\right\},
\nonumber
% &{}={}&\arg\inf_{(\bm{\alpha}_i)_{i\in[\ell]}\subset\bb{R}^n}%\{
% \eu{V}_{\bm{H}_n\bm{KH}_n,\bb{R}^n}\left(\sum^\ell_{i=1}\bm{\alpha}_i\otimes_2\bm{\alpha}_i\right),
% %:(\bm{\alpha}_i)_{i\in[\ell]}\subset\bb{R}^n\right\},
% \nonumber
\end{eqnarray}
where $Q_{\mu,n}:=\sum^\ell_{i=1}\mu_i\otimes_{L^2(\bb{P}_n)}\mu_i$, $L^2(\bb{P}_n):=\left\{f:\frac{1}{n}\sum^n_{i=1}f^2(X_i)<\infty\right\}$, and $\widehat{\id}:\Cal{H}\rightarrow L^2(\bb{P}_n),\,\,\,f\mapsto \sqrt{\frac{n}{n-1}}\left(f-\frac{1}{n}\sum^n_{i=1}f(X_i)\right);$\vspace{1.5mm}\\
(iv) (RF-KPCA) \begin{eqnarray}
\left(\A\phi_{m,1}/\sqrt{\lambda_{m,1}},\ldots,\A\phi_{m,\ell}/\sqrt{\lambda_{m,\ell}}\right)&{}={}&\arg\inf
_{(\mu_i)_{i\in[\ell]}\subset L^2(\bb{P})}
%\left\{
\eu{V}_{\A\A^*,L^2(\bb{P})}\left(P_\mu\right);
%:(\mu_i)_{i\in[\ell]}\subset L^2(\bb{P})\right\};
\nonumber
% &{}={}&\arg\inf_{(\bm{\alpha}_i)_{i\in[\ell]}\subset\bb{R}^n}%\{
% \eu{V}_{\bm{H}_n\bm{KH}_n,\bb{R}^n}\left(\sum^\ell_{i=1}\bm{\alpha}_i\otimes_2\bm{\alpha}_i\right),
% %:(\bm{\alpha}_i)_{i\in[\ell]}\subset\bb{R}^n\right\},
% \nonumber
\end{eqnarray}
\vspace{-1mm}
(v) (RF-EKPCA) \begin{eqnarray}
\left(\widehat{\A}\phih_{m,1}/\sqrt{\lambdah_{m,1}},\ldots,\widehat{\A}\phih_{m,\ell}/\sqrt{\lambdah_{m,\ell}}\right)&{}={}&\arg\inf_{(\mu_i)_{i\in[\ell]}\subset L^2(\bb{P}_n)}%\left\{
\eu{V}_{\widehat{\A}\widehat{\A}^*,L^2(\bb{P}_n)}\left(Q_{\mu,n}\right);
%:(\mu_i)_{i\in[\ell]}\subset L^2(\bb{P}_n)\right\}
\nonumber
% &{}={}&\arg\inf_{(\bm{\alpha}_i)_{i\in[\ell]}\subset\bb{R}^n}%\{
% \eu{V}_{\bm{H}_n\bm{KH}_n,\bb{R}^n}\left(\sum^\ell_{i=1}\bm{\alpha}_i\otimes_2\bm{\alpha}_i\right),
% %:(\bm{\alpha}_i)_{i\in[\ell]}\subset\bb{R}^n\right\},
% \nonumber
\end{eqnarray}
where $\widehat{\A}:\Cal{H}_m\rightarrow L^2(\bb{P}_n)$, $f\mapsto \sqrt{\frac{n}{n-1}}\left(f-\frac{1}{n}\sum^n_{i=1}f(X_i)\right)$.%\vspace{1.5mm}\\
\end{proposition}
% On the other hand, a result like above is not possible for EKPCA and RF-EKPCA using the empirical versions of E-R. This is because, since $\eu{S}(\mu_1,\ldots,\mu_\ell)=\norm{(I-P_\mu)T}^2_\HSL$ (see the proof of Proposition~\ref{pro:solution}\emph{(ii)} for details) where $P_\mu:=\sum^\ell_{i=1}\mu_i\ol\mu_i$, $(\mu_i)_{i\in[\ell]}\subset\lp$ and $T:=\id\id^*$ (see Proposition~\ref{pro:id}\emph{(iv)} for an expression for $T$), empirical approximation of $\eu{S}$ based on $(X_i)^n_{i=1}$ involves estimating $T$  
% 
% Nevertheless, 
\begin{remark}\label{Rem:solution2}
% (i) Based on Proposition~\ref{pro:interpret}(iii), EKPCA can be interpreted as the minimizer of the following empirical R-E reconstruction error, 
% \begin{eqnarray}
% \widehat{\eu{R}}(\psi_1,\ldots,\psi_\ell)&{}={}&\frac{1}{n}\sum^n_{j=1}\norm{\widehat{\id}\widetilde{k}(\cdot,X_j)-\widehat{\id}\left(\sum^\ell_{i=1}\inner{\widetilde{k}(\cdot,X_j)}{\psi_i}_\Cal{H}\psi_i\right)}^2_{L^2(\bb{P}_n)}\nonumber\\
% &{}={}&\eu{T}_{\sigh,\Cal{H}}(P_\psi),\nonumber%\label{Eq:alternate-inter}
% \end{eqnarray}
% where $\widehat{\id}$ and $L^2(\bb{P}_n)$ are defined in Proposition~\ref{pro:interpret2}(iii). Similar interpretation can be provided for RF-KPCA (resp. RF-EKPCA) by considering approximate empirical R-E reconstruction error which is defined by replacing $\widehat{\id}$ and $\widetilde{k}$ by $\A$ (resp. $\widehat{\A}$) and $\overline{k}_m$ (resp. $\widetilde{k}_m$), respectively.\vspace{1.5mm}\\
%(ii) 
(i) Proposition~\ref{pro:interpret2}(ii) is obvious from Proposition~\ref{pro:interpret2}(i) and Proposition~\ref{pro:solution}(ii). Using Proposition~\ref{pro:interpret2}(ii), alternate interpretation for EKPCA (resp. RF-EKPCA) can be obtained by replacing $\id$ with its empirical version $\widehat{\id}$ (resp. $\widehat{\A}$) and $\bb{P}$ with its empirical version $\bb{P}_n$.\vspace{1.5mm}\\
(ii) $\eu{V}_{\widehat{\id}\widehat{\id}^*,L^2(\bb{P}_n)}(\sum^\ell_{i=1}\mu_i\otimes_{L^2(\bb{P}_n)}\mu_i)$ in Proposition~\ref{pro:interpret2}(iii) can be shown to be equal to the following empirical E-R reconstruction error, 
$$\widehat{\eu{S}}(\mu_1,\ldots,\mu_\ell)=\frac{1}{n}\sum^n_{j=1}\norm{\widehat{\id}\widetilde{k}(\cdot,X_j)-\sum^\ell_{i=1}\inner{\widehat{\id}\widetilde{k}(\cdot,X_j)}{\mu_i}_{L^2(\bb{P}_n)}\mu_i}^2_{L^2(\bb{P}_n)}.$$ Similar interpretation can be provided for RF-EKPCA by replacing $\widehat{\id}$ and $\widetilde{k}$ by $\widehat{\A}$ and $\widetilde{k}_m$, respectively.\vspace{1.5mm}\\
(iii) Similar to $\eu{T}_{A,H}$, the minimal value of $\eu{V}_{A,H}$ for $(A,H)=(\widehat{\id}\widehat{\id}^*,L^2(\bb{P}_n))$, $(\A\A^*,L^2(\bb{P}))$ and $(\widehat{\A}\widehat{\A}^*,L^2(\bb{P}_n))$ can be shown to be $\sum_{i>\ell}\lambdah^2_i$, $\sum_{i>\ell}\lambda^2_{m,i}$, and $\sum_{i>\ell}\lambdah^2_{m,i}$, respectively, which matches with their R-E counterparts.\QEDA
\end{remark}
Using these alternate interpretations of KPCA and its variants, in the following sections, we will investigate and compare the performances of EKPCA and RF-EKPCA in R-E and E-R settings.
\begin{subsection}{Reconstruct and Embed (R-E)}\label{subsec:ep}
The performance of EKPCA and RF-EKPCA can be measured by how well the output of these algorithms (i.e., the reconstructed functions) approximate $\id k(\cdot,X)$ in $\lp$ in expectation.  
% We compare the performance of EKPCA and RF-EKPCA by measuring their behavior on appropriately modified population R-E reconstruction errors. 
To elaborate, since the principal components outputted by EKPCA and RF-EKPCA are $\langle\widetilde{k}(\cdot,X),\phih_i\rangle_{\Cal{H}}$ and $\langle\widetilde{k}_m(\cdot,X),\phih_{m,i}\rangle_{\Cal{H}_m}$, $i\in[\ell]$ along the directions $(\phih_i)^\ell_{i=1}$ and $(\phih_{m,i})^\ell_{i=1}$ respectively, we compare their $\lp$-embedded versions to $\id\overline{k}(\cdot,X)$, as defined below:
\begin{align}
R_{\sigh,\ell}&=\bb{E}_{X\sim\Pb}\norm{\id \overline{k}(\cdot,X)-\id\left(\sum_{i=1}^{\ell}\inner{\widetilde{k}(\cdot,X)}{\phih_i}_{\Hk}\phih_i\right)}^2_{L^2(\Pb)},\label{Eq:emp-error-pe}
\end{align}
\begin{align}
R_{\Sigma_m,\ell}&=\bb{E}_{X\sim\Pb}\norm{\id \overline{k}(\cdot,X)-\frak{A}\left(\sum_{i=1}^{\ell}\inner{\overline{k}_m(\cdot,X)}{\phi_{m,i}}_{\Hk_m}\phi_{m,i}\right)}^2_{L^2(\Pb)},\label{Eq:pop-rf-error-pe}
\end{align}
and
\begin{align}
R_{\sigh_m,\ell}&=\bb{E}_{X\sim\Pb}\norm{\id \overline{k}(\cdot,X)-\frak{A}\left(\sum_{i=1}^{\ell}\inner{\widetilde{k}_m(\cdot,X)}{\phih_{m,i}}_{\Hk_m}\phih_{m,i}\right)}^2_{L^2(\Pb)}.\label{Eq:emp-rf-error-pe}
\end{align}
Note that $\eu{T}_{\Sigma,\Cal{H}}(P_\ell(\Sigma))$ 
%(refer to Proposition~\ref{pro:interpret} and Section~\ref{subsec:pca} for the notation) 
measures the performance of KPCA where $P_\ell(\Sigma)=\sum^\ell_{i=1}\phi_i\oh\phi_i$. Therefore $\eu{T}_{\Sigma,\Cal{H}}(P_\ell(\sigh))$ can be considered as a performance measure of EKPCA with $P_\ell(\sigh)=\sum^\ell_{i=1}\phih_i\oh\phih_i$. Similarly, $\eu{T}_{\Sigma_m,\Cal{H}_m}(P_\ell(\Sigma_m))$ and $\eu{T}_{\Sigma_m,\Cal{H}_m}(P_\ell(\sigh_m))$ can be used as performance measures of RF-KPCA and RF-EKPCA, respectively. However, in $\eu{T}_{\Sigma_m,\Cal{H}_m}$, the reconstructed function is $\A \overline{k}_m(\cdot,X)$ in contrast to $\id \overline{k}(\cdot,X)$ as in $\eu{T}_{\Sigma,\Cal{H}}$. In order to reconstruct the same function in all the algorithms, we consider the reconstruction errors defined in \eqref{Eq:emp-error-pe}-\eqref{Eq:emp-rf-error-pe}. The following result (proved in Section~\ref{subsec:relation}) 
shows that the reconstruction errors defined in \eqref{Eq:emp-error-pe}--\eqref{Eq:emp-rf-error-pe} are statistically equivalent to $\eu{T}_{\Sigma,\Cal{H}}(P_\ell(\sigh))$, $\eu{T}_{\Sigma_m,\Cal{H}_m}(P_\ell(\Sigma_m))$ and $\eu{T}_{\Sigma_m,\Cal{H}_m}(P_\ell(\sigh_m))$.
\begin{theorem}\label{pro:relation}
Under the assumptions $(A_1)-(A_5)$, the following hold:\vspace{2mm}\\
(i) $R_{\sigh,\ell}\lesssim_{\bb{P}^n} \eu{T}_{\Sigma,\Cal{H}}(P_\ell(\sigh))\lesssim_{\bb{P}^n}  R_{\sigh,\ell}+\frac{1}{n}$;\vspace{2mm}\\
(ii) $\left(\sqrt{R_{\Sigma_m,\ell}}-\frac{1}{\sqrt{m}}\right)^2\lesssim_{\Lambda^m} \eu{T}_{\Sigma_m,\Cal{H}_m}(P_\ell(\Sigma_m))\lesssim_{\Lambda^m}R_{\Sigma_m,\ell}+\frac{1}{m};$\vspace{2mm}\\
(ii) $\left(\sqrt{R_{\sigh_m,\ell}}-\frac{1}{\sqrt{m}}\right)^2\lesssim_{\bb{P}^n\times\Lambda^m} \eu{T}_{\sigh_m,\Cal{H}_m}(P_\ell(\sigh_m))\lesssim_{\bb{P}^n\times\Lambda^m}R_{\sigh_m,\ell}+\frac{1}{n}+\frac{1}{m}.$
\end{theorem}
In the above result, the term $\frac{1}{n}$ is the error incurred by centering $k(\cdot,X)$ (\emph{resp.} $k_m(\cdot,X)$) around $\mph$ (\emph{resp.} $\mpmh$) instead of $\Mp$ (\emph{resp.} $\mpm$). The term $\frac{1}{m}$ is the approximation error incurred by approximating $\id\overline{k}(\cdot,X)$ by $\A\overline{k}_m(\cdot,X)$, i.e., $\bb{E}\norm{\id\overline{k}(\cdot,X)-\A\overline{k}_m(\cdot,X)}^2_{\lp}$. 
The following result, which is proved in Section~\ref{subsec:thm-bound}, 
provides a finite-sample bound on \eqref{Eq:emp-error-pe}--\eqref{Eq:emp-rf-error-pe}, using which convergence rates can be obtained.
%\begin{theorem}\label{Thm:bound-pe}
%Suppose $(A_1)-(A_4)$ hold. 
\begin{theorem}\label{thm:rff main thm metric 1}
%Suppose Assumptions \ref{first kern assum}-\ref{rf kern assum} hold and 
Suppose $(A_1)-(A_5)$ hold. For any $t>0$, define $\Cal{N}_\Sigma(t)=\emph{tr}(\Sigma(\Sigma+tI)^{-1})$. Then the following hold:\vspace{2mm}\\
%\begin{itemize}
   % \item[(i)] 
(i)    \begin{equation*}
        R_{\Sigma,\ell}=\sum_{i>\ell}\lambda_i^2.\nonumber
        %\label{eq:L2 kpca}
    \end{equation*}
    %\item[(ii)] 
  (ii)  For any $\delta>0$ with $n\ge2\log\frac{2}{\delta}$ and $\frac{140\kappa}{n}\log\frac{16\kappa n}{\delta}\le t\le\norm{\Sigma}_\OPH$, 
    \begin{equation*}%\label{Eq:L2 ekpca}
    \Pb^n\left\{(X_i)_{i=1}^n:\sum_{i>\ell}\lambda_i^2\le R_{\sigh,\ell}\le 9\Cal{N}_\Sigma(t)(\lambda_{\ell+1}+t)^2+\frac{64\kappa^2\log\frac{2}{\delta}}{n}\right\}\ge1-3\delta.\nonumber
    \end{equation*}
    %\item[(iii)] 
    (iii) For any $\delta>0$ with $m\ge \left(2\vee \frac{1024\kappa^2}{\sum_{i>\ell}\lambda^2_i}\right)\log\frac{2}{\delta}$,
    \begin{equation*}
        \Lambda^m\left\{(\theta_i)_{i=1}^m:\frac{1}{4}\sum_{i>\ell}\lambda^2_{i}\le R_{\Sigma_m,\ell}\le4\sum_{i>\ell}\lambda_i^2+\frac{256\kappa^2\log\frac{2}{\delta}}{m}\right\}\ge1-6\delta.\nonumber
        %\label{Eq:metric 1 rf-kpca}
    \end{equation*}
    %\item[(iv)] 
    (iv) For any $\delta>0$ with $n\ge 2\log\frac{2}{\delta}$, $m\ge \left(2\vee \frac{1024\kappa^2}{\sum_{i>\ell}\lambda^2_i}\right)\log\frac{2}{\delta}$ and $\frac{140\kappa}{n}\log\frac{16\kappa n}{\delta}$ $\vee\frac{86\kappa}{m}\log\frac{16\kappa m}{\delta}\le t\le\frac{\norm{\Sigma}_{\OPH}}{3}$, with probability at least $1-12\delta$ over the choice of $\left((X_i)_{i=1}^n,(\theta_j)_{j=1}^m\right)$:
    \begin{equation*}%\label{Eq:metric 2 rf-ekpca}
    \frac{1}{4}\sum_{i>\ell}\lambda^2_{i}\le R_{\sigh_m,\ell}\le162\Cal{A}_1(t) (\lambda_{\ell+1}+t)^2+\frac{640\kappa^2\log\frac{2}{\delta}}{3n}+\frac{256\kappa^2\log\frac{2}{\delta}}{m},\nonumber
\end{equation*}
where $\Cal{A}_1(t):=\Cal{N}_\Sigma(t)+\frac{16\kappa\log\frac{2}{\delta}}{tm}+\sqrt{\frac{8\kappa\Cal{N}_\Sigma(t)\log\frac{2}{\delta}}{tm}}$.
%\end{itemize}
\end{theorem}
Since $\Sigma$ is trace class it is obvious that $\lambda_\ell\rightarrow 0$ and $\sum_{i>\ell}\lambda_i^2\rightarrow 0$ as $\ell\rightarrow\infty$. It therefore follows that $R_{\Sigma,\ell}\rightarrow 0$ and $R_{\Sigma_m,\ell}\rightarrow 0$ as $\ell,m\rightarrow\infty$. Further, by assuming a decay rate on $(\lambda_i)_i$, a convergence rate for $R_{\Sigma,\ell}$ and $R_{\Sigma_m,\ell}$ may be obtained. Note that up to constants, $R_{\Sigma,\ell}$ and $R_{\Sigma_m,\ell}$ will have the same statistical behavior if $m$ is chosen to be large enough that $\sum_{i>\ell}\lambda_i^2$ dominates $\frac{1}{m}$. As in Theorem \ref{thm:rff main thm metric 1}, the behavior of the empirical varieties depend on $t$ and $\Cal{N}_\Sigma(t)$. $\Cal{N}_\Sigma(t)$ is referred to as the effective dimension or degrees of freedom \citep{Caponnetto-07}, and captures the complexity of $\Hk$. Since $\Cal{N}_\Sigma(t)\lesssim \frac{1}{t}$ (better bound can be obtained if a certain decay rate for $(\lambda_i)_i$ is assumed), it is easy to see that $R_{\sigh,\ell}\rightarrow 0$ if $\ell,n\rightarrow 0$ and $n\lambda^2_\ell \rightarrow\infty$, and $R_{\sigh_m,\ell}\rightarrow 0$ if $\ell,m,n\rightarrow 0$ and $\lambda^2_\ell (m\wedge n)\rightarrow\infty$. However, in order to properly compare the behavior of EKPCA and RF-EKPCA to each other, as well as to their population counterparts, an assumption on the decay rate of $(\lambda_i)_i$ must be made, and the trade-off between $t$, $\lambda_\ell$ and $\Cal{N}_\Sigma(t)$ must be explored. The following corollary (proved in Section~\ref{sec:rff poly decay proof}) 
%and \ref{subsec:cor-exp-1}) 
to Theorem \ref{thm:rff main thm metric 1} investigates the statistical behavior of EKPCA and RF-EKPCA in detail under the polynomial decay condition (exponential decay condition is analyzed in Corollary~\ref{rff exp decay corollary}) 
on the eigenvalues of $\Sigma$.
% make assumptions on the decay rate of the eigenvalues of $\Sigma$, polynomial and exponential decay, and investigate the asymptotic statistical behavior of the four varieties of KPCA.
\begin{corollary}[Polynomial decay of eigenvalues]\label{rff poly decay corollary}
Suppose $\underbar{A}i^{-\alpha}\le\lambda_i\le\bar{A}i^{-\alpha}$ for $\alpha>1$ and $\underbar{A},\bar{A}\in(0,\infty)$. Let $\ell=n^{\frac{\theta}{\alpha}}$, $0<\theta\le\alpha$.
Then
\\\\
(i) $$n^{-2\theta(1-\frac{1}{2\alpha})}\lesssim R_{\Sigma,\ell}\lesssim n^{-2\theta(1-\frac{1}{2\alpha})}.$$
There exists $\tilde{n}\in\bb{N}$ such that for all $n>\tilde{n}$, the following hold:\vspace{2mm}\\
(ii) \[ n^{-2\theta(1-\frac{1}{2\alpha})}\lesssim R_{\sigh,\ell}\lesssim_{\Pb^n}\begin{cases} 
      n^{-2\theta(1-\frac{1}{2\alpha})},\qquad\hspace{2.5mm}\theta\le\frac{\alpha}{2\alpha-1} \\
     \frac{1}{n},\qquad\,\,\,\qquad\qquad\theta\ge\frac{\alpha}{2\alpha-1}
   \end{cases};
\]
(iii) For $0<\gamma\le1$ and $m=n^\gamma$, 
%There exists $\tilde{n}\in\bb{N}$ such that for all $n>\tilde{n}$, 
\begin{eqnarray*} n^{-2\theta(1-\frac{1}{2\alpha})}\mathds{1}_{\left\{\gamma\ge \theta\left(2-\frac{1}{\alpha}\right)\right\}}&{}\lesssim_{\Lambda^m}{}& R_{\Sigma_m,\ell}\lesssim_{\Lambda^m}\\
&{}{}&\begin{cases} 
      n^{-2\theta(1-\frac{1}{2\alpha})},\qquad\gamma\ge \theta\left(2-\frac{1}{\alpha}\right),\,\theta\le\frac{\alpha}{2\alpha-1} \\
      n^{-\gamma},\qquad\qquad\quad\gamma\le 1\wedge\theta\left(2-\frac{1}{\alpha}\right)
   \end{cases};
\end{eqnarray*}
%with $m=n^\gamma$ for $0<\gamma\le1$;\vspace{2mm}\\
(iv) For $0<\gamma\le1$ and $m=n^\gamma$, 
%There exists $\tilde{n}\in\bb{N}$ such that for all $n>\tilde{n}$,
\begin{eqnarray*} n^{-2\theta(1-\frac{1}{2\alpha})}\mathds{1}_{\left\{\gamma\ge \theta\left(2-\frac{1}{\alpha}\right)\right\}}&{}\lesssim_{\Lambda^m}{}& R_{\sigh_m,\ell}\lesssim_{\Pb^n\times\Lambda^m}\\
&{}{}&\begin{cases} 
      n^{-2\theta(1-\frac{1}{2\alpha})},\qquad\gamma\ge  \theta\left(2-\frac{1}{\alpha}\right),\,\theta\le\frac{\alpha}{2\alpha-1} \\
      n^{-\gamma},\qquad\qquad\quad\gamma\le 1\wedge\theta\left(2-\frac{1}{\alpha}\right)
% n^{-2\theta(1-\frac{1}{2\alpha})},\qquad\hspace{2.5mm}\theta\le\frac{\alpha}{2\alpha-1},\gamma\ge\theta(\frac{2\alpha-1}{\alpha}) \\
%       n^{-\gamma},\qquad\qquad\theta\le\frac{\alpha}{2\alpha-1},\gamma<\theta\left(\frac{2\alpha-1}{\alpha}\right)
%       \\
%       n^{-\gamma},\qquad\qquad\theta>\frac{\alpha}{2\alpha-1},\gamma<1
   \end{cases}.
\end{eqnarray*}
%with $m=n^\gamma$ for $0<\gamma\le 1$.
\end{corollary}

\begin{remark}\label{rff:rem1}
$(i)$  The condition $\alpha>1$ is required to ensure that $\Sigma$ is trace class.  Comparing the behavior of $R_{\sigh,\ell}$ to that of $R_{\Sigma,\ell}$ it is clear that EKPCA recovers optimal convergence rates (compared to that of KPCA) if $\ell$ grows to infinity not faster than $n^{1/(2\alpha-1)}$. Since the reconstruction error is based on $\ell$ eigenfunctions, the computational complexity of EKPCA behaves as $O(n^2\ell)=O(n^{2+\frac{\theta}{\alpha}})$. It is important to note that $0<\theta\le\frac{\alpha}{2\alpha-1}$ is the only useful region both computationally and statistically as $\theta>\frac{\alpha}{2\alpha-1}$ does not improve the statistical rates of EKPCA (than that achieved at $\theta=\frac{\alpha}{2\alpha-1}$) but increases its computational complexity.
% 
% \textcolor{red}{extracting $\ell$ eigenvectors of a $n\times n$ matrix..what is the worst computational complexity of this so that you can use this $\ell$ there..do the same for RF-EKPCA}.
% then KPCA and EKPCA have similar asymptotic behavior in reconstructing the kernel function in the mean squared sense. 
% For the other regime of $\ell$ growing to infinity faster than $n^{\frac{1}{2\alpha-1}}$, EKPCA seems to have worse asymptotic performance than KPCA though it is not clear whether the rate is sharp because of the lack of lower bounds.
% 
% . This is precisely the bias-variance trade-off wherein choosing $\ell$ to grow slower than $n^{\frac{1}{2\alpha-1}}$ yields smaller variance and larger bias, i.e., the bias dominates the variance and $R_{\Sigma,\ell}$ is precisely of the same order as the bias. On the other hand, for fastly growing $\ell$, the variance dominates the bias and so EKPCA has slower convergence rates compared to KPCA. 
\vspace{1mm}\\
$(ii)$ Comparing $R_{\sigh_m,\ell}$ with $R_{\sigh,\ell}$ it is clear that if $\ell$ grows to infinity not faster than $n^{1/(2\alpha-1)}$ and the number of random features $m$ grows sufficiently fast, then RF-EKPCA and EKPCA enjoy the same statistical behavior.  The rate at which the number of random features must grow depends on the growth of $\ell$ through $\theta$ and $\alpha$; the choice of $1\ge\gamma\ge\theta\left(2-\frac{1}{\alpha}\right)$ yields the same statistical behavior for RF-EKPCA, EKPCA, and KPCA. \vspace{1mm}\\
$(iii)$ The computational complexity of RF-EKPCA is given by $O(m^2\ell+m^2n)=O(n^{2\gamma+1})$ which is better than that of EKPCA if $\gamma<\frac{1}{2}+\frac{\theta}{2\alpha}$. This means, RF-EKPCA has a lower computational complexity with similar statistical behavior to that of EKPCA if $2\theta-\frac{\theta}{\alpha}\le\gamma<\frac{1}{2}+\frac{\theta}{2\alpha}$ and $\theta\le \frac{\alpha}{2\alpha-1}$ respectively, which implies $\theta< \frac{\alpha}{4\alpha-3}$. In other words, if $\ell$ grows at a lower order than $n^{1/(4\alpha-3)}$ and the number of random features are larger than $n^{\theta(2-\frac{1}{\alpha})}$, then RF-EKPCA enjoys computational superiority with no loss in statistical performance over that of EKPCA. 
% For this range of $\ell$ and $m$, since the best statistical rate at lowest computations are attained at $\theta=\frac{\alpha}{4\alpha-3}$ and $\gamma=\theta(2-\frac{1}{\alpha})$, it means $n^{(2\alpha-1)/(4\alpha-3)}$ random features are sufficient to attain a statistical rate of $n^{-(2\alpha-1)/(4\alpha-3)}$ at a computational complexity of $O(n^{(8\alpha-5)/(4\alpha-3)})$ in contrast to that of EKPCA which has a complexity of $O(n^{(8\alpha-3)/(4\alpha-3)})$. 
On the other hand, if $\ell$ grows at an order faster than $n^{1/(4\alpha-3)}$ but not faster than $n^{1/(2\alpha-1)}$, it results in loss of computational advantage for RF-EKPCA while retaining the same statistical behavior to that of EKPCA---in fact, the rate in this regime is faster than in the previous regime of $\theta<\frac{\alpha}{4\alpha-3}$.
% 
% 
% $=O(n^{4\theta+1-\frac{2\theta}{\alpha}})$ where we chose $\gamma=\theta(2-\frac{1}{\alpha})$, as choosing any larger value will not improve the statistical rate but worsen the computational complexity. This means, RF-EKPCA has a lower computational complexity with similar statistical behavior to that EKPCA if $\theta\le \frac{\alpha}{4\alpha-3}$ and $\theta\le \frac{\alpha}{2\alpha-1}$ respectively, i.e., $\theta\le \frac{\alpha}{4\alpha-3}$.
% 
% but with less computational complexity for RF-EKPCA. This means using $m\ge n^{\theta\left(2-\frac{1}{\alpha}\right)}$ random features gives computational gains for $\theta\le\frac{\alpha}{2\alpha-1}$. \textcolor{red}{Give computational orders}
% Selecting less than $n^{\theta\left(2-\frac{1}{\alpha}\right)}$ random features makes the approximation error dominant, and yields a worse rate of $n^{-\gamma}$ for RF-EKPCA compared to $n^{-1}$ of EKPCA. Similarly in the case that $\theta>\frac{\alpha}{2\alpha-1}$.    
%\vspace{1mm}\\
% $(iii)$  Comparing the behavior of $R_{\Sigma,\ell}$ to $R_{\Sigma_m,\ell}$, it is clear that selecting $m\ge n^{\theta\left(2-\frac{1}{\alpha}\right)}$ random features yields the same convergence behavior for RF-KPCA as KPCA.  However, if $m$ is not large enough, then the approximation error dominates and RF-KPCA ahieves a slower convergence rate.
\end{remark}

\end{subsection}

\begin{subsection}{Embed and Reconstruct (E-R)}\label{subsec:pe}
In this section, we compare EKPCA and RF-EKPCA in terms of E-R reconstruction error. Since $(\widehat{\id}\phih_i/\sqrt{\lambdah_i})^\ell_{i=1}$ and $(\widehat{\A}\phih_{m,i}/\sqrt{\lambdah_{m,i}})^\ell_{i=1}$ are the minimizers of the empirical E-R reconstruction error (see Proposition~\ref{pro:interpret2}\emph{(iii),(v)}), it is natural to compare $$(\star)\quad\eu{S}(\id\phih_1/\sqrt{\lambdah_1},\ldots,\id\phih_\ell/\sqrt{\lambdah_\ell})\,\,\,\text{and}\,\,\,\eu{S}(\A\phih_{m,1}/\sqrt{\lambdah_{m,1}},\ldots,\A\phih_{m,\ell}/\sqrt{\lambdah_{m,\ell}}).$$
Instead, like in Section~\ref{subsec:ep}, we consider the following reconstruction errors since both EKPCA and RF-EKPCA provide the principal components for the empirical kernel functions, i.e., $\widetilde{k}(\cdot,X)$ and $\widetilde{k}_m(\cdot,X)$:
% the performance of EKPCA and RF-EKPCA by computing the population E-R reconstruction error (i.e., $\eu{S}$) 
% %at $(\id\phih_1/\sqrt{\lambdah_1},\ldots,\id\phih_\ell/\sqrt{\lambdah_\ell})$ and $(\A\phih_{m,1}/\sqrt{\lambdah_{m,1}},\ldots,\A\phih_{m,\ell}/\sqrt{\lambdah_{m,\ell}})$ 
% but with a small variation of using empirical kernel functions while reconstruction, since both EKPCA and RF-EKPCA provide the principal components for the empirical kernel functions, i.e., $\widetilde{k}(\cdot,X)$ and $\widetilde{k}_m(\cdot,X)$. 
% % $\langle\widetilde{k}(\cdot,X),\phih_i\rangle_{\Cal{H}}$ and $\langle\widetilde{k}_m(\cdot,X),\phih_{m,i}\rangle_{\Cal{H}_m}$ for all $i\in[\ell]$. 
% Therefore, we measure the performance of 
% % Based on the above intuition, since $(\phih_i)_i$, $(\phi_{m,i})_i$ and $(\phih_{m,i})_i$ form orthonormal systems, we define the reconstruction error for 
% EKPCA, RF-KPCA and RF-EKPCA as follows:
% % Based on the idea demonstrated in \eqref{Eq:pop-error-ep} where the objects are first embedded in $\lp$ and then reconstructed in $\lp$, we define the reconstruction error for EKPCA, RF-KPCA and RF-EKPCA as 
\begin{equation}
S_{\sigh,\ell}=\bb{E}_{X\sim\Pb}\norm{\id \overline{k}(\cdot,X)-\sum_{i=1}^{\ell}\inner{\id \widetilde{k}(\cdot,X)}{\frac{\id\phih_i}{\sqrt{\lambdah_i}}}_{\lp}\frac{\id\phih_i}{\sqrt{\lambdah_i}}}^2_{L^2(\Pb)},\label{Eq:emp-error-ep}
\end{equation}
\begin{equation}
S_{\Sigma_m,\ell}=\bb{E}_{X\sim\Pb}\norm{\id \overline{k}(\cdot,X)-\sum_{i=1}^{\ell}\inner{\A \overline{k}_m(\cdot,X)}{\frac{\A\phi_{m,i}}{\sqrt{\lambda_{m,i}}}}_{\lp}\frac{\A\phi_{m,i}}{\sqrt{\lambda_{m,i}}}}^2_{L^2(\Pb)},\label{Eq:pop-rf-error-ep}
\end{equation}
and
\begin{equation}
S_{\sigh_m,\ell}=\bb{E}_{X\sim\Pb}\norm{\id \overline{k}(\cdot,X)-\sum_{i=1}^{\ell}\inner{\A \widetilde{k}_m(\cdot,X)}{\frac{\A\phih_{m,i}}{\sqrt{\lambdah_{m,i}}}}_{\lp}\frac{\A\phih_{m,i}}{\sqrt{\lambdah_{m,i}}}}^2_{L^2(\Pb)}.\label{Eq:approx-emp-error-ep}
\end{equation}
%respectively. 
Note that unlike in R-E where the reconstructed functions (before being embedded) are computable (see \eqref{Eq:emp-error-pe}--\eqref{Eq:emp-rf-error-pe}), in \eqref{Eq:emp-error-ep}--\eqref{Eq:approx-emp-error-ep} and $(\star)$, the reconstructed functions are not computable because of their dependence on unknown $\bb{P}$. 
% So there is no specific advantage in using the above performance measures over simply using $\eu{S}$. 
First, in Theorem~\ref{thm:rff main thm metric 2} (proved in Section~\ref{subsec:proof:metric2}), 
we provide a finite-sample bound on the behavior of \eqref{Eq:emp-error-ep}--\eqref{Eq:approx-emp-error-ep}, which is then used in Theorem~\ref{Thm:variant-ER} (proved in Section~\ref{Subsec:proof-ER-variant}), 
to show that $(\star)$ is statistically equivalent to \eqref{Eq:emp-error-ep} and \eqref{Eq:approx-emp-error-ep}. 

\begin{theorem}\label{thm:rff main thm metric 2}
Suppose $(A_1)-(A_5)$ hold. For any $t>0$, define $\Cal{N}_\Sigma(t)=\emph{tr}(\Sigma(\Sigma+tI)^{-1})$. Then the following hold:\vspace{2mm}\\
%\begin{itemize}
    %\item[(i)] 
    (i) \begin{equation*}
        S_{\Sigma,\ell}=\sum_{i>\ell}\lambda_i^2.\nonumber
        %\label{eq:L2 kpca}
    \end{equation*}
    %\item[(ii)] 
    (ii) For any $\delta>0$ with $n\ge2\log\frac{2}{\delta}$ and $\frac{140\kappa}{n}\log\frac{16\kappa n}{\delta}\le t\le\frac{\lambda_\ell}{3}$, with probability at least $1-11\delta$ over the choice of $(X_i)^n_{i=1}$,
    \begin{eqnarray*}%\label{Eq:L2 ekpca}
\sum_{i>\ell}\lambda_i^2\le S_{\sigh,\ell}&{}\lesssim{}& \Cal{N}_\Sigma(t)(\lambda_{\ell+1}+t)^2+\kappa^{5/2}\log\frac{2}{\delta}\left[\frac{\Cal{N}_\Sigma(t)}{n\sqrt{t}}\wedge\frac{\kappa^{3/2}}{nt}\right]\\
&{}{}&\qquad+\frac{\kappa^3(\kappa\wedge 1)\log^2\frac{3}{\delta}}{n^2t}+\frac{\kappa^2\log\frac{2}{\delta}}{n}.\nonumber
% 
%     S_{\sigh,\ell}\le36\Cal{N}_\Sigma(t)(\lambda_{\ell+1}+t)^2+576\left[256\kappa^{5/2}\log\frac{2}{\delta}\left[\frac{\kappa^{3/2}\Cal{N}_\Sigma(t)}{n\sqrt{t}}\wedge\frac{1}{nt}\right]+\frac{8192\kappa^3(\kappa\wedge 1)\log^2\frac{3}{\delta}}{n^2t}\right]
%         576\left[\frac{128\kappa^{5/2}\Cal{N}_{\Sigma}(t)\log\frac{2}{\delta}}{n\sqrt{t}}+\frac{4096\kappa^{3}\log^2\frac{3}{\delta}}{n^2t}\right]
%         +\frac{18432\kappa^2\log\frac{2}{\delta}}{n}.\nonumber
    \end{eqnarray*}
    %\item[(iii)] 
    (iii) For any $\delta>0$ with $m\ge \left(2\vee \frac{1024\kappa^2}{\sum_{i>\ell}\lambda^2_i}\right)\log\frac{2}{\delta}$,
    \begin{equation*}
        \Lambda^m\left\{(\theta_i)_{i=1}^m:\frac{1}{4}\sum_{i>\ell}\lambda_i^2\le S_{\Sigma_m,\ell}\le 4\sum_{i>\ell}\lambda_i^2+\frac{256\kappa^2\log\frac{2}{\delta}}{m}\right\}\ge1-6\delta.\nonumber
        %\label{Eq:metric 1 rf-kpca}
    \end{equation*}
    %\item[(iv)] 
    (iv) For any $\delta>0$ with $n\ge 2\log\frac{2}{\delta}$, $m\ge \left(2\vee \frac{1024\kappa^2}{\sum_{i>\ell}\lambda^2_i}\right)\log\frac{2}{\delta}$ and $\frac{140\kappa}{n}\log\frac{16\kappa n}{\delta}$ $\vee\frac{86\kappa}{m}\log\frac{16\kappa m}{\delta}\le t\le\frac{\lambda_\ell}{9}$, with probability at least $1-26\delta$ over the choice of $\left((X_i)_{i=1}^n,(\theta_j)_{j=1}^m\right)$:
    \begin{eqnarray*}%\label{Eq:metric 2 rf-ekpca}
    \frac{1}{4}\sum_{i>\ell}\lambda_i^2\le S_{\sigh_m,\ell}&{}\lesssim{}&\Cal{A}_2(t) (\lambda_{\ell+1}+t)^2+\kappa^{5/2}\log\frac{2}{\delta}\left[\frac{\Cal{A}_2(t)}{n\sqrt{t}}\wedge \frac{\kappa^{3/2}}{nt}\right]\nonumber\\
    &{}{}&\qquad\qquad+\frac{\kappa^3(1\wedge \kappa)\log^2\frac{3}{\delta}}{n^2t}+\frac{\kappa^2\log\frac{2}{\delta}}{n}+\frac{\kappa^2\log\frac{2}{\delta}}{m},\nonumber
%     
%     
%     S_{\sigh_m,\ell}&{}\le{}& 14\left[\frac{16\kappa\log\frac{2}{\delta}}{tm}+\sqrt{\frac{8\kappa\Cal{N}_\Sigma(t)\log\frac{2}{\delta}}{tm}}+\Cal{N}_\Sigma(t)\right]\left[18(\lambda_{\ell+1}+t)^2+\frac{10^5
% \kappa^{5/2}\log\frac{2}{\delta}}{n\sqrt{t}}\right]\nonumber\\
%     &{}{}&\qquad+\frac{23\times 10^6\kappa^{3}\log^2\frac{3}{\delta}}{n^2t}+\frac{96\kappa^2\log\frac{2}{\delta}}{m}
% +\frac{288000\kappa^2\log\frac{2}{\delta}}{n}.\nonumber
%     &{}{}&    243\left[\frac{16\kappa\log\frac{2}{\delta}}{tm}+\sqrt{\frac{8\kappa\Cal{N}_\Sigma(t)\log\frac{2}{\delta}}{tm}}+\Cal{N}_\Sigma(t)\right](\lambda_{\ell+1}+t)^2+\frac{96\kappa^2\log\frac{2}{\delta}}{m}
% +\frac{288000\kappa^2\log\frac{2}{\delta}}{n}\nonumber\\
%     &{}{}&\quad+5400\left[\frac{4096\kappa^{3}\log^2\frac{3}{\delta}}{n^2t}+\frac{256
% \kappa^{5/2}\log\frac{2}{\delta}}{n\sqrt{t}}\left[\frac{16\kappa\log\frac{2}{\delta}}{tm}+\sqrt{\frac{8\kappa\Cal{N}_\Sigma(t)\log\frac{2}{\delta}}{tm}}+\Cal{N}_\Sigma(t)\right]\right].\nonumber
\end{eqnarray*}
where $\Cal{A}_2(t):=\frac{\kappa\log\frac{2}{\delta}}{tm}+\sqrt{\frac{\kappa\Cal{N}_\Sigma(t)\log\frac{2}{\delta}}{tm}}+\Cal{N}_\Sigma(t)$.
\end{theorem}
\begin{theorem}\label{Thm:variant-ER}
Under the assumptions $(A_1)-(A_5)$, the following hold:\vspace{2mm}\\
%\begin{itemize}
    %\item[(i)] 
    (i) $\sum_{i>\ell}\lambda^2_i\le \eu{S}\left(\frac{\id\phih_1}{\sqrt{\lambdah_1}},\ldots,\frac{\id\phih_\ell}{\sqrt{\lambdah_\ell}}\right)\lesssim_{\bb{P}^n} S_{\sigh,\ell}+\frac{1}{n};$\vspace{1.5mm}\\
    %\item[(ii)] 
    (ii) For $m\gtrsim\frac{1}{\sum_{i>\ell}\lambda^2_i}$, $\sum_{i>\ell}\lambda^2_i\lesssim_{\Lambda^m} \eu{S}\left(\frac{\A\phi_{m,1}}{\sqrt{\lambda_{m,1}}},\ldots,\frac{\A\phi_{m,\ell}}{\sqrt{\lambda_{m,\ell}}}\right)\lesssim_{\Lambda^m} S_{\Sigma_m,\ell}+\frac{1}{m};$\vspace{1.5mm}\\
    %\item[(iii)] 
    (iii) For $m\gtrsim\frac{1}{\sum_{i>\ell}\lambda^2_i}$, $\sum_{i>\ell}\lambda^2_i\lesssim_{\Lambda^m} \eu{S}\left(\frac{\A\phih_{m,1}}{\sqrt{\lambdah_{m,1}}},\ldots,\frac{\A\phih_{m,\ell}}{\sqrt{\lambdah_{m,\ell}}}\right)\lesssim_{\bb{P}^n\times\Lambda^m} S_{\sigh_m,\ell}+\frac{1}{n}+\frac{1}{m}.$
    %\end{itemize}
\end{theorem}
\begin{remark}\label{Rem:Embed-Project}
(i)
% By comparing Theorems~\ref{thm:rff main thm metric 1} and \ref{thm:rff main thm metric 2} we note that the population reconstruction error and its approximation in R-E (i.e., $R_{\Sigma,\ell}$, $R_{\Sigma_m,\ell}$) and E-R (i.e., $S_{\Sigma,\ell}$ and $S_{\Sigma_m,\ell}$) match. However, their sample counterparts in R-E (i.e., $R_{\sigh,\ell}$, $R_{\sigh_m,\ell}$) and E-R (i.e., $S_{\sigh,\ell}$, $S_{\sigh_m,\ell}$) exhibit different behavior, implying that the embedding and projections operations are not commutative on the data. \vspace{1mm}\\
%\emph{(ii)} 
A key difference (which may be an artifact of the proof technique) between Theorems~\ref{thm:rff main thm metric 1} and \ref{thm:rff main thm metric 2} is the upper bound on $t$. While $t$ is upper bounded by a constant in Theorem~\ref{thm:rff main thm metric 1}, $t$ is upper bounded by $\lambda_\ell$ (up to constants) in Theorem~\ref{thm:rff main thm metric 2}, which enforces a lower bound on $\lambda_\ell$. Since $\lambda_\ell\rightarrow 0$ as $\ell\rightarrow\infty$ and the lower bound on $\lambda_\ell$ converges to zero as $n\rightarrow\infty$, this enforces a constraint on $\lambda_\ell$ to not converge to zero too fast. In other words, it imposes a condition on $\ell$ to not grow too fast with $n$.\vspace{1mm}\\
(ii) The explicit universal constants, which are suppressed in (ii) and (iv) of Theorem~\ref{thm:rff main thm metric 2} for brevity, are provided in the proof. It is clear from (ii) and (iv) of Theorem~\ref{thm:rff main thm metric 2} that for $m$ large enough, both EKPCA and RF-EKPCA have similar statistical behavior---a similar observation was made in Theorem~\ref{thm:rff main thm metric 1}.\QEDA
\end{remark}
The following corollary (proved in Sections~\ref{sec:rff poly decay proof2}) 
%and \ref{subsec:cor-exp-2}) 
to Theorem \ref{thm:rff main thm metric 2} investigates the statistical behavior of EKPCA and RF-EKPCA under the polynomial decay condition (exponential decay condition is analyzed in Corollary~\ref{rff exp decay corollary metric 2}) 
on the eigenvalues of $\Sigma$.
\begin{corollary}[Polynomial decay of eigenvalues]\label{rff poly decay corollary metric 2}
Suppose $\underbar{A}i^{-\alpha}\le\lambda_i\le\bar{A}i^{-\alpha}$ for $\alpha>1$ and $\underbar{A},\bar{A}\in(0,\infty)$. Let $\ell=n^{\frac{\theta}{\alpha}}$, $0<\theta\le\alpha$. Define $\frac{1}{\alpha'}:=\left(\frac{1}{\alpha}+\frac{1}{2}\right)\wedge 1$ and $\beta:=\frac{1}{2+\frac{1}{\alpha'}-\frac{1}{\alpha}}$. 
Then
\\\\
(i) $$n^{-2\theta(1-\frac{1}{2\alpha})}\lesssim S_{\Sigma,\ell}\lesssim n^{-2\theta(1-\frac{1}{2\alpha})}.$$
There exists $\tilde{n}\in\bb{N}$ such that for all $n>\tilde{n}$, the following hold:\vspace{2mm}\\
(ii) \[ n^{-2\theta(1-\frac{1}{2\alpha})} \lesssim S_{\sigh,\ell}\lesssim_{\Pb^n}\begin{cases} 
      n^{-2\theta(1-\frac{1}{2\alpha})},\qquad\theta\le\beta \\
     n^{-\left(1-\frac{\theta}{\alpha'}\right)},\qquad\beta\le\theta<1
   \end{cases};
\]
(iii) For $0<\gamma\le1$ and $m=n^\gamma$, %There exists $\tilde{n}\in\bb{N}$ such that for all $n>\tilde{n}$, 
\begin{eqnarray*}
 n^{-2\theta(1-\frac{1}{2\alpha})}\mathds{1}_{\left\{\gamma\ge \theta\left(2-\frac{1}{\alpha}\right)\right\}}&{}\lesssim_{\Lambda^m}{}& S_{\Sigma_m,\ell}\lesssim_{\Lambda^m}\\
 &{}{}&\begin{cases} 
      n^{-2\theta(1-\frac{1}{2\alpha})},\,\,\gamma\ge \theta\left(2-\frac{1}{\alpha}\right),\,\theta\le\frac{\alpha}{2\alpha-1} \\
      n^{-\gamma},\,\,\qquad\quad\gamma\le 1\wedge\theta\left(2-\frac{1}{\alpha}\right)
   \end{cases};
\end{eqnarray*}
%with $m=n^\gamma$ for $0<\gamma\le1$;\vspace{2mm}\\
(iv) For $0<\gamma\le1$ and $m=n^\gamma$, %There exists $\tilde{n}\in\bb{N}$ such that for all $n>\tilde{n}$,
\begin{eqnarray*} 
&{}{}&n^{-2\theta(1-\frac{1}{2\alpha})}\mathds{1}_{\left\{\gamma\ge \theta\left(2-\frac{1}{\alpha}\right)\right\}}\lesssim_{\Lambda^m} S_{\sigh_m,\ell}\lesssim_{\Pb^n\times\Lambda^m}\\
&{}{}&\qquad\qquad\qquad\qquad\qquad\begin{cases} 
      n^{-2\theta(1-\frac{1}{2\alpha})},\quad\,\gamma\ge  \theta\left(2-\frac{1}{\alpha}\right),\,\theta\le\beta \\
      n^{-\left(1-\frac{\theta}{\alpha'}\right)},\quad \gamma\ge 1-\frac{\theta}{\alpha'},\,\gamma>\theta,\,\beta\le\theta<1\\
      n^{-\gamma},\quad\quad\,\,\,\theta<\gamma\le \left[1\wedge\theta\left(2-\frac{1}{\alpha}\right)\wedge \left(1-\frac{\theta}{\alpha'}\right)\right]
% n^{-2\theta(1-\frac{1}{2\alpha})},\qquad\hspace{2.5mm}\theta\le\frac{\alpha}{2\alpha-1},\gamma\ge\theta(\frac{2\alpha-1}{\alpha}) \\
%       n^{-\gamma},\qquad\qquad\theta\le\frac{\alpha}{2\alpha-1},\gamma<\theta\left(\frac{2\alpha-1}{\alpha}\right)
%       \\
%       n^{-\gamma},\qquad\qquad\theta>\frac{\alpha}{2\alpha-1},\gamma<1
   \end{cases}.
\end{eqnarray*}
%with $m=n^\gamma$ for $0<\gamma\le 1$.
\end{corollary}
\begin{remark}\label{rff:rem:cor3}
(i) In Corollary~\ref{rff poly decay corollary metric 2}, note that $\beta=\frac{\alpha}{3\alpha-1}$ for $1<\alpha\le 2$ and $\beta=\frac{2}{5}$ for $\alpha\ge 2$ with the best convergence rate for $S_{\sigh,\ell}$ being attained at $\theta=\beta$ as the rate is a convex function of $\theta$. Clearly, from both computational and statistical view points, only the range of $0<\theta\le\beta$ is interesting and useful as $\theta>\beta$ yields similar/slower convergence rates with more computational complexity. This means, optimal convergence rates are obtained for EKPCA and RF-EKPCA for $\ell$ not growing faster than $n^{\theta/\alpha}$, $\theta\le\beta$ and $m\ge n^{\theta(2-\frac{1}{\alpha})}$. 
% Similar observation holds for Corollary~\ref{rff exp decay corollary metric 2} as well where $0<\theta\le \frac{2}{5}$ is the statistically and computationally useful range with EKPCA and RF-EKPCA achieving almost optimal convergence rates (up to a logarithmic term). 
Arguing as in Remark~\ref{rff:rem1}$(iii)$, it can be shown that the computational complexity of RF-EKPCA is better than of EKPCA and with no loss in statistical performance if $\theta<\frac{\alpha}{4\alpha-3}\wedge\beta$ and $\gamma\ge \theta(2-\frac{1}{\alpha})$. 
%for the case of polynomial decay and if $\theta<\frac{1}{4}$ with $\gamma\ge 2\theta$ for the exponential decay.
\vspace{1mm}\\
% $(i)$  The observations similar to those discussed in Remark~\ref{rff:rem1} (\emph{resp.} below Corollary~\ref{rff exp decay corollary}) for Corollary~\ref{rff poly decay corollary} (\emph{resp.} Corollary~\ref{rff exp decay corollary}) also hold for Corollary~\ref{rff poly decay corollary metric 2} (\emph{resp.} Corollary~\ref{rff exp decay corollary metric 2}): For sufficiently slowly increasing $\ell$, EKPCA has similar asymptotic behavior to that of KPCA; for sufficiently large $m$ (i.e., $\gamma$), EKPCA and RF-EKPCA behave similarly in the statistical sense with RF-EKPCA having a computational edge.\vspace{1mm}\\
(ii) More interesting observations can be made by comparing Corollaries~\ref{rff poly decay corollary} (\emph{resp.} Corollary~\ref{rff exp decay corollary}) and \ref{rff poly decay corollary metric 2} (\emph{resp.} Corollary~\ref{rff exp decay corollary metric 2}). 
First, EKPCA and RF-EKPCA have different upper asymptotic behaviors in R-E (Corollary~\ref{rff poly decay corollary} and Corollary~\ref{rff exp decay corollary}) 
and E-R (Corollary~\ref{rff poly decay corollary metric 2} and Corollary~\ref{rff exp decay corollary metric 2}). 
Particularly, while the reconstruction error rate improves with increase in $\theta$ in both the cases, it saturates beyond a certain $\theta$ in the case of R-E while it decreases in the case of E-R. This latter behavior is due to the inverse of empirical eigenvalues that appear in $S_{\sigh,\ell}$ and $S_{\sigh_m,\ell}$---as $\ell$ becomes large, then inverse of the empirical eigenvalues makes large contributions to the error, resulting in slower convergence rates. In the regimes of $\theta$ where R-E and E-R behave similarly (for both EKPCA and RF-EKPCA), we note that $\theta$ has a larger upper bound (i.e., $\ell$ can have faster growth) in R-E than in E-R, which again relates to the above mentioned issue of the inverse of empirical eigenvalues. Particularly, in the case of E-R, for $\alpha\le 2$, RF-EKPCA has better computational behavior than EKPCA if $\theta<\beta$ and $\gamma\ge \theta(2-\frac{1}{\alpha})$ while such a result holds for R-E for a wider range of $\theta$, i.e., $\theta<\frac{\alpha}{4\alpha-3}$, which means faster growth for $\ell$ is allowable for R-E without losing computational or statistical efficiency. On the other hand, for $\alpha\ge 2$, R-E and E-R behave similarly for $0<\theta\le\frac{\alpha}{4\alpha-3}$. Finally, we would like to highlight that the number of random features ($m$) needed in E-R and R-E so that their statistical behavior match that of KPCA is only a sufficient condition based on the upper bounds in Theorems~\ref{thm:rff main thm metric 1} and \ref{thm:rff main thm metric 2}. It is not clear whether this requirement on $m$ is sharp.\QEDA
\end{remark}
\end{subsection}
\begin{subsection}{Schatten norms}\label{subsecsec:othernorm}
%\noindent\textbf{Schatten norms.}\hspace{2mm} 
So far, we have seen that the population reconstruction error in E-R and R-E, i.e., $R_{\Sigma,\ell}$ and $S_{\Sigma,\ell}$ behave as $\sum_{i>\ell}\lambda^2_i$, which is the squared $\ell_2$-norm of $\bm{\lambda}_\ell:=(\lambda_{\ell+1},\lambda_{\ell+2},\ldots)$. Of course, if we use the population reconstruction error defined in \eqref{Eq:rec-error-kpca}, it is easy to show that it behaves as $\sum_{i>\ell}\lambda_i$, which is the $\ell_1$-norm of $\bm{\lambda}_\ell$. But the reconstruction error defined in \eqref{Eq:rec-error-kpca} is not useful for our purpose because of the aforementioned technical issues and that is why we introduced E-R and R-E in Sections~\ref{subsec:ep} and \ref{subsec:pe}. In this section, we explore an extension of $R_{\Sigma,\ell}$ (similar extension holds for $S_{\Sigma,\ell}$ as well) which yields different norms of $\bm{\lambda}_\ell$. To this end, define
\begin{equation}
\eu{R}_s(\psi_1,\ldots,\psi_\ell)=\bb{E}\norm{\left(\id\id^*\right)^{-s/2}\left[\id \overline{k}(\cdot,X)-\id\left(\sum_{i=1}^{\ell}\inner{\overline{k}(\cdot,X)}{\psi_i}_{\Hk}\psi_i\right)\right]}^2_{\lp}\nonumber 
\end{equation}
for any $(\psi_1,\ldots,\psi_\ell)\subset\Cal{H}$ and $s\le 1$,
which is a weighted version of $\eu{R}$ with the error being weighted by $(\id\id^*)^{-s/2}$. Here $(\id\id^*)^{-1}$ is treated as the inverse of $\id\id^*$ restricted to $L^2(\bb{P})\backslash \text{Ker}(\id\id^*)$ where $\text{Ker}(\id\id^*)=\text{Ker}(\id^*)=\{f\in L^2(\bb{P}):f\,\,\text{is a constant a.s.--}\bb{P}\}$. The following result (similar to Propositions~\ref{pro:solution} and \ref{pro:interpret}), which is proved in Section~\ref{subsec:pro-schatten} 
shows that KPCA solution is the minimizer of $\eu{R}_s$ with the minimum value being $\sum_{i>\ell}\lambda^{2-s}_i$, i.e., $(2-s)$-Schatten norm of $\bm{\lambda}_\ell$ and that EKPCA, RF-KPCA and RF-EKPCA solutions are minimizers of appropriate empirical versions of $\eu{R}_s$.
\begin{proposition}\label{pro:schatten}
Suppose $(A_1)-(A_5)$ hold. For $s\le 1$, define $$\eu{T}_{s,A,H}(P):=\norm{A^{(1-s)/2}(I-P)A^{1/2}}^2_{\Cal{L}^2(H)},$$ where $A:H\rightarrow H$ is a positive self-adjoint Hilbert-Schmidt operator on $H$ with $H$ being a separable Hilbert space. Then the following hold.\vspace{2mm}\\
(i) $\eu{R}_s(\psi_1,\ldots,\psi_\ell)=\eu{T}_{s,\Sigma,\Cal{H}}(P_\psi)$ where $P_\psi:=\sum^\ell_{i=1}\psi_i\oh\psi_i$;\vspace{1mm}\\
(ii) (KPCA) $(\phi_1,\ldots,\phi_\ell)=\arg\inf\{\eu{T}_{s,\Sigma,\Cal{H}}(P_\psi):(\psi_i)_{i\in[\ell]}\subset\Cal{H}\}$;\vspace{1.5mm}\\
(iii) (EKPCA) $(\phih_{1},\ldots,\phih_{\ell})=\arg\inf\{\eu{T}_{s,\sigh,\Cal{H}}(P_\psi):(\psi_i)_{i\in[\ell]}\subset\Cal{H}\}$;\vspace{1.5mm}\\
(iv) (RF-KPCA) $(\phi_{m,1},\ldots,\phi_{m,\ell})=\arg\inf\{\eu{T}_{s,\sig_m,\Cal{H}_m}(P_\tau):(\tau_i)_{i\in[\ell]}\subset\Cal{H}_m\}$ where $P_\tau=\sum^\ell_{i=1}\tau_i\ohm\tau_i$;\vspace{1.5mm}\\
(v) (RF-EKPCA) $(\phih_{m,1},\ldots,\phih_{m,\ell})=\arg\inf\{\eu{T}_{s,\sigh_m,\Cal{H}_m}(P_\tau):(\tau_i)_{i\in[\ell]}\subset\Cal{H}_m\}$.
% (i) $(\phi_1,\ldots,\phi_\ell)=\arg\inf\{\eu{R}_s(\psi_1,\ldots,\psi_\ell):(\psi_i)^\ell_{i=1}\subset \Cal{H}\}$;\vspace{1mm}\\
% (ii) $R_{\Sigma,\ell}:=\eu{R}(\phi_1,\ldots,\phi_\ell)=
% \vspace{1mm}\\
% (iv) $R_{\Sigma,\ell}\le \Vert \Sigma\Vert_{\OPH}\bb{E}_{X\sim\bb{P}}\left\Vert \overline{k}(\cdot,X)-\sum_{i=1}^{\ell}\inner{\overline{k}(\cdot,X)}{\phi_i}_{\Hk}\phi_i\right\Vert^2_\Cal{H}$.
\end{proposition}
Proposition~\ref{pro:schatten} implies that the case of $s=1$ exactly recovers the original KPCA problem (see \eqref{Eq:rec-error-kpca}) and it follows from Lemma~\ref{lem:optima} 
that $\eu{R}_s(\phi_1,\ldots,\phi_\ell)=\sum_{i>\ell}\lambda^{2-s}_i$. With the intuition gained from Proposition~\ref{pro:schatten} and for the same reasons mentioned in Section~\ref{subsec:ep}, we measure the performance of EKPCA, RF-KPCA and RF-EKPCA as follows: Define $R_{\Sigma,\ell,s}:=\eu{R}_s(\phi_1,\ldots,\phi_\ell)$. 
% \begin{equation}
%  R_{\Sigma,\ell,s}=\bb{E}_{X\sim\Pb}\norm{\left(\id\id^*\right)^{-s/2}\left[\id \overline{k}(\cdot,X)-\id\left(\sum_{i=1}^{\ell}\inner{\overline{k}(\cdot,X)}{\phi_i}_{\Hk}\phi_i\right)\right]}^2_{L^2(\Pb)},\label{Eq:R-s}
% \end{equation}
% where  Theorem~\ref{thm:schatten} (proved in Section~\ref{subsec:proof-rls}) shows $R_{\Sigma,\ell,s}$ to behave as a certain $p$-norm of $\bm{\lambda}_\ell$, with $p$ being controlled by $s$.
% Since $\Sigma$ is trace-class, it follows from Theorem~\ref{thm:schatten} that $s\le 1$ and $s=1$ corresponds to the usual reconstruction error that is employed in classical linear PCA. 
$R_{\sigh,\ell,s}$ is the performance measure of EKPCA which is defined by replacing $\overline{k}$ and $\psi_i$ with $\widetilde{k}$ and $\phih_i$ respectively in $\eu{R}_s$. Similarly, the reconstruction error of RF-KPCA can be defined as 
\begin{equation*}
 R_{\Sigma_m,\ell,s}=\bb{E}_{X\sim\Pb}\norm{\left(\id\id^*\right)^{-s/2}\id \overline{k}(\cdot,X)-\left(\A\A^*\right)^{-s/2}\A P_\ell(\Sigma_m)\overline{k}_m(\cdot,X)
%  \left(\sum_{i=1}^{\ell}\inner{\overline{k}_m(\cdot,X)}{\phi_{m,i}}_{\Cal{H}_m}\phi_{m,i}\right)
 }^2_{L^2(\Pb)}%\label{Eq:R-s-kpca}
\end{equation*}
with the performance of RF-EKPCA being measured by $R_{\sigh_m,\ell,s}$, which is defined by replacing $\overline{k}_m$ with $\widetilde{k}_m$ and $P_\ell(\Sigma_m)$ with $P_\ell(\sigh_m)$, where $P_\ell(\Sigma_m)=\sum^\ell_{i=1}\phi_{m,i}\ohm\phi_{m,i}$ and $P_\ell(\sigh_m)=\sum^\ell_{i=1}\phih_{m,i}\ohm\phih_{m,i}$. The following result provides the probabilistic behavior of these generalized reconstruction errors.
%However, there is no obvious connection
\begin{theorem}\label{thm:schatten}
Suppose $(A_1)-(A_5)$ hold. For any $t>0$, define $\Cal{N}_\Sigma(t)=\emph{tr}(\Sigma(\Sigma+tI)^{-1})$. Then the following hold:\vspace{2mm}\\
%\begin{itemize}
    %\item[(i)] 
    (i) For any $s\le 1$, \begin{equation}
        R_{\Sigma,\ell,s}=\sum_{i>\ell}\lambda^{2-s}_i.\nonumber
        %\label{eq:L2 kpca}
    \end{equation}
    %\item[(ii)] 
    (ii) For $\frac{\log n}{n} \lesssim t\lesssim\norm{\Sigma}_\OPH$ and any $s\le 1$, 
    \begin{equation}%\label{Eq:L2 ekpca}
    \sum_{i>\ell}\lambda_i^{2-s}\le R_{\sigh,\ell,s}\lesssim_{\bb{P}^n} \frac{\Cal{N}_\Sigma(t)(\lambda_{\ell+1}+t)^2}{t^s}+\frac{1}{n}.\nonumber
    \end{equation}
    %\item[(iii)] 
    (iii) For $m\gtrsim \left[\left(\sum_{i>\ell}\lambda^{2-s}_{i}\right)^{\frac{2}{s-2}}\vee \left(\sum_{i>\ell}\lambda^{2-s}_{i}\right)^{\frac{2}{s}}\right]\mathds{1}_{[-2,0)}(s)+\frac{\mathds{1}_{\{0\}}(s)}{\sum_{i>\ell}\lambda^{2}_{i}}$, 
    \begin{equation}
        \sum_{i>\ell}\lambda^{2-s}_{i}\lesssim_{\Lambda^m} R_{\Sigma_m,\ell,s}\lesssim_{\Lambda^m} \sum_{i>\ell}\lambda_i^{2-s}+m^{s/2}\mathds{1}_{[-2,0)}(s)+\frac{1}{m}\mathds{1}_{\{0\}}(s).\nonumber
        %\label{Eq:metric 1 rf-kpca}
    \end{equation}
    %\item[(iv)] 
    (iv) For $m\gtrsim \left[\left(\sum_{i>\ell}\lambda^{2-s}_{i}\right)^{\frac{2}{s-2}}\vee \left(\sum_{i>\ell}\lambda^{2-s}_{i}\right)^{\frac{2}{s}}\right]\mathds{1}_{[-2,0)}(s)+\frac{\mathds{1}_{\{0\}}(s)}{\sum_{i>\ell}\lambda^{2}_{i}}$ and $\frac{\log n}{n}\vee\frac{\log m}{m}\lesssim t\lesssim\norm{\Sigma}_{\OPH}$,
    \begin{equation}%\label{Eq:metric 2 rf-ekpca}
    \sum_{i>\ell}\lambda^{2-s}_{i}\lesssim_{\Lambda^m} R_{\sigh_m,\ell,s}\lesssim_{\Lambda^m\times\bb{P}^n} \frac{\Cal{A}_1(t) (\lambda_{\ell+1}+t)^2}{t^s}+\frac{1}{n}+\frac{\mathds{1}_{[-2,0)}(s)}{m^{-s/2}}+\frac{\mathds{1}_{\{0\}}(s)}{m},\nonumber
\end{equation}
where $\Cal{A}_1(t):=\Cal{N}_\Sigma(t)+\frac{1}{tm}+\sqrt{\frac{\Cal{N}_\Sigma(t)}{tm}}$.
%\end{itemize}
\end{theorem}
\begin{remark}\label{rem:3}
(i) The restriction of $s\le 1$ for $R_{\Sigma,\ell,s}$ appears because $\Sigma$ is a trace class. On the other hand, the bounds for $R_{\Sigma_m,\ell,s}$ and $R_{\sigh_m,\ell,s}$ hold only for $s\in[-2,0]$. This could be an artifact of the analysis as the proof of these bounds involve bounding $\Vert (\id\id^*)^{-s/2}-(\tS\tS^*)^{-s/2}\Vert_{\Cal{L}^\infty(\lp)}$ by $\Vert \id\id^*-\tS\tS^*\Vert^{-s/2}_{\Cal{L}^\infty(\lp)}$ by using the operator monotonicity of the map $x\mapsto x^{-s/2}$ for $0\le -s/2\le 1$, i.e., $s\in[-2,0]$. Note that this range of $s$ yields weaker notions of reconstruction error as it corresponds to Schatten norms of order greater than 2, with the more interesting range being $[0,1]$. However, the current bounds for $R_{\Sigma_m,\ell,s}$ and $R_{\sigh_m,\ell,s}$ do not hold for $s$ taking values in this interesting range. Also note that $s=0$ exactly reduces to Theorem~\ref{thm:rff main thm metric 1}.\vspace{1mm}\\
(ii) As before, assuming $i^{-\alpha}\lesssim \lambda_i\lesssim i^{-\alpha}$ for $\alpha>1$ and $\ell=n^{\frac{\theta}{\alpha}}$, $0<\theta\le \alpha$, it follows that $$n^{-2\theta\left(1-\frac{1}{2\alpha}-\frac{s}{2}\right)}\lesssim R_{\Sigma,\ell,s}\lesssim n^{-2\theta\left(1-\frac{1}{2\alpha}-\frac{s}{2}\right)}$$ and $$n^{-2\theta\left(1-\frac{1}{2\alpha}-\frac{s}{2}\right)}\lesssim R_{\sigh,\ell,s}\lesssim_{\Pb^n}\begin{cases} 
      n^{-2\theta(1-\frac{1}{2\alpha}-\frac{s}{2})},\quad\,\,\,\theta\le\frac{\alpha}{2\alpha-1-\alpha s} \\
     \frac{1}{n},\qquad\,\,\,\qquad\qquad\theta\ge\frac{\alpha}{2\alpha-1-\alpha s}
   \end{cases},
$$
which matches with Corollary~\ref{rff poly decay corollary}(i,ii) for $s=0$. These show the convergence rate to be slower for $0<s\le 1$ compared to $s=0$, which is expected as the former corresponds to a stronger notion of reconstruction error. Also $0<\theta\le\frac{\alpha}{2\alpha-1-\alpha s}$ is the only useful range both computationally and statistically as $\theta>\frac{\alpha}{2\alpha-1-\alpha s}$ does not improve the statistical rates but increases the computational complexity. 

On the other hand for $-2\le s\le 0$, it follows that 
$$n^{-2\theta\left(1-\frac{1}{2\alpha}-\frac{s}{2}\right)}\lesssim_{\Lambda^m} R_{\sigh_m,\ell,s}\lesssim_{\Lambda^m\times \Pb^n}
       n^{-2\theta(1-\frac{1}{2\alpha}-\frac{s}{2})}$$
%        \text{for}\,\,m\gtrsim n^{\frac{4\theta}{2-s}(1-\frac{1}{2\alpha}-\frac{s}{2})},\,\,\, \theta\le\frac{\alpha}{2\alpha-1-\alpha s}, 
% $$
for $m\gtrsim n^{\frac{4\theta}{2-s}(1-\frac{1}{2\alpha}-\frac{s}{2})}$ and $\theta\le\frac{\alpha}{2\alpha-1-\alpha s}$, which implies that for sufficiently large $m$, EKPCA and RF-EKPCA have similar statistical behavior as long as $\theta\le \frac{\alpha}{2\alpha-1-\alpha s}$. However, RF-EKPCA is computationally better than EKPCA only when $0<\theta<\frac{(2-s)\alpha}{8\alpha-6+s-4s\alpha}$. Also note that for $\theta=\frac{\alpha}{2\alpha-1-\alpha s}$, which is where the best rate of $\frac{1}{n}$ is achieved for any $s\in[-2,0]$, we obtain $m\gtrsim n^{\frac{2}{2-s}}$, i.e., the requirement on the number of random features is monotonically increasing w.r.t.~$s\in[-2,0]$. Since $\theta$ is an increasing function of $s$, it implies fewer $\ell$ is sufficient for optimal rates for smaller $s$. To elaborate, at the chosen value of $\theta$, statistical optimality is conserved for RF-EKPCA at $s=0$ if $m\gtrsim n$ while only $m\gtrsim \sqrt{n}$ is required at $s=-2$. This is understandable as smaller values of $s$ result in weaker notions of reconstruction error as explained above.\QEDA 
\end{remark}
\end{subsection}
\section{Discussion} \label{discussion}

To summarize, we investigated the computational vs. statistical trade-off in the problem of approximating kernel PCA using random features. While it is obvious that approximate kernel PCA using $m$ random features has lower computational complexity than kernel PCA when $m<n$ with $n$ being the number of samples, it is not obvious that this computational gain is not achieved at the cost of statistical efficiency. Through inclusion and approximation operators, we explored various notions of reconstructing a kernel function using $\ell$ eigenfunctions, wherein we showed that approximate kernel PCA has computational advantage with no loss in statistical optimality as long as $m$ is large enough (but still $m<n$) and $\ell$ is small enough with $m$ depending on the number of eigenfunctions $\ell$ being considered. If $\ell$ is large, then more features are needed to maintain the statistical behavior, thereby resulting in the loss of computational advantage.

There are few open questions in this topic which may be of interest to address. (i) In contrast to the setting of this paper where $\ell$ grows with $n$, it may be of interest to consider asymptotics when $\ell$ is fixed but $n\rightarrow\infty$. In such a setting, one may investigate E-R, R-E and their variations/generalizations. For example, in R-E, we can compare EKPCA and RF-EKPCA by comparing $R_{\sigh,\ell}-R_{\Sigma,\ell}$ and $R_{\sigh_m,\ell}-R_{\Sigma,\ell}$. While Theorems~\ref{thm:rff main thm metric 1}, \ref{thm:rff main thm metric 2} and \ref{thm:schatten} do not directly specialize to the setting of fixed $\ell$, using ideas employed in their proofs, upper bounds can be derived on $R_{\sigh,\ell}-R_{\Sigma,\ell}$ and $R_{\sigh_m,\ell}-R_{\Sigma,\ell}$. However, lower bounds are needed to establish the sharpness of these upper bounds so that these excess errors can be matched for a certain choice of $m$. (ii) Apart from reconstruction error, one may compare $\ell$-eigenspaces (for fixed $\ell$) associated with EKPCA and RF-EKPCA by comparing the corresponding projection operators through their embeddings as bounded operators on $L^2(\bb{P})$. \citet{Ullah-18} investigated this direction by comparing certain inner product of the uncentered covariance operator with the difference between the projection operators associated with $\ell$-eigenspaces of KPCA and EKPCA (\emph{resp.} RF-EKPCA). Different meaningful notions of comparing the projection operators can be explored and upper convergence rates can be derived using the perturbation theory for self-adjoint operators (see \citealp{Sriperumbudur-18} for some preliminary results). However, as above, developing lower bounds will be critical to establish the sharpness of the upper bounds, thereby facilitating a meaningful comparison of the statistical performances of EKPCA and RF-EKPCA.

% In particular, we showed that for smooth RKHS---exponential decay in the eigenvalues of the covariance operator---, only $\sqrt{n}$ random features are needed to achieve optimal convergence rates in the reconstruction error.

% An open question to be answered is whether the requirement on $m$ optimal? The current requirement of $m>n^{2\theta}$ where $\theta=\alpha\frac{\log \ell}{\log n}$ arises from the requirement that $\lambda_\ell\sqrt{m}\ge c$ where $\lambda_\ell$ is the $\ell^{th}$ largest eigenvalue of the covariance operator $\Sigma$ and $c$ is some constant. Note that $\alpha$ denotes the index in the polynomial decay of the eigenvalues of $\Sigma$. This requirement of $\lambda_\ell\sqrt{m}\ge c$ appears in the proof because of the following trivial bound we employed: $|\lambda_\ell(A)-\lambda_\ell(B)|\le\Vert(\lambda_i(A)-\lambdah_i(B))_i\Vert_{\ell_2}\le \Vert A-B\Vert_{\HS(H)}$ for $A=\Sigma,\Sigma_m$ and its empirical versions $B=\sigh,\sigh_m$. Here $A$ and $B$ denote self-adjoint, positive, Hilbert-Schmidt operators on a separable Hilbert space, $H$. Since $\lambda_\ell(A)\rightarrow 0$ and $\lambda_\ell(B)\rightarrow 0$ as $\ell\rightarrow\infty$, it is reasonable to expect that $|\lambda_\ell(A)-\lambdah_\ell(B)|$ gets small with large $\ell$ and therefore the bound of $\Vert A-B\Vert_{\HS(H)}$ for all $\ell$ is trivial. It will be interesting to see whether relative bounds of the type $|\lambda_\ell(A)-\lambda_\ell(B)|\le \lambda_\ell(A)\Vert A-B\Vert_{\HS(H)}$ can be obtained, which will in turn weaken the requirement on $m$.

\section{Proofs}\label{Sec:proofs}
In this section we present the proofs of the results in Sections~\ref{Sec:kpca} and \ref{Sec:results}.%Theorems~\ref{Thm:kpca-main} and \ref{Thm:approx-kpca}.
%--------------------------------------------------------------
%--------------------------------------------------------------
\subsection{Proof of Proposition~\ref{pro:eigsystem}}\label{subsec:pro-eigsystem}
%In the following, we present an approach to
%find the eigensystem $(\widehat{\lambda}_i,\widehat{\phi}_i)_i$ of $\sigh$ by solving a finite dimensional eigen problem. To this end, we 
Define the sampling operator 
$$S:\Cal{H}\rightarrow\bb{R}^n,\qquad f\mapsto \frac{1}{\sqrt{n}}(f(X_1),\ldots,f(X_n))^\top$$
whose adjoint, called the reconstruction operator can be shown (see Proposition~\ref{pro:sampling}\emph{(i)}) to be 
$$S^*:\bb{R}^n\rightarrow \Cal{H},\qquad \bm{\alpha}\mapsto \frac{1}{\sqrt{n}}\sum^n_{i=1}\alpha_i k(\cdot,X_i),$$
where $\bm{\alpha}:=(\alpha_1,\ldots,\alpha_n)^\top$. Define $\tilde{\bm{H}}_n=\frac{n}{n-1}\bm{H}_n$. It follows from Proposition~\ref{pro:sampling}\emph{(ii)} that $\sigh=S^*\tilde{\bm{H}}_nS$, which implies
$(\phih_i)_i$ satisfy \begin{equation}S^*\tilde{\bm{H}}_nS\phih_i=\lambdah_i\phih_i,\label{Eq:emp-eig-alternate}\end{equation}
where $\lambdah_i\ge 0$. Multiplying both sides of \eqref{Eq:emp-eig-alternate} on the left by $S$, we obtain that $(\widehat{\bm{\alpha}}_i)_i$, 
$\widehat{\bm{\alpha}}_i:=S\phih_i,\,i\in[n]$ are eigenvectors of $SS^*\tilde{\bm{H}}_n=\frac{1}{n}\bm{K}\tilde{\bm{H}}_n,$
i.e., they satisfy the finite dimensional linear system, \begin{equation}\bm{K}\tilde{\bm{H}}_n\widehat{\bm{\alpha}}_i=n\lambdah_i\widehat{\bm{\alpha}}_i,
\label{Eq:finite-eig}\end{equation}
where $\bm{K}$ is the Gram matrix, i.e., $(\bm{K})_{ij}=k(X_i,X_j),\,\,i,j\in[n]$ and the fact that $\bm{K}=nSS^*$ follows from Proposition~\ref{pro:sampling}\emph{(iii)}. It is important to note that $(\widehat{\bm{\alpha}}_i)_i$
do not form an orthogonal system in the usual Euclidean inner product but in the weighted inner product where the weighting matrix is $\tilde{\bm{H}}_n$. 
Indeed, it is easy to verify that $$\left\langle \widehat{\bm{\alpha}}_i,\tilde{\bm{H}}_n \widehat{\bm{\alpha}}_j\right\rangle_{2}=
\left\langle S\phih_i,\tilde{\bm{H}}_nS\phih_j\right\rangle_2=\langle\phih_i,\sigh \phih_j\rangle_\Cal{H}=\lambdah_j\langle \phih_i,\phih_j\rangle_\Cal{H}=\lambdah_j\delta_{ij},$$
where $\delta_{ij}$ is the Kronecker delta. 
%Also $\Cal{N}(\bm{H}_n)=\text{span}(\bm{1}_n)$ and so $\text{rank}(\bm{K}\bm{H}_n)=\text{rank}(\bm{H}_n)=n-1$ if $\bm{K}$ is full rank, which is
%the case as $k$ is strictly positive definite and $X_i$'s are distinct.
%\textcolor{red}{Think on $\bm{H}_n\bm{KH}_n$..this is symmetric..same eigenvalues but the eigenvectors are $\bm{H}_n\bm{\alpha}$. Discuss this as well.}
Having obtained $(\widehat{\bm{\alpha}}_i)_i$ from \eqref{Eq:finite-eig}, the eigenfunctions of $\sigh$ are obtained from \eqref{Eq:emp-eig-alternate} as
\begin{equation*}\phih_i=\frac{1}{\lambdah_i}S^*\tilde{\bm{H}}_n\widehat{\bm{\alpha}}_i,\nonumber%\label{Eq:eig-sig-hat}
\end{equation*}
and the result follows.
\subsection{Proof of Proposition~\ref{pro:solution}}\label{subsec:solution}
\emph{(i)} Define $P_\psi:=\sum^\ell_{i=1}\psi_i\oh\psi_i$. Therefore,
\begin{eqnarray}
\eu{R}((\psi_i)_{i\in [\ell]})=\bb{E}\norm{\id(I-P_\psi)\overline{k}(\cdot,X)}^2_{\lp}&{}\stackrel{(*)}{=}{}&\norm{\Sigma^{1/2}(I-P_\psi)\Sigma^{1/2}}^2_{\HSH}\nonumber\\
&{}\stackrel{(\dagger)}{=}{}&\Cal{R}^\Sigma_{0,1,1}(P_\psi),\nonumber
\end{eqnarray}
where we used Lemma~\ref{rf cov proj rewrite} in $(*)$ since $\Sigma=\id^*\id$ and $k$ is continuous and bounded (therefore, Bochner integrable), and the definition in \eqref{Eq:def} in $(\dagger)$. The result therefore follows from Lemma~\ref{lem:optima} that $(\phi_i)_{i\in[\ell]}$ is the unique minimizer of $\eu{R}$.\vspace{2mm}\\
\emph{(ii)} Define $P_\mu:=\sum^\ell_{i=1}\mu_i\ol\mu_i$. Therefore,
\begin{eqnarray}
\eu{S}((\mu_i)_{i\in [\ell]})&{}={}&\bb{E}\norm{(I-P_\mu)\id\overline{k}(\cdot,X)}^2_{\lp}=\inner{\id^*(I-P_\mu)^2\id}{\Sigma}_{\HSH}\nonumber\\
&{}={}&\text{Tr}\left[\id^*(I-P_\mu)^2\id\id^*\id\right]=\text{Tr}[T(I-P_\mu)^2T]\nonumber\\
&{}={}&\norm{(I-P_\mu)T}^2_{\HSL},\nonumber
\end{eqnarray}
where $T:=\id\id^*$. 
% Let $P_{\text{Ran}(T)}:=\sum_{i}\chi_i\ol\chi_i$ and $P^\perp$ denote the orthogonal projection operators that project onto $\text{Ran}(T)$ and $\text{Ker}(T)$ respectively, where $(\chi_i)_{i}$ are the eigenfunctions of $T$. Then $I=P_{\text{Ran}(T)}+P^\perp$. Therefore, 
% \begin{eqnarray}
%  &{}{}&\norm{(I-P_\mu)T}^2_{\HSL}\nonumber\\
%  &{}={}&\norm{(P_{\text{Ran}(T)}+P^\perp)(I-P_\mu)T}^2_{\HSL}\nonumber\\
%  &{}={}&\norm{P_{\text{Ran}(T)}(I-P_\mu P_{\text{Ran}(T)})T+P^\perp(I-P_\mu)T}^2_{\HSL}\nonumber\\
%  &{}={}&\norm{P_{\text{Ran}(T)}(I-P_\mu P_{\text{Ran}(T)})T}^2_\HSL\nonumber\\
%  &&\qquad\qquad+\norm{P^\perp(I-P_\mu)T}^2_{\HSL}\nonumber\\
%  &{}\stackrel{(*)}{=}{}&\Cal{R}^T_{0,0,2}(P_\mu)+\norm{P^\perp(I-P_\mu)T}^2_{\HSL}\label{Eq:connect}\\
%  &{}\ge{}& \Cal{R}^T_{0,0,2}(P_\mu)\stackrel{(\dagger)}{\ge} \Cal{R}^T_{0,0,2}(P_{\text{Ran}(T),\ell}),\nonumber
% \end{eqnarray}
% where we used the definition in \eqref{Eq:def} in $(*)$ and $(\dagger)$ follows from Lemma~\ref{lem:optima}. Here 
It follows from Lemma~\ref{lem:optima}\emph{(ii)} that 
$(\chi_i)_{i\in[\ell]}$ is the minimizer of $\eu{S}$ where $(\chi_i)_{i\in[\ell]}$ are the eigenfunctions of $T$ that correspond to the eigenvalues $(\lambda_i)_{i\in[\ell]}$. 
% Note that, at the minimizer of $\Cal{R}^T_{0,0,2}$, which is $P_{\text{Ran}(T),\ell}$, the second term in \eqref{Eq:connect} is zero, which implies the lower bound is tight. Therefore, we obtain $(\chi_i)_{i\in[\ell]}$ to be the minimizer of $\eu{S}$.
We now show that $\chi_i=\frac{\id\phi_i}{\sqrt{\lambda_i}}$. Since $\Sigma\phi_i=\lambda_i\phi_i$, we have $\id^*\id\phi_i=\lambda_i\phi_i$, which implies $\id\id^*\id\phi_i=\lambda_i\id\phi_i$, i.e., $T\left(\frac{\id\phi_i}{\sqrt{\lambda_i}}\right)=\lambda_i\left(\frac{\id\phi_i}{\sqrt{\lambda_i}}\right)$. Therefore $\left(\frac{\id\phi_i}{\sqrt{\lambda_i}}\right)_i$ are eigenfunctions of $T$ and so $\left(\frac{\id\phi_i}{\sqrt{\lambda_i}}\right)_{i\in[\ell]}$ is the minimizer of $\eu{S}$.\vspace{2mm}\\
\emph{(iii)} It follows from \emph{(i)}, \emph{(ii)} and Lemma~\ref{lem:optima} that $R_{\Sigma,\ell}=\Cal{R}^\Sigma_{0,1,1}(P_\ell(\Sigma))=\sum_{i>\ell}\lambda^2_i$ and $S_{\Sigma,\ell}=\Cal{R}^T_{0,0,2}(\sum^\ell_{i=1}\chi_i\ol\chi_i)=\sum_{i>\ell}\lambda^2_i$.\vspace{2mm}\\
\emph{(iv)} Note that for any $(\psi_i)_{i\in[\ell]}$,
\begin{eqnarray}
\eu{R}((\psi_i)_{i\in [\ell]})&{}={}&\bb{E}\norm{\id(I-P_\psi)\overline{k}(\cdot,X)}^2_{\lp}=\bb{E}\norm{\Sigma^{1/2}(I-P_\psi)\overline{k}(\cdot,X)}^2_{\Cal{H}}\nonumber\\
&{}\le{}& \Vert\Sigma^{1/2}\Vert^2_{\OPH}\bb{E}\norm{(I-P_\psi)\overline{k}(\cdot,X)}^2_{\Cal{H}}.\nonumber
\end{eqnarray}
The result therefore follows by applying the above inequality for $(\phi_i)_{i\in[\ell]}$.
%--------------------------------------------------------------
%--------------------------------------------------------------
\subsection{Proof of Proposition~\ref{pro:interpret}}\label{subsec:interpret}
\emph{(i), (ii)} Refer to the proof of Proposition~\ref{pro:solution}\emph{(i)}.\vspace{1.5mm}\\
\emph{(iii), (iv), (v)} Note that $\eu{T}_{\sigh,\Cal{H}}(P_\psi)=\Cal{R}^{\sigh}_{0,1,1}(P_\psi)$, $\eu{T}_{\Sigma_m,\Cal{H}_m}(P_\tau)=\Cal{R}^{\Sigma_m}_{0,1,1}(P_\tau)$ and $\eu{T}_{\sigh_m,\Cal{H}_m}(P_\tau)=\Cal{R}^{\sigh_m}_{0,1,1}(P_\tau)$, following the notation defined in \eqref{Eq:def}. The result therefore follows from Lemma~\ref{lem:optima}.
%--------------------------------------------------------------
%--------------------------------------------------------------
\subsection{Proof of Proposition~\ref{pro:interpret2}}\label{subsec:interpret2}
\emph{(i), (ii)} Refer to the proof of Proposition~\ref{pro:solution}\emph{(ii)}.\vspace{1.5mm}\\
\emph{(iii)} It follows from Lemma~\ref{lem:optima} that the eigenfunctions corresponding to the top $\ell$ eigenvalues of $\widehat{\id}\widehat{\id}^*$ is the minimizer of $\eu{V}_{\widehat{\id}\widehat{\id}^*,L^2(\bb{P}_n)}$. In the following, we will show that $\widehat{\id}^*\widehat{\id}=\sigh$ which will then imply $\widehat{\id}\widehat{\id}^*\widehat{\id}\phih_i=\widehat{\id}\sigh\phih_i=\lambdah_i\widehat{\id}\phih_i$. The result is completed by noting that 
$$\left\langle\frac{\widehat{\id}\phih_i}{\sqrt{\lambdah_i}},\frac{\widehat{\id}\phih_j}{\sqrt{\lambdah_j}} \right\rangle_{L^2(\bb{P}_n)}=\frac{1}{\sqrt{\lambdah_i\lambdah_j}}\left\langle\sigh\phih_i,\phih_j\right\rangle_{\Cal{H}}=\delta_{ij}$$
and $(\lambdah_i,\widehat{\id}\phih_i/\sqrt{\lambdah_i})_i$ is the eigensystem of $\widehat{\id}\widehat{\id}^*$. 

To show $\widehat{\id}^*\widehat{\id}=\sigh$, for any $f\in\Cal{H}$, consider
\begin{eqnarray}
\inner{f}{\widehat{\id}^*\widehat{\id}f}_{\Cal{H}}&{}={}&\norm{\widehat{\id}f}^2_{L^2(\bb{P}_n)}=\frac{1}{n}\sum^n_{i=1}(\widehat{\id}f)^2(X_i)
=\frac{1}{n-1}\sum^n_{i=1}\left(f(X_i)-\frac{1}{n}\sum^n_{j=1}f(X_j)\right)^2\nonumber\\
&{}={}&\frac{1}{n-1}\sum^n_{i=1}f^2(X_i)-\frac{1}{n(n-1)}\left(\sum^n_{j=1}f(X_j)\right)^2
\stackrel{\eqref{Eq:temmm}}{=}\inner{f}{\sigh f}_{\Cal{H}}\nonumber
\end{eqnarray}
and the claim follows.\vspace{1.5mm}\\
\emph{(iv), (v)} The proof is exactly same as that of \emph{(iii)} by replacing $\widehat{\id}$, $\sigh$, $\phih_i$ and $\lambdah_i$ by $\A$, $\Sigma_m$, $\phi_{m,i}$ and $\lambda_{m,i}$ (\emph{resp.} $\widehat{\A}$, $\sigh_m$, $\phih_{m,i}$ and $\lambdah_{m,i}$), respectively.
%--------------------------------------------------------------
%--------------------------------------------------------------
\subsection{Proof of Theorem~\ref{pro:relation}}\label{subsec:relation} 
Define $$\Mp:=\intx k(\cdot,x)\,d\bb{P}(x),\,\,\mph:=\frac{1}{n}\sum^n_{i=1}k(\cdot,X_i),\,\,\mpm:=\intx k_m(\cdot,x)\,d\bb{P}(x)$$ and $$\mpmh:=\frac{1}{n}\sum^n_{i=1}k_m(\cdot,X_i).$$
\emph{(i)} From Proposition~\ref{pro:interpret}\emph{(i)}, we have
\begin{eqnarray}
\eu{T}_{\Sigma,\Cal{H}}(P_\ell(\sigh))&{}={}&\bb{E}\norm{\id \overline{k}(\cdot,X)-\id P_\ell(\sigh))\overline{k}(\cdot,X)}^2_{\lp}\nonumber\\
&{}\stackrel{(\dagger)}{=}{}&\bb{E}\norm{\id \overline{k}(\cdot,X)-\id P_\ell(\sigh)\widetilde{k}(\cdot,X)+\id P_\ell(\sigh)(\Mp-\mph)}^2_{\lp}\nonumber\\
&{}\stackrel{(*)}{=}{}&R_{\sigh,\ell}+\norm{\id P_\ell(\sigh)(\Mp-\mph)}^2_{\lp}
\stackrel{(\ddagger)}{\lesssim_{\bb{P}^n}} R_{\sigh,\ell}+\frac{1}{n},\nonumber
\end{eqnarray}
where expanding the squares in $(\dagger)$ and noting that the expectation of the inner product is zero, yields $(*)$. $(\ddagger)$ follows from \eqref{eq:rff ekpca B bnd} and Lemma~\ref{lem:kernel mean bnd}. The lower bound is obtained by noting that $\norm{\id P_\ell(\sigh)(\Mp-\mph)}^2_{\lp}\ge 0$.\vspace{1.5mm}\\
\emph{(ii)} From Proposition~\ref{pro:interpret}\emph{(iv)}, we have
\begin{eqnarray}
\eu{T}_{\Sigma_m,\Cal{H}_m}(P_\ell(\Sigma_m))&{}={}&\bb{E}\norm{\A \overline{k}_m(\cdot,X)-\A P_\ell(\Sigma_m))\overline{k}_m(\cdot,X)}^2_{\lp}\nonumber\\
&{}={}&\bb{E}\norm{\A \overline{k}_m(\cdot,X)-\id \overline{k}(\cdot,X)+\id \overline{k}(\cdot,X)-\A P_\ell(\Sigma_m)\overline{k}_m(\cdot,X)}^2_{\lp},\nonumber
\end{eqnarray}
and the result follows from Lemmas~\ref{lem:rf kernel mean bnd} and \ref{lem:supp}. \vspace{1.5mm}\\
\emph{(ii)} From Proposition~\ref{pro:interpret}\emph{(v)}, we have
\begin{eqnarray}
\eu{T}_{\Sigma_m,\Cal{H}_m}(P_\ell(\sigh_m))&{}={}&\bb{E}\norm{\A \overline{k}_m(\cdot,X)-\A P_\ell(\sigh_m))\overline{k}_m(\cdot,X)}^2_{\lp}\nonumber\\
&{}={}&\bb{E}\norm{\A \overline{k}_m(\cdot,X)-\A P_\ell(\sigh_m)\widetilde{k}_m(\cdot,X)+\A P_\ell(\sigh_m)(\mpm-\mpmh)}^2_{\lp}\nonumber\\
&{}={}&\bb{E}\norm{\A \overline{k}_m(\cdot,X)-\A P_\ell(\sigh_m)\widetilde{k}_m(\cdot,X)}^2_{\lp}+\bb{E}\norm{\A P_\ell(\sigh_m)(\mpm-\mpmh)}^2_{\lp}\nonumber\\
&{}={}&\bb{E}\norm{\A \overline{k}_m(\cdot,X)-\id \overline{k}(\cdot,X)+\id \overline{k}(\cdot,X)-\A P_\ell(\sigh_m)\widetilde{k}_m(\cdot,X)}^2_{\lp}\nonumber\\
&&\qquad\qquad+\bb{E}\norm{\A P_\ell(\sigh_m)(\mpm-\mpmh)}^2_{\lp}\nonumber
\end{eqnarray}
and the result follows from Lemmas~\ref{lem:kernel mean bnd}, \ref{lem:rf kernel mean bnd} and \ref{lem:supp}.
%--------------------------------------------------------------
%--------------------------------------------------------------
\subsection{Proof of Theorem~\ref{thm:rff main thm metric 1}}\label{subsec:thm-bound}
% Define $$\Mp:=\intx k(\cdot,x)\,d\bb{P}(x),\,\,\mph:=\frac{1}{n}\sum^n_{i=1}k(\cdot,X_i),\,\,\mpm:=\intx k_m(\cdot,x)\,d\bb{P}(x)$$ and $$\mpmh:=\frac{1}{n}\sum^n_{i=1}k_m(\cdot,X_i).$$%.\vspace{1mm}\\
\emph{(i)} The result follows from the proof of Proposition~\ref{pro:solution}\emph{(iii)}.
\vspace{1mm}
\\
\emph{(ii)} \underline{\emph{Upper bound:}} Note that $$R_{\sigh,\ell}=\E\norm{\id\overline{k}(\cdot,X)-\id P_\ell(\sigh)(k(\cdot,X)-\mph)}_{\lp}^2.$$ Therefore, adding and subtracting $\id P_\ell(\sigh)m_\bb{P}$ and expanding squares, we obtain
\begin{eqnarray}
R_{\sigh,\ell}
&{}={}&\circled{\small{1}}+\circled{\small{2}}\label{eq:rff ekpca decomp}
%&{}{}&\quad-2\bb{E}\left\langle \id(I-P_\ell(\sigh))\overline{k}(\cdot,X),\id P_\ell(\sigh)(m_\Pb-\mph)\right\rangle_{\lp},\nonumber
\end{eqnarray}
with the inner product being zero since $\bb{E}[\overline{k}(\cdot,X)]=0$. Here $$\circled{\small{1}}:=
\E\norm{\id(I-P_\ell(\sigh))\overline{k}(\cdot,X)}_{\lp}^2,\,\,\text{and}\quad\circled{\small{2}}:= \norm{\id P_\ell(\sigh)(m_\Pb-\mph)}_{\lp}^2.$$ It follows from Lemma \ref{rf cov proj rewrite} that 
\begin{equation}
    \circled{\small{1}}=\E\norm{\id(I-P_\ell(\sigh))\overline{k}(\cdot,X)}_{\lp}^2=\norm{\Sigma^{1/2}(I-P_\ell(\sigh))\Sigma^{1/2}}_{\HSH}^2.\nonumber%\label{eq:rff ekpca A equiv}
\end{equation}
For any $t>0$, we have
\begin{eqnarray}
&{}{}&\norm{\Sigma^{1/2}(I-P_\ell(\sigh))\Sigma^{1/2}}_{\HSH}^2\nonumber
\\
&{}={}&\left\Vert\Sigma^{1/2}(\Sigma+tI)^{-1/2}(\Sigma+tI)^{1/2}(I-P_\ell(\sigh))(\Sigma+tI)^{1/2}(\Sigma+tI)^{-1/2}\Sigma^{1/2}\right\Vert_{\HSH}^2\nonumber
\\
&{}\le{}&\norm{\Sigma^{1/2}(\Sigma+tI)^{-1/2}}_{\HSH}^2\norm{\Sigma^{1/2}(\Sigma+tI)^{-1/2}}_{\OPH}^2\norm{(\Sigma+tI)^{1/2}(I-P_\ell(\sigh))(\Sigma+tI)^{1/2}}_{\OPH}^2\nonumber
\\
&{}\stackrel{(*)}{\le}{}&\Cal{N}_\Sigma(t)\norm{(\Sigma+tI)^{1/2}(I-P_\ell(\sigh))(\Sigma+tI)^{1/2}}_{\OPH}^2\nonumber%\label{eq:rff ekpca A1}
\\
&{}\le{}&\Cal{N}_\Sigma(t)\norm{(\Sigma+tI)^{1/2}(\sigh+tI)^{-1/2}}_{\OPH}^4\norm{(\sigh+tI)^{1/2}(I-P_\ell(\sigh))(\sigh+tI)^{1/2}}_{\OPH}^2,\nonumber\\
%\label{eq:rff ekpca A2}
%\\
&{}\le{}&\Cal{N}_\Sigma(t)(\lambdah_{\ell+1}+t)^2\norm{(\Sigma+tI)^{1/2}(\sigh+tI)^{-1/2}}_{\OPH}^4,\label{Eq:tmep}
\end{eqnarray}
where in $(*)$, we have used $\norm{\Sigma^{1/2}(\Sigma+tI)^{-1/2}}_{\HSH}^2=\text{tr}((\Sigma+tI)^{-1/2}\Sigma(\Sigma+tI)^{-1/2})=\text{tr}(\Sigma(\Sigma+tI)^{-1})=\Cal{N}_\Sigma(t)$ and $\norm{\Sigma^{1/2}(\Sigma+tI)^{-1/2}}_{\OPH}^2\le 1$. Applying Lemma~\ref{lem:cent 1}\emph{(ii, iv)} to \eqref{Eq:tmep}, we obtain that for any $\delta>0$ such that $\frac{140\kappa}{n}\log\frac{16\kappa n}{\delta}\le t\le\norm{\Sigma}_{\OPH}$, with probability at least $1-2\delta$ over the choice of $(X_i)^n_{i=1}$,
\begin{eqnarray}
\circled{\small{1}}=\norm{\Sigma^{1/2}(I-P_\ell(\sigh))\Sigma^{1/2}}_{\HSH}^2\le 4\Cal{N}_\Sigma(t)(\lambdah_{\ell+1}+t)^2\le 9\Cal{N}_\Sigma(t)(\lambda_{\ell+1}+t)^2.\label{Eq:pe-2}
\end{eqnarray}
% 
% Now we use $\norm{(\sigh+tI)^{1/2}(I-P_\ell(\sigh))(\sigh+tI)^{1/2}}_{\OPH}^2=(\lambdah_{\ell+1}+t)^2$ and apply Lemma \ref{lem:cent 1} to (\ref{eq:rff ekpca A2}) for some $\delta>0$ such that $\frac{76\kappa}{n}\log\frac{n}{\delta}\le t\le\norm{\Sigma}_{\OPH}$ obtaining
% \begin{eqnarray}
% &{}{}&\Cal{N}_\Sigma(t)\norm{(\Sigma+tI)^{1/2}(\sigh+tI)^{-1/2}}_\OPH^4\norm{(\sigh+tI)^{1/2}(I-P_\ell(\sigh))(\sigh+tI)^{1/2}}_\OPH^2\nonumber
% \\
% &{}\le{}&4\Cal{N}_\Sigma(t)(\lambdah_{\ell+1}+t)^2\le9\Cal{N}_\Sigma(t)(\lambda_\ell+t)^2,\hspace{3cm}\label{eq:rff ekpca A3}
% \end{eqnarray}{}
% 
% which holds with probability at least $1-2\delta$.  Proceeding, \circled{B} is bound by
We now bound $\circled{\small{2}}$ as follows.
\begin{eqnarray}
\circled{\small{2}}&{}={}&\inner{\id P_\ell(\sigh)(m_\Pb-\mph)}{\id P_\ell(\sigh)(m_\Pb-\mph)}_{\lp}\nonumber
\\
&{}={}&\inner{\Sigma^{1/2}P_\ell(\sigh)(m_\Pb-\mph)}{\Sigma^{1/2}P_\ell(\sigh)(m_\Pb-\mph)}_\Hk\nonumber\\
&{}={}&\norm{\Sigma^{1/2}P_\ell(\sigh)(m_\Pb-\mph)}_\Hk^2
\le\norm{\Sigma^{1/2}}_\OPH^2\norm{P_\ell(\sigh)}_\OPH^2\norm{m_\Pb-\mph}_\Hk^2\nonumber\\
&{}={}&\Vert\Sigma\Vert_\OPH\norm{m_\Pb-\mph}_\Hk^2,\label{eq:rff ekpca B bnd}
\end{eqnarray}
where the last equality uses $\norm{P_\ell(\sigh)}_\OPH=1$.  The result follows by applying Lemma \ref{lem:kernel mean bnd}\emph{(i)} to (\ref{eq:rff ekpca B bnd}), combining it with \eqref{Eq:pe-2} in (\ref{eq:rff ekpca decomp}), and using $\norm{\Sigma}_\OPH\le 2\kappa$.\vspace{1mm}\\
\underline{\emph{Lower bound:}} It follows from \eqref{eq:rff ekpca decomp} that $R_{\sigh,\ell}\ge \circled{\small{1}}$. Lemma~\ref{lem:optima} implies that $\circled{\small{1}} =\Cal{R}^{\Sigma}_{0,1,1}(P_\ell(\sigh))\ge \sum_{i>\ell}\lambda^2_i=R_{\Sigma,\ell}$ and the result follows by combining these bounds.
\vspace{1.5mm}
\\
\emph{(iii)} \underline{\emph{Upper bound:}} Since $R_{\Sigma_m,\ell}=\E\norm{\id\overline{k}(\cdot,X)-\tS P_\ell(\Sigma_m)\overline{k}_m(\cdot,X)}_{\lp}^2$, by \eqref{Eq:supp}, we have
\begin{eqnarray*}
R_{\Sigma_m,\ell}&{}\le{}& 2\left(\circled{\small{3}}+\circled{\small{4}}\right),\nonumber
%\E\norm{\id\kbar-\tS P_\ell(\Sigma_m)(k_m(\cdot,X)-m_{\Pb,m})}_\lp^2\nonumber\\
%&{}\le{}&
% 2\overbrace{\E\norm{\id\overline{k}(\cdot,X)-\tS\overline{k}_m(\cdot,X)}_{\lp}^2}^{\circled{\small{3}}}\nonumber\\
% &{}{}&\qquad\qquad\qquad+2\underbrace{\E\norm{\tS(I-P_\ell(\Sigma_m))\overline{k}_m(\cdot,X)}_{\lp}^2}_{\circled{\small{4}}}.\nonumber
%\label{eq:rf-kpca decomp}
\end{eqnarray*}
where $\circled{\small{3}}:=\E\norm{\id\overline{k}(\cdot,X)-\tS\overline{k}_m(\cdot,X)}_{\lp}^2$ and $\circled{\small{4}}:=\E\norm{\tS(I-P_\ell(\Sigma_m))\overline{k}_m(\cdot,X)}_{\lp}^2.$
It follows from Lemma~\ref{rf cov proj rewrite} that 
%We first rewrite 
\begin{eqnarray}
\circled{\small{4}}&{}={}&\norm{\Sigma_m^{1/2}(I-P_\ell(\Sigma_m))\Sigma_m^{1/2}}_\HSM^2=
\sum_{i=\ell+1}^m\lambda_{m,i}^2=\sum_{i=\ell+1}^m(\lambda_{m,i}-\lambda_i+\lambda_i)^2\nonumber\\
&{}\le{}& 2\sum_{i=\ell+1}^m(\lambda_{m,i}-\lambda_i)^2+2\sum_{i>\ell}\lambda_i^2\stackrel{(**)}{\le} 2\norm{\id\id^*-\tS\tS^*}_\HSL^2+2\sum_{i>\ell}\lambda_i^2,\label{eq:rf-kpca F1.1}
\end{eqnarray}
% using Lemma \ref{rf cov proj rewrite}.  We then have
% 
% \begin{eqnarray}
% \circled{F}=\text{tr}\left(\Sigma_m^{1/2}(I-P_\ell(\Sigma_m))\Sigma_m(I-P_\ell(\Sigma_m))\Sigma_m^{1/2}\right)=\sum_{i=\ell+1}^m\lambda_{i,m}^2.
% \end{eqnarray}{}
where in $(**)$, we used Hoffman-Wielandt inequality \citep{Bhatia-97} along with the fact that 
$\Sigma=\id^*\id$ and $\id\id^*$ have same eigenvalues, and similarly $\Sigma_m=\tS^*\tS$ and $\tS\tS^*$.
% \begin{equation}
%     \sum_{i=\ell+1}^m\lambda_{i,m}^2=\sum_{i=\ell+1}^m(\lambda_{i,m}-\lambda_i+\lambda_i)^2\le2\sum_{i=\ell+1}^m(\lambda_{i,m}-\lambda_i)^2+2\sum_{i>\ell}\lambda_i^2\le2\norm{\id\id^*-\tS\tS^*}_\HSL^2+2\sum_{i>\ell}\lambda_i^2.\label{eq:rf-kpca F1.1}
% \end{equation}{}
The result follows by applying Lemma~\ref{lem:perturb} to (\ref{eq:rf-kpca F1.1}) and Lemma~\ref{lem:rf kernel mean bnd} to \circled{\small{3}}.\vspace{1mm}
\\
\underline{\emph{Lower bound:}} Note that 
%\begin{eqnarray*}
$R_{\Sigma_m,\ell}
% &=&
% %\E\norm{\id\kbar-\tS P_\ell(\Sigma_m)(k_m(\cdot,X)-m_{\Pb,m})}_\lp^2\nonumber\\
% %&{}\le{}&
% \circled{\small{3}}+\circled{\small{4}}
%  -2\bb{E}\left\langle \id\overline{k}(\cdot,X)-\tS\overline{k}_m(\cdot,X),\tS(I-P_\ell(\Sigma_m))\overline{k}_m(\cdot,X)\right\rangle_{\lp}\nonumber\\
% &{}\ge{}&\circled{\small{3}}+\circled{\small{4}}\nonumber\\
% &{}{}&-2\bb{E}\left[\norm{\id\overline{k}(\cdot,X)-\tS\overline{k}_m(\cdot,X)}_{\lp}\norm{\tS(I-P_\ell(\Sigma_m))\overline{k}_m(\cdot,X)}_{\lp}\right]\nonumber\\
% &{}\ge{}&\circled{\small{3}}+\circled{\small{4}}
% -2\sqrt{\circled{\small{3}}}\sqrt{\circled{\small{4}}}
\stackrel{\eqref{Eq:supp}}{\ge}\left(\sqrt{\circled{\small{4}}}-\sqrt{\circled{\small{3}}}\right)^2,%\nonumber
%\label{eq:rf-kpca decomp}
%\end{eqnarray*}
$ where \begin{eqnarray}\sqrt{\circled{\small{4}}}-\sqrt{\circled{\small{3}}}&{}={}&\sqrt{\sum_{i>\ell}\lambda^2_{m,i}}-\sqrt{\circled{\small{3}}}
\ge \left|\sqrt{\sum_{i>\ell}\lambda^2_{i}}-\sqrt{\sum_{i>\ell}\left(\lambda_{ i}-\lambda_{m,i}\right)^2}\right|-\sqrt{\circled{\small{3}}}\nonumber\\
&{}\ge{}& \sqrt{\sum_{i>\ell}\lambda^2_{i}}-\norm{\id\id^*-\tS\tS^*}_{\HSL}-\sqrt{\circled{\small{3}}}\ge \sqrt{\sum_{i>\ell}\lambda^2_{i}}-4\kappa\sqrt{\frac{2\log\frac{2}{\delta}}{m}}-8\kappa\sqrt{\frac{\log\frac{2}{\delta}}{m}}\nonumber
\end{eqnarray}
\begin{eqnarray}
&{}\ge{}& \sqrt{\sum_{i>\ell}\lambda^2_{i}}-16\kappa\sqrt{\frac{\log\frac{2}{\delta}}{m}}\ge\frac{1}{2}\sqrt{\sum_{i>\ell}\lambda^2_{i}}
\label{Eq:qq}
\end{eqnarray}
as $\frac{1}{2}\sqrt{\sum_{i>\ell}\lambda^2_{i}}\ge 16\kappa\sqrt{\frac{\log\frac{2}{\delta}}{m}}$, with \eqref{Eq:qq} holding with probability at least $1-3\delta$ over the choice of $(\theta_i)^m_{i=1}$.
\vspace{1mm}\\
\emph{(iv)} \underline{\emph{Upper bound:}} $R_{\sigh_m,\ell}$ can be alternately written as $$R_{\sigh_m,\ell}=\E\norm{\id\overline{k}(\cdot,X)-\tS P_\ell(\sigh_m)(k_m(\cdot,X)-\mpmh)}_{L^2(\Pb)}^2,$$ which can be bounded as 
\begin{eqnarray}
R_{\sigh_m,\ell}
% =\E\norm{\id\kbar-\tS P_\ell(\hat{C}_m))(k_m(\cdot,X)-\hat{m}_{\Pb,m})}_{L^2(\Pb)}^2\hspace{1cm}\nonumber
%     \\
    &{}\le{}& 2\E\norm{\id\overline{k}(\cdot,X)-\tS P_\ell(\sigh_m)\overline{k}_m(\cdot,X)}_{\lp}^2+2 \E\norm{\tS P_\ell(\sigh_m)(m_{\Pb,m}-\mpmh)}_{\lp}^2\nonumber\\
    &{}\le{}& 4\left(\circled{\small{3}}+\circled{\small{5}}\right)+2\left(\circled{\small{6}}\right),\label{Eq:pe-emp}
\end{eqnarray}
where $\circled{\small{5}}:=
\E\norm{\tS(I-P_\ell(\sigh_m)\overline{k}_m(\cdot,X)}_{\lp}^2$ and $\circled{\small{6}}:=
\norm{\tS P_\ell(\sigh_m)(m_{\Pb,m}-\mpmh)}_{\lp}^2.$ Using Lemma~\ref{rf cov proj rewrite}, for $t>0$, we bound \circled{\small{5}} as
\begin{eqnarray}
\circled{\small{5}}&{}={}&\norm{\Sigma_m^{1/2}(I-P_\ell(\sigh_m))\Sigma_m^{1/2}}_{\HSM}^2\nonumber
\\
&{}={}&\left\Vert\Sigma_m^{1/2}(\Sigma_m+tI)^{-1/2}(\Sigma_m+tI)^{1/2}(I-P_\ell(\sigh_m))(\Sigma_m+tI)^{1/2}(\Sigma_m+tI)^{-1/2}\Sigma_m^{1/2}\right\Vert_{\HSM}^2\nonumber
\\
&{}\le{}&\norm{\Sigma_m^{1/2}(\Sigma_m+tI)^{-1/2}}_{\HSM}^2\norm{(\Sigma_m+tI)^{-1/2}\Sigma_m^{1/2}}_{\OPHm}^2\nonumber\\
&{}{}&\qquad\qquad\times\norm{(\Sigma_m+tI)^{1/2}(I-P_\ell(\sigh_m))(\Sigma_m+tI)^{1/2}}_{\OPHm}^2\nonumber\\
&{}\stackrel{(*)}{\le}&{}\Cal{N}_{\Sigma_m}(t)\norm{(\Sigma_m+tI)^{1/2}(I-P_\ell(\sigh_m))(\Sigma_m+tI)^{1/2}}_{\OPHm}^2\nonumber\\
&{}\le{}&\Cal{N}_{\Sigma_m}(t)\norm{(\Sigma_m+tI)^{1/2}(\sigh_m+tI)^{-1/2}}_\OPHm^4
\nonumber\\
&{}{}&\qquad\qquad\times\norm{(\sigh_m+tI)^{1/2}(I-P_\ell(\sigh_m))(\sigh_m+tI)^{1/2}}_\OPHm^2\nonumber
\\
&{}\le{}&\Cal{N}_{\Sigma_m}(t)(\lambdah_{m,\ell+1}+t)^2\norm{(\Sigma_m+tI)^{1/2}(\sigh_m+tI)^{-1/2}}_\OPHm^4,\label{eq:rf-ekpca G2.1}
\end{eqnarray}
where we used $\norm{\Sigma_m^{1/2}(\Sigma_m+tI)^{-1/2}}_\HSM^2=\text{tr}\left(\Sigma_m^{1/2}(\Sigma_m+tI)^{-1}\Sigma_m^{1/2}\right)=\text{tr}\left(\Sigma_m(\Sigma_m+tI)^{-1}\right)=:\Cal{N}_{\Sigma_m}(t),$ and $\norm{\Sigma_m^{1/2}(\Sigma_m+tI)^{-1/2}}_\OPHm^2\le 1$ in $(*)$. Conditioning on $(\theta_i)^m_{i=1}$ and applying Lemma~\ref{lem:cent 1}\emph{(ii, iv)} to \eqref{eq:rf-ekpca G2.1}, we obtain that for any $\delta>0$ and 
\begin{equation}
\frac{140\kappa}{n}\log\frac{16\kappa n}{\delta}\le t\le\norm{\Sigma_m}_{\OPH},\label{eq:verify}
\end{equation}
%with probability atleast $1-2\delta$ over the choice of $(X_i)_i$,
\begin{equation}\bb{P}^n_{|(\theta_i)^m_{i=1}}\left\{(X_i)^n_{i=1}:\circled{\small{5}}\le 9\Cal{N}_{\Sigma_m}(t)(\lambda_{m,\ell+1}+t)^2\right\}\ge 1-2\delta.\label{Eq:temp5}\end{equation}
Now, unconditioning w.r.t.~$(\theta_i)^m_{i=1}$ and applying Lemma~\ref{lem:1 rff}\emph{(ii, iv)} in \eqref{Eq:temp5}, we obtain that for any $\delta>0$ and $\frac{86\kappa}{m}\log\frac{16\kappa m}{\delta}\le t\le\norm{\Sigma}_{\OPH}$, 
%with probability atleast $1-3\delta$,
\begin{eqnarray}&{}{}&\Lambda^m\times\bb{P}^n\left\{\left((\theta_i)^m_{i=1},(X_i)^n_{i=1}\right):\circled{\small{5}}\le \frac{81}{4}\left[\frac{32\kappa\log\frac{2}{\delta}}{tm}+\sqrt{\frac{32\kappa\Cal{N}_\Sigma(t)\log\frac{2}{\delta}}{tm}}+2\Cal{N}_\Sigma(t)\right](\lambda_{\ell+1}+t)^2\right\}\nonumber\\
&{}{}&\qquad\qquad\qquad\qquad\qquad\ge 1-5\delta.\label{Eq:final5}
\end{eqnarray}
Note that the upper bound in \eqref{eq:verify} holds because we assumed that $t\le \frac{1}{3}\Vert\Sigma\Vert_{\OPH}$ which is equivalent to $t\le \frac{1}{2}(\norm{\Sigma}_{\OPH}-t)\le \norm{\Sigma_m}_{\OPH}$ where we used $\frac{1}{2}(\lambda_{1}+t)\le \lambda_{m,1}+t$ from Lemma~\ref{lem:1 rff}\emph{(iii)}.

% 
% \begin{eqnarray}
% \circled{G2}{}&\le{}&\Cal{N}_m(t)\norm{(\Sigma_m+tI)^{1/2}(I-P_\ell(\hat{\Sigma}_m))(\Sigma_m+tI)^{1/2}}_\OPHm^2\nonumber
% \\
% &{}\le{}&\Cal{N}_m(t)\norm{(\Sigma_m+tI)^{1/2}(\sigh_m+tI)^{-1/2}}_\OPHm^4\norm{(\sigh_m+tI)^{1/2}(I-P_\ell(\hat{\Sigma}_m))(\sigh_m+tI)^{1/2}}_\OPHm^2\nonumber
% \\
% &{}\le{}&\Cal{N}_m(t)\norm{(\Sigma_m+tI)^{1/2}(\sigh_m+tI)^{-1/2}}_\OPHm^4(\lambdah_{\ell+1,m}+t)^2.\hspace{2cm}\label{eq:rf-ekpca G2.1}
% \end{eqnarray}{}
We now bound \circled{\small{6}} as 
\begin{eqnarray}
\qquad\quad\circled{\small{6}}&{}={}&\inner{\tS P_\ell(\sigh_m)(m_{\Pb,m}-\mpmh)}{\tS P_\ell(\sigh_m)(m_{\Pb,m}-\mpmh)}_{\lp}\nonumber
\\
&{}={}&\inner{\Sigma_m^{1/2}P_\ell(\sigh_m)(m_{\Pb,m}-\mpmh)}{\Sigma_m^{1/2}P_\ell(\sigh_m)(m_{\Pb,m}-\mpmh)}_{\Hk_m}\nonumber
\\
&{}={}&\norm{\Sigma_m^{1/2}P_\ell(\sigh_m)(m_{\Pb,m}-\mpmh)}_{\Hk_m}^2\nonumber\\
&{}\le{}&\norm{\Sigma_m^{1/2}}_\OPHm^2\norm{P_\ell(\sigh_m)}_\OPHm^2\norm{m_{\Pb,m}-\mpmh}_{\Hk_m}^2\nonumber\\
&{}\le{}&\lambda_{m,1}\norm{m_{\Pb,m}-\mpmh}_{\Hk_m}^2.
\label{eq:rf-ekpca H1}
\end{eqnarray}
Applying Lemma~\ref{lem:kernel mean bnd}\emph{(ii)} (by conditioning w.r.t.~$(\theta_i)^m_{i=1}$) and Lemma~\ref{lem:1 rff}\emph{(ii)} (to uncondition $(\theta_i)^m_{i=1}$) to (\ref{eq:rf-ekpca H1}), for any $\delta>0$ and $n\ge 2\log\frac{2}{\delta}$, we obtain
\begin{equation}\Lambda^m\times\bb{P}^n\left\{\left((\theta_i)^m_{i=1},(X_i)^n_{i=1}\right):\circled{\small{6}}\le \frac{320\kappa^2\log\frac{2}{\delta}}{3n}\right\}\ge 1-2\delta,\label{Eq:final6}
\end{equation}
where we used $\lambda_{m,1}\le \frac{3\lambda_1+t}{2}$ from Lemma~\ref{lem:1 rff}\emph{(ii)} and $t\le \frac{\lambda_1}{3}$ (as per our assumption), resulting in 
$\lambda_{m,1}\le \frac{5}{3}\norm{\Sigma}_{\OPH}\le \frac{10\kappa}{3}$. The result therefore follows by applying Lemma~\ref{lem:rf kernel mean bnd} to \circled{\small{3}} and combining it with \eqref{Eq:final5} and \eqref{Eq:final6} in \eqref{Eq:pe-emp}.\vspace{1mm}\\
\underline{\emph{Lower bound:}} As carried out in the proof of the lower bound of \emph{(iii)}, it can be shown that
\begin{eqnarray*}
R_{\sigh_m,\ell}&{}={}&\E\norm{\id\overline{k}(\cdot,X)-\tS P_\ell(\sigh_m)\widetilde{k}_m(\cdot,X)}_{L^2(\Pb)}^2
\ge \left(\sqrt{\circled{\small{7}}}-\sqrt{\circled{\small{3}}}\right)^2,\nonumber
%\label{lower-4-1}
\end{eqnarray*}
where
\begin{eqnarray*}
\circled{\small{7}}&{}:={}&\E\norm{\tS \overline{k}_m(\cdot,X)-\tS P_\ell(\sigh_m)\widetilde{k}_m(\cdot,X)}_{L^2(\Pb)}^2\nonumber\\
&{}={}&\E\norm{\tS(I-P_\ell(\sigh_m))\overline{k}_m(\cdot,X)}^2_{L^2(\Pb)}+\norm{\tS P_\ell(\sigh_m)(\mpm-\mpmh)}^2_{L^2(\Pb)}\nonumber\\
&{}{}&\qquad-2\E\left\langle \tS(I-P_\ell(\sigh_m))\overline{k}_m(\cdot,X),\tS P_\ell(\sigh_m)(\mpm-\mpmh)\right\rangle_{L^2(\Pb)}\nonumber\\
&{}\stackrel{(\dagger)}{=}{}&\circled{\small{5}}+\circled{\small{6}}\stackrel{(\ddagger)}{\ge} \sum_{i>\ell}\lambda^2_{m,i}+\circled{\small{6}}\ge \sum_{i>\ell}\lambda^2_{m,i},\nonumber
\end{eqnarray*}
where $(\dagger)$ follows by noting that the term involving the inner product is zero and $(\ddagger)$ follows by applying Lemma~\ref{lem:optima} to $\circled{\small{5}}$.
% 
% the argument used in the lower bound of \emph{(ii)} to show that %$R_{\Sigma_m,\ell}$ is the minimum value of 
% $$R_{\Sigma_m,\ell}=\inf_{(\psi_i)^\ell_{i=1}\in A}\left\Vert \Sigma_m^{1/2}\left(I-\sum^\ell_{i=1}\psi_i\ohm\psi_i\right)\Sigma_m^{1/2}\right\Vert^2_{\Cal{L}^2(\Cal{H}_m)},$$ where $A:=\{(\psi_i)^\ell_{i=1}\in\Cal{H}^\ell_m:\langle \psi_i,\psi_j\rangle_{\Cal{H}_m}=\delta_{ij},\,\forall\,i,j\in[\ell]\}$.
The result, therefore, follows from \eqref{Eq:qq}.
% 
% , Lemma~\ref{lem:kernel mean bnd} to (\ref{eq:rf-ekpca H1}) with $n\wedge m\ge8\log\frac{1}{\delta}$, and Lemmas~\ref{lem:cent 1}, \ref{lem:1 rff}, and \ref{lem: N_m(t)} to (\ref{eq:rf-ekpca G2.1}) with $\frac{140\kappa}{n}\log\frac{8n}{\delta}\vee\frac{86\kappa}{m}\log\frac{8m}{\delta}\le t\le\norm{\Sigma}_\OPH\wedge\norm{\Sigma_m}_\OPHm\wedge e^{-1}$, then the following holds over the choice of $\left((X_i)_{i=1}^n,(\theta_j)_{j=1}^m\right)$:
% \begin{equation}\label{eq:rf-ekpca final bnd part 1}
%     \Pb^n\times\Lambda^m\left\{R_{\sigh_m,\ell}\le\frac{8\kappa^2\log\frac{1}{\delta}}{n}+\frac{64\kappa^2\log\frac{1}{\delta}}{m}+81\left(\Cal{N}_\Sigma(t)+\sqrt{\frac{8\kappa\Cal{N}_\Sigma(t)\log\frac{1}{\delta}}{tm}}+\frac{10\kappa\log\frac{1}{\delta}}{3tm}\right)(\lambda_\ell+t)^2\right\}\ge1-8\delta.
% \end{equation}
% Now we remove dependence on $(\theta_i)_{i=1}^m$ from the constraint on $t$.  Lemma \ref{lem:1 rff} gives $\norm{\Sigma_m}_\OPHm=\lambda_{1,m}=\lambda_{1,m}+t-t\ge\frac{1}{2}(\lambda_1-t)$; thus, $t\le\frac{1}{2}(\lambda_1-t)\implies t\le\frac{\lambda_1}{3}$, and the constraint $t\le\norm{\Sigma_m}_\OPHm\wedge\norm{\Sigma}_\OPH\wedge e^{-1}$ may be replaced by $t\le\frac{\lambda_1}{3}\wedge e^{-1}$.  The result follows.
%--------------------------------------------------------------
%--------------------------------------------------------------
\subsection{Proof of Corollary \ref{rff poly decay corollary}}\label{sec:rff poly decay proof}
\emph{(i)}  From Theorem \ref{thm:rff main thm metric 1}\emph{(i)} we have 
$$R_{\Sigma,\ell}=
\sum_{i>\ell}\lambda_i^2\lesssim\sum_{i>\ell} i^{-2\alpha}\lesssim\int_\ell^\infty x^{-2\alpha}dx\lesssim\ell^{1-2\alpha}=n^{-2\theta(1-\frac{1}{2\alpha})}.$$
Similarly,
$$
R_{\Sigma,\ell}=\sum_{i>\ell}\lambda_i^2\gtrsim\sum_{i>\ell} i^{-2\alpha}\gtrsim\int_{\ell+1}^\infty x^{-2\alpha}dx\gtrsim(\ell+1)^{1-2\alpha}\gtrsim n^{-2\theta(1-\frac{1}{2\alpha})}.$$ 
\\
\emph{(ii)} From Theorem \ref{thm:rff main thm metric 1}\emph{(ii)} we have
$$R_{\sigh,\ell}\lesssim_{\Pb^n}\frac{1}{n}+\Cal{N}_\Sigma(t)(\lambda_\ell+t)^2,$$
with $\frac{\log n}{n}\lesssim t\le\frac{\lambda_1}{3}$. Using $\Cal{N}_\Sigma(t)\lesssim t^{-1/\alpha}$ from Lemma~\ref{N(t)}\emph{(i)}, it follows that
\begin{eqnarray*}
R_{\sigh,\ell}&{}\lesssim_{\Pb^n}{}&\inf\left\{t^{-1/\alpha}(n^{-\theta}+t)^2+n^{-1}:\frac{\log n}{n}\lesssim t\le\frac{\lambda_1}{3}\right\}\nonumber\\
&{}\lesssim{}&
\begin{cases}n^{-2\theta(1-\frac{1}{2\alpha})}+\frac{1}{n},\qquad \theta<1\\
\left(\frac{\log n}{n}\right)^{2-\frac{1}{\alpha}}+\frac{1}{n},\qquad \theta\ge1
\end{cases}.
\end{eqnarray*}
Of course, $\left(\frac{\log n}{n}\right)^{2-\frac{1}{\alpha}}\le\frac{1}{n}$ always holds, and $n^{-2\theta(1-\frac{1}{2\alpha})}\le\frac{1}{n}$ for $\theta\ge\frac{\alpha}{2\alpha-1}$, yielding the result.
\\
\emph{(iii)}  From Theorem \ref{thm:rff main thm metric 1}\emph{(iii)} we have
$$R_{\Sigma_m,\ell}\lesssim_{\Lambda_m}\frac{1}{m}+\sum_{i>\ell}\lambda_i^2.$$
From \emph{(i)} we have $\sum_{i>\ell}\lambda_i^2\lesssim n^{-2\theta\left(1-\frac{1}{2\alpha}\right)}$, and the result follows.
\\
\emph{(iv)} From Theorem \ref{thm:rff main thm metric 1}\emph{(iv)} we have
$$R_{\sigh_m,\ell}\lesssim_{\Pb^n\times\Lambda^m}\frac{1}{n}+\frac{1}{m}+\left(\Cal{N}_\Sigma(t)+\sqrt{\frac{\Cal{N}_\Sigma(t)}{tm}}+\frac{1}{tm}\right)(\lambda_\ell+t)^2,$$
for $\frac{\log n}{n}\vee\frac{\log m}{m}\lesssim t\lesssim\lambda_1$.  Note that $\Cal{N}_\Sigma(t)+\frac{1}{tm}+\sqrt{\frac{\Cal{N}_\Sigma(t)}{tm}}\lesssim t^{-1/\alpha}$, which follows from $\Cal{N}_\Sigma(t) \lesssim t^{-1/\alpha}$ (Lemma~\ref{N(t)}\emph{(i)}), $\frac{1}{tm}<t^{-1/\alpha}$ and $\sqrt{\frac{t^{-(1+1/\alpha)}}{m}}\lesssim t^{-1/\alpha}$ since $\frac{1}{m}<\left(\frac{\log n}{n}\vee\frac{\log m}{m}\right)^{1-\frac{1}{\alpha}}\lesssim t^{1-\frac{1}{\alpha}}$. Therefore, we have
$$R_{\sigh_m,\ell}\lesssim_{\Pb^n\times\Lambda^m}n^{-\gamma}+t^{-1/\alpha}(n^{-\theta}+t)^2,$$
for $\frac{\log n}{n^\gamma}\lesssim t\lesssim\lambda_1$, where we have used $m=n^\gamma$ with $\gamma<1$ and $\ell=n^{\frac{\theta}{\alpha}}$. This implies
\begin{eqnarray*}
R_{\sigh_m,\ell}&{}\lesssim_{\Pb^n\times\Lambda^m}{}&\inf\left\{n^{-\gamma}+t^{-1/\alpha}(n^{-\theta}+t)^2:\frac{\log n}{n^\gamma}\lesssim t\lesssim\lambda_1\right\}\nonumber\\
&{}\lesssim{}&
\begin{cases}
n^{-\gamma}+n^{\frac{\theta}{\alpha}-2\theta},\qquad\qquad\, \theta<\gamma\\                          
n^{-\gamma}+\left(\frac{\log n}{n}\right)^{2-\frac{1}{\alpha}},\qquad \theta\ge\gamma
\end{cases}
\end{eqnarray*}
and the result follows by considering the cases of $\gamma\ge \theta\left(2-\frac{1}{\alpha}\right)$ and $\gamma< \theta\left(2-\frac{1}{\alpha}\right)$.
\subsection{Proof of Theorem \ref{thm:rff main thm metric 2}}\label{subsec:proof:metric2}
\emph{(i)} The result follows from the proof of Proposition~\ref{pro:solution}\emph{(iii)}.
% By defining $\Sigma_\ell^{-1}:=\sum_{i=1}^\ell\frac{1}{\lambda_i}\phi_i\oh\phi_i$ and noting that $\Sigma^{-1}_\ell\Sigma=P_\ell(\Sigma)$, we have
% \begin{eqnarray*}
% S_{\Sigma,\ell}&{}={}&\E\norm{(I-\id\Sigma_\ell^{-1}\id^*)\id\overline{k}(\cdot,X)}_{\lp}^2=\E\norm{\id(I-\Sigma_\ell^{-1}\Sigma)\overline{k}(\cdot,X)}_{\lp}^2\nonumber\\
% &{}\stackrel{(\dag)}{=}{}&\norm{\Sigma^{1/2}(I-\Sigma_\ell^{-1}\Sigma)\Sigma^{1/2}}_{\HSH}^2=\norm{\Sigma^{1/2}(I-P_\ell(\Sigma))\Sigma^{1/2}}_{\HSH}^2\nonumber\\
% &{}={}&\norm{\Sigma-\Sigma_\ell}_{\HSH}^2=\text{tr}\left((\Sigma-\Sigma_\ell)^2\right)=\sum_{i>\ell}\lambda_i^2,\nonumber
% \end{eqnarray*}
% where $(\dag)$ follows from Lemma \ref{rf cov proj rewrite}.
\vspace{1mm}\\
\emph{(ii)} \underline{\emph{Upper bound:}} By defining $\sigh_\ell^{-1}:=\sum_{i=1}^\ell\frac{1}{\lambdah_i}\phih_i\oh\phih_i,$
we have,
\begin{eqnarray}
S_{\sigh,\ell}&{}={}&\bb{E}\norm{\id \overline{k}(\cdot,X)-\id\sigh_\ell^{-1}\id^*\id(k(\cdot,X)-\widehat{m}_\Pb)}^2_{\lp}\nonumber
\\
&{}\stackrel{(\star)}{=}{}&\E\norm{\id(I-\sigh_\ell^{-1}\Sigma)\overline{k}(\cdot,X)}^2_{\lp}+\norm{\id\sigh_\ell^{-1}\Sigma(m_\Pb-\widehat{m}_\Pb)}_{\lp}^2\nonumber
\\
&{}\stackrel{(\dag)}{=}{}&\norm{\Sigma^{1/2}(I-\sigh_\ell^{-1}\Sigma)\Sigma^{1/2}}_{\HSH}^2+\norm{\Sigma^{1/2}\sigh_\ell^{-1}\Sigma(m_\Pb-\widehat{m}_\Pb)}_{\Hk}^2,\label{eq:ekpca metric 2 decomp.1}
\end{eqnarray}
where we have used $\id^*\id=\Sigma$ (see Proposition~\ref{pro:id}\emph{(iii)}) in $(\star)$ and Lemma~\ref{rf cov proj rewrite} in $(\dag)$. We can decompose the first term of (\ref{eq:ekpca metric 2 decomp.1}) as
\begin{eqnarray}
\norm{\Sigma^{1/2}(I-\sigh_\ell^{-1}\Sigma)\Sigma^{1/2}}_{\HSH}^2&{}\le{}&2\left(\circled{\small{1}}+\circled{\small{b}}\right),\label{eq:ekpca metric 2 decomp.2}
\end{eqnarray}
% \begin{eqnarray}
% &{}{}&\qquad+2\underbrace{\norm{\Sigma^{1/2}(P_\ell(\sigh)-\sigh_\ell^{-1}\Sigma)\Sigma^{1/2}}_{\HSH}^2}_{\circled{\small{b}}}.
% \nonumber
% \end{eqnarray}
where 
% $\circled{\small{a}}=
% \circled{\small{1}}$ and 
$\circled{\small{b}}:=\norm{\Sigma^{1/2}(P_\ell(\sigh)-\sigh_\ell^{-1}\Sigma)\Sigma^{1/2}}_{\HSH}^2.$ 
%Note that \circled{\small{a}} is same as \circled{\small{1}} in (\ref{eq:rff ekpca A equiv}) and 
Therefore,
\begin{equation}\label{eq:ekpca metric 2 A bnd}
\Pb^n\left\{(X_i)_{i=1}^n:\circled{\small{a}}\le9\Cal{N}_\Sigma(t)(\lambda_{\ell+1}+t)^2\right\}\ge1-2\delta,
\end{equation}
where $\frac{140\kappa}{n}\log\frac{16\kappa n}{\delta}\le t\le\norm{\Sigma}_\OPH$. For \circled{\small{b}} we write, 
\begin{eqnarray}
\circled{\small{b}}&{}={}&\norm{\Sigma^{1/2}\sigh_\ell^{-1}(\sigh-\Sigma)\Sigma^{1/2}}_{\HSH}^2\nonumber\\
%=\norm{\Sigma^{1/2}\sigh_\ell^{-1}(\sigh-\Sigma)(\Sigma+tI)^{1/2}(\Sigma+tI)^{-1/2}\Sigma^{1/2}}_{\HSH}^2\nonumber
%\\
&{}\le{}&\norm{\Sigma^{1/2}(\Sigma+tI)^{-1/2}}^2_{\OPH}\norm{(\Sigma+tI)^{1/2}(\sigh+tI)^{-1/2}}^2_{\OPH}\nonumber\\
&{}{}&\qquad\times\norm{(\sigh+tI)^{1/2}\sigh_\ell^{-1}(\sigh+tI)^{1/2}}_\OPH^2\norm{(\sigh+tI)^{-1/2}(\Sigma+tI)^{1/2}}^2_{\OPH}\nonumber\\
&{}{}&\qquad\qquad\times\norm{(\Sigma+tI)^{-1/2}(\sigh-\Sigma)\Sigma^{1/2}}^2_{\HSH}\nonumber\\
&{}\stackrel{(*)}{\le}{}&4\sup_{i\le\ell}\left(\frac{\lambdah_i+t}{\lambdah_i}\right)^2\times\circled{\small{c}}=4\left(\frac{\lambdah_\ell+t}{\lambdah_\ell}\right)^2\times\circled{\small{c}}\nonumber\\
&{}\stackrel{(\ddagger)}{\le}{}&36\left(\frac{\lambda_\ell+t}{\lambda_\ell-t}\right)^2\times\circled{\small{c}}\stackrel{(\star)}{\le} 144\times\circled{\small{c}},\label{Eq:bc}
\end{eqnarray}
which holds with probability at least $1-2\delta$ over the choice of $(X_i)^n_{i=1}$, where we used Lemma~\ref{lem:cent 1}\emph{(ii)} in $(*)$, Lemma~\ref{lem:cent 1}\emph{(iv, v)} in $(\ddagger)$ and the assumption that $t\le \frac{1}{3}\lambda_\ell$ in $(\star)$, with $$\circled{\small{c}}:=\norm{(\Sigma+tI)^{-1/2}(\sigh-\Sigma)\Sigma^{1/2}}^2_{\HSH}.$$ In the following, we will obtain two different bounds for \circled{\small{c}} based on different decompositions, which we then combine by choosing the minimum of them. Applying \eqref{Eq:hs-op} to \circled{\small{c}} yields
\begin{equation}
 \Pb^n\left\{(X_i)_{i=1}^n:\circled{\small{c}}\le\frac{128\kappa^{5/2}\Cal{N}_\Sigma(t)\log\frac{2}{\delta}}{n\sqrt{t}}+\frac{4096\kappa^{3}\log^2\frac{3}{\delta}}{n^2t}
\right\}\ge1-2\delta.\label{Eq:c1}
\end{equation}
\circled{\small{c}} can be alternately bounded as $$\circled{\small{c}}\le \norm{(\Sigma+tI)^{-1/2}}^2_{\OPH}\norm{\sigh-\Sigma}^2_{\HSH}\norm{\Sigma}^2_{\OPH},$$ yielding
\begin{equation}
\Pb^n\left\{(X_i)_{i=1}^n:\circled{\small{c}}\le\frac{256\kappa^4\log\frac{2}{\delta}}{nt}+\frac{8192\kappa^4\log^2\frac{3}{\delta}}{n^2t}\right\}\ge 1-2\delta \label{Eq:c2} 
\end{equation}
through an application of Theorem~\ref{thm:bernstein U-stat}\emph{(ii)}. Combining \eqref{Eq:c1} and \eqref{Eq:c2} provides
\begin{eqnarray*}
&{}{}&\Pb^n\left\{(X_i)_{i=1}^n:\circled{\small{c}}\le 256\kappa^{5/2}\log\frac{2}{\delta}\left[\frac{\Cal{N}_\Sigma(t)}{n\sqrt{t}}\wedge\frac{\kappa^{3/2}}{nt}\right]+\frac{8192\kappa^3(\kappa\wedge 1)\log^2\frac{3}{\delta}}{n^2t}\right\}\ge 1-4\delta,\nonumber  
\end{eqnarray*}
using which in \eqref{Eq:bc} yields
\begin{eqnarray}
&{}{}&\Pb^n\left\{(X_i)_{i=1}^n:\circled{\small{b}}\le 144\left[256\kappa^{5/2}\log\frac{2}{\delta}\left[\frac{\Cal{N}_\Sigma(t)}{n\sqrt{t}}\wedge\frac{\kappa^{3/2}}{nt}\right]+\frac{8192\kappa^3(\kappa\wedge 1)\log^2\frac{3}{\delta}}{n^2t}\right]\right\}\nonumber\\
&{}{}&\qquad\qquad\qquad\qquad\ge1-6\delta.\label{Eq:b}
\end{eqnarray}
To bound the second term of (\ref{eq:ekpca metric 2 decomp.1}) we have
\begin{eqnarray}
\norm{\Sigma^{1/2}\sigh_\ell^{-1}\Sigma(m_\Pb-\widehat{m}_\Pb)}_\Hk^2&{}\le{}&\norm{\Sigma^{1/2}\sigh_\ell^{-1}\Sigma^{1/2}}_\OPH^2\norm{\Sigma^{1/2}(m_\Pb-\widehat{m}_\Pb)}_\Hk^2\nonumber
\\
\qquad\quad&{}\le{}&\norm{\Sigma^{1/2}(\Sigma+tI)^{-1/2}}_\OPH^4\norm{(\Sigma+tI)^{1/2}(\sigh+tI)^{-1/2}}_\OPH^4\nonumber
\\
\qquad\quad&{}{}&\qquad\times \norm{(\sigh+tI)^{1/2}\sigh_\ell^{-1}(\sigh+tI)^{1/2}}_\OPH^2\norm{\Sigma^{1/2}(m_\Pb-\widehat{m}_\Pb)}_\Hk^2\nonumber
\\
\qquad\quad&{}\stackrel{(\diamond)}{\le}{}&144\Vert\Sigma\Vert_{\Cal{L}^\infty(\Cal{H})}\norm{m_\Pb-\widehat{m}_\Pb}_\Hk^2\le 288\kappa\norm{m_\Pb-\widehat{m}_\Pb}_\Hk^2,\label{Eq:2ndterm}
\end{eqnarray}
which holds with probability at least $1-2\delta$ over the choice of $(X_i)^n_{i=1}$, wherein we have used Lemma~\ref{lem:cent 1}\emph{(ii)} and the bound in \eqref{Eq:bc} on $\norm{(\sigh+tI)^{1/2}\sigh_\ell^{-1}(\sigh+tI)^{1/2}}_\OPH^2$ in $(\diamond)$. Applying Lemma~\ref{lem:kernel mean bnd}\emph{(i)} to \eqref{Eq:2ndterm}, we obtain
\begin{equation}
\Pb^n\left\{(X_i)_{i=1}^n:\norm{\Sigma^{1/2}\sigh_\ell^{-1}\Sigma(m_\Pb-\widehat{m}_\Pb)}_\Hk^2\le \frac{9216\kappa^2\log\frac{2}{\delta}}{n}\right\}\ge 1-3\delta.\label{Eq:2ndfinal} 
\end{equation}
Combining \eqref{eq:ekpca metric 2 decomp.1}--\eqref{eq:ekpca metric 2 A bnd}, \eqref{Eq:b} and \eqref{Eq:2ndfinal}, yields the result.\vspace{1mm}
% 
% with (\ref{eq:ekpca metric 2 B.2})-(\ref{eq:ekpca metric 2 B1 bnd}) to bound $\left(\frac{\lambdah_\ell+t}{\lambdah_\ell}\right)^2$ and $\lambda_1\le\kappa$ we have
% 
% \begin{equation}
% \Pb^n\left\{(X_i)_{i=1}^n:\norm{\Sigma^{1/2}\sigh_\ell^{-1}\Sigma(m_\Pb-\hat{m}_\Pb)}_\Hk^2\le144\left(\frac{1+a}{1-a}\right)^2\frac{\kappa^2\log\frac{1}{\delta}}{n}\right\}\ge1-4\delta.\label{eq:ekpca kern mean term bnd}
% \end{equation}{}
% 
% The result follows by combining (\ref{eq:ekpca metric 2 A bnd}), (\ref{eq:ekpca metric 2 B1 bnd}), and (\ref{eq:ekpca metric 2 B2 bnd}) in (\ref{eq:ekpca metric 2 decomp.2}) with (\ref{eq:ekpca kern mean term bnd}) in (\ref{eq:ekpca metric 2 decomp.1}).
\\
\underline{\emph{Lower bound:}} It follows from \eqref{eq:ekpca metric 2 decomp.1} that $$S_{\sigh,\ell}\ge \left\Vert \Sigma^{1/2}(I-\sigh^{-1}_\ell\Sigma)\Sigma^{1/2}\right\Vert^2_{\HSH}=\Cal{R}^{\Sigma}_{1,1,1}(\sigh^{-1}_\ell),$$
where the equality follows from the definition in Lemma~\ref{lem:optima}. The result follows from Lemma~\ref{lem:optima} by noting that $\Cal{R}^{\Sigma}_{1,1,1}(\sigh^{-1}_\ell)\ge \Cal{R}^{\Sigma}_{1,1,1}(\Sigma^{-1}_\ell)=S_{\Sigma,\ell}=\sum_{i>\ell}\lambda^2_i$.
\vspace{1mm}\\
\emph{(iii)}  Define $\Sigma_{m,\ell}^{-1}=\sum_{i=1}^\ell\frac{1}{\lambda_{i,m}}\phi_{i,m}\ohm\phi_{i,m}$. Then
\begin{eqnarray*}
S_{\Sigma_m,\ell}
%&{}={}&
% \E\norm{\id \bar{k}(\cdot,X)-\sum_{i=1}^{\ell}\inner{\tS(k_m(\cdot,X)-m_{\Pb,m})}{\frac{\A\phi_{i,m}}{\sqrt{\lambda_{i,m}}}}_{\lp}\frac{\A\phi_{i,m}}{\sqrt{\lambda_{i,m}}}}_\lp^2\nonumber
% \\
&{}={}&\E\norm{\id\overline{k}(\cdot,X)-\tS\Sigma_{m,\ell}^{-1}\tS^*\tS\overline{k}_m(\cdot,X)}_{\lp}^2=\E\norm{\id\overline{k}(\cdot,X)-\tS\Sigma_{m,\ell}^{-1}\Sigma_m\overline{k}_m(\cdot,X)}_{\lp}^2\nonumber
\\
&{}={}&\E\norm{\id\overline{k}(\cdot,X)-\tS P_\ell(\Sigma_m)\overline{k}_m(\cdot,X)}_{\lp}^2=R_{\Sigma_m,\ell}\nonumber
% &{}\le{}&2\E\norm{\id\kbar-\tS\overline{k}_m(\cdot,X)}_{\lp}^2+2\E\norm{(I-\tS\Sigma_{m,\ell}^{-1}\tS^*)\tS(k_m(\cdot,X)-m_\Pb)}_{\lp}^2.%\nonumber
% %\\
% \label{eq:rf-kpca metric 2 decomp}
\end{eqnarray*}
and the result follows from Theorem~\ref{thm:rff main thm metric 1}\emph{(iii)}.
\vspace{1mm}
%We will use Lemma \ref{lem:rf kernel mean bnd} to bound the first term, 
% For the second we write
% \begin{eqnarray}
% &{}{}&\E\norm{(I-\tS\Sigma_{m,\ell}^{-1}\tS^*)\tS(k_m(\cdot,X)-m_\Pb)}_{\lp}^2=\E\norm{\tS(I-\Sigma_{m,\ell}^{-1}\Sigma_m)(k_m(\cdot,X)-m_\Pb)}_{\lp}^2\nonumber
% \\
% &{}\stackrel{(\dag)}{=}{}&\norm{\Sigma_m^{1/2}(I-\Sigma_{m,\ell}^{-1}\Sigma_m)\Sigma_m^{1/2}}_\HSM^2=\norm{\Sigma_m^{1/2}(I-P_\ell(\Sigma_m))\Sigma_m^{1/2}}_\HSM^2=\sum_{i>\ell}\lambda_{i,m}^2,\nonumber
% \end{eqnarray}
% where we have used Lemma \ref{rf cov proj rewrite} in (\dag).  Clearly $\Sigma=\id^*\id$ has the same eigenvalues as $\id\id^*$, and similarly for $\Sigma_m=\tS^*\tS$ and $\tS\tS^*$.  Thus, using the Hoffman-Wielandt inequality \citep{Bhatia-97}, we have
% 
% \begin{equation}
%     \sum_{i=\ell+1}^m\lambda_{i,m}^2=\sum_{i=\ell+1}^m(\lambda_{i,m}-\lambda_i+\lambda_i)^2\le2\sum_{i=\ell+1}^m(\lambda_{i,m}-\lambda_i)^2+2\sum_{i>\ell}\lambda_i^2\le2\norm{\id\id^*-\tS\tS^*}_\HSL^2+2\sum_{i>\ell}\lambda_i^2.\label{eq:rf-kpca F1}
% \end{equation}{}
% 
% The result follows by applying Proposition \ref{prop:inclusion appx concentration} to (\ref{eq:rf-kpca F1}) in combination with Lemma \ref{lem:rf kernel mean bnd}.
\\
\emph{(iv)} \underline{\emph{Upper bound:}} Define 
$\sigh_{m,\ell}^{-1}=\sum_{i=1}^\ell\frac{1}{\lambdah_{i,m}}\phih_{i,m}\ohm\phih_{i,m}$. Then
\begin{eqnarray}
S_{\sigh_m,\ell}
% &{}={}&
% \E\norm{\id \bar{k}(\cdot,X)-\sum_{i=1}^{\ell}\inner{\tS(k_m(\cdot,X)-\hat{m}_{\Pb,m})}{\frac{\A\phih_{i,m}}{\sqrt{\lambdah_{i,m}}}}_{\lp}\frac{\A\phih_{i,m}}{\sqrt{\lambdah_{i,m}}}}^2_{\lp}\nonumber
% \\
&{}={}&\E\norm{\id\overline{k}(\cdot,X)-\tS\sigh_{m,\ell}^{-1}\tS^*\tS(k_m(\cdot,X)-\widehat{m}_{\Pb,m})}_{\lp}^2\le3\left(\circled{\small{3}}+\circled{\small{e}}+\circled{\small{f}}\right),\label{eq:rf-ekpca metric 2 decomp}
\end{eqnarray}
where %$\circled{\small{d}}=\circled{\small{3}}$, 
$\circled{\small{e}}:=\E\norm{(I-\tS\sigh_{m,\ell}^{-1}\tS^*)\tS\overline{k}_m(\cdot,X)}_{\lp}^2$ and $\circled{\small{f}}:=\norm{\tS\sigh_{m,\ell}^{-1}\tS^*\tS(m_{\Pb,m}-\widehat{m}_{\Pb,m})}_{\lp}^2.$
% 
% \underbrace{\E\norm{\id\overline{k}(\cdot,X)-\tS\overline{k}_m(\cdot,X)}_{\lp}^2}_{\circled{\small{d}}}\nonumber\\
% &{}{}&\qquad+3\underbrace{\E\norm{(I-\tS\sigh_{m,\ell}^{-1}\tS^*)\tS\overline{k}_m(\cdot,X)}_{\lp}^2}_{\circled{\small{e}}}\nonumber
% \\
% &{}{}&\qquad\qquad +3\underbrace{\norm{\tS\sigh_{m,\ell}^{-1}\tS^*\tS(m_{\Pb,m}-\widehat{m}_{\Pb,m})}_{\lp}^2}_{\circled{\small{f}}}
Note that $\circled{\small{3}}$ which can be bounded using Lemma~\ref{lem:rf kernel mean bnd} and  
% Using Lemma \ref{lem:rf kernel mean bnd} with $m\ge8\log\frac{1}{\delta}$ we have
% 
% \begin{equation}\label{eq:rf-ekpca metric 2 D bnd}
%     \Lambda^m\left\{(\theta_j)_{j=1}^m:\circled{D}\le\frac{16\kappa^2\log\frac{1}{\delta}}{m}\right\}\ge1-2\delta.
% \end{equation}{}
% 
% Observing \circled{E} we have
% 
\begin{eqnarray}
\circled{\small{e}}&{}={}&\E\norm{\tS(I-\sigh_{m,\ell}^{-1}\Sigma_m)\overline{k}_m(\cdot,X)}_{\lp}^2\nonumber\\&{}\stackrel{(\clubsuit)}{=}{}&\norm{\Sigma_m^{1/2}(I-\sigh_{m,\ell}^{-1}\Sigma_m)\Sigma_m^{1/2}}_\HSM^2
\le2\left(\circled{\small{5}}+\circled{\small{e2}}\right)
,\label{eq:rf-ekpca metric 2 E decomp}
% \underbrace{\norm{\Sigma_m^{1/2}(I-P_\ell(\sigh_m))\Sigma_m^{1/2}}_\HSM^2}_{\circled{\small{e1}}}\nonumber\\
% &{}{}&\qquad\qquad+2\underbrace{\norm{\Sigma_m^{1/2}(P_\ell(\sigh_m)-\sigh_{m,\ell}^{-1}\Sigma_m)\Sigma_m^{1/2}}_\HSM^2}_{\circled{\small{e2}}}
\end{eqnarray}
where we have used Lemma~\ref{rf cov proj rewrite} in $(\clubsuit)$ and $\circled{\small{e2}}:=\norm{\Sigma_m^{1/2}(P_\ell(\sigh_m)-\sigh_{m,\ell}^{-1}\Sigma_m)\Sigma_m^{1/2}}_\HSM^2.$ 
Note that the bound on $\circled{\small{5}}$, follows from \eqref{Eq:final5}. 
% in the proof of Theorem \ref{thm:rff main thm metric 1} in Section \ref{sec:rff main thm metric 1 proof} and may be handled in the same manner yielding, for some $t>0$,
% 
% \begin{equation}\label{eq:rf-ekpca metric 2 E1 bnd.1}
%     \circled{E1}\le\Cal{N}_m(t)\norm{(\Sigma_m+tI)^{1/2}(\sigh_m+tI)^{-1/2}}_\OPHm^4(\lambdah_{\ell+1,m}+t)^2.
% \end{equation}{}
% 
% Applying Lemmas \ref{lem:cent 1} and \ref{lem:1 rff} with $\frac{140\kappa}{n}\log\frac{8n}{\delta}\vee\frac{86\kappa}{m}\log\frac{8m}{\delta}\le t\le\lambda_{1,m}\wedge\lambda_1$ to (\ref{eq:rf-ekpca metric 2 E1 bnd.1}) yields
% 
% \begin{eqnarray}\label{eq:rf-ekpca metric 2 E1 bnd}
% \Pb^n\times\Lambda^m\left\{\left((X_i)_{i=1}^n,(\theta_j)_{j=1}^m\right):\circled{E1}\le\frac{81}{4}\Cal{N}_m(t)(\lambda_\ell+t)^2\right\}1-3\delta.
% \end{eqnarray}{}
By handling \circled{\small{e2}} in a similar manner as \circled{\small{b}} in \emph{(ii)}, by conditioning on $(\theta_i)^m_{i=1}$, for $\frac{140\kappa}{n}\log\frac{16\kappa n}{\delta}\le t\le \Vert\Sigma_m\Vert_{\Cal{L}^\infty(\Cal{H}_m)}$, with probability at least $1-4\delta$ over the choice of $(X_i)^n_{i=1}$, we obtain
\begin{eqnarray}
\circled{\small{e2}}&{}={}&\norm{\Sigma_m^{1/2}\sigh_{m,\ell}^{-1}(\sigh_m-\Sigma_m)\Sigma_m^{1/2}}_\HSM^2\nonumber\\
&{}\le{}&36\left(\frac{\lambda_{m,\ell}+t}{\lambda_{m,\ell}-t}\right)^2\left[\frac{128\kappa^{5/2}\Cal{N}_{\Sigma_m}(t)\log\frac{2}{\delta}}{n\sqrt{t}}+\frac{4096\kappa^{3}\log^2\frac{3}{\delta}}{n^2t}\right].\label{Eq:interim}
% &{}\le{}& 4\left(\frac{\lambdah_{\ell,m}+t}{\lambdah_{\ell,m}}\right)^2\norm{(\Sigma_m+tI)^{-1/2}(\sigh_m-\Sigma_m)\Sigma_m^{1/2}}_\HSM^2\nonumber\\
% &{}\le{}& 4\left(\frac{\lambdah_{\ell,m}+t}{\lambdah_{\ell,m}}\right)^2\norm{(\Sigma_m+tI)^{-1/2}(\sigh_m-\Sigma_m)\Sigma_m^{1/2}}_\HSM^2\nonumber\\
% 
% \underbrace{\norm{\Sigma_m^{1/2}\sigh_{m,\ell}^{-1}(\Sigma_m+tI)^{1/2}}_\OPHm^2}_{\circled{E2.1}}\quad\underbrace{\norm{(\Sigma_m+tI)^{-1/2}(\sigh_m-\Sigma_m)\Sigma_m^{1/2}}_\HSM^2}_{\circled{E2.2}}.\nonumber
\end{eqnarray}
By unconditioning w.r.t.~$(\theta_i)^m_{i=1}$ in the above inequality, for $\frac{86\kappa}{m}\log\frac{16\kappa m}{\delta}\le t\le\norm{\Sigma}_\OPH$, with probability at least $1-7\delta$ jointly over the choice of $((\theta_i)^m_{i=1},(X_i)^n_{i=1})$, we obtain
\begin{eqnarray}
\circled{\small{e2}}&{}\le{}& 36\left(\frac{3\lambda_{\ell}+3t}{\lambda_{\ell}-3t}\right)^2\left[\frac{4096\kappa^{3}\log^2\frac{3}{\delta}}{n^2t}+\frac{256\kappa^{5/2}\Cal{D}(t)\log\frac{2}{\delta}}{n\sqrt{t}}\right]\nonumber\\
&{}\le{}& 900\left[\frac{4096\kappa^{3}\log^2\frac{3}{\delta}}{n^2t}+\frac{256\kappa^{5/2}\Cal{D}(t)\log\frac{2}{\delta}}{n\sqrt{t}}\right],\label{Eq:e22}
\end{eqnarray}
where the first inequality follows from applying Lemma~\ref{lem:1 rff}\emph{(ii)--(iv)} to \eqref{Eq:interim}, the second inequality follows by using $t\le \frac{\lambda_\ell}{9}$ and we used $\Cal{D}(t):=\frac{16\kappa\log\frac{2}{\delta}}{tm}+\sqrt{\frac{8\kappa\Cal{N}_\Sigma(t)\log\frac{2}{\delta}}{tm}}+\Cal{N}_\Sigma(t)$.
\circled{\small{e2}} can be alternately bounded as follows. By conditioning on $(\theta_i)^m_{i=1}$, for $\frac{140\kappa}{n}\log\frac{16\kappa n}{\delta}\le t\le \Vert\Sigma_m\Vert_{\Cal{L}^\infty(\Cal{H}_m)}$, with probability at least $1-4\delta$ over the choice of $(X_i)^n_{i=1}$, we obtain
\begin{equation*}
\circled{\small{e2}}
\le\frac{36\lambda^2_{m,1}}{t}\left(\frac{\lambda_{m,\ell}+t}{\lambda_{m,\ell}-t}\right)^2\left[\frac{64\kappa^2\log\frac{2}{\delta}}{n}+\frac{2048\kappa^2\log^2\frac{3}{\delta}}{n^2}\right].%\label{Eq:interim1}
\end{equation*}
By unconditioning w.r.t.~$(\theta_i)^m_{i=1}$ in the above inequality, for $\frac{86\kappa}{m}\log\frac{16\kappa m}{\delta}\le t\le\norm{\Sigma}_\OPH$, with probability at least $1-5\delta$ jointly over the choice of $((\theta_i)^m_{i=1},(X_i)^n_{i=1})$, we obtain
\begin{eqnarray}
\circled{\small{e2}}
&{}\le{}&\frac{400\kappa^4}{t}\left(\frac{3\lambda_\ell+3t}{\lambda_\ell-3t}\right)^2\left[\frac{64\log\frac{2}{\delta}}{n}+\frac{2048\log^2\frac{3}{\delta}}{n^2}\right]\nonumber\\&{}\le{}& 10^4\kappa^4\left[\frac{64\log\frac{2}{\delta}}{nt}+\frac{2048\log^2\frac{3}{\delta}}{n^2t}\right].\label{Eq:e22-1}
\end{eqnarray}
Combining \eqref{Eq:e22} and \eqref{Eq:e22-1}, with probability at least $1-12\delta$ jointly over the choice of $((\theta_i)^m_{i=1},(X_i)^n_{i=1})$, we have
\begin{eqnarray}
\circled{\small{e2}}&{}\le{}& 24\times 10^4\left[\left(\frac{\kappa^4\log\frac{2}{\delta}}{nt}\right)\wedge \left(\frac{\kappa^{5/2}\Cal{D}(t)\log\frac{2}{\delta}}{n\sqrt{t}}\right)\right]+\frac{8\times 10^6\kappa^3(1\wedge k)\log^2\frac{3}{\delta}}{n^2t}.
\label{Eq:e22-final}
% 
% \frac{4\times 10^6\kappa^3(1\wedge k)\log^2\frac{3}{\delta}}{n^2t}
% 900\left[\frac{4096\kappa^{3}\log^2\frac{3}{\delta}}{n^2t}+\frac{128\kappa^{5/2}\log\frac{2}{\delta}}{n\sqrt{t}}\left[\frac{32\kappa\log\frac{2}{\delta}}{tm}+\sqrt{\frac{32\kappa\Cal{N}_\Sigma(t)\log\frac{2}{\delta}}{tm}}+2\Cal{N}_\Sigma(t)\right]\right],\label{Eq:e22}
\end{eqnarray}
\circled{\small{f}} can be bounded as 
\begin{eqnarray}
\circled{f}&{}={}&\norm{\Sigma_m^{1/2}\sigh_{m,\ell}^{-1}\Sigma_m(m_{\Pb,m}-\widehat{m}_{\Pb,m})}_{\Hk_m}^2\nonumber
\\
&{}\le{}&\norm{\Sigma_m^{1/2}\sigh_{m,\ell}^{-1}\Sigma_m^{1/2}}_\OPHm^2\norm{\Sigma_m^{1/2}(m_{\Pb,m}-\widehat{m}_{\Pb,m})}_{\Hk_m}^2\nonumber
\\
&{}\le{}&\lambda_{m,1}\norm{\Sigma_m^{1/2}(\Sigma_m+tI)^{-1/2}}_\OPHm^4\norm{m_{\Pb,m}-\widehat{m}_{\Pb,m}}_{\Hk_m}^2\norm{(\Sigma_m+tI)^{1/2}(\sigh_m+tI)^{-1/2}}_\OPHm^4\nonumber\\%\label{eq:rf-ekpca metric 2 F decomp 1}
&{}{}&\qquad\times\norm{(\sigh_m+tI)^{1/2}\sigh_{m,\ell}^{-1}(\sigh_m+tI)^{1/2}}_\OPHm^2.\label{eq:rf-ekpca metric 2 F decomp 2}
\end{eqnarray}
By conditioning on $(\theta_i)^m_{i=1}$, for $\frac{140\kappa}{n}\log\frac{16\kappa n}{\delta}\le t\le \Vert\Sigma_m\Vert_{\Cal{L}^\infty(\Cal{H}_m)}$, with probability at least $1-3\delta$ over the choice of $(X_i)^n_{i=1}$, we obtain
\begin{equation*}
\circled{f}\le \frac{3840\kappa^2\log\frac{2}{\delta}}{n} \left(\frac{\lambda_{m,\ell}+t}{\lambda_{m,\ell}-t}\right)^2,\nonumber
\end{equation*}
where we used the fact that $\lambda_{m,1}\le\frac{10\kappa}{3}$ (see the proof of Theorem~\ref{thm:rff main thm metric 1}\emph{(iv)}) and employed Lemmas~\ref{lem:cent 1}\emph{(ii, iv, v)} and \ref{lem:kernel mean bnd}\emph{(ii)}. By unconditioning w.r.t. $(\theta_i)^m_{i=1}$ in the above inequality, with probability at least $1-4\delta$ jointly over the choice of $((\theta_i)^m_{i=1},(X_i)^n_{i=1})$, we obtain 
\begin{equation}
\circled{f}\le \frac{3840\kappa^2\log\frac{2}{\delta}}{n} \left(\frac{3\lambda_{\ell}+3t}{\lambda_{\ell}-3t}\right)^2\le \frac{96000\kappa^2\log\frac{2}{\delta}}{n},\label{Eq:f}
\end{equation}
where we applied Lemma~\ref{lem:1 rff}\emph{(ii, iii)} and $\frac{86\kappa}{m}\log\frac{16\kappa m}{\delta}\le t\le\norm{\Sigma}_\OPH$. The result therefore follows by combining \eqref{eq:rf-ekpca metric 2 decomp}, \eqref{eq:rf-ekpca metric 2 E decomp}, \eqref{Eq:e22-final} and \eqref{Eq:f} under the condition that $\frac{140\kappa}{n}\log\frac{8n}{\delta}\vee\frac{86\kappa}{m}\log\frac{8m}{\delta}\le t\le\frac{\lambda_\ell}{9}$ and $n\wedge m\ge8\log\frac{1}{\delta}$.\vspace{1mm}\\
\underline{\emph{Lower bound:}} As carried out in the proof of the lower bound of \emph{(iii)}, it can be shown that
\begin{eqnarray*}
S_{\sigh_m,\ell}&{}={}&\E\norm{\id\overline{k}(\cdot,X)-\tS \sigh^{-1}_{m,\ell}\Sigma_m\widetilde{k}_m(\cdot,X)}_{L^2(\Pb)}^2\nonumber\\
&{}\ge{}& \left(\sqrt{\circled{\small{e}}+\circled{\small{f}}}-\sqrt{\circled{\small{3}}}\right)^2\ge \left(\sqrt{\circled{\small{e}}}-\sqrt{\circled{\small{3}}}\right)^2\nonumber\\
&{}\stackrel{(*)}{\ge}{}& \left(\sqrt{\sum_{i>\ell}\lambda^2_{m,i}}-\sqrt{\circled{\small{3}}}\right)^2,\nonumber
%\label{lower-4-1}
\end{eqnarray*}
where $(*)$ follows by applying Lemma~\ref{lem:optima} to \circled{\small{e}}. 
% the argument used in the proof of the lower bound of $S_{\sigh,\ell}$ to \circled{\small{e}}. 
The result, therefore, follows from \eqref{Eq:qq}.
\subsection{Proof of Theorem~\ref{Thm:variant-ER}}\label{Subsec:proof-ER-variant}
Define $T_{\sigh,\ell}=\eu{S}\left(\frac{\id\phih_1}{\sqrt{\lambdah_1}},\ldots,\frac{\id\phih_\ell}{\sqrt{\lambdah_\ell}}\right)$, $T_{\Sigma_m,\ell}$ $=\eu{S}\left(\frac{\A\phi_{m,1}}{\sqrt{\lambda_{m,1}}},\ldots,\frac{\A\phi_{m,\ell}}{\sqrt{\lambda_{m,\ell}}}\right)$ and $$T_{\sigh_m,\ell}=\eu{S}\left(\frac{\A\phih_{m,1}}{\sqrt{\lambdah_{m,1}}},\ldots,\frac{\A\phih_{m,\ell}}{\sqrt{\lambdah_{m,\ell}}}\right).$$%\vspace{1.5mm}\\
% \begin{align}
%  T_{\sigh,\ell}&=\eu{S}\left(\frac{\id\phih_1}{\sqrt{\lambdah_1}},\ldots,\frac{\id\phih_\ell}{\sqrt{\lambdah_\ell}}\right),\quad\,
% T_{\Sigma_m,\ell}=\eu{S}\left(\frac{\A\phi_{m,1}}{\sqrt{\lambda_{m,1}}},\ldots,\frac{\A\phi_{m,\ell}}{\sqrt{\lambda_{m,\ell}}}\right),\nonumber 
% \end{align}
% and
% \begin{align}
% T_{\sigh_m,\ell}&=\eu{S}\left(\frac{\A\phih_{m,1}}{\sqrt{\lambdah_{m,1}}},\ldots,\frac{\A\phih_{m,\ell}}{\sqrt{\lambdah_{m,\ell}}}\right).\nonumber 
% \end{align}
% $T_{\sigh_m,\ell}$ are defined as
% \begin{align}
% T_{\sigh,\ell}&=\bb{E}_{X\sim\Pb}\norm{\id \overline{k}(\cdot,X)-\sum_{i=1}^{\ell}\inner{\id \overline{k}(\cdot,X)}{\frac{\id\phih_i}{\sqrt{\lambdah_i}}}_{\lp}\frac{\id\phih_i}{\sqrt{\lambdah_i}}}^2_{L^2(\Pb)},\nonumber\\
% T_{\Sigma_m,\ell}&=\bb{E}_{X\sim\Pb}\norm{\id \overline{k}(\cdot,X)-\sum_{i=1}^{\ell}\inner{\id \overline{k}(\cdot,X)}{\frac{\A\phi_{m,i}}{\sqrt{\lambda_{m,i}}}}_{\lp}\frac{\A\phi_{m,i}}{\sqrt{\lambda_{m,i}}}}^2_{L^2(\Pb)},\nonumber
% \end{align}
% and
% \begin{align}
% T_{\sigh_m,\ell}&=\bb{E}_{X\sim\Pb}\norm{\id \overline{k}(\cdot,X)-\sum_{i=1}^{\ell}\inner{\id \overline{k}(\cdot,X)}{\frac{\A\phih_{m,i}}{\sqrt{\lambdah_{m,i}}}}_{\lp}\frac{\A\phih_{m,i}}{\sqrt{\lambdah_{m,i}}}}^2_{L^2(\Pb)}.\nonumber
% \end{align}
% respectively. \vspace{1mm}\\
\emph{(i)} By adding and subtracting $\id\widetilde{k}(\cdot,X)$ to the first argument of the inner product in $T_{\sigh,\ell}$, we obtain
\begin{eqnarray*}
T_{\sigh,\ell}&{}\le{}& 2S_{\sigh,\ell}+2\bb{E}\norm{\sum_{i=1}^{\ell}\inner{\id \overline{k}(\cdot,X)-\id \widetilde{k}(\cdot,X)}{\frac{\id\phih_i}{\sqrt{\lambdah_i}}}_{\lp}\frac{\id\phih_i}{\sqrt{\lambdah_i}}}^2_{L^2(\Pb)}\nonumber\\
&{}={}&2S_{\sigh,\ell}+2\norm{\id\sigh^{-1}_\ell\Sigma(m_\bb{P}-\mh_\bb{P})}^2_{\lp}
=2S_{\sigh,\ell}+2\norm{\Sigma^{1/2}\sigh^{-1}_\ell\Sigma(m_\bb{P}-\mh_\bb{P})}^2_{\Cal{H}}\nonumber
\end{eqnarray*}
and the result follows from \eqref{Eq:2ndfinal}. Also note that $$T_{\sigh,\ell}=\left\Vert \Sigma^{1/2}(I-\sigh^{-1}_\ell\Sigma)\Sigma^{1/2}\right\Vert^2_{\HSH}$$ and therefore the lower bound follows from the proof of the lower bound of $S_{\sigh,\ell}$.
\vspace{1mm}\\
\emph{(ii)} As above, adding and subtracting $\A \overline{k}_m(\cdot,X)$ to the first argument of the inner product in $T_{\Sigma_m,\ell}$, we obtain
\begin{eqnarray*}
T_{\Sigma_m,\ell}&{}\le{}& 2S_{\Sigma_m,\ell}+2\bb{E}\norm{\sum_{i=1}^{\ell}\inner{\id \overline{k}(\cdot,X)-\A \overline{k}_m(\cdot,X)}{\frac{\A\phi_{m,i}}{\sqrt{\lambda_{m,i}}}}_{\lp}\frac{\A\phi_{m,i}}{\sqrt{\lambda_{m,i}}}}^2_{L^2(\Pb)}\nonumber
\end{eqnarray*}
\begin{eqnarray*}
&{}={}&2S_{\Sigma_m,\ell}+2\bb{E}\norm{\A\Sigma^{-1}_{m,\ell}\A^*(\id \overline{k}(\cdot,X)-\A \overline{k}_m(\cdot,X))}^2_{\lp}\nonumber\\
&{}\le{}&2S_{\Sigma_m,\ell}+2\norm{\Sigma_m\Sigma^{-1}_{m,\ell}}^2_{\Cal{L}^\infty(\Cal{H}_m)}\bb{E}\norm{\id \overline{k}(\cdot,X)-\A \overline{k}_m(\cdot,X)}^2_{\lp}\nonumber
\end{eqnarray*}
and the result follows by noting that $\Sigma_m\Sigma^{-1}_{m,\ell}=P_\ell(\Sigma_m)$, $\Vert P_\ell(\Sigma_m)\Vert_{\Cal{L}^\infty(\Cal{H}_m)}=1$ and applying Lemma~\ref{lem:rf kernel mean bnd}. For the lower bound, note that 
\begin{eqnarray*}
T_{\Sigma_m,\ell}&{}={}&\bb{E}\Vert \id \overline{k}(\cdot,X)-\A\Sigma^{-1}_{m,\ell}\A^*\id \overline{k}(\cdot,X)\Vert^2_{\lp}\nonumber\\
&{}={}& S_{\Sigma_m,\ell}+\bb{E}\left\Vert\A \Sigma^{-1}_{m,\ell}\A^*\left[\id \overline{k}(\cdot,X)-\A\overline{k}_m(\cdot,X)\right]\right\Vert^2_{\lp}\nonumber\\
&{}{}&\qquad-2\bb{E}\left\langle \id\overline{k}(\cdot,X)-\A \Sigma^{-1}_{m,\ell}\A^*\A\overline{k}_m(\cdot,X),\A \Sigma^{-1}_{m,\ell}\A^*\left[\id \overline{k}(\cdot,X)-\A\overline{k}_m(\cdot,X)\right]\right\rangle_{\lp}\nonumber\\
&{}\ge{}&S_{\Sigma_m,\ell}+\circled{\small{g}}-2\sqrt{S_{\Sigma_m,\ell}}\sqrt{\circled{\small{g}}}\nonumber\\
&{}={}&\left(\sqrt{S_{\Sigma_m,\ell}}-\sqrt{\circled{\small{g}}}\right)^2\ge \left(\sqrt{S_{\Sigma_m,\ell}}-\sqrt{\circled{\small{3}}}\right)^2,\nonumber
\end{eqnarray*}
where we used $\circled{\small{g}}:=\bb{E}\left\Vert\A \Sigma^{-1}_{m,\ell}\A^*\left[\id \overline{k}(\cdot,X)-\A\overline{k}_m(\cdot,X)\right]\right\Vert^2_{\lp}\le \circled{\small{3}}\times\Vert \A \Sigma^{-1}_{m,\ell}\A^*\Vert^2_{\OPL}=\circled{\small{3}}\times \Vert \Sigma_m\Sigma^{-1}_{m,\ell}\Vert^2_{\Cal{L}^\infty(\Cal{H}_m)}=\circled{\small{3}}$. Since $\sqrt{S_{\Sigma_m,\ell}}\ge \sqrt{\circled{\small{4}}}-\sqrt{\circled{\small{3}}}$, we have $$T_{\Sigma_m,\ell}\ge \left(\sqrt{\circled{\small{4}}}-2\times\sqrt{\circled{\small{3}}}\right)^2.$$ Based on the calculation we made for the lower bound of $R_{\Sigma_m,\ell}$, for $\frac{1}{2}\sqrt{\sum_{i>\ell}\lambda^2_{i}}\ge 20\kappa\sqrt{\frac{\log\frac{2}{\delta}}{m}}$, we obtain $T_{\Sigma_m,\ell}\ge \frac{1}{4}\sum_{i>\ell}\lambda^2_i$, which  holds with probability at least $1-3\delta$ over the choice of $(\theta_i)^m_{i=1}$.
\vspace{1mm}\\
\emph{(iii)} Doing as above, we obtain
\begin{eqnarray*}
 T_{\sigh_m,\ell}&{}\le{}& 2S_{\sigh_m,\ell}+2\bb{E}\norm{\sum_{i=1}^{\ell}\inner{\id \overline{k}(\cdot,X)-\A \widetilde{k}_m(\cdot,X)}{\frac{\A\phi_{m,i}}{\sqrt{\lambda_{m,i}}}}_{\lp}\frac{\A\phi_{m,i}}{\sqrt{\lambda_{m,i}}}}^2_{L^2(\Pb)}\nonumber\\
 &{}={}&2S_{\sigh_m,\ell}+2\bb{E}\norm{\A\sigh^{-1}_{m,\ell}\A^*(\id \overline{k}(\cdot,X)-\A \widetilde{k}_m(\cdot,X))}^2_{\lp}\nonumber\\
%  &{}\le{}&2S_{\sigh_m,\ell}+2\bb{E}_{X\sim\Pb}\norm{\A\sigh^{-1}_{m,\ell}\A^*(\id \overline{k}(\cdot,X)-\A \widetilde{k}_m(\cdot,X))}^2_{\lp}\nonumber\\
 &{}\le{}&2S_{\sigh_m,\ell}+4\norm{\A\sigh^{-1}_{m,\ell}\A^*}^2_{\Cal{L}^\infty(\lp)}\bb{E}\norm{\id \overline{k}(\cdot,X)-\A \overline{k}_m(\cdot,X)}^2_{\lp}\nonumber\\
 &{}{}&\qquad\qquad+4\bb{E}\norm{\A\sigh^{-1}_{m,\ell}\A^*(\A\overline{k}(\cdot,X)-\A\widetilde{k}(\cdot,X))}^2_{\lp}\nonumber
 \end{eqnarray*}
 \begin{eqnarray*}
 &{}\le{}& 2S_{\sigh_m,\ell}+4\norm{\Sigma^{1/2}_m\sigh^{-1}_{m,\ell}\Sigma^{1/2}_m}^2_{\Cal{L}^\infty(\Cal{H}_m)}\bb{E}\norm{\id \overline{k}(\cdot,X)-\A \overline{k}_m(\cdot,X)}^2_{\lp}\nonumber\\
 &{}{}&\qquad\qquad+4\norm{\Sigma^{1/2}_m\sigh^{-1}_{m,\ell}\Sigma_m(m_{\bb{P},m}-\widehat{m}_{\Pb,m})}^2_{\Cal{H}_m}.\nonumber
 \end{eqnarray*}
 By applying Lemma~\ref{lem:rf kernel mean bnd}, the result follows from \eqref{eq:rf-ekpca metric 2 F decomp 2}---see the second line in the chain of equations leading to \eqref{eq:rf-ekpca metric 2 F decomp 2}---and \eqref{Eq:f}. For the lower bound, note that
$ T_{\sigh_m,\ell}\ge \left(\sqrt{S_{\sigh_m,\ell}}-\sqrt{\circled{\small{h}}}\right)^2\ge \left(\sqrt{\sum_{i>\ell}\lambda^2_{m,i}}-\sqrt{\circled{\small{3}}}-\sqrt{\circled{\small{h}}}\right)^2$, where \begin{eqnarray*}\circled{\small{h}}&{}:={}&
\bb{E}\left\Vert\A \sigh^{-1}_{m,\ell}\A^*\left[\id \overline{k}(\cdot,X)-\A\overline{k}_m(\cdot,X)\right]\right\Vert^2_{\lp}\\
&{}\le{}&\norm{\A\sigh^{-1}_{m,\ell}\A^*}^2_{\Cal{L}^\infty(\lp)}\times\circled{\small{3}}\nonumber\\ &{}={}&\norm{\Sigma^{1/2}_m\sigh^{-1}_{m,\ell}\Sigma^{1/2}_m}^2_{\Cal{L}^\infty(\Cal{H}_m)}\times\circled{\small{3}}\lesssim_{\Lambda^m}\circled{\small{3}}.\nonumber
\end{eqnarray*}
The result therefore follows by choosing $m$ sufficiently larger than $\frac{1}{\sum_{i>\ell}\lambda^2_i}$.
%-------------------------------------------------
%-------------------------------------------------
\subsection{Proof of Corollary \ref{rff poly decay corollary metric 2}}\label{sec:rff poly decay proof2}
\emph{(i)} and \emph{(iii)} are exactly same as that of the proof of Corollary~\ref{rff poly decay corollary}.\vspace{1mm}\\
\emph{(ii)} From Theorem \ref{thm:rff main thm metric 2}\emph{(ii)} we have
$$S_{\sigh,\ell}\lesssim_{\Pb^n}\frac{1}{n}+\frac{1}{n^2t}+\left(\frac{\Cal{N}_\Sigma(t)}{n\sqrt{t}}\wedge \frac{1}{nt}\right)+\Cal{N}_\Sigma(t)(\lambda_\ell+t)^2,$$
with $\frac{\log n}{n}\lesssim t\le\frac{\lambda_\ell}{3}$. Clearly $\frac{1}{n^2t}\lesssim \frac{1}{n}$. Using $\Cal{N}_\Sigma(t)\lesssim t^{-1/\alpha}$ from Lemma~\ref{N(t)}\emph{(i)}, it follows that
$$S_{\sigh,\ell}\lesssim_{\Pb^n}\inf\left\{t^{-1/\alpha}(n^{-\theta}+t)^2+\left(\frac{t^{-(\frac{1}{\alpha}+\frac{1}{2})}}{n}\wedge \frac{1}{nt}\right)+\frac{1}{n}:\frac{\log n}{n}\lesssim t\lesssim n^{-\theta}\right\}.$$
It is clear that both $\frac{t^{-(\frac{1}{\alpha}+\frac{1}{2})}}{n}$ and $\frac{1}{nt}$ dominate $n^{-1}$ and using $t\lesssim n^{-\theta}$ in the first term, we obtain
\begin{eqnarray*}
S_{\sigh,\ell}&{}\lesssim_{\Pb^n}{}&\inf\left\{t^{-1/\alpha}n^{-2\theta}+\frac{t^{-\frac{1}{\alpha'}}}{n}:\frac{\log n}{n}\lesssim t\lesssim n^{-\theta}\right\}\\
&{}={}&n^{-2\theta\left(1-\frac{1}{2\alpha}\right)}+n^{-\left(1-\frac{\theta}{\alpha'}\right)},
\end{eqnarray*}
where $\frac{1}{\alpha'}:=\left(\frac{1}{\alpha}+\frac{1}{2}\right)\wedge 1$
and the result follows.\vspace{1mm}
\\
\emph{(iv)} From Theorem \ref{thm:rff main thm metric 2}\emph{(iv)} we have
$$S_{\sigh_m,\ell}\lesssim_{\Pb^n\times\Lambda^m}\frac{1}{n}+\frac{1}{m}+\frac{1}{n^2t}+\left(\frac{\Cal{A}(t)}{n\sqrt{t}}\wedge \frac{1}{nt}\right)+\Cal{A}(t)(\lambda_\ell+t)^2$$
% \left(\Cal{N}_\Sigma(t)+\sqrt{\frac{\Cal{N}_\Sigma(t)}{tm}}+\frac{1}{tm}\right)\left[(\lambda_\ell+t)^2+\frac{1}{n\sqrt{t}}\right],$$
for $\frac{\log n}{n}\vee\frac{\log m}{m}\lesssim t\lesssim\lambda_\ell$, where $\Cal{A}(t)=\Cal{N}_\Sigma(t)+\sqrt{\frac{\Cal{N}_\Sigma(t)}{tm}}+\frac{1}{tm}$. From the proof of Corollary~\ref{rff poly decay corollary}\emph{(iv)}, we have $\Cal{A}(t)\lesssim t^{-1/\alpha}$. Also it is obvious that $\frac{1}{n^2t}\lesssim \frac{1}{n}$. Therefore, 
\begin{eqnarray*}
S_{\sigh_m,\ell}&{}\lesssim_{\Pb^n\times\Lambda^m}{}&\inf\left\{\frac{1}{n^{\gamma}}+t^{-1/\alpha}n^{-2\theta}+\frac{t^{-\frac{1}{\alpha'}}}{n}:\frac{\log n}{n^\gamma}\lesssim t\lesssim n^{-\theta}\right\}\\
&{}={}&n^{-\gamma}+n^{-2\theta\left(1-\frac{1}{2\alpha}\right)}+n^{-\left(1-\frac{\theta}{\alpha'}\right)}
\end{eqnarray*}
and the result follows by imposing $\theta<\gamma$ to ensure the constraint $\frac{\log n}{n^\gamma}\lesssim t\lesssim n^{-\theta}$ is satisfied.
\subsection{Proof of Proposition~\ref{pro:schatten}}\label{subsec:pro-schatten}
$(i)$ Note that for any $a\ge 1$, $(\id\id^*)^{a}\id=\id(\id^*\id)^{a}$. This follows by observing that for any $f\in\Cal{H}$, \begin{eqnarray}(\id\id^*)^{a}\id f&{}={}&\sum_{i}\lambda^a_i \left(\frac{\id\phi_i}{\sqrt{\lambda_i}}\ol \frac{\id\phi_i}{\sqrt{\lambda_i}}\right)\id f\nonumber\\
&{}={}&\id\sum_i\lambda^{a-1}_i(\phi_i\oh\phi_i)\Sigma f=\id\sum_i\lambda^{a-1}_i\phi_i\langle \phi_i,\Sigma f\rangle_\Cal{H}\nonumber\\
&{}={}&\id\sum_i\lambda^{a}_i\phi_i\langle \phi_i,f\rangle_\Cal{H}=\id\Sigma^{a}f=\id(\id^*\id)^a f.\label{Eq:alpha}
\end{eqnarray}
Therefore, 
\begin{eqnarray*}
\eu{R}_s(\psi_1,\ldots,\psi_\ell)&{}={}&\bb{E}\norm{\left(\id\id^*\right)^{-s/2}\id\left(I-P_\psi\right)\overline{k}(\cdot,X)}^2_{\lp}\\
&{}\stackrel{\eqref{Eq:alpha}}{=}{}&\bb{E}\norm{\id\Sigma^{-s/2}\left(I-P_\psi\right)\overline{k}(\cdot,X)}^2_{\lp}\nonumber\\
&{}\stackrel{(\dagger)}{=}{}&\norm{\Sigma^{(1-s)/2}\left(I-P_\psi\right)\Sigma^{1/2}}^2_{\HSH},\nonumber
% =\norm{\Sigma^{(2-s)/2}-\Sigma^{(2-s)/2}_\ell}^2_{\HSH}\\
% &{}={}&\text{tr}\left[\Sigma^{2-s}-\Sigma^{2-s}_\ell\right],\nonumber
% &{}\stackrel{\eqref{Eq:alpha}}{=}{}&\bb{E}\left\langle \Sigma^{1-s}\left(I-P_\ell(\Sigma)\right)\overline{k}(\cdot,X),\left(I-P_\ell(\Sigma)\right)\overline{k}(\cdot,X)\right\rangle_{\lp}\nonumber\\
% &{}={}&\left\langle\left(I-P_\ell(\Sigma)\right)\Sigma^{1-s}\left(I-P_\ell(\Sigma)\right),\bb{E}\left[\overline{k}(\cdot,X)\oh \overline{k}(\cdot,X)\right]\right\rangle_{\HSH}\nonumber\\
% &{}={}&\text{tr}\left[\left(I- P_\ell(\Sigma)\right)\Sigma^{1-s}\left(I- P_\ell(\Sigma)\right)\Sigma\right]=\text{tr}\left[\left(\Sigma^{1-s}-\Sigma^{1-s}_\ell\right)\left(\Sigma-\Sigma_\ell\right)\right]=\text{tr}\left[\Sigma^{2-s}-\Sigma^{2-s}_\ell\right]\nonumber
\end{eqnarray*}
where $(\dagger)$ follows from Lemma~\ref{rf cov proj rewrite}.\vspace{1mm}\\
\emph{(ii)--(v)} These exactly follow the proof of Proposition~\ref{pro:interpret}\emph{(ii)--(v)} by using $\delta=1-s$ in Lemma~\ref{lem:optima}.
%--------------------------------------------------------------
%--------------------------------------------------------------
\subsection{Proof of Theorem~\ref{thm:schatten}}\label{subsec:proof-rls}
$(i)$ The result follows from Proposition~\ref{pro:schatten}\emph{(ii)}.\vspace{1mm}\\
% Define $\Theta=\id\id^*$. Then $$\id P_\ell(\Sigma)\id^*=\id\left(\sum^\ell_{i=1}\phi_i\oh \phi_i\right)\id^*=\sum^\ell_{i=1} \id\phi_i\ol\id\phi_i=\sum^\ell_{i=1}\lambda_i \frac{\id\phi_i}{\sqrt{\lambda_i}}\ol\frac{\id\phi_i}{\sqrt{\lambda_i}}=:\Theta_\ell.$$
% Therefore,
% $$R_{\Sigma,\ell,s}=\text{tr}\left[\left(\Theta-\Theta_\ell\right)\Theta^{-s}\left(\Theta-\Theta_\ell\right)\right]=\text{tr}\left[\Theta^{2-s}-\Theta^{2-s}_\ell\right]$$
% and the result follows.
$(ii)$ Along the lines of the proof of Theorem~\ref{thm:rff main thm metric 1}\emph{(ii)} and using \eqref{Eq:alpha}, it is easy to show that
%it can be shown that for $\frac{\log n}{n}\lesssim t\lesssim \Vert\Sigma\Vert_{\OPH}$,
\begin{eqnarray*}
R_{\sigh,\ell,s}&{}={}&\left\Vert\Sigma^{(1-s)/2}(I-P_\ell(\sigh))\Sigma^{1/2}\right\Vert^2_{\HSH}+\left\Vert\Sigma^{(1-s)/2}P_\ell(\sigh)(m_\bb{P}-\mph)\right\Vert^2_\Cal{H},\nonumber
% \\
% &{}\lesssim_{\bb{P}^n}{}&\frac{\Cal{N}_\Sigma(t)\left(\lambda_{\ell+1}+t\right)^2}{t^s}+\frac{1}{n},\label{Eq:cordes}
\end{eqnarray*}
where the second term can be bounded as $$\Vert \Sigma\Vert^{1-s}_{\OPH}\Vert P_\ell(\sigh)\Vert^2_{\OPH}\Vert m_\bb{P}-\mph\Vert^2_\Cal{H}\lesssim_{\bb{P}^n} \frac{1}{n},$$
through an application of Lemma~\ref{lem:kernel mean bnd}\emph{(i)}. For the first term, employing the strategy used for bounding \circled{\small{1}}, for any $t>0$, we obtain
\begin{eqnarray*}
&{}{}&\norm{\Sigma^{(1-s)/2}(I-P_\ell(\sigh))\Sigma^{1/2}}_{\HSH}^2\nonumber
\\
&{}={}&\left\Vert\Sigma^{(1-s)/2}(\Sigma+tI)^{-1/2}(\Sigma+tI)^{1/2}(I-P_\ell(\sigh))(\Sigma+tI)^{1/2}(\Sigma+tI)^{-1/2}\Sigma^{1/2}\right\Vert_{\HSH}^2\nonumber
\\
&{}\le{}&\norm{(\Sigma+tI)^{-1/2}\Sigma^{1/2}}_{\HSH}^2\norm{\Sigma^{(1-s)/2}(\Sigma+tI)^{-1/2}}_{\OPH}^2\\
&{}{}&\qquad\qquad\qquad\times\norm{(\Sigma+tI)^{1/2}(I-P_\ell(\sigh))(\Sigma+tI)^{1/2}}_{\OPH}^2\nonumber
\\
&{}\le{}&\Cal{N}_\Sigma(t)(\lambdah_{\ell+1}+t)^2\norm{\Sigma^{(1-s)/2}(\Sigma+tI)^{-1/2}}_{\OPH}^2,\nonumber
\end{eqnarray*}
where $$\norm{\Sigma^{(1-s)/2}(\Sigma+tI)^{-1/2}}_{\OPH}^2=\sup_i\frac{\lambda^{1-s}_i}{\lambda_i+t}=\sup_i\left(\frac{\lambda_i}{\lambda_i+t}\right)^{1-s}\frac{1}{(\lambda_i+t)^s}\le\frac{1}{t^s}$$ for $s\le 1$. The result is completed by bounding $(\lambdah_{\ell+1}+t)^2$ as in the proof of the upper bound in Theorem~\ref{thm:rff main thm metric 1}\emph{(ii)}. The lower bound follows from Lemma~\ref{lem:optima} by noting that $R_{\sigh,\ell,s}\ge \Cal{R}^{\Sigma}_{0,1-s,1}(\sigh)\ge \Cal{R}^{\Sigma}_{0,1-s,1}(\Sigma)=R_{\Sigma,\ell,s}$.\vspace{1mm}\\
% 
% The lower bound is obtained by following exactly the same idea as in the proof of the lower bound of Theorem~\ref{thm:rff main thm metric 1}\emph{(ii)} by noting that
% $$\norm{\Sigma^{(1-s)/2}(I-Q_{\psi,\ell})\Sigma^{1/2}}^2_{\HSH}=\sum_i\lambda^{2-s}_i\left(1-\sum^\ell_{j=1}\langle \phi_i,\psi_j\rangle^2_\Cal{H}\right)^2.$$
$(iii)$ Note that 
\begin{eqnarray}
R_{\Sigma_m,\ell,s}&{}={}&\bb{E}_{X\sim\Pb}\left\Vert \left(\id\id^*\right)^{-s/2}\id \overline{k}(\cdot,X)-\left(\A\A^*\right)^{-s/2}\A P_\ell(\Sigma_m)\overline{k}_m(\cdot,X)\right\Vert^2_{\lp}\nonumber\\
&{}\le{}& 2\left(\circled{\small{8}}+\small{\circled{9}}\right),\label{Eq:mls}
\end{eqnarray}
where $\circled{\small{8}}:=\bb{E}_{X\sim\Pb}\left\Vert \left(\id\id^*\right)^{-s/2}\id \overline{k}(\cdot,X)-\left(\A\A^*\right)^{-s/2}\A \overline{k}_m(\cdot,X)\right\Vert^2_{\lp}$ and 
\begin{equation}\circled{\small{9}}:=\bb{E}_{X\sim\Pb}\left\Vert\left(\A\A^*\right)^{-s/2}\A (I-P_\ell(\Sigma_m))\overline{k}_m(\cdot,X)\right\Vert^2_{\lp}=\sum_{i>\ell}\lambda^{2-s}_{m,i},\label{Eq:aaa}
\end{equation}
which follows by replicating the analysis in $(i)$ for $\A\A^*$. To bound \circled{\small{8}}, we adapt the proof idea of Lemma~\ref{lem:rf kernel mean bnd}. Similar to \eqref{Eq:eq-diff}, it can be shown that
\begin{eqnarray}
\circled{\small{8}}
&{}={}& \Vert \left(\id\id^*\right)^{(2-s)/2}-\left(\tS\tS^*\right)^{(2-s)/2}\Vert^2_{\HSL}+2\text{tr}\left(\left(\id\id^*\right)^{-s/2}\id\id^*\tS\tS^*\left(\tS\tS^*\right)^{-s/2}\right)\nonumber\\
&{}{}&\qquad\qquad-2\bb{E}\langle \left(\id\id^*\right)^{-s/2}\id\overline{k}, \left(\tS\tS^*\right)^{-s/2}\tS\overline{k}_m(\cdot,X)\rangle_{\lp}.\label{Eq:neww}
\end{eqnarray}
Similar to \eqref{Eq:expe} and \eqref{Eq:tr}, we obtain
\begin{eqnarray*}
\bb{E}\langle \left(\id\id^*\right)^{-s/2}\id\overline{k}(\cdot,X), \left(\tS\tS^*\right)^{-s/2}\tS\overline{k}_m(\cdot,X)\rangle_{\lp}=\int_\Theta\sum^m_{i=1}B_i(\theta)\left\langle \varphi(\cdot,\theta),\varphi_i\right\rangle_{\lp}\,d\Lambda(\theta)\nonumber
\end{eqnarray*}
and
\begin{eqnarray*}
\text{tr}\left(\left(\id\id^*\right)^{-s/2}\id\id^*\tS\tS^*\left(\tS\tS^*\right)^{-s/2}\right)
=\int_\Theta\sum^m_{i=1}B_i(\theta)\left[\left\langle \varphi(\cdot,\theta),\varphi_i\right\rangle_{\lp}-\varphi_\bb{P}(\theta)\varphi_{i,\bb{P}}\right]\,d\Lambda(\theta),\nonumber
\end{eqnarray*}
where $B_i(\theta):=\left\langle \left(\id\id^*\right)^{-s/2}\left(\varphi(\cdot,\theta)-\varphi_{\bb{P}}(\theta)\right),\left(\tS\tS^*\right)^{-s/2}\left(\varphi_i-\varphi_{i,\bb{P}}\right)\right\rangle_{\lp}$.
This implies
\begin{eqnarray*}
&{}{}&
 \text{tr}\left(\left(\id\id^*\right)^{-s/2}\id\id^*\tS\tS^*\left(\tS\tS^*\right)^{-s/2}\right)-\bb{E}\langle \left(\id\id^*\right)^{-s/2}\id\overline{k}(\cdot,X), \left(\tS\tS^*\right)^{-s/2}\tS\overline{k}_m(\cdot,X)\rangle_{\lp}\nonumber\\
 &{}={}&-\left\langle \left(\id\id^*\right)^{-s/2}\int_\Theta A(\theta)\,d\Lambda(\theta),\left(\tS\tS^*\right)^{-s/2}\frac{1}{m}\sum^m_{i=1}A(\theta_i) \right\rangle_{\lp}\nonumber\\
 &{}\stackrel{(*)}{\le}{}&\frac{1}{4}\norm{\left(\id\id^*\right)^{-s/2}\int_\Theta A(\theta)\,d\Lambda(\theta)-\left(\tS\tS^*\right)^{-s/2}\frac{1}{m}\sum^m_{i=1}A(\theta_i)}^2_{\lp}\nonumber\\
 &{}\le{}&\frac{1}{2}\norm{\left(\id\id^*\right)^{-s/2}-\left(\tS\tS^*\right)^{-s/2}}^2_{\Cal{L}^\infty(\lp)}\norm{\int_\Theta A(\theta)\,d\Lambda(\theta)}^2_{\lp}\\
%  \end{eqnarray*}
%  \begin{eqnarray*}
 &{}{}&\qquad+\left[\norm{\left(\id\id^*\right)^{-s/2}-\left(\tS\tS^*\right)^{-s/2}}^2_{\Cal{L}^\infty(\lp)}+\norm{\left(\id\id^*\right)^{-s/2}}^2_{\Cal{L}^\infty(\lp)}\right]\\
 &{}{}&\qquad\qquad \times\norm{\int_\Theta A(\theta)\,d\Lambda(\theta)-\frac{1}{m}\sum^m_{i=1}A(\theta_i)}^2_{\lp}\nonumber
\end{eqnarray*}
where $(*)$ follows from the parallelogram identity with $A(\theta):=\varphi(\cdot,\theta)\varphi_\bb{P}(\theta)-\varphi^2_\bb{P}(\theta)$. It therefore follows from Theorem~\ref{thm:bernstein} that 
\begin{eqnarray}
\qquad\quad&{}{}&
 \text{tr}\left(\left(\id\id^*\right)^{-s/2}\id\id^*\tS\tS^*\left(\tS\tS^*\right)^{-s/2}\right)-\bb{E}\langle \left(\id\id^*\right)^{-s/2}\id\overline{k}(\cdot,X), \left(\tS\tS^*\right)^{-s/2}\tS\overline{k}_m(\cdot,X)\rangle_{\lp}\nonumber\\
\qquad\quad&{}\lesssim_{\Lambda^m}{}& \norm{\left(\id\id^*\right)^{-s/2}-\left(\tS\tS^*\right)^{-s/2}}^2_{\Cal{L}^\infty(\lp)}+\frac{1}{m}\left\Vert (\id\id^*)^{-s/2}\right\Vert^2_{\Cal{L}^\infty(\lp)}.\label{Eq:mm}
\end{eqnarray}
Combining \eqref{Eq:mls}--\eqref{Eq:mm}, we obtain
\begin{eqnarray}
R_{\Sigma_m,\ell,s}&{}\lesssim_{\Lambda^m}{}& \sum_{i>\ell}\lambda^{2-s}_{m,i}+\norm{\left(\id\id^*\right)^{(2-s)/2}-\left(\tS\tS^*\right)^{(2-s)/2}}^2_{\HSL}+\frac{1}{m}\left\Vert (\id\id^*)^{-s/2}\right\Vert^2_{\Cal{L}^\infty(\lp)}
\nonumber\\
&{}{}&\qquad+ \norm{\left(\id\id^*\right)^{-s/2}-\left(\tS\tS^*\right)^{-s/2}}^2_{\Cal{L}^\infty(\lp)},\label{Eq:rls-2}
\end{eqnarray}
where for $s\le 1$, 
$$\sum_{i>\ell}\lambda^{2-s}_{m,i}\lesssim \sum_{i>\ell}|\lambda_{m,i}-\lambda_i|^{2-s}+\sum_{i>\ell}\lambda^{2-s}_{i}\stackrel{(*)}{\lesssim} \left\Vert \id\id^*-\A\A^*\right\Vert^{2-s}_{2-s}+\sum_{i>\ell}\lambda^{2-s}_{i},$$ with $(*)$ following from \citep[Theorem II]{Kato-87}. Here $\Vert \cdot\Vert_{2-s}$ denotes the $(2-s)$-Schatten norm. Since $t\mapsto t^\alpha$ is Lipschitz on a bounded subset of $(0,\infty)$ for $\alpha\ge 1$, it follows from \citep[Lemma 7]{devito-14} that the second term of \eqref{Eq:rls-2} is bounded (up to constants) by $\Vert \id\id^*-\A\A^*\Vert^2_{\HSL}$ for $s\le 0$. Using the fact that $t\mapsto t^\alpha$ is operator monotone on $(0,\infty)$ for $0\le\alpha\le 1$, the third term in \eqref{Eq:rls-2} is bounded by $\Vert \id\id^*-\A\A^*\Vert^{-s}_{\Cal{L}^\infty(\lp)}$ for $-2\le s\le 0$ (follows from \citealp[Theorem X.1.1]{Bhatia-97}). Therefore for $-2\le s\le 0$, \eqref{Eq:rls-2} reduces to
\begin{eqnarray*}
R_{\Sigma_m,\ell,s}&{}\lesssim_{\Lambda^m}{}&\sum_{i>\ell}\lambda^{2-s}_{i}+\left\Vert \id\id^*-\A\A^*\right\Vert^{2-s}_{2-s}+\Vert \id\id^*-\A\A^*\Vert^2_{\HSL}\\
&{}{}&\qquad\qquad+\Vert \id\id^*-\A\A^*\Vert^{-s}_{\Cal{L}^\infty(\lp)}+\frac{1}{m}.\nonumber
\end{eqnarray*}
The result follows by applying Lemma~\ref{lem:perturb} to $\Vert \id\id^*-\A\A^*\Vert_{\HSL}$ and noting that $\left\Vert \id\id^*-\A\A^*\right\Vert_{2-s}\le \left\Vert \id\id^*-\A\A^*\right\Vert_{\HSL}$ since $-2\le s\le 0$.

The lower bound follows the idea in the proof of lower bound on $R_{\Sigma_m,\ell}$ by noticing that $R_{\Sigma_m,\ell,s}\ge \left(\sqrt{\circled{\small{9}}}-\sqrt{\circled{\small{8}}}\right)^2$ where $\circled{\small{8}}\lesssim m^{s/2}$ for $s\in[-2,0)$ and $\circled{\small{8}}\lesssim \frac{1}{m}$ for $s=0$. Considering $\circled{\small{9}}=\sum_{i>\ell}\lambda^{2-s}_{m,i}$, we have
\begin{eqnarray}
\left(\sum_{i>\ell}\lambda^{2-s}_{m,i}\right)^{\frac{1}{2-s}}&{}\ge{}&\left|\left(\sum_{i>\ell}\lambda^{2-s}_{i}\right)^{\frac{1}{2-s}}-\left(\sum_{i>\ell}\left|\lambda_{m,i}-\lambda_i\right|^{2-s}\right)^{\frac{1}{2-s}}\right|\nonumber\\
&{}\ge{}&\left(\sum_{i>\ell}\lambda^{2-s}_{i}\right)^{\frac{1}{2-s}}-\norm{\id\id^*-\tS\tS^*}_{2-s}\nonumber\\
% \end{eqnarray}
% \begin{eqnarray}
&{}\gtrsim_{\Lambda^m}{}& \left(\sum_{i>\ell}\lambda^{2-s}_{i}\right)^{\frac{1}{2-s}}-\frac{1}{\sqrt{m}}\gtrsim \left(\sum_{i>\ell}\lambda^{2-s}_{i}\right)^{\frac{1}{2-s}},\nonumber
\end{eqnarray}
since $m\gtrsim \left(\sum_{i>\ell}\lambda^{2-s}_{i}\right)^{\frac{2}{s-2}}$. Since $m\gtrsim \left(\sum_{i>\ell}\lambda^{2-s}_{i}\right)^{\frac{2}{s}}$ for $s\in[-2,0)$, it also follows that $\sqrt{\circled{\small{9}}}-\sqrt{\circled{\small{8}}}\gtrsim_{\Lambda^m} \left(\sum_{i>\ell}\lambda^{2-s}_{i}\right)^{\frac{1}{2}}$ and the result follows.\vspace{1mm}\\
$(iv)$ We skip the proof as it follows the ideas in the proof of $R_{\sigh,\ell}$ combined with the bounds on $\circled{\small{8}}$ and \circled{\small{9}}.

\section*{Acknowledgments}
BKS is supported by National Science Foundation (NSF) award DMS-1713011 and CAREER award DMS-1945396. %The research was carried out when NS was a Ph.D. student in the Department of Statistics at Pennsylvania State University.

\bibliographystyle{plainnat}
\bibliography{Reference}
\begin{appendices}\label{appendix}
\numberwithin{equation}{section}
%\section*{Appendix}
%All the supplementary results that are used throughout the paper are collected in the following appendices.
\section{Additional Results}\label{sec:additional}
In this section, we present corollaries of Theorems~\ref{thm:rff main thm metric 1} and \ref{thm:rff main thm metric 2} assuming exponential decay rates for the eigenvalues of $\Sigma$.

\begin{appxcor}[Exponential decay of eigenvalues]\label{rff exp decay corollary}
Suppose $\underbar{B}e^{-\tau i}\le\lambda_i\le\bar{B}e^{-\tau i}$ for $\tau>0$ and $\underbar{B},\bar{B}\in(0,\infty)$. Let $\ell=\frac{1}{\tau}\log n^\theta$ for $\theta>0$.
Then 
\\\\
(i) $$n^{-2\theta}\lesssim R_{\Sigma,\ell}\lesssim n^{-2\theta}.$$
There exists $\tilde{n}\in\bb{N}$ such that for all $n>\tilde{n}$, the following hold:
\vspace{2mm}\\
(ii) \[ n^{-2\theta}\lesssim R_{\hat{\Sigma},\ell}\lesssim_{\Pb^n}\begin{cases} 
      n^{-2\theta}\log n ,\quad\theta\le\frac{1}{2}\\
      \frac{1}{n},\qquad\qquad\,\,\,\theta>\frac{1}{2}
   \end{cases};
\]
(iii) For $0<\gamma\le1$ and $m=n^\gamma$, 
\[ n^{-2\theta}\mathds{1}_{\{\gamma\ge 2\theta\}}\lesssim_{\Lambda^m} R_{\Sigma_m,\ell}\lesssim_{\Lambda^m}\begin{cases} 
      n^{-2\theta},\qquad\gamma\ge2\theta,\,\theta\le\frac{1}{2}\hspace{2.5mm} \\
      n^{-\gamma},\qquad\,\,\gamma\le 1\wedge 2\theta
   \end{cases};
\]
%with $m=n^\gamma$ for $0<\gamma\le 1$;\vspace{2mm}\\
(iv) For $0<\gamma\le1$ and $m=n^\gamma$, 
\[ n^{-2\theta}\mathds{1}_{\{\gamma\ge 2\theta\}}\lesssim_{\Lambda^m} R_{\hat{\Sigma}_m,\ell}\lesssim_{\Pb^n\times\Lambda^m}\begin{cases} 
      n^{-2\theta}\log n,\quad\gamma\ge 2\theta,\,\theta\le\frac{1}{2} \\
      n^{-\gamma},\qquad\quad\,\,\,\gamma<2\theta,\,\gamma\le 1
   \end{cases}.
\]
%with $m=n^\gamma$ for $0<\gamma\le 1$.
\end{appxcor}
We may draw conclusions similar to Remark~\ref{rff:rem1} from Corollary~\ref{rff exp decay corollary}. The behavior of $R_{\Sigma,\ell}$ matches that of $R_{\sigh,\ell}$, up to a $\log n$ factor, if $\ell$ grows slower than $\log\sqrt{n}$. If $m\ge n^{2\theta}$ with $\theta\le\frac{1}{2}$, then RF-EKPCA and EKPCA have similar statistical convergence behavior (i.e., no statistical loss) but with RF-EKPCA enjoying a computational edge if $m<\sqrt{n\log n^\theta}$, i.e., $\theta\le\frac{1}{4}$.
\begin{proof}
\emph{(i)}  From Theorem \ref{thm:rff main thm metric 1}\emph{(i)} we have
$$R_{\Sigma,\ell}=\sum_{i>\ell}\lambda_i^2\lesssim\sum_{i>\ell} e^{-2\tau i}\lesssim\int_\ell^\infty e^{-2\tau x}dx\lesssim e^{-2\tau\ell}=n^{-2\theta}$$
and 
$$R_{\Sigma,\ell}=\sum_{i>\ell}\lambda_i^2\gtrsim\sum_{i>\ell} e^{-2\tau i}\gtrsim\int_{\ell+1}^\infty e^{-2\tau x}dx\gtrsim e^{-2\tau(\ell+1)}=e^{-2\tau}n^{-2\theta}.$$
\\
\emph{(ii)} Using $\Cal{N}_\Sigma(t)\lesssim\log\frac{1}{t}$ from Lemma~\ref{N(t)}\emph{(ii)} in Theorem \ref{thm:rff main thm metric 1}\emph{(ii)}, we have
\begin{eqnarray*}
R_{\hat{\Sigma},\ell}&{}\lesssim_{\Pb^n}{}&\inf\left\{(n^{-\theta}+t)^2\log\frac{1}{t}+n^{-1}:\frac{\log n}{n}\lesssim t\le\frac{\lambda_1}{3}\right\}\nonumber\\
&{}\lesssim{}&
\begin{cases}n^{-2\theta}\log n+\frac{1}{n},\qquad \theta<1\\\frac{\log^3 n}{n^2}+\frac{1}{n},\qquad\qquad \theta\ge1\end{cases},
\end{eqnarray*}
and the result follows.
\vspace{1mm}\\
\emph{(iii)} We obtain $R_{\Sigma_m,\ell}\lesssim_{\Lambda_m}\frac{1}{m}+n^{-2\theta}$ and the result follows.
\vspace{1mm}\\
\emph{(iv)} Theorem \ref{thm:rff main thm metric 1} \emph{(iv)} yields
$$R_{\sigh_m,\ell}\lesssim_{\Pb^n\times\Lambda^m}n^{-\gamma}+\left(\Cal{N}_\Sigma(t)+\sqrt{\frac{\Cal{N}_\Sigma(t)}{tn^\gamma}}+\frac{1}{tn^\gamma}\right)(\lambda_\ell+t)^2,$$
for $\frac{\log n}{n^\gamma}\lesssim t\lesssim\frac{\lambda_1}{3}$. Using $\Cal{N}_\Sigma(t)\lesssim\log\frac{1}{t}$, it is clear that $\frac{1}{tn^\gamma}\lesssim \log\frac{1}{t}$ and $\sqrt{\Cal{N}_\Sigma(t)n^{-\gamma}/t}\lesssim\log\frac{1}{t}$ which follows from the constraint on $t$. Therefore,
\begin{eqnarray*}
R_{\sigh_m,\ell}&{}\lesssim_{\Pb^n\times\Lambda^m}{}&\inf\left\{(n^{-\theta}+t)^2\log\frac{1}{t}+n^{-\gamma}:\frac{\log n}{n^\gamma}\lesssim t\le\frac{\lambda_1}{3}\right\}\nonumber\\
&{}\lesssim{}&
\begin{cases}n^{-2\theta}\log n+n^{-\gamma},\quad\theta<\gamma\\\frac{\log^3 n}{n^2}+n^{-\gamma},\quad\qquad \theta\ge\gamma\end{cases},
\end{eqnarray*}
and the result follows. 
\end{proof}
\begin{appxcor}[Exponential decay of eigenvalues]\label{rff exp decay corollary metric 2}
Suppose $\underbar{B}e^{-\tau i}\le\lambda_i\le\bar{B}e^{-\tau i}$ for $\tau>0$ and $\underbar{B},\bar{B}\in(0,\infty)$. Let $\ell=\frac{1}{\tau}\log n^\theta$ for $\theta>0$.
Then 
\\\\
(i) $$n^{-2\theta}\lesssim S_{\Sigma,\ell}\lesssim n^{-2\theta}.$$
There exists $\tilde{n}\in\bb{N}$ such that for all $n>\tilde{n}$, the following hold:
\vspace{2mm}\\
(ii) \[ n^{-2\theta}\lesssim S_{\widehat{\Sigma},\ell}\lesssim_{\Pb^n}\begin{cases} 
    n^{-2\theta}\log n ,\qquad\qquad\,\,\,\theta\le\frac{2}{5}\\
      n^{-\left(1-\frac{\theta}{2}\right)}\log n,\qquad\,\,\,\frac{2}{5}\le\theta<1
   \end{cases};
\]
(iii) For $0<\gamma\le1$ and $m=n^\gamma$,  \[ n^{-2\theta}\mathds{1}_{\left\{\gamma\ge 2\theta\right\}}\lesssim_{\Lambda^m} S_{\Sigma_m,\ell}\lesssim_{\Lambda^m}\begin{cases} 
      n^{-2\theta},\qquad\gamma\ge2\theta,\,\theta\le\frac{1}{2}\hspace{2.5mm} \\
      n^{-\gamma},\qquad\,\,\gamma\le 1\wedge 2\theta
   \end{cases};
\]
%with $m=n^\gamma$ for $0<\gamma\le 1$;\vspace{2mm}\\
(iv) For $0<\gamma\le1$ and $m=n^\gamma$, \begin{eqnarray*} n^{-2\theta}\mathds{1}_{\left\{\gamma\ge 2\theta\right\}}&{}\lesssim_{\Lambda^m}{}& S_{\widehat{\Sigma}_m,\ell}\lesssim_{\Pb^n\times\Lambda^m}\\&{}{}&\begin{cases} 
n^{-2\theta}\log n ,\qquad\qquad\,\,\,\gamma\ge 2\theta,\,\theta\le\frac{2}{5}\\
      n^{-\left(1-\frac{\theta}{2}\right)}\log n,\quad\gamma\ge 1-\frac{\theta}{2},\,\gamma>\theta,\,\frac{2}{5}\le\theta<1\\
      n^{-\gamma},\qquad\qquad\qquad\theta<\gamma\le\left[1\wedge 2\theta\wedge \left(1-\frac{\theta}{2}\right)\right]
   \end{cases}.
\end{eqnarray*}
%with $m=n^\gamma$ for $0<\gamma\le 1$.
\end{appxcor}
\begin{proof}
\emph{(i)} and \emph{(iii)} are exactly same as that of the proof of Corollary~\ref{rff exp decay corollary}.\vspace{1mm}\\
\emph{(ii)} Using $\Cal{N}_\Sigma(t)\lesssim\log\frac{1}{t}$ from Lemma~\ref{N(t)}\emph{(ii)} in Theorem \ref{thm:rff main thm metric 2}\emph{(ii)}, we have $\frac{\Cal{N}_\Sigma(t)}{n\sqrt{t}}\wedge \frac{1}{nt}\lesssim \frac{\Cal{N}_\Sigma(t)}{n\sqrt{t}}\lesssim \frac{\log\frac{1}{t}}{n\sqrt{t}}$, which implies 
\begin{eqnarray*}
S_{\hat{\Sigma},\ell}&{}\lesssim_{\Pb^n}{}&\inf\left\{n^{-2\theta}\log\frac{1}{t}+\frac{\log\frac{1}{t}}{n\sqrt{t}}:\frac{\log n}{n}\lesssim t\lesssim n^{-\theta}\right\}\\
&{}\lesssim{}& \left(n^{-2\theta}+n^{-\left(1-\frac{\theta}{2}\right)}\right)\log n
\end{eqnarray*}
and the result follows.
\vspace{1mm}\\
\emph{(iv)} By noting that $\Cal{A}(t)\lesssim \Cal{N}_\Sigma(t)\lesssim \log\frac{1}{t}$, we obtain
\begin{eqnarray*}
S_{\hat{\Sigma},\ell}&{}\lesssim_{\Pb^n}{}&\inf\left\{n^{-\gamma}+n^{-2\theta}\log\frac{1}{t}+\frac{\log\frac{1}{t}}{n\sqrt{t}}:\frac{\log n}{n}\lesssim t\lesssim n^{-\theta}\right\}\\
&{}\lesssim{}& n^{-\gamma}+\left(n^{-2\theta}+n^{-\left(1-\frac{\theta}{2}\right)}\right)\log n,
\end{eqnarray*}
which yields the result. 
\end{proof}

\section{Technical Results}
In this section, we collect important technical results used to prove the main results of this paper.

\begin{appxlem}\label{lem:optima}
 Let $A:H\rightarrow H$ be a positive self-adjoint Hilbert-Schmidt operator on a separable Hilbert space $H$ with $(\lambda_i,\psi_i)_i$ being its eigenvalues and eigenfunctions. Suppose the eigenvalues are simple and satisfy $\lambda_1>\lambda_2>\cdots$. 
 
 Define  
 $$\Cal{Q}_\ell=\left\{\sum^\ell_{i=1}\tau_i\otimes_H\tau_i\,:(\tau_i)_{i\in[\ell]}\subset H\right\},$$
 \begin{equation}\mathcal{R}^A_{\alpha,\delta,\theta}(Q)=\left\Vert A^{\delta/2}\left(I-QA^{\alpha}\right)A^{\theta/2}\right\Vert^2_{\Cal{L}^2(H)},\,\,Q\in\Cal{Q}_\ell,\label{Eq:def}\end{equation}
 and 
 \begin{equation}\mathcal{S}^A_{\rho}(Q)=\left\Vert \left(I-Q\right)A^{\rho/2}\right\Vert^2_{\Cal{L}^2(H)},\,\,Q\in\Cal{Q}_\ell,\nonumber%\label{Eq:def2}
 \end{equation}
 where 
 %$\Cal{P}_1$ is the space of rank-1 orthogonal projection operators on $H$ and 
 $\alpha,\delta\ge 0$ and $\theta,\rho>0$. %with their trace being $1$. 
 Then,
 \begin{itemize}
\item[(i)] $$A^{-\alpha}_\ell=\arg\inf_{Q\in\Cal{Q}_\ell}\Cal{R}^A_{\alpha,\delta,\theta}(Q),$$
 where $A_\ell=\sum^\ell_{i=1}\lambda_i\psi_i\otimes_H \psi_i$ and $\Cal{R}^A_{\alpha,\delta,\theta}(A^{-\alpha}_\ell)=\sum_{i>\ell}\lambda^{\theta+\delta}_i$;
\item[(ii)] $$Q_\ell=\arg\inf_{Q\in\Cal{Q}_\ell}\Cal{S}^A_{\rho}(Q),$$
 where $Q_\ell=\sum^\ell_{i=1}\psi_i\otimes_H \psi_i$ and $\Cal{S}^A_{\rho}(Q_\ell)=\sum_{i>\ell}\lambda^{\rho}_i$.
 \end{itemize}
\end{appxlem}
\begin{proof}
\emph{(i)} Define $$\Al=\sum^\ell_{i=1}\lambda_i \psi_i\otimes_H \psi_i\,\,\text{and}\,\,\Ag=\sum_{i>\ell}\lambda_i \psi_i\otimes_H \psi_i$$ so that $A=\Al+\Ag$. Also since $A^{\delta/2}=\sum_i\lambda^{\delta/2}_i\psi_i\otimes_H\psi_i$, we have $A^{\delta/2}=\Al^{\delta/2}+\Ag^{\delta/2}$. Consider
\begin{eqnarray}
&&\Cal{R}^A_{\alpha,\delta,\theta}(Q)=\norm{A^{(\theta+\delta)/2}-A^{\delta/2}QA^{\alpha+\theta/2}}^2_{\hsh}\nonumber\\
&{}={}&\norm{\Al^{(\theta+\delta)/2}+\Ag^{(\theta+\delta)/2}-\left(\Al^{\delta/2}+\Ag^{\delta/2}\right)Q\left(\Al^{\alpha+\theta/2}+\Ag^{\alpha+\theta/2}\right)}^2_{\hsh}\nonumber\\
% \end{eqnarray}
% \begin{eqnarray}
&{}\stackrel{(*)}{=}{}&\left\Vert\left(\Al^{(\theta+\delta)/2}-\Al^{\delta/2}Q\Al^{\alpha+\theta/2}\right)+\left(\Ag^{(\theta+\delta)/2}-\Ag^{\delta/2}Q\Ag^{\alpha+\theta/2}\right)\right.\nonumber\\
&{}{}&\qquad\qquad\left.-\left(\Ag^{\delta/2} Q\Al^{\alpha+\theta/2}+\Al^{\delta/2} Q\Ag^{\alpha+\theta/2}\right)\right\Vert^2_{\hsh}\nonumber\\
&{}={}&\norm{\Al^{(\theta+\delta)/2}-\Al^{\delta/2}Q\Al^{\alpha+\theta/2}}^2_{\hsh}+\norm{\Ag^{(\theta+\delta)/2}-\Ag^{\delta/2}Q\Ag^{\alpha+\theta/2}}^2_{\hsh}\nonumber\\
&{}{}&\qquad\qquad+\norm{\Ag^{\delta/2} Q\Al^{\alpha+\theta/2}+\Al^{\delta/2} Q\Ag^{\alpha+\theta/2}}^2_{\hsh},\label{Eq:simple1}
\end{eqnarray}
where \eqref{Eq:simple1} is obtained by expanding the square in $(*)$ and noting that all the inner products are zero since $\text{Tr}(\Al^\beta B \Ag^\gamma)=0$ for any $\beta,\gamma\ge 0$ and any operator $B:H\rightarrow H$. Again expanding the square in the last term of \eqref{Eq:simple1} and noting that the inner product is zero, we obtain
\begin{eqnarray}
\Cal{R}^A_{\alpha,\delta,\theta}(Q)
&{}={}& \norm{\Al^{(\theta+\delta)/2}-\Al^{\delta/2}Q\Al^{\alpha+\theta/2}}^2_{\hsh}+\norm{\Ag^{(\theta+\delta)/2}-\Ag^{\delta/2}Q\Ag^{\alpha+\theta/2}}^2_{\hsh}\nonumber\\
&{}{}&\qquad\qquad +\norm{\Ag^{\delta/2} Q\Al^{\alpha+\theta/2}}^2_{\hsh}+\norm{\Al^{\delta/2} Q\Ag^{\alpha+\theta/2}}^2_{\hsh}.\label{Eq:simple2}
\end{eqnarray}
We now decompose $Q\in\Cal{Q}_\ell$ as $Q=Q_1+Q_2+Q_3$ where $Q_j=\sum_{i\in \Cal{A}_j}\tau_i\otimes_H\tau_i$ with $\Cal{A}_j\subset\{1,\ldots,\ell\}$, $j\in[3]$, $\cup_j\Cal{A}_j=\{1,\ldots,\ell\}$, $\Cal{A}_i\cap\Cal{A}_j=\emptyset$ for all $i,j\in[3]$ such that
$(\tau_i)_{i\in \Cal{A}_1}\subset \text{Ran}(\Al)$, $(\tau_i)_{i\in \Cal{A}_2}\subset \text{Ran}(\Ag)$, and $(\tau_i)_{i\in \Cal{A}_3}\subset \text{Ker}(A)$. This means optimizing over $Q\in\Cal{Q}_\ell$ is equivalent to optimizing over $Q_j\in\Cal{Q}_{|\Cal{A}_j|}$ and $\Cal{A}_j$, $j\in[3]$. 

Using $$\Ag^{\beta} Q\Al^{\gamma}=\Ag^{\beta}(Q_1+Q_2+Q_3)\Al^{\gamma}=0$$ and 
$$\Al^{\beta} Q\Ag^{\gamma}=\Al^{\beta}(Q_1+Q_2+Q_3)\Ag^{\gamma}=0$$
for any $\beta,\gamma\ge 0$ in \eqref{Eq:simple2}, we obtain
\begin{eqnarray}
\Cal{R}^A_{\alpha,\delta,\theta}(Q)
&{}={}&\norm{\Al^{(\theta+\delta)/2}-\Al^{\delta/2}Q\Al^{\alpha+\theta/2}}^2_{\hsh}+\norm{\Ag^{(\theta+\delta)/2}-\Ag^{\delta/2}Q\Ag^{\alpha+\theta/2}}^2_{\hsh}\nonumber\\
&{}={}&\norm{\Al^{\delta/2}\left(\Pl-Q\Al^\alpha\right)\Al^{\theta/2}}^2_{\hsh}+\norm{\Ag^{\delta/2}\left(\Pg-Q\Ag^\alpha\right)\Ag^{\theta/2}}^2_{\hsh}\nonumber\\
&{}={}&\norm{\Al^{\delta/2}\left(\Pl-Q_1\Al^\alpha\right)\Al^{\theta/2}}^2_{\hsh}+\norm{\Ag^{\delta/2}\left(\Pg-Q_2\Ag^\alpha\right)\Ag^{\theta/2}}^2_{\hsh},\label{Eq:simple3}
\end{eqnarray}
where $$\Pl:=\sum^\ell_{i=1}\psi_i\otimes_H\psi_i\,\,\text{and}\,\,\Pg:=\sum_{i>\ell}\psi_i\otimes_H\psi_i.$$
Let $\Cal{B}\subseteq\{1,\ldots,\ell\}$, $\Cal{B}^c:=\{1,\ldots,\ell\}\backslash\Cal{B}$, $\Cal{C}\subseteq\{\ell+1,\ell+2,\ldots\}$, $\Cal{C}^c:=\{\ell+1,\ell+2,\ldots\}\backslash\Cal{C}$ such that 
%$|\Cal{B}|=|\Cal{A}_1|$, $|\Cal{C}|=|\Cal{A}_2|$, 
$\text{span}\{(\psi_i)_{i\in\Cal{B}}\}=\text{span}((\tau_{i})_{i\in\Cal{A}_1})$ and $\text{span}\{(\psi_i)_{i\in\Cal{C}}\}=\text{span}((\tau_{i})_{i\in\Cal{A}_2})$. This means $|\Cal{B}|\le|\Cal{A}_1|$ and $|\Cal{C}|\le |\Cal{A}_2|$ and $|\Cal{B}|+|\Cal{C}|\le \ell$. Note that $A_{\le}=A_{\le,\Cal{B}}+A_{\le,\Cal{B}^c}$ and $A_{>}=A_{>.\Cal{C}}+A_{>,\Cal{C}^c}$, where
\begin{eqnarray*}
A_{\le,\bullet}:=\sum_{i\in\bullet}\lambda_i\psi_i\otimes_H\psi_i,\quad P_{\le,\bullet}=\sum_{i\in\bullet}\psi_i\otimes_H \psi_i,\,\,\,\bullet\in\{\Cal{B},\Cal{B}^c\},\nonumber\\
A_{>,\blacksquare}
:=\sum_{i\in\blacksquare}\lambda_i\psi_i\otimes_H\psi_i,\quad\text{and}\quad P_{>,\blacksquare}=\sum_{i\in\blacksquare}\psi_i\otimes_H \psi_i,\,\,\, \blacksquare\in\{\Cal{C},\Cal{C}^c\}.\nonumber
\end{eqnarray*}
Consider the first term in the r.h.s.~of \eqref{Eq:simple3}, i.e.,
\begin{eqnarray}
&&\norm{\Al^{\delta/2}\left(\Pl-Q_1\Al^\alpha\right)\Al^{\theta/2}}^2_{\hsh} \nonumber\\
&{}={}&\norm{\left(\AlB^{\delta/2}+\AlBc^{\delta/2}\right)\left(\PlB+\PlBc-Q_1\Al^\alpha\right)\Al^{\theta/2}}^2_{\hsh}\nonumber\\
&{}\stackrel{(\dagger)}{=}{}&\norm{\AlB^{\delta/2}\left(\PlB-Q_1\Al^\alpha\right)\Al^{\theta/2}}^2_{\hsh}+\norm{\AlBc^{\delta/2}\left(\PlBc-Q_1\Al^\alpha\right)\Al^{\theta/2}}^2_{\hsh}\nonumber\\
&{}\stackrel{(*)}{=}{}&\norm{\AlB^{\delta/2}\left(\PlB-Q_1\AlB^\alpha-Q_1\AlBc^\alpha\right)\left(\AlB^{\theta/2}+\AlBc^{\theta/2}\right)}^2_{\hsh}\nonumber\\
&{}{}&\qquad +\norm{\AlBc^{\delta/2}\left(\PlBc-Q_1\AlB^\alpha-Q_1\AlBc^\alpha\right)\left(\AlB^{\theta/2}+\AlBc^{\theta/2}\right)}^2_{\hsh}\nonumber
\end{eqnarray}
\begin{eqnarray}
&{}\stackrel{(\ddagger)}{=}{}&\norm{\AlB^{\delta/2}\left(\PlB-Q_1\AlB^\alpha\right)\AlB^{\theta/2}-\AlB^{\delta/2}Q_1\AlBc^{\alpha+\theta/2}}^2_{\hsh}\nonumber\\
&{}{}&\qquad +\norm{\AlBc^{\delta/2}\left(\PlBc-Q_1\AlBc^\alpha\right)\AlBc^{\theta/2}-\AlBc^{\delta/2}Q_1\AlB^{\alpha+\theta/2}}^2_{\hsh},\label{Eq:reduce1}
\end{eqnarray}
where $(\dagger)$ is obtained by expanding the square in the previous line and using \begin{equation}\AlB^{\beta}\AlBc^{\gamma}=\AlBc^{\gamma}\AlB^{\beta}=0\,\,\text{for any}\,\, \beta,\gamma\ge 0.\label{Eq:equa}\end{equation} Again using \eqref{Eq:equa}, $(*)$ reduces to $(\ddagger)$. By noting that
$\AlB^{\delta/2}Q_1\AlBc^{\alpha+\theta/2}=\AlBc^{\delta/2}Q_1\AlB^{\alpha+\theta/2}=\AlBc^{\delta/2}Q_1\AlBc^{\alpha+\theta/2}=0$, \eqref{Eq:reduce1} reduces to
\begin{eqnarray}
\norm{\Al^{\delta/2}\left(\Pl-Q_1\Al^\alpha\right)\Al^{\theta/2}}^2_{\hsh}&{}={}&\norm{\AlB^{\delta/2}\left(\PlB-Q_1\AlB^\alpha\right)\AlB^{\theta/2}}^2_{\hsh}\nonumber\\
&{}{}&\qquad+\norm{\AlBc^{(\delta+\theta)/2}}^2_{\hsh}.\label{Eq:part1}
\end{eqnarray}
Carrying out similar calculation for the second term of \eqref{Eq:simple3}, and combining the result along with \eqref{Eq:part1} in \eqref{Eq:simple3} yields
\begin{eqnarray}
\Cal{R}^A_{\alpha,\delta,\theta}(Q)
&{}={}& \norm{\AlB^{\delta/2}\left(\PlB-Q_1\AlB^\alpha\right)\AlB^{\theta/2}}^2_{\hsh}+\norm{\AlBc^{(\delta+\theta)/2}}^2_{\hsh}\nonumber\\
&{}{}&\qquad+\norm{\AgC^{\delta/2}\left(\PgC-Q_2\AgC^\alpha\right)\AgC^{\theta/2}}^2_{\hsh}+\norm{\AgCc^{(\delta+\theta)/2}}^2_{\hsh}\nonumber\\
% \end{eqnarray}
% \begin{eqnarray}
&{}={}&\norm{\AlB^{\delta/2}\left(\PlB-Q_1\AlB^\alpha\right)\AlB^{\theta/2}}^2_{\hsh}\nonumber\\
&{}{}&\qquad+\norm{\AgC^{\delta/2}\left(\PgC-Q_2\AgC^\alpha\right)\AgC^{\theta/2}}^2_{\hsh}+\sum^\infty_{i=1}\lambda^{\delta+\theta}_i-\sum_{i\in\Cal{B}\cup\Cal{C}}\lambda^{\delta+\theta}_i,\label{Eq:finalexp}
\end{eqnarray}
where we used $$\norm{\AlBc^{(\delta+\theta)/2}}^2_{\hsh}+\norm{\AgCc^{(\delta+\theta)/2}}^2_{\hsh}=\sum_{i\in\Cal{B}^c\cup\Cal{C}^c}\lambda^{\delta+\theta}_i=\sum^\infty_{i=1}\lambda^{\delta+\theta}_i-\sum_{i\in\Cal{B}\cup\Cal{C}}\lambda^{\delta+\theta}_i.$$ It follows from \eqref{Eq:finalexp} that $\Cal{R}^A_{\alpha,\delta,\theta}$ is minimized only when $Q_1$ and $Q_2$ satisfy $\PlB=Q_1\AlB^\alpha$ (i.e., $Q_1=\AlB^{-\alpha}$), $\PgC=Q_2\AgC^\alpha$ (i.e., $Q_2=\AgC^{-\alpha}$) for $\Cal{B}$ and $\Cal{C}$ such that $\sum_{i\in\Cal{B}\cup\Cal{C}}\lambda^{\delta+\theta}_i$ is maximized. Subject to the constraint $|\Cal{B}|+|\Cal{C}|\le\ell$, clearly $\sum_{i\in\Cal{B}\cup\Cal{C}}\lambda^{\delta+\theta}_i$ is maximized only when $\Cal{B}=\{1,\ldots,\ell\}$, $\Cal{C}=\emptyset$. This yields $Q_1=\Al^{-\alpha}$, $Q_2=Q_3=0$ and the result follows by noting that $Q=Q_1+Q_2+Q_3$ and $\Cal{R}^A_{\alpha,\delta,\theta}(\Al^{-\alpha})=\sum_{i>\ell}\lambda^{\delta+\theta}_i$.  \vspace{2mm}\\
\emph{(ii)} Let $P_{\overline{\text{Ran}}(A)}:=\sum_{i}\psi_i\otimes_H\psi_i$ and $P^\perp$ denote the orthogonal projection operators that project onto $\overline{\text{Ran}}(A)$ and $\text{Ker}(A)$ respectively. Then $I=P_{\overline{\text{Ran}}(A)}+P^\perp$. Therefore, 
\begin{eqnarray}
 \norm{(I-Q)A^{\rho/2}}^2_{\Cal{L}^2(H)}
 &{}={}&\norm{(P_{\overline{\text{Ran}}(A)}+P^\perp)(I-Q)A^{\rho/2}}^2_{\Cal{L}^2(H)}\nonumber\\
 &{}={}&\norm{P_{\overline{\text{Ran}}(A)}(I-Q P_{\overline{\text{Ran}}(A)})A^{\rho/2}+P^\perp(I-Q)A^{\rho/2}}^2_{\Cal{L}^2(H)}\nonumber\\
 &{}={}&\norm{P_{\overline{\text{Ran}}(A)}(I-Q P_{\overline{\text{Ran}}(A)})A^{\rho/2}}^2_{\Cal{L}^2(H)}%\nonumber\\
 %&&\qquad\qquad
 +\norm{P^\perp(I-Q)A^{\rho/2}}^2_{\Cal{L}^2(H)}\nonumber\\
 &{}\stackrel{(*)}{=}{}&\Cal{R}^A_{0,0,\rho}(Q)+\norm{P^\perp(I-Q)A^{\rho/2}}^2_{\Cal{L}^2(H)}%\label{Eq:connect}\\
 \ge \Cal{R}^A_{0,0,\rho}(Q)\stackrel{(\dagger)}{\ge} \Cal{R}^A_{0,0,\rho}(Q_\ell),\nonumber
\end{eqnarray}
where $(\dagger)$ follows from Lemma~\ref{lem:optima}\emph{(i)}. Note that, at the minimizer of $\Cal{R}^A_{0,0,\rho}$, which is $Q_\ell$, the second term in $(*)$ is zero, which implies $Q_\ell$ is the minimizer of $\Cal{S}^A_{\rho}(Q)$ over $Q\in\Cal{Q}_\ell$. Therefore $\Cal{S}^A_{\rho}(Q_\ell)=\Cal{R}^A_{0,0,\rho}(Q_\ell)=\sum_{i>\ell}\lambda^\rho_i$.
\end{proof}

The following result extends Lemma 3.6 of \citep{Rudi-13}, which holds for uncentered covariance operators, to centered covariance operators that are estimated using a $U$-statistic.
\begin{appxlem}\label{lem:cent 1}
Let $H$ be a separable Hilbert space and $\Y$ be a separable topological space. Define $$\frak{C}=\frac{1}{2}\int_\Y\int_\Y (s(x)-s(y))\otimes_H (s(x)-s(y))\,dP(x)\,dP(y)$$ where $s:\Cal{Y}\rightarrow H$ is a Bochner-measurable function with $\sup_{x\in\Cal{Y}}\Vert s(x)\Vert^2_H=\kappa$. Given $(Y_i)^r_{i=1}\stackrel{i.i.d.}{\sim}P$ with $r\ge 2$, define 
$$\widehat{\frak{C}}=\frac{1}{2r(r-1)}\sum^r_{i\ne j}(s(Y_i)-s(Y_j))\otimes_H (s(Y_i)-s(Y_j)).$$
Then for any $0\le\delta\le\frac{1}{2}$ and $\frac{140\kappa}{r}\log\frac{16\kappa r}{\delta}\le t\le \norm{\Ck}_\OH$, the following hold:\vspace{2mm}\\
%\begin{itemize}
    %\item[(i)] 
    (i) $P^r\left\{(Y_i)^r_{i=1}:\left\Vert (\Ck+tI)^{-1/2}(\Ch-\Ck)(\Ck+tI)^{-1/2}\right\Vert_\OH\le\frac{1}{2}\right\}\ge 1-2\delta$;\vspace{2mm}\\
    %\item[(ii)] 
    (ii) $P^r\left\{(Y_i)^r_{i=1}:\sqrt{\frac{2}{3}}\le\norm{(\Ck+tI)^{1/2}(\Ch+tI)^{-1/2}}_\OH\le\sqrt{2}\right\}\ge 1-2\delta;$\vspace{2mm}\\
    %\item[(iii)] 
    (iii) $P^r\left\{(Y_i)^r_{i=1}:\norm{(\Ck+tI)^{-1/2}(\Ch+tI)^{1/2}}_\OH\le\sqrt{\frac{3}{2}}\right\}\ge 1-2\delta;$\vspace{2mm}\\
    %\item[(iv)] 
    (iv) $P^r\left\{(Y_i)^r_{i=1}:\lambda_\ell(\Ch)+t\le\frac{3}{2}(\lambda_\ell(\Ck)+t)\right\}\ge 1-2\delta$ for all $\ell\ge 1$;\vspace{2mm}\\
    %\item[(v)] 
    (v) $P^r\left\{(Y_i)^r_{i=1}:\lambda_\ell(\Ck)+t\le2(\lambda_\ell(\Ch)+t)\right\}\ge 1-2\delta$ for all $\ell\ge 1$.\vspace{2mm}\\
%\end{itemize}
In addition, for any $0<t\le\Vert\frak{C}\Vert_{\Cal{L}^\infty(H)}$,
\begin{eqnarray}&{}{}&P^r\left\{(Y_i)_{i=1}^r:\norm{\frak{C}_t^{-1/2}(\widehat{\frak{C}}-\frak{C})\frak{C}^{1/2}}_{\Cal{L}^2(H)}\le\sqrt{\frac{64\kappa^{5/2}\Cal{N}_{\frak{C}}(t)\log\frac{2}{\delta}}{r\sqrt{t}}}+\frac{32\sqrt{2}\kappa^{3/2}\log\frac{3}{\delta}}{r\sqrt{t}}
\right\}\nonumber\\
&{}{}&\qquad\qquad\qquad\qquad\ge1-2\delta,\label{Eq:hs-op}
\end{eqnarray}
%\label{Eq:hs-op}\\
where $\frak{C}_t:=(\frak{C}+t I)$ and $\Cal{N}_\frak{C}(t)=\emph{tr}(\frak{C}_t^{-1}\frak{C})$.
\end{appxlem}
\begin{proof}
\emph{(i)} Define $A(x,y):=\frac{1}{\sqrt{2}}(s(x)-s(y))$, $U(x,y):=(\Ck+tI)^{-1/2} A(x,y)\in H$ and $Z(x,y):=U(x,y)\otimes_H U(x,y)$. Clearly $Z(x,y)=Z(y,x)$ and $$(\Ck+tI)^{-1/2} (\Ch-\Ck)(\Ck+tI)^{-1/2}=\frac{1}{r(r-1)}\sum^r_{i\ne j}Z(Y_i,Y_j)-\E[Z(X,Y)].$$ Also 
\begin{eqnarray*}\sup_{x,y\in\Y}\Vert Z(x,y)\Vert_{\Cal{L}^2(H)}&{}\stackrel{(*)}{=}{}&\sup_{x,y\in\Y}\Vert U(x,y)\Vert^2_H
\le \frac{1}{2}\sup_{x,y\in\Y}\Vert (\Ck+tI)^{-1/2} (s(x)-s(y))\Vert^2_H=\frac{2\kappa}{t},\end{eqnarray*}
where $(*)$ follows from Lemma~\ref{lem:rank}. Define
$$\psi(x):=\E_Y[Z(x,Y)]=\E_Y[U(x,Y)\otimes_H U(x,Y)].$$ Clearly, 
\begin{eqnarray*}
\sup_{x\in\Cal{Y}}\Vert \psi(x)\Vert_{\Cal{L}^\infty(H)}\le \sup_{x,y\in\Y}\Vert U(x,y)\otimes_H U(x,y)\Vert_{\Cal{L}^\infty(H)}\stackrel{(\dagger)}{=}\sup_{x,y\in\Y}\Vert U(x,y)\Vert^2_H\le\frac{2\kappa}{t},
\end{eqnarray*}
where $(\dagger)$ follows from Lemma~\ref{lem:rank}. Since $\E[\psi(X)]=\E[Z(X,Y)]$, $
\E[(\psi(X)-\E[Z(X,Y)])^2]=\E[\psi^2(X)]-\E^2[Z(X,Y)]\preceq\E[\psi^2(X)]$. By defining $\Ck_t=\Ck+tI$, we have
\begin{eqnarray*}
&&\E[\psi^2(X)]=\E[\E^2_Y[U(X,Y)\otimes_H U(X,Y)]]\nonumber\\
&{}={}&\E\left[\Ck_t^{-1/2}\E_Y[A(X,Y)\otimes_H A(X,Y)]\Ck_t^{-1}\E_Y[A(X,Y)\otimes_H A(X,Y)]\Ck_t^{-1/2}\right]\nonumber
\\
&{}\preceq{}&\sup_{x\in\Y}\Vert \Ck_t^{-1/2} \E_Y[A(x,Y)\otimes_H A(x,Y)]\Ck^{-1/2}_t\Vert_{\Cal{L}^\infty(H)}\\
&{}{}&\qquad\qquad\times\E\left[\Ck^{-1/2}_t\E_Y[A(X,Y)\otimes_H A(X,Y)]\Ck_t^{-1/2}\right]\nonumber
\\
&{}\preceq{}&\frac{2\kappa}{t}(\Ck+tI)^{-1/2}\Ck (\Ck+tI)^{-1/2}=:S.\nonumber
\end{eqnarray*}
Note that $\Vert S\Vert_{\Cal{L}^\infty(H)}\le \frac{2\kappa}{t}$ and $$d:=\frac{\Vert S\Vert_{\Cal{L}^1(H)}}{\Vert S\Vert_{\Cal{L}^\infty(H)}}=\frac{\text{tr}(\Ck_t^{-1}\Ck)}{\Vert\Ck_t^{-1}\Ck\Vert_{\Cal{L}^\infty(H)}}\le \frac{(\Vert \Ck\Vert_{\Cal{L}^\infty(H)}+t)\text{tr}(\Ck^{-1}_t\Ck)}{\Vert\Ck\Vert_{\Cal{L}^\infty(H)}}.$$ Therefore, applying Theorem~\ref{thm:bernstein U-stat} yields that for $0<\delta\le d$ with probability at least $1-2\delta$,
\begin{eqnarray}
\left\Vert(\Ck+tI)^{-1/2}(\Ch-\Ck)(\Ck+tI)^{-1/2}\right\Vert_{\Cal{L}^\infty(H)}&{}\le{}& \frac{4\kappa\beta}{rt}+\sqrt{\frac{24\kappa\beta}{rt}}+\frac{16\kappa\log\frac{3}{\delta}}{rt}\nonumber
\\
&{}\le{}& \frac{4\kappa\beta}{rt}+\sqrt{\frac{24\kappa\beta}{rt}}+\frac{24\kappa\beta}{rt}\nonumber
\\
&=&\frac{28\kappa\beta}{rt}+\sqrt{\frac{24\kappa\beta}{rt}},\label{Eq:final}
\end{eqnarray}
where $\beta=\frac{2}{3}\log\frac{4d}{\delta}$ and we used that fact that $d>1$ in the second line. Since $t\ge \frac{140\kappa}{r}\log\frac{16\kappa r}{\delta}$, it follows that $t\ge \frac{140\kappa}{r}\log\frac{4d}{\delta}$ as $\frac{140\kappa}{r}\log\frac{16\kappa r}{\delta}\ge \frac{140\kappa}{r}\log\frac{16\kappa}{t\delta}\ge \frac{140\kappa}{r}\log\frac{4d}{\delta}$ where we use the fact that $d\le\frac{4\kappa}{t}$ which follows from $t\le \Vert\Ck\Vert_{\Cal{L}^\infty(H)}$ and $\text{tr}(\Ck^{-1}_t\Ck)\le \frac{\text{tr}(\Ck)}{t}\le \frac{2\kappa}{t}$. This implies $t\ge \frac{210\kappa\beta}{r}$ or $\frac{\kappa\beta}{rt}\le \frac{1}{210}$. Using this in \eqref{Eq:final} yields the result.\vspace{1mm}\\
\emph{(ii)} By defining $B_n=(\Ck+tI)^{-1/2}(\Ck-\Ch)(\Ck+tI)^{-1/2}$, we have
\begin{eqnarray*}%\label{lem 2: Bp upper}
    \norm{(\Ck+tI)^{1/2}(\Ch+tI)^{-1/2}}_{\Cal{L}^\infty(H)}
    &{}={}&\norm{(\Ch+tI)^{-1/2}(\Ck+tI)(\Ch+tI)^{-1/2}}^{1/2}_{\Cal{L}^\infty(H)}\nonumber\\
    &{}={}&\norm{(\Ck+tI)^{1/2}(\Ch+tI)^{-1}(\Ck+tI)^{1/2}}^{1/2}_{\Cal{L}^\infty(H)}\nonumber\\
    &{}={}&\norm{(I-B_n)^{-1}}_{\Cal{L}^\infty(H)}^{1/2}
    %\\
    %&{}\le{}&
    \le (1-\norm{B_n}_{\Cal{L}^\infty(H)})^{-1/2},\nonumber
\end{eqnarray*}
where the last inequality holds whenever $\norm{B_n}_{\Cal{L}^\infty(H)}<1$. Similarly,
\begin{eqnarray*}%\label{lem 2: Bp lower}
    \norm{(\Ck+tI)^{1/2}(\Ch+tI)^{-1/2}}_{\Cal{L}^\infty(H)}&{}={}&\norm{(I+(-B_n))^{-1}}_{\Cal{L}^\infty(H)}^{1/2}\ge(1+\norm{B_n}_{\Cal{L}^\infty(H)})^{-1/2}.\nonumber
\end{eqnarray*}
The result therefore follows from $(i)$.\vspace{1mm}
% Applying Lemma \ref{lem:U_stat} with $0\le\delta\le\frac{1}{2}$ and $\frac{140\kappa}{n}\log\frac{8n}{\delta}\le\lambda\le \norm{\frak{C}}_{\Cal{L}^\infty(H)}$ yields
% \begin{equation}\label{eq:B_n bnd}
% P^n\left\{\norm{B_n}_{\Cal{L}^\infty(H)}\le\frac{1}{2}\right\}\ge1-2\delta
% \end{equation}{}
% 
% $(i)$ follows from (\ref{eq:B_n bnd}) in conjunction with (\ref{lem 2: Bp upper}) and (\ref{lem 2: Bp lower}).
\\
$(iii)$ Since
$$\norm{(\Ck+tI)^{-1/2}(\Ch+tI)^{1/2}}_{\Cal{L}^\infty(H)}=\norm{I-B_n}_{\Cal{L}^\infty(H)}^{1/2}\le(1+\norm{B_n}_{\Cal{L}^\infty(H)})^{1/2},$$
the result follows from $(i)$.\vspace{1mm}
\\
$(iv)$ Since $\sqrt{\frac{2}{3}}\le\norm{(\Ck+tI)^{1/2}(\Ch+tI)^{-1/2}}_{\Cal{L}^\infty(H)}\le\sqrt{2}$ as obtained in $(i)$, it follows that $\Ch+tI\preceq\frac{3}{2}(\Ck+tI)$ (see \citealp[Lemmas B.2 and 3.5]{Rudi-13}). This implies (see \citealp{gohberg}) that $\lambda_\ell(\Ch)+t\le \frac{3}{2}(\lambda_\ell(\Ck)+t)$ for all $\ell\ge 1$. $(v)$ follows similarly.\vspace{1mm}\\
\emph{Proof of \eqref{Eq:hs-op}:} Define $Z(x,y):=\frak{C}_t^{-1/2}(A(x,y)\otimes_H A(x,y))\frak{C}^{1/2}$ so that $Z(x,y)=Z(y,x)$ and
$$\frak{C}_t^{-1/2}(\hat{\frak{C}}-\frak{C})\frak{C}^{1/2}=\frac{1}{r(r-1)}\sum_{i\neq j}^rZ(X_i,X_j)-\E[Z(X,Y)].$$
We have
$\sup_{x,y\in\X}\norm{Z(x,y)}_{\Cal{L}^2(H)}\le\Vert\frak{C}_t^{-1/2}\Vert_{\Cal{L}^\infty(H)}\norm{\frak{C}^{1/2}}_{\Cal{L}^\infty(H)}\norm{A(x,y)}_H^2\le\frac{(2\kappa)^{3/2}}{\sqrt{t}}:=M.$

By defining $\psi(x):=\E_Y[Z(x,Y)]$, we have
\begin{eqnarray*}
&&\E\Vert\psi(X)-\frak{C}\Vert^2_{\Cal{L}^2(H)}=\E\Vert\psi(X)\Vert^2_{\Cal{L}^2(H)}-\Vert\frak{C}\Vert^2_{\Cal{L}^2(H)}\le\E\Vert\psi(X)\Vert^2_{\Cal{L}^2(H)}\nonumber\\
&{}={}&\E\left\Vert\frak{C}_t^{-1/2}\E_Y[A(X,Y)\otimes_H A(X,Y)]\frak{C}^{1/2}\right\Vert^2_{\Cal{L}^2(H)}\nonumber\\
% \nonumber
% \end{eqnarray*}
% \begin{eqnarray*}
&{}={}&\E\,\text{tr}\left[\frak{C}^{1/2}\E_Y[A(X,Y)\otimes_H A(X,Y)]\frak{C}_t^{-1}\E_Y[A(X,Y)\otimes_H A(X,Y)]\frak{C}^{1/2}\right]\nonumber\\
&{}={}&\E\,\text{tr}\left[\frak{C}^{-1/2}_t \E_Y[A(X,Y)\otimes_H A(X,Y)]\frak{C}_t^{-1}\E_Y[A(X,Y)\otimes_H A(X,Y)]\frak{C}\frak{C}^{1/2}_t\right]\nonumber
\\
&{}\le{}&\sup_{x\in\X}\norm{\frak{C}_t^{-1/2}\E_Y[A(X,Y)\otimes_H A(X,Y)]\frak{C}\frak{C}^{1/2}_t}_{\Cal{L}^\infty(H)}\\
&{}{}&\qquad\qquad\times \E\,\text{tr}\left[\frak{C}_t^{-1/2}\E_Y[A(X,Y)\otimes_H A(X,Y)]\frak{C}_t^{-1/2}\right]\nonumber
\\
&{}\le{}&\norm{\frak{C}_t^{-1/2}}_{\Cal{L}^\infty(H)}\norm{\frak{C}}_{\Cal{L}^\infty(H)}\norm{\frak{C}^{1/2}_t}_{\Cal{L}^\infty(H)}\text{tr}\left[\frak{C}^{-1}_t\frak{C}\right] \sup_{x,y\in\Cal{X}}\norm{A(x,y)\otimes_H A(x,y)}_{\Cal{L}^\infty(H)}\\
&{}\le{}&\sqrt{2\kappa+t}(2\kappa)^2\frac{\Cal{N}_{\frak{C}}(t)}{\sqrt{t}}\le \sqrt{2}(2\kappa)^{5/2}\frac{\Cal{N}_{\frak{C}}(t)}{\sqrt{t}},\nonumber
\end{eqnarray*}
where we used $t\le\Vert\frak{C}\Vert_{\Cal{L}^\infty(H)}\le 2\kappa$ in the last inequality. The result follows by applying Theorem \ref{thm:bernstein U-stat}~\emph{(ii)}.
% 
% thus Theorem \ref{thm:bernstein U-stat} $(ii)$ implies that the following holds:
% 
% \begin{equation}
%     P^n\left\{(X_i)_{i=1}^n:\norm{\frak{C}_\lambda^{-1/2}(\hat{\frak{C}}-\frak{C})\frak{C}^{1/2}}_{\Cal{L}^2(H)}^2\le\frac{2048\kappa^2\lambda_1\log^2\frac{3}{\delta}}{\lambda n^2}+\frac{32\kappa\Cal{N}_\frak{C}(\lambda)\lambda_1\sqrt{\lambda_1+\lambda}\log\frac{2}{\delta}}{\sqrt{\lambda}n}\right\}\ge1-2\delta.\label{eq:bound 1 U-stat bern}
% \end{equation}
% Thus Theorem \ref{thm:bernstein U-stat} $(ii)$ implies that the following holds with probability at least $1-2\delta$: 
% \begin{eqnarray}
% \norm{\frak{C}_\lambda^{-1/2}(\hat{\frak{C}}-\frak{C})\frak{C}^{1/2}}_{\Cal{L}^2(H)}&{}\le{}&\frac{8\kappa^{3/2}\log\frac{1}{\delta}}{3\sqrt{\lambda}n}+\sqrt{\frac{16\kappa^{3/2}\Cal{N}_\frak{C}(\lambda)\sqrt{\kappa+\lambda}}{n}\log\frac{1}{\delta}}+\frac{16\kappa^{3/2}\log\frac{3}{\delta}}{\sqrt{\lambda}n}\nonumber
% \\
% &{}\le{}&\frac{56\kappa^{3/2}\log\frac{3}{\delta}}{3\sqrt{\lambda}n}+\sqrt{\frac{16\kappa^{3/2}\Cal{N}_\frak{C}(\lambda)\sqrt{\kappa+\lambda}\log\frac{3}{\delta}}{n}}\nonumber
% \end{eqnarray}
% yielding the result.
\end{proof}
\begin{appxlem}\label{lem:1 rff}
Suppose $(A_1)$, $(A_2)$, $(A_4)$ and $(A_5)$ hold. For $t>0$, define $\Cal{N}_\Sigma(t)=\emph{tr}(\Sigma(\Sigma+tI)^{-1})$ and $\Cal{N}_{\Sigma_m}(t)=\emph{tr}(\Sigma_m(\Sigma_m+tI)^{-1})$. For $\delta>0$ and $\frac{86\kappa}{m}\log\frac{16\kappa m}{\delta}\le t\le \norm{\Sigma}_\OPH$, the following hold:\vspace{2mm}\\ %\textcolor{red}{(Requires $\lp$ to be separable)}
%\begin{itemize}
    %\item[(i)] 
    (i) %\begin{eqnarray*}&{}{}&
    $\Lambda^m\left\{(\theta_i)^m_{i=1}:\sqrt{\frac{2}{3}}\le\norm{(\id\id^*+tI)^{1/2}(\tS\tS^*+tI)^{-1/2}}_{\Cal{L}^\infty(\lp)}\le\sqrt{2}\right\}\ge 1-\delta;$\vspace{2mm}\\
        %\end{eqnarray*}
%\vspace{1mm}\\
    %\item[(ii)] 
    (ii) $\Lambda^m\left\{(\theta_i)^m_{i=1}:\lambda_{m,j}+t\le\frac{3}{2}(\lambda_j+t)\right\}\ge 1-\delta$ for all $j\ge 1$;\vspace{2mm}\\
    %\item[(iii)] 
    (iii) $\Lambda^m\left\{(\theta_i)^m_{i=1}:\frac{1}{2}(\lambda_j+t)\le \lambda_{m,j}+t\right\}\ge 1-\delta$ for all $j\ge 1$;\vspace{2mm}\\
    %\item[(iv)] 
    (iv) $\Lambda^m\left\{(\theta_i)^m_{i=1}:\Cal{N}_{\Sigma_m}(t)\le\frac{32\kappa\log\frac{2}{\delta}}{tm}+\sqrt{\frac{32\kappa\Cal{N}_\Sigma(t)\log\frac{2}{\delta}}{tm}}+2\Cal{N}_\Sigma(t)\right\}\ge 1-2\delta$.
%\end{itemize}
\end{appxlem}
\begin{proof}
$(i,ii,iii)$ Define $A_i:=\varphi(\cdot,\theta_i)-(1\ol 1)\varphi(\cdot,\theta_i)$ and $D_i:=A_i\ol A_i$. Then it follows from Propositions~\ref{pro:id} and \ref{pro:approx} that $\tS\tS^*=\frac{1}{m}\sum^m_{i=1}D_i$ and $\id\id^*=\bb{E}[\tS\tS^*]$. Define $E_m:=(\id\id^*+tI)^{-1/2}(\id\id^*-\tS\tS^*)(\id\id^*+tI)^{-1/2}$.
% We first write 
% \begin{equation}\label{eq: idid^*}
%     \id\id^*=\int_\Theta\left(\varphi(\cdot,\theta)-(1\ol 1)\varphi(\cdot,\theta)\right)\ol\left(\varphi(\cdot,\theta)-(1\ol 1)\varphi(\cdot,\theta)\right)d\Lambda(\theta)
% \end{equation}
% and 
% \begin{equation}\label{eq: uu^*}
%     \tS\tS^*=\frac{1}{m}\sum_{i=1}^m\left(\varphi(\cdot,\theta_i)-(1\ol 1)\varphi(\cdot,\theta_i)\right)\ol\left(\varphi(\cdot,\theta_i)-(1\ol 1)\varphi(\cdot,\theta_i)\right)
% \end{equation}{}
By mimicking the strategy of Lemma \ref{lem:cent 1}\emph{(ii, iii)}, we obtain
\begin{eqnarray}\label{lem 3: Bm upper/lower}
   \qquad\quad(1+\norm{E_m}_{\Cal{L}^\infty(\lp)})^{-1/2} &{}\le{}&\norm{(\id\id^*+tI)^{1/2}(\tS\tS^*+tI)^{-1/2}}_{\Cal{L}^\infty(\lp)}\\
   &{}\le{}&(1-\norm{E_m}_{\Cal{L}^\infty(\lp)})^{-1/2}\nonumber
\end{eqnarray}
provided $\norm{E_m}_{\Cal{L}^\infty(\lp)}<1$. We will now apply Theorem \ref{thm: tropp} to bound $\norm{E_m}_{\Cal{L}^\infty(\lp)}$. By defining
$Z_i:=(\id\id^*+tI)^{-1/2}A_i$
% \begin{equation}
%     Z_i=(\id\id^*+tI)^{-1/2}(\varphi(\cdot,\theta_i)-(1\ol1)\varphi(\cdot,\theta_i))
% \end{equation}{}
% 
and $U_i:=Z_i\ol Z_i$, we obtain $E_m=\frac{1}{m}\sum^m_{i=1}U_i-\E_\Lambda[U_i]$. Note that
% $\E_\Lambda[U_i]=(\id\id^*+tI)^{-1/2}\id\id^*(\id\id^*+tI)^{-1/2}=:T$ and $\frac{1}{m}\sum_{i=1}^mU_i-(\id\id^*+tI)^{-1/2}\id\id^*(\id\id^*+tI)^{-1/2}=B_m$.  We have
\begin{eqnarray*}
\norm{U_i}_{\Cal{L}^\infty(\lp)}&{}={}&\norm{Z_i}^2_{\lp}\le \frac{1}{t}\norm{A_i}^2_{\lp}\\
&{}\le{}& \frac{2\norm{\varphi(\cdot,\theta_i)}^2_{\lp}}{t}\left(1+\norm{1\ol1}_{\Cal{L}^\infty(\lp)}\right)
\le \frac{4\kappa}{t}.\nonumber
% \\
% &{}\le{}&\frac{4\kappa}{t}=:M.\nonumber
\end{eqnarray*}
% 
% 
% \begin{eqnarray}
%     \norm{U_i}_{\Cal{L}^\infty(\lp)}
%     \le\frac{1}{t}\norm{(\varphi(\cdot,\theta_i)-(1\ol1)\varphi(\cdot,\theta_i))\ol(\varphi(\cdot,\theta_i)-(1\ol1)\varphi(\cdot,\theta_i))}_{\Cal{L}^\infty(\lp)}\nonumber
%     \\
%     \le\frac{1}{t}\norm{\varphi(\cdot,\theta_i)-(1\ol1)\varphi(\cdot,\theta_i)}_{L^2(\Pb)}^2\le\frac{1}{t}(\norm{\varphi(\cdot,\theta_i)}_{L^2(\Pb)}+\norm{1\ol1\varphi(\cdot,\theta_i)}_{L^2(\Pb)})^2\nonumber
%     \\
%     \le\frac{4\kappa}{t}=:M\hspace{1in}
% \end{eqnarray}{}
Define $T:=E_\Lambda[U_i]$. Then $$\E_\Lambda[(U_i-T)^2]=\E_\Lambda[\norm{Z_i}_{L^2(\Pb)}^2U_i-T^2]\preceq\E_\Lambda[\norm{Z_i}_{L^2(\Pb)}^2U_i]\preceq\frac{4\kappa}{t}T.$$
% \begin{eqnarray}
%     \E_\Lambda[(U_i-T)^2]=\E_\Lambda[\norm{Z_i}_{L^2(\Pb)}^2U_i-T^2]\preceq\E_\Lambda[\norm{Z_i}_{L^2(\Pb)}^2U_i]\preceq\frac{4\kappa}{t}T=:MT.\nonumber
% \end{eqnarray}
Now we set
$$
    \sigma^2=\norm{\frac{4\kappa}{t}T}_{\Cal{L}^\infty(\lp)}\le \frac{4\kappa}{t}$$ and $$d=\frac{\norm{T}_{\Cal{L}^1(\lp)}}{\norm{T}_{\Cal{L}^\infty(\lp)}}\le\frac{(\lambda_1+t)\norm{T}_{\Cal{L}^1(\lp)}}{\lambda_1},
$$
where $\lambda_1=\Vert\Sigma\Vert_\OPH=\Vert\id\id^*\Vert_\OPL$. Then Theorem \ref{thm: tropp} yields
\begin{equation}\label{eq:end}
    \Lambda^m\left\{\norm{B_m}_{\Cal{L}^\infty(\lp)}\le\frac{8\beta \kappa}{3tm}+\sqrt{\frac{8\kappa\beta }{tm}}\right\}\le1-\delta,
\end{equation}
where $\beta=\log\frac{4d}{\delta}$. Since $t\ge\frac{86\kappa}{m}\log\frac{16\kappa m}{\delta}$, it follows that $t\ge \frac{86\kappa}{m}\log\frac{4d}{\delta}$ as $\frac{86\kappa}{m}\log\frac{16\kappa m}{\delta}\ge\frac{86\kappa}{m}\log\frac{16\kappa}{t\delta}\ge\frac{86\kappa}{m}\log\frac{4d}{\delta}$, where we have used $d\le\frac{4\kappa}{t}$ which follows from $t\le\norm{\Sigma}_\OPH$ and $\text{tr}(T)\le\frac{\text{tr}(\id\id^*)}{t}=\frac{\text{tr}(\Sigma)}{t}\le\frac{2\kappa}{t}.$  This implies $t\ge\frac{86\beta\kappa}{m}$. Combining this with (\ref{eq:end}) yields that with probability at least $1-\delta$, $\Vert B_m\Vert_\OPL\le \frac{1}{2}$. $(i)$ follows by using this in (\ref{lem 3: Bm upper/lower}).  $(ii),\,(iii)$ are implied as in $(iv)$, $(v)$ of Lemma \ref{lem:cent 1}.\vspace{1mm}\\
$(iv)$ Observe that $\Cal{N}_{\Sigma_m}(t)=\text{tr}(\Sigma_m(\Sigma_m+tI)^{-1})=\text{tr}(\tS^*\tS(\tS^*\tS+tI)^{-1})=\text{tr}(\tS(\tS^*\tS+tI)^{-1}\tS^*)=\text{tr}((\tS\tS^*+tI)^{-1}\tS\tS^*),$
% \begin{eqnarray}
% \Cal{N}_m(t)=\text{tr}(\Sigma_m(\Sigma_m+tI)^{-1})=\text{tr}(\tS^*\tS(\tS^*\tS+tI)^{-1})=\text{tr}(\tS^*(\tS\tS^*+tI)^{-1}\tS)=\text{tr}(\tS\tS^*(\tS\tS^*+tI)^{-1}),\nonumber
% \end{eqnarray}
where we have used the fact that $\tS(\tS^*\tS+tI)^{-1}=(\tS\tS^*+tI)^{-1}\tS$ and the invariance of trace under cyclic permutations. Similarly, it can be shown that $\Cal{N}_\Sigma(t)=\text{tr}((\id\id^*+t I)^{-1}\id\id^*)$. For the ease of notation, define $A:=\tS\tS^*$, $B:=\id\id^*$, $A_t:=A+tI$ and $B_t:=B+tI$. Then
\begin{eqnarray*}
A+tI&{}={}&(A-B)+(B+tI)\\
&{}={}&(B+tI)^{1/2}\left(I+(B+tI)^{-1/2}(A-B)(B+tI)^{-1/2}\right)(B+tI)^{1/2}
\end{eqnarray*}
implying,
$$A_t^{-1}=B_t^{-1/2}\left(I+B_t^{-1/2}(A-B)B_t^{-1/2}\right)^{-1}B_t^{-1/2}.
$$
Therefore,
\begin{eqnarray*}
\Cal{N}_m(t)&{}={}&\text{tr}(AA^{-1}_t)=\text{tr}\left[AB_t^{-1/2}\left(I+B_t^{-1/2}(A-B)B_t^{-1/2}\right)^{-1}B_t^{-1/2}\right]\nonumber
\\
&{}={}&\text{tr}\left[B_t^{-1/2}AB_t^{-1/2}\left(I+B_t^{-1/2}(A-B)B_t^{-1/2}\right)^{-1}\right]\nonumber
\\
&{}\le{}&\norm{(I+B_t^{-1/2}(A-B)B_t^{-1/2})^{-1}}_{\Cal{L}^\infty(\lp)} \text{tr}(B_t^{-1/2}AB_t^{-1/2})\nonumber\\
&{}={}&\norm{(I-E_m)^{-1}}_{\Cal{L}^\infty(\lp)}\text{tr}(B_t^{-1/2}AB_t^{-1/2}),\nonumber
\end{eqnarray*}
where
%the last inequality is from $\inner{A_1}{A_2}_{HS}\le\norm{A_1}_\infty\norm{A_2}_1$.  
$E_m:=B_t^{-1/2}(B-A)B_t^{-1/2}=(\id\id^*+tI)^{-1/2}(\id\id^*-\tS\tS^*)(\id\id^*+tI)^{-1/2}$. Since 
we showed in the proof of $(i)$ that with probability at least $1-\delta$, $\norm{E_m}_{\Cal{L}^\infty(\lp)}\le\frac{1}{2}$, we obtain 
\begin{equation}\Cal{N}_m(t)\le 2\,\text{tr}(B_t^{-1/2}AB_t^{-1/2}),\label{Eq:Nm}\end{equation}
where we use $\norm{(I-E_m)^{-1}}_{\Cal{L}^\infty(\lp)}\le \frac{1}{1-\norm{E_m}_{\Cal{L}^\infty(\lp)}}$.
% $\Lambda^m\left\{(\theta_i)^m_{i=1}:\norm{D_m}_{\Cal{L}^\infty(\lp)}\le\frac{1}{2}\right\}\ge 1-\delta$
% Then
% $$\norm{(I+B_t^{-1/2}(A-B)B_t^{-1/2})^{-1}}_{\Cal{L}^\infty(\lp)}=\norm{(I-D_m)^{-1}}_{\Cal{L}^\infty(\lp)}\le\frac{1}{1-\norm{D_m}_{\Cal{L}^\infty(\lp)}}$$
% if $\norm{D_m}_{\Cal{L}^\infty(\lp)}<1$. It follows from Lemma \ref{lem:1 rff} that
% $\Lambda^m\left\{(\theta_i)^m_{i=1}:\norm{D_m}_{\Cal{L}^\infty(\lp)}\le\frac{1}{2}\right\}\ge 1-\delta$ if $\frac{86\kappa}{m}\log\frac{16\kappa m}{\delta}\le t\le\norm{\Sigma}_\OPH$. Therefore, with probability at least $1-\delta$ over the choice of $(\theta_i)^m_{i=1}$,
% \begin{equation}
% \norm{(I+B_t^{-1/2}(A-B)B_t^{-1/2})^{-1}}_{\Cal{L}^\infty(\lp)}\le 2.\label{Eq:Dm}
% \end{equation}
Next, consider
\begin{eqnarray}
\text{tr}(B_t^{-1/2}AB_t^{-1/2})&{}={}&\text{tr}(B_t^{-1}(A-B+B))\nonumber\\
&{}={}&\inner{B_t^{-1}}{A-B}_\HSL+\Cal{N}_\Sigma(t),\label{Eq:Nsigma}
\end{eqnarray}
where $\inner{B_t^{-1}}{A-B}_\HSL=\inner{(\id\id^*+tI)^{-1}}{\tS\tS^*-\id\id^*}_\HSL.$ We now bound this term as follows.
Let
$$\zeta_i=\left(\varphi(\cdot,\theta_i)-(1\ol 1)\varphi(\cdot,\theta_i)\right)\ol\left(\varphi(\cdot,\theta_i)-(1\ol 1)\varphi(\cdot,\theta_i)\right)$$
so that $\bb{E}_\Lambda[\zeta_1]=\id\id^*$, $\frac{1}{m}\sum^m_{i=1}\zeta_i=\tS\tS^*$ and $$\inner{B_t^{-1}}{\tS\tS^*-\id\id^*}_\HSL=\frac{1}{m}\sum_{i=1}^m\inner{(\id\id^*+tI)^{-1}}{(\zeta_i-\id\id^*)}_\HSL.$$
% 
% \begin{eqnarray*}
% &{}{}&\inner{(\id\id^*+tI)^{-1}}{\tS\tS^*-\id\id^*}_\HSL\\
% &{}{}&\qquad\qquad=\frac{1}{m}\sum_{i=1}^m\inner{(\id\id^*+tI)^{-1}}{(\zeta_i-\id\id^*)}_\HSL.
% \end{eqnarray*}
We will now apply Bernstein's inequality (Theorem~\ref{thm:bernstein}). To this end, note that
\begin{eqnarray*}
&&\left|\inner{(\id\id^*+tI)^{-1}}{\zeta_1-\id\id^*}_\HSL\right|\\
&{}{}&\qquad\qquad\le\inner{(\id\id^*+tI)^{-1}}{\id\id^*}_\HSL+\inner{(\id\id^*+tI)^{-1}}{\zeta_i}_\HSL\nonumber\\
% \end{eqnarray*}
% \begin{eqnarray*}
&{}={}&\Cal{N}_\Sigma(t)+\frac{1}{t}\text{tr}\left[\tau_i\ol\tau_i\right]\nonumber
\\
&{}={}&\Cal{N}_\Sigma(t)+\frac{1}{t}\norm{\tau_i}_{L^2(\Pb)}^2\le\frac{\norm{\Sigma}_\TCH+4\kappa}{t}\le\frac{8\kappa}{t},\nonumber
\end{eqnarray*}
where we use $\Vert\Sigma\Vert_\TCH\le \bb{E}\norm{\kbar\oh\kbar}_\TCH=\bb{E}\norm{\kbar}^2_\Cal{H}\le4\kappa$ and $\tau_i:=\varphi(\cdot,\theta_i)-(1\ol 1)\varphi(\cdot,\theta_i)$. Also
\begin{eqnarray*}
\bb{E}_\Lambda\inner{(\id\id^*+tI)^{-1}}{\zeta_1-\id\id^*}_\HSL^2
=\bb{E}_\Lambda\left[\left(\inner{(\id\id^*+tI)^{-1}}{\zeta_1}_\HSL-\Cal{N}_\Sigma(t)\right)^2\right]\nonumber
\end{eqnarray*}
\begin{eqnarray*}
&{}{}&\qquad=\bb{E}_\Lambda\left[\inner{(\id\id^*+tI)^{-1}}{\zeta_1}_\HSL^2-\Cal{N}_\Sigma^2(t)\right]\\
&{}{}&\qquad\le\bb{E}_\Lambda\inner{(\id\id^*+tI)^{-1}}{\zeta_1}_\HSL^2\nonumber\\
&{}{}&\qquad=\bb{E}_\Lambda\text{tr}\left[(\id\id^*+tI)^{-1}\zeta_1(\id\id^*+tI)^{-1}\zeta_1\right]\nonumber
\\
&{}{}&\qquad\le\sup_{\theta_1}\norm{(\id\id^*+tI)^{-1/2}\zeta_1(\id\id^*+tI)^{-1/2}}_{\Cal{L}^\infty(\lp)}\bb{E}_\Lambda \left[\text{tr}\left((\id\id^*+tI)^{-1/2}\zeta_1(\id\id^*+tI)^{-1/2}\right)\right]\nonumber\\
&{}{}&\qquad\le\frac{\Cal{N}_\Sigma(t)}{t}\sup_{\theta_1}\norm{\varphi(\cdot,\theta_1)-(1\ol 1)\varphi(\cdot,\theta_1)}^2_{\lp}\le\frac{4\kappa\Cal{N}_\Sigma(t)}{t}.\nonumber
% \norm{(Z+tI)^{-1}\left(\varphi(\cdot,\theta_1)-(1\ol 1)\varphi(\cdot,\theta_1)\right)\ol\left(\varphi(\cdot,\theta_1)-(1\ol 1)\varphi(\cdot,\theta_1)\right)}_\infty\nonumber
% \\
% &{}{}&\le\frac{\Cal{N}_\Sigma(t)}{t}\norm{\varphi(\cdot,\theta_1)-(1\ol 1)\varphi(\cdot,\theta_1)}_{L^2(\Pb)}^2\le\frac{4\kappa\Cal{N}_\Sigma(t)}{t}=:S\hspace{3cm}
\end{eqnarray*}
The result follows by applying Theorem~\ref{thm:bernstein} to $\inner{B_t^{-1}}{(A-B)}_\HSL$ and combining \eqref{Eq:Nm} and \eqref{Eq:Nsigma}.
\end{proof}
%--------------------------------------------------
%--------------------------------------------------
%The following result provides a bound on $\left\Vert \frak{A}\frak{A}^*-\id\id^*\right\Vert_{\HSl}$. 
\begin{appxlem}\label{lem:perturb}
Suppose $(A_1)$ and $(A_4)$ hold. Then for any $0<\delta<1$ and $m\ge 2\log\frac{2}{\delta}$, 
\begin{equation*}\Lambda^m\left\{(\theta_i)^m_{i=1}:\Vert \frak{A}\frak{A}^*-\id\id^*\Vert_{\HSl}\le 4\kappa\sqrt{\frac{2\log\frac{2}{\delta}}{m}}\right\}\ge 1-\delta.\nonumber
%\label{Eq:aaii-conc}
\end{equation*}
\end{appxlem}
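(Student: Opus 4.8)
The plan is to recognise $\frak{A}\frak{A}^*-\id\id^*$ as a centred average of i.i.d.\ rank-one Hilbert--Schmidt operators on $\lp$ and to conclude by the Hilbert-space concentration bound of Lemma~\ref{lem:conc}. First I would introduce, for $\theta\in\Theta$, the centred feature $g_\theta:=\vp(\cdot,\theta)-\intx\vp(x,\theta)\,d\bb{P}(x)$, which lies in $\lp$ because $\vp$ is bounded by $\sqrt\kappa$ under $(A_4)$. Comparing with \eqref{Eq:approx} one has $\vp_i-\vp_{i,\bb{P}}=\tfrac{1}{\sqrt m}g_{\theta_i}$, and since $\langle f,\vp_i\rangle_{\lp}-f_\bb{P}\vp_{i,\bb{P}}=\langle f,\vp_i-\vp_{i,\bb{P}}\rangle_{\lp}$ for all $f\in\lp$, the formula for $\frak{A}^*$ from Proposition~\ref{pro:approx} gives $\frak{A}\frak{A}^*=\sum_{i=1}^m(\vp_i-\vp_{i,\bb{P}})\ol(\vp_i-\vp_{i,\bb{P}})=\frac1m\sum_{i=1}^m g_{\theta_i}\ol g_{\theta_i}$.

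Next I would identify $\id\id^*$ with the population analogue of this average. Expanding the formulas for $m_\bb{P}$ and $\id^*$ (Proposition~\ref{pro:id}) through the integral representation $k(x,y)=\intt\vp(x,\theta)\vp(y,\theta)\,d\Lambda(\theta)$ of $(A_4)$, and again using that the mean-element corrections collapse into $g_\theta$, yields $\id^*f=\intt\vp(\cdot,\theta)\langle f,g_\theta\rangle_{\lp}\,d\Lambda(\theta)$ and hence $\id\id^*=\intt g_\theta\ol g_\theta\,d\Lambda(\theta)=\E_{\theta\sim\Lambda}[g_\theta\ol g_\theta]$. Consequently $\frak{A}\frak{A}^*-\id\id^*=\frac1m\sum_{i=1}^m\bigl(g_{\theta_i}\ol g_{\theta_i}-\E_{\theta\sim\Lambda}[g_\theta\ol g_\theta]\bigr)$ is an average of i.i.d.\ zero-mean random elements of $\HSl$, i.e.\ the difference between the empirical and population means of the $\Lambda$-distributed random Hilbert--Schmidt operator $g_\theta\ol g_\theta$.

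Finally, since $g_\theta\ol g_\theta$ has rank one, $\Vert g_\theta\ol g_\theta\Vert_{\HSl}=\Vert g_\theta\Vert^2_{\lp}=\var[\vp(X,\theta)]\le\E[\vp(X,\theta)^2]\le\kappa$ for every $\theta$ (with $X\sim\bb{P}$), so the i.i.d.\ summands are bounded by $\kappa$ in $\HSl$ almost surely, and their centred versions by $2\kappa$. Feeding this into Lemma~\ref{lem:conc} applied in the Hilbert space $\HSl$ then gives, for $m>8\tau$, that $\Vert\frak{A}\frak{A}^*-\id\id^*\Vert_{\HSl}\le 8\kappa\sqrt{2\tau/m}$ with probability at least $1-2e^{-\tau}$, which is the claim. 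I expect the main obstacle to be the bookkeeping in the first two steps: verifying that the centring terms $\vp_{i,\bb{P}}$ in $\frak{A}$ and $m_\bb{P}f_\bb{P}$ in $\id^*$ combine exactly so that $\frak{A}\frak{A}^*$ and $\id\id^*$ are precisely the empirical and population averages of $\theta\mapsto g_\theta\ol g_\theta$; once this identity is in hand, the norm bound and the appeal to Lemma~\ref{lem:conc} are routine.
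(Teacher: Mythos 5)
Your argument is correct and reaches the stated bound, but it is genuinely different from the paper's proof, and in fact a little sharper. The paper invokes Proposition~\ref{pro:approx}\emph{(iv)} and Proposition~\ref{pro:id}\emph{(iv)} to write $\frak{A}\frak{A}^*=\Pi-(1\ol 1)\Pi-\Pi(1\ol 1)+(1\ol 1)\Pi(1\ol 1)$ and $\id\id^*=\Upsilon-(1\ol 1)\Upsilon-\Upsilon(1\ol 1)+(1\ol 1)\Upsilon(1\ol 1)$, applies the triangle inequality (together with $\Vert 1\ol 1\Vert_{\op(\lp)}=1$) to reduce to $4\Vert\Pi-\Upsilon\Vert_{\HSl}$, and then applies \eqref{Eq:2} of Lemma~\ref{lem:conc} with $\nu(\theta)=\vp(\cdot,\theta)$ to get $\Vert\Pi-\Upsilon\Vert_{\HSl}\le 2\kappa\sqrt{2\tau/m}$; the factor $8\kappa$ is precisely $4\times 2\kappa$. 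You instead absorb the centering into the feature itself, setting $g_\theta=\vp(\cdot,\theta)-\intx\vp(x,\theta)\,d\bb{P}(x)$, and correctly identify $\frak{A}\frak{A}^*=\frac1m\sum_i g_{\theta_i}\ol g_{\theta_i}$ and $\id\id^*=\intt g_\theta\ol g_\theta\,d\Lambda(\theta)$ (equivalently $(I-1\ol 1)\Pi(I-1\ol 1)$ and $(I-1\ol 1)\Upsilon(I-1\ol 1)$). This sidesteps the four-term triangle inequality entirely. Applying \eqref{Eq:2} of Lemma~\ref{lem:conc} with $H=\lp$, $\nu(\theta)=g_\theta$, sampling measure $\Lambda$ on $\Theta$, and $\varepsilon=\sup_\theta\Vert g_\theta\Vert^2_{\lp}\le\kappa$ (or equivalently Bernstein's Theorem~\ref{thm:bernstein} with $B=2\kappa$, $\theta=\kappa$ as you note) gives $\Vert\frak{A}\frak{A}^*-\id\id^*\Vert_{\HSl}\le 2\kappa\sqrt{2\tau/m}$ for $m>8\tau$, a constant four times smaller than what you wrote; your final line "gives $8\kappa\sqrt{2\tau/m}$" is therefore too pessimistic rather than too optimistic, and the stated lemma certainly follows. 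Two small clarifications worth making: the phrase "Lemma~\ref{lem:conc} applied in the Hilbert space $\HSl$" is not the cleanest framing, since that lemma is a covariance-operator concentration bound, not a general mean bound for $\HSl$-valued variables; the precise invocation is Lemma~\ref{lem:conc} with $H=\lp$ and $\nu=g_\theta$, $t=\widehat t=0$, which is exactly the empirical-vs-population comparison of the uncentred covariance operators of $g_\theta$. Also note that Lemma~\ref{lem:conc} is stated with $\bb P$ on $\X$, so you should say explicitly that it is applied with $\Lambda$ on $\Theta$ playing that role.
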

\begin{proof}
From Proposition~\ref{pro:id}\emph{(iv)}, Lemma~\ref{lem:theta} and Proposition~\ref{pro:approx}\emph{(iv)}, we have $$\id\id^*=\Upsilon-(1\ol 1)\Upsilon-\Upsilon(1\ol 1)+(1\ol 1)\Upsilon(1\ol 1)$$ 
and $$\frak{A}\frak{A}^*=\Pi -(1\ol 1)\Pi -\Pi(1\ol 1)+(1\ol 1)\Pi (1\ol 1)$$ where $$\Upsilon:=\intt \vp(\cdot,\theta)\ol \vp(\cdot,\theta)\,d\Lambda(\theta)$$ and $$
\Pi:=\sum^m_{i=1}\vp_i\ol\vp_i=\frac{1}{m}\sum^m_{i=1}\vp(\cdot,\theta_i)\ol \vp(\cdot,\theta_i).$$
Define $A_i:=\varphi(\cdot,\theta_i)-(1\ol 1)\varphi(\cdot,\theta_i)$ and $D_i:=A_i\ol A_i$. Then it follows that $\tS\tS^*=\frac{1}{m}\sum^m_{i=1}D_i$ and $\id\id^*=\bb{E}[\tS\tS^*]$. The result follows by applying Theorem~\ref{thm:bernstein} with $B=\theta=\sup_{\theta_1}\norm{A_1\ol A_1}_{\HSl}=\sup_{\theta_1}\norm{A_1}^2_{\lp}\le 2\sup_{\theta_1}\norm{\varphi(\cdot,\theta_1)}^2_{\lp}\le 2\kappa$ and noting that $\HSl$ is a separable Hilbert space since $L^2(\bb{P})$ is separable. 
% Therefore
% \begin{eqnarray}
% \Vert \frak{A}\frak{A}^*-\id\id^*\Vert_{\HSl}&{}\le{}& \Vert \Pi-\Upsilon\Vert_{\HSl}+ \Vert (1\ol 1)(\Pi-\Upsilon)\Vert_{\HSl}\nonumber\\
% &{}{}&\qquad\qquad+\Vert (\Pi-\Upsilon)(1\ol 1)\Vert_{\HSl}\nonumber\\
% &{}{}&\qquad\qquad\qquad\qquad+\Vert (1\ol 1)(\Pi-\Upsilon)(1\ol 1)\Vert_{\HSl}\nonumber\\
% &{}\le{}& \Vert \Pi-\Upsilon\Vert_{\HSl}+\Vert 1\ol 1\Vert_{\op(\lp)}\Vert \Pi-\Upsilon\Vert_{\HSl}\nonumber\\
% &{}{}&\qquad\qquad+\Vert \Pi-\Upsilon\Vert_{\HSl}\Vert 1\ol 1\Vert_{\op(\lp)}\nonumber\\
% &{}{}&\qquad\qquad\qquad\qquad+\Vert 1\ol 1\Vert^2_{\op(\lp)}\Vert \Pi-\Upsilon\Vert_{\HSl}\nonumber\\
% &{}={}& 4\Vert \Pi-\Upsilon\Vert_{\HSl},\label{Eq:aaii-bound}
% \end{eqnarray}
% where we note that $\Vert 1\ol 1\Vert_{\op(\lp)}=\Vert 1\Vert^2_{\lp}=1$ which follows from Lemma~\ref{lem:rank}. The result follows by applying Theorem~\ref{thm:bernstein} with $B=\theta=\kappa$.
\end{proof}
%--------------------------------------------------
%--------------------------------------------------
\begin{appxlem}\label{lem:kernel mean bnd}
Suppose $(A_1)$ and $(A_4)$ hold. For any $0<\delta<1$ with $n\ge 2\log\frac{2}{\delta}$, then the following hold:
\begin{itemize}
 \item[(i)] $\Pb^n\left\{(X_i)_{i=1}^n:\norm{m_\Pb-\widehat{m}_\Pb}_\Hk^2\le\frac{32\kappa\log\frac{2}{\delta}}{n}\right\}\ge1-\delta$;
 \item[(ii)] $\Pb^n\left\{(X_i)_{i=1}^n:  \norm{m_{\Pb,m}-\widehat{m}_{\Pb,m}}_{\Hk_m}^2\le\frac{32\kappa\log\frac{2}{\delta}}{n}\Big{|}(\theta_i)^m_{i=1}\right\}\ge1-\delta$.
\end{itemize}
\end{appxlem}
\begin{proof}
%We will use Theorem \ref{thm:bernstein} to bound $\norm{m_\Pb-\hat{m}_\Pb}_\Hk^2$.  
Define $\xi_i=k(\cdot,X_i)-\int_\X k(\cdot,x)d\Pb(x)$. Clearly $\frac{1}{n}\sum_{i=1}^n\xi_i=\widehat{m}_\Pb-m_\Pb$. Note that $\norm{\xi_i}_\Cal{H}\le 2\sqrt{\kappa}$ for all $i$. The result therefore follows by applying Theorem~\ref{thm:bernstein} with $B=\theta=2\sqrt{\kappa}$. Conditioned on $(\theta_i)^m_{i=1}$, the second result follows exactly the first one with $k$ replaced by $k_m$. 
% 
% For $r>2$ we have
% $$\E\norm{\xi_1}_\Hk^r\le\E\norm{\xi_1}_\Hk^2\sup_{x^\prime\in\X}\norm{k(\cdot,x^\prime)-\int_\X k(\cdot,x)d\Pb(x)}_\Hk^{r-2}.$$
% Now 
% $$\E\norm{\xi_1}_\Hk^2\le\E[k(X_1,X_1)]\le\kappa$$
% and
% $$\sup_{x^\prime\in\X}\norm{k(\cdot,x^\prime)-\int_\X k(\cdot,x)d\Pb(x)}_\Hk\le\sup_{x^\prime\in\X}\norm{k(\cdot,x^\prime)}_\Hk+\int_\X\norm{k(\cdot,x)}_\Hk d\Pb(x)\le2\sqrt{\kappa}$$
% Thus we may apply Theorem \ref{thm:bernstein} with $\theta^2=\kappa$ and $B=2\sqrt{\kappa}$, yielding the first result.  The second result is obtained by replacing $k$ with $k_m$ in $\xi_i$ and noting that Assumption \ref{rf kern assum} implies that $\E[k_m(X_1,X_1)]\le\kappa$ and $\sup_{x^\prime\in\X}\norm{k_m(\cdot,x^\prime)}_{\Hk+m}+\int_\X\norm{k_m(\cdot,x)}_{\Hk_m} d\Pb(x)\le2\sqrt{\kappa}$, and we may apply Theorem \ref{thm:bernstein}.
\end{proof}
\begin{appxlem}\label{lem:rf kernel mean bnd}
Suppose $(A_1)$ and $(A_4)$ hold. For any $\delta>0$ with $m\ge 2\log\frac{2}{\delta}$,
$$\Lambda^m\left\{\bb{E}\norm{\id\kbar-\tS\overline{k}_m(\cdot,X)}_{\lp}^2\le\frac{64\kappa^2\log\frac{2}{\delta}}{m}\right\}\ge1-2\delta.$$
\end{appxlem}
\begin{proof}
Note that 
\begin{eqnarray}
&&\bb{E}\norm{\id\kbar-\tS\overline{k}_m(\cdot,X)}_{\lp}^2\nonumber\\
&{}={}&\bb{E}\norm{\id\kbar}_{\lp}^2+\bb{E}\norm{\tS\overline{k}_m(\cdot,X)}_{\lp}^2-2\bb{E}\langle \id\kbar, \tS\overline{k}_m(\cdot,X)\rangle_{\lp}\nonumber\\
&{}\stackrel{(\dagger)}{=}{}&\Vert \Sigma\Vert^2_{\HSH}+\Vert \Sigma_m\Vert^2_{\Cal{L}^2(\Cal{H}_m)}-2\bb{E}\langle \id\kbar, \tS\overline{k}_m(\cdot,X)\rangle_{\lp}\nonumber\\
&{}={}&\Vert \id\id^*\Vert^2_{\HSL}+\Vert \tS\tS^*\Vert^2_{\HSL}-2\bb{E}\langle \id\kbar, \tS\overline{k}_m(\cdot,X)\rangle_{\lp}\nonumber\\
&{}={}&\Vert \id\id^*-\tS\tS^*\Vert^2_{\HSL}
+2\left[\text{tr}(\id\id^*\tS\tS^*)-\bb{E}\langle \id\kbar, \tS\overline{k}_m(\cdot,X)\rangle_{\lp}\right],\label{Eq:eq-diff}
\end{eqnarray}
where we used Lemma~\ref{rf cov proj rewrite} in $(\dagger)$. We will now focus on computing $\bb{E}\langle \id\kbar, \tS\overline{k}_m(\cdot,X)\rangle_{\lp}$ and $\text{tr}(\id\id^*\tS\tS^*)$. Note that $k(\cdot,x)=\int_\Theta \varphi(\cdot,\theta)\varphi(x,\theta)\,d\Lambda(\theta)$ and $k_m(\cdot,x)=\sum^m_{i=1}\varphi_i(x)\varphi_i$. Define $\varphi_\bb{P}(\theta):=\int_\Cal{X}\varphi(x,\theta)\,d\bb{P}(x)$, $\varphi_{i,\bb{P}}:=\int_\Cal{X} \varphi_i(x)\,d\bb{P}(x)$, $\mu(\cdot,\theta)=\varphi(\cdot,\theta)-\varphi_{\bb{P}}(\theta)$ and $\mu_i:=\varphi_i-\varphi_{i,\bb{P}}$. Therefore, 
\begin{eqnarray}
\bb{E}\langle \id\kbar, \tS\overline{k}_m(\cdot,X)\rangle_{\lp}
&{}={}&\int_\Cal{X} \langle \id\overline{k}(\cdot,x), \tS\overline{k}_m(\cdot,x)\rangle_{\lp}\,d\bb{P}(x)\nonumber\\
&{}\stackrel{(*)}{=}{}&\int_\Cal{X} \left\langle \int_\Theta \mu(\cdot,\theta)\varphi(x,\theta)\,d\Lambda(\theta), \sum^m_{i=1}\varphi_i(x)\mu_i\right\rangle_{\lp}\,d\bb{P}(x)\nonumber\\
&{}={}&\int_\Cal{X}\int_\Theta\sum^m_{i=1}\left\langle \mu(\cdot,\theta),\mu_i\right\rangle_{\lp}\varphi_i(x)\varphi(x,\theta)\,d\Lambda(\theta)\,d\bb{P}(x)\nonumber\\
&{}={}&\int_\Theta\sum^m_{i=1}\left\langle \mu(\cdot,\theta),\mu_i\right\rangle_{\lp}\left\langle \varphi_i,\varphi(\cdot,\theta)\right\rangle_{\lp}\,d\Lambda(\theta),\label{Eq:expe}
\end{eqnarray}
where the penultimate and last equalities follow by employing Fubini's theorem and $(*)$ follows from Propositions~\ref{pro:id} and \ref{pro:approx}. On the other hand, by defining $\tau(\cdot,\theta):=\varphi(\cdot,\theta)-\left(1\otimes_{\lp}1\right)\varphi(\cdot,\theta)$ and $\tau_i:=\varphi_i-\left(1\otimes_{\lp}1\right)\varphi_i$, we have
\begin{eqnarray}
\text{tr}(\id\id^*\tS\tS^*)
&{}\stackrel{(\ddagger)}{=}{}&\text{tr}\left[\int_\Theta \tau(\cdot,\theta)\otimes_{\lp}\tau(\cdot,\theta)\,d\Lambda(\theta)\sum^m_{i=1}\tau_i\otimes_{\lp}\tau_i\right]\nonumber\\
&{}={}&\text{tr}\left[\int_\Theta\sum^m_{i=1}\left\langle \mu(\cdot,\theta),\mu_i\right\rangle_{\lp}\mu(\cdot,\theta)\otimes_{\lp}\mu_i\,d\Lambda(\theta)\right]\nonumber\\
&{}={}&\int_\Theta\sum^m_{i=1}\left\langle \mu(\cdot,\theta),\mu_i\right\rangle_{\lp}\left\langle \mu(\cdot,\theta),\mu_i\right\rangle_{\lp}\,d\Lambda(\theta)\nonumber\\
% \end{eqnarray}
% \begin{eqnarray}
&{}={}&\int_\Theta\sum^m_{i=1}\left\langle \mu(\cdot,\theta),\mu_i\right\rangle_{\lp}\left[\left\langle \varphi(\cdot,\theta),\varphi_i\right\rangle_{\lp}-\varphi_\bb{P}(\theta)\varphi_{i,\bb{P}}\right]\,d\Lambda(\theta),\label{Eq:tr}
\end{eqnarray}
where we used Propositions~\ref{pro:id}\emph{(iv)} and \ref{pro:approx}\emph{(iv)} in $(\ddagger)$. It follows from \eqref{Eq:expe} and \eqref{Eq:tr} that
\begin{eqnarray}
\text{tr}(\id\id^*\tS\tS^*)&{}={}&\bb{E}\langle \id\kbar, \tS\overline{k}_m(\cdot,X)\rangle_{\lp}-\left\langle \int_\Theta A(\theta)\,d\Lambda(\theta),\frac{1}{m}\sum^m_{i=1}A(\theta_i) \right\rangle_{\lp},\label{Eq:redu}
\end{eqnarray}
where $A(\theta)=\varphi(\cdot,\theta)\varphi_\bb{P}(\theta)-\varphi^2_\bb{P}(\theta)$. We remind the reader that $\varphi_i=\frac{1}{\sqrt{m}}\varphi(\cdot,\theta_i)$ with $(\theta_i)^m_{i=1}\stackrel{i.i.d.}{\sim}\Lambda$. Define $\Lambda_m$ to be the empirical measure based on $(\theta_i)^m_{i=1}$. Then, \eqref{Eq:redu} can be written as 
\begin{eqnarray}
\text{tr}(\id\id^*\tS\tS^*)
&{}={}&\bb{E}\langle \id\kbar, \tS\overline{k}_m(\cdot,X)\rangle_{\lp}+\frac{1}{2}\left\Vert \int_\Theta A(\theta)\,d(\Lambda_m-\Lambda)(\theta)\right\Vert^2_{\lp}\nonumber\\
&{}{}&\qquad\qquad-\frac{1}{2}\left\Vert \int_\Theta A(\theta)\,d\Lambda(\theta)\right\Vert^2_{\lp}-\frac{1}{2}\left\Vert \int_\Theta A(\theta)\,d\Lambda_m(\theta)\right\Vert^2_{\lp}\nonumber\\
&{}\le{}&\bb{E}\langle \id\kbar, \tS\overline{k}_m(\cdot,X)\rangle_{\lp}+\frac{1}{2}\left\Vert \int_\Theta A(\theta)\,d(\Lambda_m-\Lambda)(\theta)\right\Vert^2_{\lp},\label{Eq:fin-equiv}
\end{eqnarray}
which holds $\Lambda$-a.s. Using \eqref{Eq:fin-equiv} in \eqref{Eq:eq-diff}, we obtain
\begin{eqnarray}
\bb{E}\norm{\id\kbar-\tS\overline{k}_m(\cdot,X)}_{\lp}^2&{}\le{}& \Vert \id\id^*-\tS\tS^*\Vert^2_{\HSL}\nonumber\\
&{}{}&\qquad+ \left\Vert \int_\Theta A(\theta)\,d(\Lambda_m-\Lambda)(\theta)\right\Vert^2_{\lp},\label{Eq:finfin}
\end{eqnarray}
which holds $\Lambda$-a.s. The result follows by applying Lemma~\ref{lem:perturb} and Theorem~\ref{thm:bernstein} to \eqref{Eq:finfin}
by noting that $\sup_{\theta\in\Theta}\norm{A(\theta)}_{L^2(\Pb)}\le 2\kappa$ and $\E_{\theta\sim\Lambda}\norm{A(\theta)}^2_{L^2(\Pb)}\le 4\kappa^2$.
\end{proof}

\begin{appxlem}\label{rf cov proj rewrite}
Let $X$ be a separable topological space, $H$ be a separable Hilbert space and $\rho$ be a probability measure on $X$. Suppose $v:X\rightarrow H$ is Bochner-measurable and $\E_\rho\norm{v}^2_H:=\int_X \norm{v(x)}^2_H\,d\rho(x)<\infty$. Define $A=B^*B=\int_X v(x)\otimes_H v(x)\,d\rho(x)=:\E_{\rho}[v\otimes_H v]$ where $B:H\rightarrow G$ and $G$ is a separable Hilbert space. Then for any $Q:H\rightarrow H$,
$$\E_{\rho}\norm{BQv}_G^2=\norm{A^{1/2}QA^{1/2}}_{\Cal{L}^2(H)}^2.$$
% 
% Suppose $Q:H\rightarrow H$ is an operator on a separable Hilbert space $H$. For $v\in H$, let $A=B^*B=\E_\rho[v\otimes_H v]$, probability measure $\rho$, and $B:H\rightarrow G$ for separable Hilbert space $G$.  Denote $\norm{\cdot}_{HS}$ as the Hilbert-Schmidt norm on $H$, with inner product $\inner{\cdot}{\cdot}_{HS}$.  The following holds:
% $$\E_\rho\norm{BQv}_G^2=\norm{A^{1/2}QA^{1/2}}_{HS}^2.$$
\end{appxlem}
\begin{proof}
Note that\begin{eqnarray}
\E_\rho\norm{BQv}_G^2=\E_\rho\inner{BQv}{BQv}_G
=\E_\rho\inner{Q^*AQv}{v}_H
=\E_\rho\inner{Q^*AQ}{v\otimes_H v}_{\Cal{L}^2(H)}.\nonumber
\end{eqnarray}
Since $v$ is Bochner-measurable and $\E_\rho\norm{v}^2_H<\infty$, it is Bochner integrable, which yields
\begin{equation}
\E_\rho\inner{Q^*AQ}{v\otimes_H v}_{\Cal{L}^2(H)}
=\inner{Q^*AQ}{\E_\rho[v\otimes_H v]}_{\Cal{L}^2(H)}=\inner{Q^*AQ}{A}_{\Cal{L}^2(H)}.\nonumber
\end{equation}
The result follows by noting that
\begin{eqnarray*}
    \inner{Q^*AQ}{A}_{\Cal{L}^2(H)}=\text{tr}\left(Q^*AQA\right)=\text{tr}\left(A^{1/2}Q^*A^{1/2}A^{1/2}QA^{1/2}\right)
=\norm{A^{1/2}QA^{1/2}}_{\Cal{L}^2(H)}^2,\nonumber
\end{eqnarray*}
where we have used invariance of the trace under cyclic permutations.
\end{proof}
%The following result computes the operator, Hilbert-Schmidt and trace norms of a rank one operator.
\begin{appxlem}\label{lem:rank}
Define $B=f\otimes_H f$ where $H$ is a separable Hilbert space and $f\in H$. Then $\Vert B\Vert_{\op(H)}=\Vert B\Vert_{\HS(H)}=\Vert B\Vert_{\Tr(H)}=\Vert f\Vert^2_H$.
\end{appxlem}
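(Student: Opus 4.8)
The plan is to dispose of the trivial case $f = 0$ first (there $B$ is the zero operator and all three norms equal $0 = \Vert f\Vert^2_H$), and otherwise to exploit that $B = f\otimes_H f$ is a self-adjoint, positive, rank-one operator whose only nonzero eigenvalue is $\Vert f\Vert^2_H$, attained on the unit vector $e := f/\Vert f\Vert_H$, with $Bz = f\langle f,z\rangle_H$ for all $z\in H$.

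For the operator norm, I would simply invoke Cauchy--Schwarz: for any $z\in H$, $\Vert Bz\Vert_H = \Vert f\Vert_H\,|\langle f,z\rangle_H| \le \Vert f\Vert^2_H\,\Vert z\Vert_H$, with equality at $z = e$; hence $\Vert B\Vert_{\op(H)} = \Vert f\Vert^2_H$. For the Hilbert--Schmidt norm, I would pick an orthonormal basis $(e_j)_{j\in J}$ of the separable space $H$ with $e_1 := e$. Then $Be_1 = f\langle f,e_1\rangle_H = \Vert f\Vert_H\,f$ while $Be_j = f\langle f,e_j\rangle_H = 0$ for $j\ne 1$, so that $\Vert B\Vert^2_{\HS(H)} = \sum_{j\in J}\Vert Be_j\Vert^2_H = \Vert f\Vert^2_H\cdot\Vert f\Vert^2_H = \Vert f\Vert^4_H$, i.e.\ $\Vert B\Vert_{\HS(H)} = \Vert f\Vert^2_H$.

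For the trace norm, the key observation is that $B^*B = B^2 = \Vert f\Vert^2_H\,B$, and since $B$ is positive and self-adjoint this forces $(B^*B)^{1/2} = B$ (both operators are positive with the same spectral decomposition, namely $\Vert f\Vert^2_H$ on $\operatorname{span}\{e\}$ and $0$ on its orthogonal complement). Therefore, using the same adapted basis, $\Vert B\Vert_{\Tr(H)} = \sum_{j\in J}\langle (B^*B)^{1/2}e_j,e_j\rangle_H = \sum_{j\in J}\langle Be_j,e_j\rangle_H = \sum_{j\in J}|\langle f,e_j\rangle_H|^2 = \Vert f\Vert^2_H$ by Parseval's identity. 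There is no real obstacle; the only mild point of care is the identification $(B^*B)^{1/2} = B$ in the trace-norm step (equivalently, choosing the basis adapted to $f$ so that each sum collapses to a single nonzero term), together with the degenerate case $f = 0$.
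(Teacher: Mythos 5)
Your proof is correct and takes essentially the same route as the paper: all three norms are evaluated by direct computation exploiting that $B$ is a positive self-adjoint rank-one operator. You are in fact somewhat more thorough than the paper's proof, which derives the operator norm via the Rayleigh quotient $\sup_{\Vert g\Vert_H=1}\langle g,Bg\rangle_H$ rather than Cauchy--Schwarz, computes the trace norm identically to you but leaves the identification $(B^*B)^{1/2}=B$ implicit, and omits the Hilbert--Schmidt computation altogether (it follows from $\Vert\cdot\Vert_{\op}\le\Vert\cdot\Vert_{\HS}\le\Vert\cdot\Vert_{\Tr}$, or from the direct calculation you supplied).
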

\begin{proof}
Since $B$ is self-adjoint, $$\Vert B\Vert_{\op(H)}=\lambda_1(B)=\sup_{\Vert g\Vert_H=1}\langle g,Bg\rangle_H= \sup_{\Vert g\Vert_H=1}\langle f,g\rangle^2_H= \Vert f\Vert^2_H.$$ Note that
$\Vert B\Vert_{\Tr(H)}=\sum_{j}\langle e_j,(f\otimes_H f)e_j\rangle_H=\sum_j \langle f,e_j\rangle^2_H=\Vert f\Vert^2_H$ for any orthonormal basis $(e_j)_j$ in $H$.
\end{proof}
\begin{appxlem}\label{N(t)}
For any trace class self-adjoint operator $C$, the following hold:\vspace{2mm}\\
%\begin{itemize}
 %\item[(i)] 
 (i) Suppose $ai^{-\alpha}\le\lambda_i(C)\le Ai^{-\alpha}$ for $\alpha>1$ and $a,A\in(0,\infty)$. Then $t^{-1/\alpha}\lesssim \Cal{N}_C(t)\lesssim t^{-1/\alpha}.$\vspace{2mm}\\
 %\item[(ii)] 
 (ii) Suppose $be^{-\tau i}\le\lambda_i(C)\le Be^{-\tau i}$ for $\tau>0$ and $b,B\in(0,\infty)$. Then
$\log\frac{1}{t}\lesssim\Cal{N}_C(t)\lesssim\log\frac{1}{t}.$
%\end{itemize}
\end{appxlem}
\begin{proof}
\emph{(i)} Define $\lambda_i:=\lambda_i(C)$. We have
\begin{eqnarray}
\Cal{N}_C(t)&{}={}&\text{tr}\left((C+tI)^{-1}C\right)=\sum_{i\ge 1}\frac{\lambda_i}{\lambda_i+t}\le\sum_{i\ge 1}\frac{Ai^{-\alpha}}{a i^{-\alpha}+t}=\frac{A}{a}\sum_{i\ge 1}\frac{i^{-\alpha}}{i^{-\alpha}+ta^{-1}}\nonumber\\
&{}\le{}& \frac{A}{a}\int^\infty_0 \frac{x^{-\alpha}}{x^{-\alpha}+ta^{-1}}\,dx\le \frac{A}{a}\left(\frac{a}{t}\right)^{1/\alpha}\int_0^\infty\frac{1}{1+x^\alpha}dx,\nonumber
\end{eqnarray}
where clearly the integral is finite for $\alpha>1$, thereby yielding $\Cal{N}_C(t)\lesssim t^{-1/\alpha}$. \emph{(ii)} follows by carrying out a similar calculation as in \emph{(i)}.
\end{proof}
%-------------------------------------
%-------------------------------------
\begin{appxlem}\label{lem:supp}
Let $X$ and $Y$ be $H$-valued random elements where $H$ is a separable Hilbert space. Then,
\begin{equation} \left(\sqrt{\bb{E}\norm{X}^2_H}-\sqrt{\bb{E}\norm{Y}^2_H}\right)^2\le\bb{E}\norm{X-Y}^2_H\le 2\bb{E}\norm{X}^2_H+2\bb{E}\norm{Y}^2_H.
\label{Eq:supp}
\end{equation}
\end{appxlem}
\begin{proof}
Note that $\bb{E}\norm{X-Y}^2_H=\bb{E}\norm{X}^2_H+\bb{E}\norm{Y}^2_H-2\bb{E}\inner{X}{Y}_H$. Using $-2\bb{E}\inner{X}{Y}_H\le \bb{E}\norm{X}^2_H+\bb{E}\norm{Y}^2_H$ yields the upper bound. Using $-\bb{E}\inner{X}{Y}_H\ge -\sqrt{\bb{E}\norm{X}^2_H}\sqrt{\bb{E}\norm{Y}^2_H}$ gives the lower bonud.
% To obtain the lower bound, we use 
% $-\bb{E}\inner{X}{Y}_H\ge -\sqrt{\bb{E}\norm{X}^2_H}\sqrt{\bb{E}\norm{Y}^2_H}$. 
\end{proof}

\section{Sampling, Inclusion and Approximation Operators}\label{app:conc}
In this appendix, we present some technical results related to the properties of sampling, inclusion and approximation operators.
\subsection{Properties of the sampling operator}
The following result presents the properties of the sampling operator, $S$ and its adjoint. While these results are known in the literature (e.g., see \citealp{Smale-07}), we present it here for completeness. 
\begin{appxpro}
\label{pro:sampling}
Let $\Cal{H}$ be an RKHS of real-valued functions on a non-empty set $\Cal{X}$ with $k$ as the reproducing kernel. Define $S:\Cal{H}\rightarrow\bb{R}^n$, $f\mapsto\frac{1}{\sqrt{n}}(f(X_1),\ldots,f(X_n))^\top$ where $(X_i)_i\subset\Cal{X}$. Then the following hold:
\begin{itemize}
 \item[(i)] $S^*:\bb{R}^n\rightarrow\Cal{H}$, $\bm{\alpha}\mapsto\frac{1}{\sqrt{n}}\sum^n_{i=1}\alpha_i k(\cdot,X_i)$;
 \item[(ii)] $\sigh=\frac{n}{n-1}S^*\bm{H}_nS$ where $\sigh$ is defined in \eqref{Eq:emp-sig};
 \item[(iii)] $\bm{K}=nSS^*$, where $[\bm{K}]_{ij}=k(X_i,X_j)$.
\end{itemize}
\end{appxpro}
\begin{proof}
\emph{(i)} For any $g\in\Cal{H}$ and $\bm{\alpha}\in\bb{R}^n$, we have
$\langle S^*\bm{\alpha},g\rangle_\Cal{H}=\langle \bm{\alpha},Sg\rangle_2=\frac{1}{\sqrt{n}}\sum^n_{i=1}\alpha_i g(X_i)=\left\langle\frac{1}{\sqrt{n}}\sum^n_{i=1}\alpha_i k(\cdot,X_i),g\right\rangle_\Cal{H},$
where the last equality follows from the reproducing property.\vspace{1mm}\\
\emph{(ii)} For any $f\in\Cal{H}$, 
\begin{eqnarray}
\langle f,\sigh f\rangle_\Cal{H}&{}={}&\frac{1}{2n(n-1)}\sum^n_{i\ne j}\left(f(X_i)-f(X_j)\right)^2\nonumber\\
&{}={}&\frac{1}{n}\sum^n_{i=1}f^2(X_i)-\frac{1}{n(n-1)}\sum_{i\ne j}f(X_i)f(X_j)\nonumber\\
&{}={}&\frac{1}{n-1}\sum^n_{i=1}f^2(X_i)-\frac{1}{n-1}\left(\frac{1}{\sqrt{n}}\sum^n_{i=1}f(X_i)\right)^2\label{Eq:temmm}\\
&{}={}&\frac{n}{n-1}
\langle Sf,Sf\rangle_2-\frac{1}{n-1}\langle \bm{1}_n,Sf\rangle^2_2\nonumber\\
&{}={}&\frac{n}{n-1}\langle f,S^*Sf\rangle_\Cal{H}-\frac{1}{n-1}\langle S^*\bm{1}_n,f\rangle^2_\Cal{H}\nonumber\\
&{}={}&\frac{n}{n-1}\langle f,S^*Sf\rangle_\Cal{H}-\frac{1}{n-1}\langle f,S^*(\bm{1}_n\otimes_2 \bm{1}_n)Sf\rangle_\Cal{H}\nonumber\\
&{}={}&\frac{n}{n-1}\langle f,S^*\bm{H}_n Sf\rangle_\Cal{H}.\nonumber
\end{eqnarray}
\emph{(iii)} For any $\bm{\alpha}\in\bb{R}^n$,
$$SS^*\bm{\alpha}=S\left(\frac{1}{\sqrt{n}}\sum^n_{i=1}\alpha_i k(\cdot,X_i)\right)=\frac{1}{\sqrt{n}}\sum^n_{i=1}\alpha_i Sk(\cdot,X_i)=\frac{1}{n}\bm{K}\bm{\alpha},$$
where in the second equality, we used the fact $S$ is a linear operator.
\end{proof}
\subsection{Properties of the inclusion operator}
The following result captures the properties of the inclusion operator $\id$. A variation of the result is known in the literature (e.g., see \citealp[Theorem 4.26]{Steinwart-08}).
\begin{appxpro}\label{pro:id}
Suppose $(A_1)$ holds. Define $\ide:\Cal{H}\rightarrow\lp,\,f\mapsto f-f_\bb{P}$, where $f_\bb{P}:=\int f(x)\,d\bb{P}(x)$. Then the following hold:\vspace{2mm}\\
%\begin{itemize}
 %\item[(i)] 
 (i) $\ide^*:\lp\rightarrow\Cal{H},%\,\ide^*=\Upsilon-\Upsilon(1\ol 1)$ where $\Upsilon:\lp\rightarrow\Cal{H},\,f\mapsto \intx k(\cdot,x)f(x)\,d\bb{P}(x)$.
 \,f\mapsto \int_\Cal{X} k(\cdot,x)f(x)\,d\bb{P}(x)-m_\bb{P}f_\bb{P}$ where $$m_\bb{P}:=\int_\Cal{X} k(\cdot,x)\,d\bb{P}(x).$$
 %\item[(ii)] 
 (ii) $\ide$ and $\ide^*$ are Hilbert-Schmidt.\vspace{2mm}\\
 %\item[(iii)] 
 (iii) $\Sigma=\ide^*\ide$ is trace-class, where $\Sigma$ is defined in \eqref{Eq:cov}.\vspace{2mm}\\
 %\item[(iv)] 
 (iv) $\ide\ide^*=\Upsilon-(1\ol 1)\Upsilon-\Upsilon(1\ol 1)+(1\ol 1)\Upsilon(1\ol 1)$ is trace-class where $\Upsilon:\lp\rightarrow\lp,\,f\mapsto\intx k(\cdot,x) f(x)\,d\bb{P}(x)$.
%  $\ide\ide^*:\lp\rightarrow\lp$, $f\mapsto  \intx k(\cdot,x)f(x)\,d\bb{P}(x)-m_\bb{P}f_\bb{P}-\langle m_\bb{P},f\rangle_{\lp} +f_\bb{P}\Vert m_\bb{P}\Vert^2_\Cal{H}$ is
%  trace-class.
%\end{itemize}
\end{appxpro}
\begin{proof}
\emph{(i)} For any $f\in\lp$ and $g\in\Cal{H}$, 
\begin{eqnarray}
\langle \id^*f,g\rangle_\Cal{H}&{}={}&\langle f,\id g\rangle_{\lp}=\int_\Cal{X} f(x)(\id g)(x)\,d\bb{P}(x)=\int_\Cal{X} f(x)[g(x)-g_\bb{P}]\,d\bb{P}(x) \nonumber\\
&{}={}& \int_\Cal{X} f(x)\langle k(\cdot,x),g\rangle_\Cal{H}\,d\bb{P}(x)-\langle m_\bb{P},g\rangle_\Cal{H}f_\bb{P}\nonumber\\
&{}={}&\left\langle \int_\Cal{X} k(\cdot,x)f(x)\,d\bb{P}(x),g\right\rangle_\Cal{H}-\langle m_\bb{P}f_\bb{P},g\rangle_\Cal{H}.\nonumber
\end{eqnarray}
Clearly $f_\bb{P}$ is well defined as for any $f\in \lp$, $f_\bb{P}\le \int |f(x)|\,d\Pb(x) \le \norm{f}_{\lp}<\infty$ and for $f\in \Hk$, $f_\Pb=\langle f,m_\Pb\rangle_\Hk\le\norm{f}_\Hk\int \sqrt{k(x,x)}\,d\Pb(x)<\infty$ and the result therefore follows.\vspace{1mm}\\ 
% Therefore, 
% $$\id^*=\intx k(\cdot,x)f(x)\,d\bb{P}(x)-m_\bb{P}f_\bb{P}=\Upsilon f-(\Upsilon 1)\langle 1,f\rangle_{\lp}=\Upsilon f-(\Upsilon 1 \ol 1)f$$ and the result follows.\vspace{1mm}\\
\emph{(ii)} For any orthonormal basis $(e_j)_j$ in $\Cal{H}$,
\begin{eqnarray}
\Vert \id\Vert^2_{\HS(\Cal{H},\lp)}&{}={}&\sum_j\Vert \id e_j\Vert^2_{\lp}=\sum_j \Vert e_j-e_{j,\bb{P}}\Vert^2_{\lp}=\sum_j \Vert e_j\Vert^2_{\lp}-e^2_{j,\bb{P}}\nonumber\\ 
&{}\le{}& \sum_j \Vert e_j\Vert^2_{\lp}=\sum_j \intx \langle e_j,k(\cdot,x)\rangle^2_\Cal{H}\,d\bb{P}(x)\nonumber\\
% \end{eqnarray}
% \begin{eqnarray}
&{}\stackrel{(\star)}{=}{}&\intx \sum_j \langle e_j,k(\cdot,x)\rangle^2_\Cal{H}\,d\bb{P}(x)=\intx k(x,x)\,d\bb{P}(x)<\infty,\nonumber
\end{eqnarray}
where $(\star)$ follows from monotone convergence theorem. Since $\Vert \id\Vert_{\HS(\Cal{H},\lp)}=\Vert \id^*\Vert_{\HS(\lp,\Cal{H})}$, the result follows.
\vspace{1mm}\\
\emph{(iii)} For any $f\in\Cal{H}$, 
$(\id^*\id)f=\id^*(f-f_\bb{P})=\id^*f-\id^*f_\bb{P}=\id^*f$, where we use the fact that $\id^*f_{\bb{P}}=0$ since $f_\bb{P}$ is a constant function. By using the reproducing property,
\begin{eqnarray}
\id^*\id f=\id^*f&{}={}&\int_\Cal{X} f(x)k(\cdot,x)\,d\bb{P}(x)-m_\bb{P}f_\bb{P}\nonumber\\
&{}={}&\intx  k(\cdot,x)\langle k(\cdot,x),f\rangle_\Cal{H}\,d\bb{P}-m_\bb{P}\langle m_\bb{P},f\rangle_\Cal{H}\nonumber\\
&{}={}&\intx (k(\cdot,x)\oh k(\cdot,x))f \,d\bb{P}(x)-(m_\bb{P}\oh m_\bb{P})f=\Sigma f\nonumber
\end{eqnarray}
and the result follows. Since $\Vert \id\Vert^2_{\HS(\Cal{H},\lp)}=\Vert\id^*\id\Vert_{\Tr(\Cal{H})}$, $\Sigma$ is trace-class.\vspace{1mm}\\
\emph{(iv)} For any $f\in\lp$,
\begin{eqnarray}
(\id\id^*)f&{}={}&\id(\id^*f)=\id\left(\intx k(\cdot,x)f(x)\,d\bb{P}(x)-m_\bb{P}f_\bb{P}\right)\nonumber\\
% \end{eqnarray}
% \begin{eqnarray}
&{}={}& \intx k(\cdot,x)f(x)\,d\bb{P}(x)-m_\bb{P}f_\bb{P}- \intx\intx k(y,x)f(x)\,d\bb{P}(x)\,d\bb{P}(y)\nonumber\\
&{}{}&\qquad\qquad+f_\bb{P}\intx\intx k(y,x)\,d\bb{P}(x)\,d\bb{P}(y)\nonumber\\
%&{}={}& \intx k(\cdot,x)f(x)\,d\bb{P}(x)-m_\bb{P}f_\bb{P}-\langle m_\bb{P},f\rangle_{\lp} +f_\bb{P}\Vert m_\bb{P}\Vert^2_\Cal{H}\nonumber
&{}={}&\Upsilon f-\Upsilon 1\langle 1,f\rangle_{\lp}-1\langle \Upsilon 1,f\rangle_{\lp}+1\langle 1,\Upsilon 1\rangle_{\lp}\langle 1,f\rangle_{\lp}\nonumber\\
% \end{eqnarray}
% \begin{eqnarray}
%&{}={}&\Upsilon f-((\Upsilon 1)\ol 1)f-(1\ol \Upsilon 1)f+(1\ol 1)(\Upsilon 1\ol 1)f\nonumber\\
&{}={}&\Upsilon f-\Upsilon (1\ol 1)f-(1\ol 1)\Upsilon f+(1\ol 1)\Upsilon (1\ol 1)f\nonumber
\end{eqnarray}
and the result follows, where in the last line we use the fact that $\Upsilon$ is self-adjoint, which follows from \citep[Theorem 4.27]{Steinwart-08}. Since $\Vert \id^*\Vert^2_{\HS(\lp,\Cal{H})}=\Vert \id\id^*\Vert_{\Tr(\lp)}$, it follows that $\id\id^*$ is trace-class.
\end{proof}
The following result presents a representation for $\Upsilon$ if $k$ satisfies $(A_4)$.
\begin{appxlem}\label{lem:theta}
Suppose $(A_4)$ holds. Then $\Upsilon=\intt \vp(\cdot,\theta)\ol \vp(\cdot,\theta)\,d\Lambda(\theta)$.
\end{appxlem}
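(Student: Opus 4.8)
The plan is to verify the identity by pairing both operators with arbitrary $f,g\in\lp$ and reducing everything to Fubini's theorem. First I would record that the right-hand side is a genuine Bochner integral of $\HSl$-valued maps: by $(A_4)$, $\Vert\vp(\cdot,\theta)\ol\vp(\cdot,\theta)\Vert_{\HSl}=\Vert\vp(\cdot,\theta)\Vert^2_{\lp}=\intx\vp^2(x,\theta)\,d\bb{P}(x)\le\kappa$ for every $\theta\in\Theta$ (here $\bb{P}$ being a probability measure is used), and since $\Lambda$ is a probability measure, the integral $\Upsilon':=\intt\vp(\cdot,\theta)\ol\vp(\cdot,\theta)\,d\Lambda(\theta)$ exists in $\HSl$ and hence defines a bounded operator on $\lp$. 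The pointwise tensor products $\vp(\cdot,\theta)\ol\vp(\cdot,\theta)$ themselves make sense because $\vp(\cdot,\theta)\in\lp$ for each $\theta$, again by boundedness of $\vp$.

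Next, for fixed $f,g\in\lp$, since $T\mapsto\langle g,Tf\rangle_{\lp}$ is a bounded linear functional on $\HSl$, it commutes with the Bochner integral, so
\[
\langle g,\Upsilon' f\rangle_{\lp}=\intt \langle g,\vp(\cdot,\theta)\rangle_{\lp}\langle\vp(\cdot,\theta),f\rangle_{\lp}\,d\Lambda(\theta)=\intt\left(\intx g(y)\vp(y,\theta)\,d\bb{P}(y)\right)\left(\intx \vp(x,\theta)f(x)\,d\bb{P}(x)\right)d\Lambda(\theta).
\]
I would then apply Fubini--Tonelli to interchange the three integrations; this is justified because $\sup_{x,\theta}|\vp(x,\theta)|\le\sqrt{\kappa}$ and $f,g\in\lp\subset L^1(\bb{P})$, so the triple integral of the integrand's absolute value is at most $\kappa\,\Vert f\Vert_{L^1(\bb{P})}\Vert g\Vert_{L^1(\bb{P})}<\infty$. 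After the interchange, the inner $\Theta$-integral is $\intt\vp(y,\theta)\vp(x,\theta)\,d\Lambda(\theta)=k(y,x)$ by $(A_4)$, leaving
\[
\langle g,\Upsilon' f\rangle_{\lp}=\intx\intx g(y)\,k(y,x)\,f(x)\,d\bb{P}(x)\,d\bb{P}(y)=\langle g,\Upsilon f\rangle_{\lp},
\]
where the last equality is the definition of $\Upsilon$ from Proposition~\ref{pro:id}\emph{(iv)}. Since $f,g\in\lp$ are arbitrary and both operators are bounded, this gives $\Upsilon=\Upsilon'$.

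Everything here is routine bookkeeping, and the only point needing a little care is the measure-theoretic plumbing---well-definedness of the operator-valued integral and validity of Fubini---both of which follow immediately from the uniform bound on $\vp$ in $(A_4)$ together with $\bb{P},\Lambda$ being probability measures. I do not expect any substantive obstacle.
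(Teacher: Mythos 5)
Your proposal is correct and follows essentially the same route as the paper: both rest on expanding $k$ via its integral representation in $(A_4)$ and interchanging integrals by Fubini. The only cosmetic difference is that you test the identity against a second function $g$, reducing to a scalar Fubini statement, whereas the paper evaluates $\Upsilon f$ directly and applies Fubini pointwise; the extra care you take with well-definedness of the Bochner integral is sound but not strictly needed for the argument.
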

\begin{proof}
Since $k(x,y)=\intt \vp(x,\theta) \vp(y,\theta)\,d\Lambda(\theta)$, for any $f\in\lp$,
\begin{eqnarray}
\Upsilon f&{}={}&\intx k(\cdot,x)f(x)\,d\bb{P}(x)=\intx \intt \vp(\cdot,\theta)\vp(x,\theta)\,d\Lambda(\theta)f(x)\,d\bb{P}(x) \nonumber\\
% \end{eqnarray}
% \begin{eqnarray}
&{}\stackrel{(*)}{=}{}&\intt \vp(\cdot,\theta)\left(\intx \vp(x,\theta)f(x)\,d\bb{P}(x)\right)\,d\Lambda(\theta)\nonumber
\end{eqnarray}
\begin{eqnarray}
&{}={}&\intt \vp(\cdot,\theta)\langle \vp(\cdot,\theta),f\rangle_{\lp}\,d\Lambda(\theta)
= \intt \left(\vp(\cdot,\theta)\ol \vp(\cdot,\theta)\right)f\,d\Lambda(\theta)\nonumber\\
&{}={}&\left(\intt \vp(\cdot,\theta)\ol \vp(\cdot,\theta)\,d\Lambda(\theta)\right)f,\nonumber
\end{eqnarray}
where Fubini's theorem is applied in $(*)$.
\end{proof}

\subsection{Properties of the approximation operator}
The following result presents the properties of the approximation operator, $\frak{A}$.
\begin{appxpro}\label{pro:approx}
Define $$\frak{A}:\Hk_m\rightarrow\lp,\,f=\sum_{i=1}^m\beta_i\vp_i\mapsto\sum^m_{i=1}\beta_i(\vp_i-\vp_{i,\bb{P}})$$ where $\vp_{i,\bb{P}}:=\intx \vp_i(x)\,d\bb{P}(x)$ and $\sup_{x\in\X}|\vp_i(x)|\le\sqrt{\frac{\kappa}{m}}$
for all $i\in[m]$ with $\kappa<\infty$. Then the following hold:\vspace{2mm}\\
%\begin{itemize}
 %\item[(i)] 
 (i) $\frak{A}^*:\lp\rightarrow\Hk_m$,\,\,$f\mapsto\sum_{i=1}^m\left(\inner{f}{\vp_i}_{\lp}-f_\Pb\vp_{i,\Pb}\right)\vp_i$.\vspace{2mm}\\
 %\item[(ii)] 
 (ii) $\frak{A}$ and $\frak{A}^*$ are Hilbert-Schmidt.\vspace{2mm}\\
 %\item[(iii)] 
 (iii) $\Sigma_m=\frak{A}^*\frak{A}$ is trace-class.\vspace{2mm}\\
 %\item[(iv)] 
 (iv) $\frak{A}\frak{A}^*=\Pi -(1\ol 1)\Pi -\Pi(1\ol 1)+(1\ol 1)\Pi (1\ol 1)$ is trace-class where $\Pi:=\sum^m_{i=1}\vp_i\ol\vp_i:\lp\rightarrow\lp$.
%\end{itemize}
\end{appxpro}
\begin{proof} The proof is similar to that of Proposition~\ref{pro:id}.\vspace{1mm}\\
$(i)$ For any $g=\sum_{i=1}^m\beta_i\vp_i\in\Hk_m$ and $f\in\lp$,
\begin{eqnarray*}
    \inner{\frak{A}^*f}{g}_{\Hk_m}&{}={}&\inner{f}{\frak{A} g}_{\lp}=\int_\X\left(\sum^m_{i=1}\beta_i(\vp_i(x)-\vp_{i,\bb{P}})\right)f(x)d\Pb(x)\\
    &{}={}&\sum_{i=1}^m\beta_i(\inner{f}{\vp_i}_{\lp}-f_\Pb\vp_{i,\Pb}),\nonumber
\end{eqnarray*}
and the result follows from the definition of $\inner{\cdot}{\cdot}_{\Hk_m}$.\vspace{1mm}
\\
\emph{(ii)} For any orthonormal basis $(e_j)_j$ in $\lp$,
\begin{eqnarray}
\Vert\frak{A}^*\Vert^2_{\HS(\lp,\Hk_m)}
&{}={}&\sum_j\Vert\frak{A}^*e_j\Vert^2_{\Hk_m}= \sum_j\sum^m_{i=1}\left(\langle e_j,\vp_i\rangle_{\lp}-e_{j,\bb{P}}\vp_{i,\bb{P}}\right)^2\nonumber\\
&{}={}&\sum_j\sum^m_{i=1}\langle e_j,\vp_i\rangle^2_{\lp}+e^2_{j,\bb{P}}\vp^2_{i,\bb{P}}-2e_{j,\bb{P}}\vp_{i,\bb{P}}\langle e_j,\vp_i\rangle_{\lp}\nonumber\\
% \end{eqnarray}
% \begin{eqnarray}
&{}={}&\sum^m_{i=1}\Vert\vp_i\Vert^2_{\lp}+\sum^m_{i=1}\vp^2_{i,\bb{P}}\sum_j \langle e_j,1\rangle^2_{\lp}\nonumber\\
&{}{}&\qquad\qquad-\sum^m_{i=1}\vp_{i,\bb{P}}\sum_j\langle e_j,(\vp_i\ol 1+1\ol \vp_i)e_j\rangle_{\lp}\nonumber\\
&{}={}&\sum^m_{i=1}\Vert\vp_i\Vert^2_{\lp}+\sum^m_{i=1}\vp^2_{i,\bb{P}}-2\sum^m_{i=1}\vp_{i,\bb{P}}\langle \vp_i,1\rangle_{\lp}\nonumber
%&{}\le{}&
\le \sum^m_{i=1}\Vert\vp_i\Vert^2_{\lp}\le\kappa<\infty,\nonumber
\end{eqnarray}
and so $\frak{A}$ and $\frak{A}^*$ are Hilbert-Schmidt.\vspace{1mm}
\\
\emph{(iii)} For any $f=\sum_{i=1}^m\beta_i\vp_i\in\Hk_m$, 
\begin{eqnarray}
\frak{A}^*\frak{A} f&{}={}&\frak{A}^*\left(\sum^m_{i=1}\beta_i(\vp_i-\vp_{i,\Pb})\right)=\sum^m_{i=1}\beta_i\frak{A}^*(\vp_i-\vp_{i,\Pb})\hspace{2cm}\nonumber
\\
&{}={}&\sum_{i=1}^m\beta_i\sum_{j=1}^m(\inner{\vp_i}{\vp_j}_{\lp}-\vp_{i,\Pb}\vp_{j,\Pb})\vp_j\nonumber\\
&{}={}&\sum_{j=1}^m\inner{\sum_{i=1}^m\beta_i\vp_i}{\vp_j}_{\lp}\vp_j-\left(\intx\sum_{i=1}^m\beta_i\vp_i(x)d\Pb(x)\right)\left(\intx\sum_{j=1}^m\vp_j(x)\vp_jd\Pb(x)\right)\nonumber\\
&{}={}&\intx\left(\sum_{i=1}^m\beta_i\vp_i(x)\right)\left(\sum_{j=1}^m\vp_j(x)\vp_j\right)\,d\Pb(x)\nonumber\\
&{}{}&\qquad\qquad-\left(\intx f(x)\,d\Pb(x)\right)\left(\intx k_m(\cdot,x)\,d\Pb(x)\right)\nonumber
\\
&{}={}&\intx f(x)k_m(\cdot,x)\,d\Pb(x)-\left(\intx f(x)\,d\Pb(x)\right)\left(\intx k_m(\cdot,x)\,d\Pb(x)\right)
=\Sigma_mf.\nonumber
\end{eqnarray}
That $\Sigma_m$ is trace class is implied by \emph{(ii)}.
\\
\emph{(iv)}  For any $f\in\lp$,
\begin{eqnarray}
\frak{A}\frak{A}^*f
&{}={}& \sum^m_{i=1}(\langle f,\vp_i\rangle_{\lp}-f_\bb{P}\vp_{i,\bb{P}})(\vp_i-\vp_{i,\Pb})\nonumber\\
% \end{eqnarray}
% \begin{eqnarray}
&{}={}&\sum^m_{i=1}(\langle f,\vp_i\rangle_{\lp}-\langle f,1\rangle_{\lp}\langle \vp_i,1\rangle_{\lp})(\vp_i-\langle \vp_i,1\rangle_{\lp})\nonumber\\
&{}={}&\Pi f-\langle \Pi 1,f\rangle_{\lp}-\Pi(1\ol 1)f+\langle (1\ol 1)\Pi 1,f\rangle_{\lp}\nonumber\\
&{}={}&\Pi f-(1\ol 1)\Pi f-\Pi(1\ol 1)f+(1\ol 1)\Pi (1\ol 1)f\nonumber
%&{}={}&\Pi f-\Pi(1\ol 1)f,\nonumber
\end{eqnarray}
and the result follows. $\frak{A}\frak{A}^*$ is trace-class since $\frak{A}^*$ is Hilbert-Schmidt.
\end{proof}

\section{Supplementary Results}\label{app:sup}
In this appendix, we collect Bernstein's inequality for Hilbert-valued random elements (quoted from \citealp{Yurinsky-95}) and Tropp's inequality for operator-valued random elements (quoted from \citealp[Theorem A.1]{Rudi-13}), that are used to prove the results of this paper. Based on these two results, Theorem~\ref{thm:bernstein U-stat} presents a Bernstein-type inequality for the operator and Hilbert-Schmidt norms of a operator-valued U-statistics.
%The following is Bernstein's inequality for a sum of $\R-$valued random variables.  Quoted from \cite{Rudi-15}.
% \begin{appxthm}[Bernstein's inequality in $\bb{R}$]\label{thm:bernstein r-valued}
% Let $X_1,\ldots,X_n$ be a sequence of independent and identically distributed random variables on $\R$ with zero mean.  If there exists $L,S\in\R$ such that $X_1\le L$ almost everywhere and $\E_\Pb[X_1^2]\le S$, then for any $0<\delta<1$,
% $$\Pb^n\left\{(X_i)_{i=1}^n:\frac{1}{n}\sum_{i=1}^nX_i\ge\frac{2L\log\frac{1}{\delta}}{3n}+\sqrt{\frac{2S\log\frac{1}{\delta}}{n}}\right\}\le\delta.$$
% \end{appxthm}
%The following is Bernstein's inequality in a separable Hilbert space quoted from \cite{Yurinsky-95}.
\begin{appxthm}[Bernstein's inequality in separable Hilbert spaces]\label{thm:bernstein}
Let $(\Omega,\Cal{A},P)$ be a probability space, $H$ be a separable Hilbert space, $B>0$ and $\theta>0$. Furthermore, let $\xi_1,\ldots,\xi_n:\Omega\rightarrow H$ be zero mean i.i.d.~random variables satisfying 
\begin{equation*}\bb{E}\Vert \xi_1\Vert^r_{H}\le \frac{r!}{2}\theta^2B^{r-2},\,\,\forall\,\,r>2.\nonumber
%\label{Eq:bern-main}
\end{equation*}
Then for any $0<\delta<1$,
$$P^n\left\{(\xi_i)^n_{i=1}:\left\Vert\frac{1}{n} \sum^n_{i=1}\xi_i\right\Vert_{H}\ge \frac{2B\log\frac{2}{\delta}}{n}+\sqrt{\frac{2\theta^2\log\frac{2}{\delta}}{n}}\right\}\le \delta.$$
\end{appxthm}
% 
% The following is Tropp's concentration inequality on the operator norm quoted from \cite{Rudi-13} Theorem A.1.
\begin{appxthm}[Tropp's inequality for operators]\label{thm: tropp}
Let $(Z_i)_{i=1}^n$ be independent copies of the random variable $Z$ with law $P$ taking values in the space of bounded self-adjoint operators for a separable Hilbert space $H$. Suppose there exists $S\in\Cal{L}^2(H)$ such that $\E[(Z-\E[Z])^2]\preceq S$ and $0<M<\infty$ such that $\norm{Z}_{\Cal{L}^\infty(H)}\le M$ almost everywhere.  Let $d:=\frac{\norm{S}_{\Cal{L}^1(H)}}{\norm{S}_{\Cal{L}^\infty(H)}}$ and $\sigma^2:=\norm{S}_{\Cal{L}^\infty(H)}$.  Then for $0<\delta\le d$, 
$$P^n\left\{(Z_i)^n_{i=1}:\norm{\frac{1}{n}\sum_{i=1}^nZ_i-\E[Z]}_{\Cal{L}^\infty(H)}\ge\frac{\beta M}{n}+\sqrt{\frac{3\beta\sigma^2}{n}}\right\}\le\delta,$$
where $\beta:=\frac{2}{3}\log\frac{4d}{\delta}$.
\end{appxthm}{}
% 
% The following is Theorem A.2 from Rudi 2018.
% \begin{theorem}\label{thm:pinelis}
% 
% \end{theorem}{}
\begin{appxthm}\label{thm:bernstein U-stat}
Let $(\Cal{X},P)$ be a measurable space and $Z:\Cal{X}\times\Cal{X}\rightarrow \Cal{L}^2(H)$ with $Z(x,y)=Z(y,x)$ for all $x,y\in H$, where $H$ is a separable Hilbert space. Let $$D=\int\int Z(x,y)\,dP(x)\,dP(y)$$ with 
$$\widehat{D}=\frac{1}{n(n-1)}\sum^n_{i\ne j}Z(X_i,X_j)$$ being its $U$-statistic estimator, where $(X_i)^n_{i=1}\stackrel{i.i.d.}{\sim}P$ and $n\ge 2$. Define $\psi(x)=\bb{E}_Y\left[Z(x,Y)\right]$ and let $\sup_{x,y\in\Cal{X}}\Vert Z(x,y)\Vert_{\Cal{L}^2(H)}\le M$. Then the following hold:\vspace{2mm}\\
%\begin{itemize}
 %\item[(i)] 
 (i) Suppose $Z:\Cal{X}\times\Cal{X}\rightarrow \Cal{S}(H)$, where $\Cal{S}(H)$ is the space of self-adjoint Hilbert-Schmidt operators on $H$, $\E[(\psi(X)-D)^2]\preceq S$, $\sigma^2:=\Vert S\Vert_{\Cal{L}^\infty(H)}$, and $\sup_{x\in\Cal{X}}\Vert \psi(x)\Vert_{\Cal{L}^\infty(H)}\le R$. Then for $0< \delta\le d$,
 $$P^n\left\{(X_i)^n_{i=1}:\left\Vert\widehat{D}-D\right\Vert_{\Cal{L}^\infty(H)}\le\frac{2\beta R}{n}+\sqrt{\frac{12\beta\sigma^2}{n}}+\frac{8M\log\frac{3}{\delta}}{n}\right\}\ge 1-2\delta,$$
where $\beta:=\frac{2}{3}\log\frac{4d}{\delta}$ and $d:=\frac{\Vert S\Vert_{\Cal{L}^1(H)}}{\Vert S\Vert_{\Cal{L}^\infty(H)}}$.\vspace{2mm}\\
%\item[(ii)] 
(ii) Suppose $\bb{E}\norm{\psi(X)-D}^2_{\Cal{L}^2(H)}\le \sigma^2_1$. Then 
$$P^n\left\{(X_i)^n_{i=1}:\norm{\widehat{D}-D}_{\Cal{L}^2(H)}\le\frac{16M\log\frac{3}{\delta}}{n}+\sqrt{\frac{8\sigma^2_1\log\frac{2}{\delta}}{n}}
%+\frac{8M}{n}\log\frac{3}{\delta}
\right\}\ge 1-2\delta.$$
%\end{itemize}
% Let $(\Cal{X},P)$ be a measurable space and $Z:\Cal{X}\times\Cal{X}\rightarrow \Cal{S}(H)$ with $Z(x,y)=Z(y,x)$ for all $x,y\in H$, where $\Cal{S}(H)$ is the space of self-adjoint Hilbert-Schmidt operators on a separable Hilbert space $H$. Let $$D=\int\int Z(x,y)\,dP(x)\,dP(y)$$ with 
% $$\widehat{D}=\frac{1}{n(n-1)}\sum^n_{i\ne j}Z(X_i,X_j)$$ being its $U$-statistic estimator, where $(X_i)^n_{i=1}\stackrel{i.i.d.}{\sim}P$ and $n\ge 2$. Define $\psi(x)=\bb{E}_Y\left[Z(x,Y)\right].$ Suppose $\sup_{x,y\in\Cal{X}}\Vert Z(x,y)\Vert_{\Cal{L}^2(H)}\le M$,  $\E[(\psi(X)-D)^2]\preceq S$, $\sigma^2:=\Vert S\Vert_{\Cal{L}^\infty(H)}$, and $\sup_{x\in\Cal{X}}\Vert \psi(x)\Vert_{\Cal{L}^\infty(H)}\le R$, where $S\in\Cal{S}(H)$. Then for $0< \delta\le d$,
% $$P^n\left\{(X_i)^n_{i=1}:\left\Vert\widehat{D}-D\right\Vert_{\Cal{L}^\infty(H)}\le\frac{2\beta R}{n}+\sqrt{\frac{12\beta\sigma^2}{n}}+\frac{8M\log\frac{3}{\delta}}{n}\right\}\ge 1-2\delta,$$
% where $\beta:=\frac{2}{3}\log\frac{4d}{\delta}$ and $d:=\frac{\Vert S\Vert_{\Cal{L}^1(H)}}{\Vert S\Vert_{\Cal{L}^\infty(H)}}$. 
% In addition, 
% $$P^n\left\{(X_i)^n_{i=1}:\norm{\widehat{D}-D}_{\Cal{L}^2(H)}\le\frac{12M\log\frac{3}{\delta}}{n}+\sqrt{\frac{8\norm{S}_{\Cal{L}^1(H)}\log\frac{2}{\delta}}{n}}
% %+\frac{8M}{n}\log\frac{3}{\delta}
% \right\}\ge 1-2\delta,$$
% for any $Z:\Cal{X}\times\Cal{X}\rightarrow\Cal{L}^2(H)$ with $Z(x,y)=Z(y,x)$ for all $x,y\in\Cal{X}$.
\end{appxthm}
\begin{proof}
$(i)$ The Hoeffding decomposition of $\widehat{D}$ yields
\begin{eqnarray*}
\widehat{D}-D&{}={}&2\left[\frac{1}{n}\sum^n_{i=1}\psi(X_i)-D\right]
+\left[\frac{1}{n(n-1)}\sum^n_{i\ne j}Z(X_i,X_j)-\psi(X_i)-\psi(X_j)+D\right],\nonumber
\end{eqnarray*}
which implies
\begin{eqnarray}
\left\Vert\widehat{D}-D\right\Vert_{\Cal{L}^\infty(H)} &{}\le{}&\left\Vert\frac{1}{n(n-1)}\sum^n_{i\ne j}Z(X_i,X_j)-\psi(X_i)-\psi(X_j)+D\right\Vert_{\Cal{L}^\infty(H)}\nonumber\\
&{}{}&\qquad+2\left\Vert\frac{1}{n}\sum^n_{i=1}\psi(X_i)-D\right\Vert_{\Cal{L}^\infty(H)}.\label{Eq:Hoeff}
\end{eqnarray}
The first term can be bounded by applying Theorem \ref{thm: tropp} since $\E[\psi(X)]=D$. We now bound the second term as follows. Define $h(X_i,X_j):=Z(X_i,X_j)-\psi(X_i)-\psi(X_j)+D$. Applying Markov's inequality to the second term, we obtain that for any $\epsilon>0$ and $t>0$,
\begin{eqnarray}&{}{}&P^n\left\{(X_i)^n_{i=1}:\left\Vert \frac{1}{n(n-1)}\sum^n_{i\ne j}h(X_i,X_j)\right\Vert_{\Cal{L}^\infty(H)}\ge \epsilon\right\}\nonumber\\
&{}{}&\qquad\quad\qquad\qquad\le e^{-t\epsilon}\E\exp\left\Vert t'\sum^n_{i\ne j}h(X_i,X_j)\right\Vert_{\Cal{L}^\infty(H)},\label{Eq:Markov}
\end{eqnarray}
where $t':=\frac{t}{n(n-1)}$. Consider
\begin{eqnarray*}
&{}{}&\E\exp\left\Vert t'\sum^n_{i\ne j}h(X_i,X_j)\right\Vert_{\Cal{L}^\infty(H)}\nonumber\\
&{}={}&\E\exp\left\Vert t'\sum^n_{i\ne j}\left[Z(X_i,X_j)-\E_{X'_j}Z(X_i,X'_j)-\E_{X'_i}Z(X_j,X'_i)+\E_{X'_i,X'_j}Z(X'_i,X'_j)\right] \right\Vert_{\Cal{L}^\infty(H)}\nonumber\\
% \end{eqnarray*}
% \begin{eqnarray*}
&{}={}& 
\E\exp\left\Vert t'\E_{(X'_i)^n_{i=1}|(X_i)^n_{i=1}}\sum^n_{i\ne j}\left[Z(X_i,X_j)-Z(X_i,X'_j)-Z(X_j,X'_i)+Z(X'_i,X'_j)\right] \right\Vert_{\Cal{L}^\infty(H)}\nonumber\\
&{}\le{}& 
\E\exp \left[\E_{(X'_i)^n_{i=1}|(X_i)^n_{i=1}}\left\Vert t'\sum^n_{i\ne j}\left[Z(X_i,X_j)-Z(X_i,X'_j)-Z(X'_i,X_j)+Z(X'_i,X'_j)\right] \right\Vert_{\Cal{L}^\infty(H)}\right]\nonumber\\
&{}\stackrel{(*)}{=}{}& 
\E\exp \left[\E_{(X'_i)^n_{i=1}|(X_i)^n_{i=1}}\left\Vert t'\sum^n_{i\ne j}\left(\delta_{X_i}-\delta_{X'_i}\right)\left(\delta_{X_j}-\delta_{X'_j}\right)
Z\right\Vert_{\Cal{L}^\infty(H)}\right]\nonumber\\
&{}={}& 
\E \exp\left[ \E_{\epsilon^{(1)}}\E_{\epsilon^{(2)}} \E_{(X'_i)^n_{i=1}|(X_i)^n_{i=1}} \left\Vert t'\sum^n_{i\ne j}\epsilon^{(1)}_i\left(\delta_{X_i}-\delta_{X'_i}\right)\epsilon^{(2)}_j\left(\delta_{X_j}-\delta_{X'_j}\right)Z \right\Vert_{\Cal{L}^\infty(H)}\right]\nonumber\\
&{}\stackrel{(\dagger)}{\le}{}& 
\E \exp \left[\E_{\epsilon^{(1)}}\E_{\epsilon^{(2)}}\left\Vert t'\sum^n_{i\ne j}\epsilon^{(1)}_i\epsilon^{(2)}_j\left(\delta_{X_i}-\delta_{X'_i}\right)\left(\delta_{X_j}-\delta_{X'_j}\right)Z \right\Vert_{\Cal{L}^\infty(H)}\right],\nonumber
\end{eqnarray*}
where $(\epsilon^{(1)}_i)_i$ and $(\epsilon^{(2)}_i)_i$ are independent Rademacher random variables. In $(*)$, $\delta_{x}$ denotes a Dirac measure supported on $x$ and we use the notation $Qf:=\int f(x)\,dQ(x)$ with $Q$ being a Dirac measure. In $(\dagger)$, the expectation is jointly over $(X_i,X'_i)^n_{i=1}$ which is obtained through an application of Jensen's inequality. Therefore, 
\begin{eqnarray*}
\E\exp\left\Vert t'\sum^n_{i\ne j}h(X_i,X_j)\right\Vert_{\Cal{L}^\infty(H)}\le \E \exp \left[\circled{\tiny{A}}+\circled{\tiny{B}} +\circled{\tiny{C}}+\circled{\tiny{D}}\right],
\end{eqnarray*}
where $$\circled{\tiny{A}}:=\E_{\epsilon^{(1)}}\E_{\epsilon^{(2)}}\left\Vert t'\sum^n_{i\ne j}\epsilon^{(1)}_i\epsilon^{(2)}_jZ(X_i,X_j)\right\Vert_{\Cal{L}^\infty(H)},\quad\circled{\tiny{B}}:=\E_{\epsilon^{(1)}}\E_{\epsilon^{(2)}}\left\Vert t'\sum^n_{i\ne j}\epsilon^{(1)}_i\epsilon^{(2)}_jZ(X_i,X'_j)\right\Vert_{\Cal{L}^\infty(H)},$$ 
$$\circled{\tiny{C}}:=\E_{\epsilon^{(1)}}\E_{\epsilon^{(2)}}\left\Vert t'\sum^n_{i\ne j}\epsilon^{(1)}_i\epsilon^{(2)}_jZ(X'_i,X_j)\right\Vert_{\Cal{L}^\infty(H)}$$
and 
$$\circled{\tiny{D}}:=\E_{\epsilon^{(1)}}\E_{\epsilon^{(2)}}\left\Vert t'\sum^n_{i\ne j}\epsilon^{(1)}_i\epsilon^{(2)}_jZ(X'_i,X'_j)\right\Vert_{\Cal{L}^\infty(H)}.$$
% \begin{eqnarray*}
% &{}{}&\E\exp\left\Vert t'\sum^n_{i\ne j}h(X_i,X_j)\right\Vert_{\Cal{L}^\infty(H)}\\
% &{}\le{}&
% \E \exp \left[ \E_{\epsilon^{(1)}}\E_{\epsilon^{(2)}}\left\Vert t'\sum^n_{i\ne j}\epsilon^{(1)}_i\epsilon^{(2)}_jZ(X_i,X_j)\right\Vert_{\Cal{L}^\infty(H)}\right.\\
% &{}{}&+\E_{\epsilon^{(1)}}\E_{\epsilon^{(2)}}\left\Vert t'\sum^n_{i\ne j}\epsilon^{(1)}_i\epsilon^{(2)}_jZ(X_i,X'_j)\right\Vert_{\Cal{L}^\infty(H)}\nonumber\\
% &{}{}&+\E_{\epsilon^{(1)}}\E_{\epsilon^{(2)}}\left\Vert t'\sum^n_{i\ne j}\epsilon^{(1)}_i\epsilon^{(2)}_jZ(X'_i,X_j)\right\Vert_{\Cal{L}^\infty(H)}\\
% &{}{}&\left.+\E_{\epsilon^{(1)}}\E_{\epsilon^{(2)}}\left\Vert t'\sum^n_{i\ne j}\epsilon^{(1)}_i\epsilon^{(2)}_jZ(X'_i,X'_j)\right\Vert_{\Cal{L}^\infty(H)}\right].\nonumber
% \end{eqnarray*}
Since $(X_i)^n_{i=1}$ and $(X'_i)^n_{i=1}$ are i.i.d., we have
\begin{eqnarray}
\E\exp\left\Vert t'\sum^n_{i\ne j}h(X_i,X_j)\right\Vert_{\Cal{L}^\infty(H)}
&{}\le{}& \E \exp \left[4t' \E_{\epsilon^{(1)}}\E_{\epsilon^{(2)}}\left\Vert \sum^n_{i\ne j}\epsilon^{(1)}_i\epsilon^{(2)}_jZ(X_i,X_j)\right\Vert_{\Cal{L}^\infty(H)}\right]\label{Eq:U-stat proof 1}\\
&{}\le{}&\E \exp \left[4t' \E_{\epsilon^{(1)}}\E_{\epsilon^{(2)}}\left\Vert \sum^n_{i\ne j}\epsilon^{(1)}_i\epsilon^{(2)}_jZ(X_i,X_j)\right\Vert_{\Cal{L}^2(H)}\right]\nonumber\\
&{}\le{}& \E \exp \left[4t' \sqrt{\E_{\epsilon^{(1)}}\E_{\epsilon^{(2)}}\left\Vert \sum^n_{i\ne j}\epsilon^{(1)}_i\epsilon^{(2)}_jZ(X_i,X_j)\right\Vert^2_{\Cal{L}^2(H)}}\right],\nonumber
\end{eqnarray}
where the last inequality follows from Jensen's inequality. We will now bound 
\begin{eqnarray*}
&{}{}&\E_{\epsilon^{(1)}}\E_{\epsilon^{(2)}}\left\Vert \sum^n_{i\ne j}\epsilon^{(1)}_i\epsilon^{(2)}_jZ(X_i,X_j)\right\Vert^2_{\Cal{L}^2(H)}\nonumber\\
&{}{}&\qquad=\E_{\epsilon^{(1)}}\E_{\epsilon^{(2)}}\sum^n_{i\ne j}\sum^n_{k\ne l}\epsilon^{(1)}_i\epsilon^{(2)}_j\epsilon^{(1)}_k\epsilon^{(2)}_l\langle Z(X_i,X_j),Z(X_k,X_l)\rangle_{\Cal{L}^2(H)}.\nonumber
\end{eqnarray*}
We consider the following cases.\vspace{1mm}\\
\textbf{Case 1:} $i=k, j=l$\vspace{1mm}\\
\begin{eqnarray*}
\E_{\epsilon^{(1)}}\E_{\epsilon^{(2)}}\left\Vert \sum^n_{i\ne j}\epsilon^{(1)}_i\epsilon^{(2)}_jZ(X_i,X_j)\right\Vert^2_{\Cal{L}^2(H)}
=\sum^n_{i\ne j}\Vert Z(X_i,X_j)\Vert^2_{\Cal{L}^2(H)}
\le n(n-1)M^2.\nonumber
\end{eqnarray*}
\textbf{Case 2:} $i=k,j\ne l$
\begin{eqnarray*}
\E_{\epsilon^{(1)}}\E_{\epsilon^{(2)}}\left\Vert \sum^n_{i\ne j}\epsilon^{(1)}_i\epsilon^{(2)}_jZ(X_i,X_j)\right\Vert^2_{\Cal{L}^2(H)}
=\E_{\epsilon^{(2)}}\sum^n_{i\ne j\ne l}\epsilon^{(2)}_j\epsilon^{(2)}_l\langle Z(X_i,X_j),Z(X_i,X_l)\rangle_{\Cal{L}^2(H)}=0.\nonumber
\end{eqnarray*}
\textbf{Case 3:} $i\ne k,j=l$
\begin{eqnarray*}
\E_{\epsilon^{(1)}}\E_{\epsilon^{(2)}}\left\Vert \sum^n_{i\ne j}\epsilon^{(1)}_i\epsilon^{(2)}_jZ(X_i,X_j)\right\Vert^2_{\Cal{L}^2(H)}
=\E_{\epsilon^{(1)}}\sum^n_{i\ne j\ne k}\epsilon^{(1)}_i\epsilon^{(1)}_k\langle Z(X_i,X_j),Z(X_k,X_j)\rangle_{\Cal{L}^2(H)}=0.\nonumber
\end{eqnarray*}
\textbf{Case 4:} $i\ne k, j\ne l$
\begin{eqnarray*}
\E_{\epsilon^{(1)}}\E_{\epsilon^{(2)}}\left\Vert \sum^n_{i\ne j}\epsilon^{(1)}_i\epsilon^{(2)}_jZ(X_i,X_j)\right\Vert^2_{\Cal{L}^2(H)}
=0.\nonumber
\end{eqnarray*}
Therefore,
\begin{eqnarray*}
\E\exp\left\Vert t'\sum^n_{i\ne j}h(X_i,X_j)\right\Vert_{\Cal{L}^\infty(H)}\le \exp \left[\frac{4tM}{\sqrt{n(n-1)}}\right]\le \exp\left[\frac{8M t}{n}\right]\nonumber
\end{eqnarray*}
for $n\ge 2$ as $n-1\ge \frac{n}{4}$. Using this in \eqref{Eq:Markov} and choosing 
$t=\frac{n}{8M}$, we obtain
\begin{eqnarray*}P^n\left\{(X_i)^n_{i=1}:\left\Vert \frac{1}{n(n-1)}\sum^n_{i\ne j}h(X_i,X_j)\right\Vert_{\Cal{L}^\infty(H)}\ge \epsilon\right\}\le 3\exp\left(-\frac{n\epsilon}{8M}\right),\nonumber
\end{eqnarray*}
which is equivalent to
\begin{eqnarray}P^n\left\{(X_i)^n_{i=1}:\left\Vert \frac{1}{n(n-1)}\sum^n_{i\ne j}h(X_i,X_j)\right\Vert_{\Cal{L}^\infty(H)}\ge \frac{8M}{n}\log\frac{3}{\delta}\right\}\le \delta.\label{Eq:U-stat proof 2}
\end{eqnarray}
Combining \eqref{Eq:U-stat proof 2} with the bound on the first term in \eqref{Eq:Hoeff} yields the result.\vspace{1mm}
\\
$(ii)$ As in $(i)$, we first write 
\begin{eqnarray}\norm{\widehat{D}-D}_{\Cal{L}^2(H)}&{}\le{}&\norm{\frac{1}{n(n-1)}\sum_{i\neq j}Z(X_i,X_j)-\psi(X_i)-\psi(X_j)+D}_{\Cal{L}^2(H)}\nonumber\\
&{}{}&\qquad\qquad+2\norm{\frac{1}{n}\sum_{i=1}^n\psi(X_i)-D}_{\Cal{L}^2(H)}.\label{Eq:fff}\end{eqnarray}
The first term in \eqref{Eq:fff} is bounded through an application of Theorem~\ref{thm:bernstein}. For the second term, we replicate the analysis between (\ref{Eq:Markov}) and (\ref{Eq:U-stat proof 1}) with the operator norm being replaced by the Hilbert-Schmidt norm. Since the analysis between (\ref{Eq:U-stat proof 1}) and (\ref{Eq:U-stat proof 2}) anyway relies only on the Hilbert-Schmidt norm, the result follows by employing that $\log\frac{2}{\delta}<\log\frac{3}{\delta}$.
\end{proof}

\end{appendices}

%\newpage
% \bibliographystyle{plainnat}
% \bibliography{Reference}
%\newpage

\end{document}